\documentclass[11pt]{article}

\usepackage{amsthm}
\usepackage{natbib}
\usepackage{graphicx} 
\usepackage{array}
\usepackage{mathtools}
\usepackage{amsmath, amssymb, amsfonts, verbatim,bbm}
\usepackage{hyphenat,epsfig,subcaption,multirow}

\usepackage[utf8]{inputenc}
\usepackage[font=small, labelfont=bf]{caption}
\usepackage{wrapfig}

\usepackage[dvipsnames]{xcolor}
\usepackage{algorithm}
\usepackage[noend]{algpseudocode}

\usepackage[normalem]{ulem}
\usepackage[compact]{titlesec}

\usepackage{nameref}
\definecolor{ForestGreen}{rgb}{0,0.45,0.4}
\definecolor{DarkRed}{rgb}{0.5,0,0.1}

\usepackage[linktocpage=true,
    pagebackref=true,
    colorlinks,
    urlcolor=DarkRed,
    linkcolor=DarkRed,
    citecolor=ForestGreen,
    bookmarks,
    bookmarksopen,
    bookmarksnumbered]{hyperref}
\usepackage[noabbrev,nameinlink]{cleveref}

\creflabelformat{equation}{#2\textup{#1}#3}
\crefname{property}{property}{Property}
\creflabelformat{property}{(#1)#2#3}
\crefname{equation}{eq}{Eq}
\creflabelformat{equation}{(#1)#2#3}

\usepackage{bm}
\usepackage{url}

\usepackage{tikz}
\usetikzlibrary{arrows, arrows.meta, shapes, backgrounds, positioning,
decorations.markings, patterns, calc, fit, decorations.pathmorphing}

\usepackage{enumitem}
\usepackage[margin=1in]{geometry}

\usepackage{soul}
\usepackage{booktabs}
\usepackage{makecell}
\usepackage{tabularx}
\usepackage{ragged2e}
\usepackage{colortbl}
\newcolumntype{P}[1]{>{\RaggedRight\hspace{0pt}}p{#1}}
\newcolumntype{L}{>{\arraybackslash}m{3cm}}
\newcolumntype{q}{>{\arraybackslash}m{4.5cm}}

\newtheorem{theorem}{Theorem}
\newtheorem{lemma}{Lemma}
\newtheorem{proposition}{Proposition}
\newtheorem{corollary}[lemma]{Corollary}

\newtheorem{definition}{Definition}[section]

\newtheorem{problem}{Problem}

\newtheorem*{claim*}{Claim}
\newtheorem*{proposition*}{Proposition}
\newtheorem*{lemma*}{Lemma}
\newtheorem*{problem*}{Problem}
\newtheorem{example}{Example}

\crefname{lemma}{Lemma}{Lemmas}
\crefname{note}{Note}{Notes}
\crefname{claim}{Claim}{Claims}
\newtheorem{remark}[lemma]{Remark}

\DeclareMathOperator{\Xs}{\mathcal{X}}
\DeclareMathOperator{\Ts}{\mathcal{T}}

\DeclareMathOperator{\D}{\Delta}
\DeclareMathOperator{\graph}{\mathcal{G}}
\DeclareMathOperator{\mneg}{\mathit{m}^{-}}
\DeclareMathOperator{\mpos}{\mathit{m}^{+}}
\DeclareMathOperator{\qneg}{\mathit{q}^{-}}
\DeclareMathOperator{\qpos}{\mathit{q}^{+}}
\DeclareMathOperator{\splan}{\text{social planner}}
\DeclareMathOperator*{\argmax}{arg\,max}

\title{Revealing Positive and Negative Role Models\\ to Help People Make Good Decisions} 

{\author{
{\bf Avrim Blum \,
Keziah Naggita \,
Matthew R. Walter \,
Jingyan Wang}\footnote{Alphabetical order}
 \\[3ex]
Toyota Technological Institute at Chicago
\\[1ex]
{\text{\{avrim, knaggita, mwalter, jingyanw\}@ttic.edu}}
}}

\date{}

\begin{document}
\vspace{-3cm}
\maketitle

\begin{abstract}
    We consider a setting where agents take action by following  their role models in a social network, and study strategies for a social planner to help agents by revealing whether the role models are positive or negative. Specifically, agents observe a local neighborhood of possible role models they can emulate, but do not know their true labels. Revealing a positive label encourages emulation, while revealing a negative one redirects agents toward alternative options. The social planner observes all labels, but operates under a limited disclosure budget that it selectively allocates to maximize social welfare (the expected number of agents who emulate adjacent positive role models). We consider both algorithms and hardness results for welfare maximization, and provide a sample-complexity guarantee when the planner observes a sampled subset of agents. We also consider fairness guarantees when agents belong to different groups. It is a technical challenge that the ability to reveal negative role models breaks submodularity.  We thus introduce a proxy welfare function that remains submodular even when revealed targets include negative ones. When each agent has at most a constant number of negative target neighbors, we use this proxy to achieve a constant-factor approximation to the true optimal welfare gain. When agents belong to different groups, we also show that each group's welfare gain is within a constant factor of the optimum achievable if the full budget were allocated to that group. Beyond this basic model, we also propose an intervention model that directly connects high-risk agents to positive role models, and a coverage radius model that expands the visibility of selected positive role models. Lastly, we conduct extensive experiments on four real-world datasets to support our theoretical results and assess the effectiveness of the proposed algorithms.
\end{abstract}

\section{Introduction}\label{sec:intro}
Consider a high school that runs a career day program to help senior students make informed decisions about their future careers. In practice, however, students often rely on role models from their own social circles (e.g., family members or community figures) without knowing whether emulating them will lead to positive or negative outcomes. The school aims to steer students toward desirable careers, but can only feature a limited number of role models, which makes the choice of whom to highlight especially important. Featured speakers might include positive examples, such as a physician who motivates students to pursue medicine, as well as cautionary ones, like a former gang member whose story illustrates the long-term negative consequences of criminal involvement. After the event, students emulate a role model from their social circle: avoiding those identified as negative, following a role model identified as positive if any were revealed, and otherwise choosing randomly.

This work formalizes the setting as an unweighted bipartite graph with students (agents) on the left and role models (targets) on the right. Edges connect students to role models within their social circle. Each role model is classified as \textit{positive} if they exhibit desirable decision-making patterns and as \textit{negative} if they exhibit behaviors that agents should avoid. Initially, agents do not know which of their adjacent role models are positive or negative.

Our goal is to study how a school (i.e., a social planner) with a limited budget can help agents identify and emulate positive targets, thereby improving their decision quality, by revealing the labels of a limited number of targets. Revealing positive role models causes agents to emulate their behavior, while revealing negative ones indicates decisions that should be avoided, but does not suggest which choices are good. Consequently, in the standard model, the planner's objective is to reveal labels of a budgeted subset of targets to maximize \textit{social welfare}, defined as the total probability that agents emulate adjacent positive targets, given the revealed subset.  We extend the standard model to consider a setting where the planner identifies agents most likely to emulate negative targets and uses the intervention budget to connect them directly to positive ones, ensuring these agents emulate a positive target. In a high school setting, this could involve identifying students prone to negative influences and intentionally pairing them with positive role models.

\paragraph{Contributions.}
While motivated by a career day, the proposed models are more broadly applicable to other domains in which agents rely on role models in their neighborhood to make decisions. Appendix~\ref{sec:app_prac-examples} provides additional examples. Below are our main contributions.
\begin{enumerate}
    \item In the standard model, we show that when the social planner reveals negative targets, the social welfare function remains monotone, but may become supermodular (Section~\ref{subsec:monosub}). Consequently, the approximation guarantees of the classic polynomial-time, budget-constrained greedy algorithm can deteriorate to as low as $\frac{2}{\sqrt{n}+2}$, where $n$ denotes the number of agents, when both positive and negative targets can be revealed. To address this limitation, we introduce a proxy welfare function that remains submodular even when revealed targets include negative ones. When all agents have at most $c$ negative target neighbors, this proxy achieves a constant-factor approximation to the true welfare (gain) (Section~\ref{subsec:approxs}).

    \item In a setting where agents are divided into $w$ groups (for constant $w$), we show that running the classic greedy algorithm on each group $a$ with budget $\lceil K/w \rceil$ guarantees that $c$-bounded agents in $a$ achieve a welfare gain of $\Omega(\mathrm{OPT}_{a}^{K})$, i.e., within a constant factor of the optimum achievable if the full budget $K$ were allocated to that group. When agents are not $c$-bounded, we show that this guarantee may fail to hold (Section~\ref{sec:fairness}).

    \item We establish hardness results for the social welfare maximization problem when the social planner can reveal only positive targets, only negative targets, or both (Section~\ref{sec:np-hardness}). We then extend the standard model to consider targeted interventions, in which the planner either connects the high-risk agents to positive targets before or after the budgeted reveal of welfare-maximizing targets (Section~\ref{sec:intervention_model}), or boosts the visibility of the positive role models (Section~\ref{sec:coveragemodel}). We also provide a sample-complexity guarantee when the social planner observes a subset of sampled agents (Section~\ref{sec:learn}).

    \item Finally, we conduct extensive semi-synthetic experiments using bipartite graphs generated from four real-world datasets: Adult, Student Performance (Mathematics and Portuguese), and Garment Workers Productivity. We empirically evaluate and compare several greedy strategies under the standard model, examine gains from targeted intervention, and assess the performance of Algorithm~\ref{alg:greedy_lbreveal} in the learning setting\footnote{Our code is publicly available at \href{https://github.com/knaggita/InformationDisclosure}{https://github.com/knaggita/InformationDisclosure}.} (Section~\ref{sec:experiments}).
\end{enumerate}

\subsection{Related Work}\label{sec:related-work}
\paragraph{Personalized recourse.}
The growing reliance on machine learning (ML) models to make high-stakes decisions (e.g., for hiring and loan approvals) raises urgent questions about transparency and the provision of guidance that enables individuals to improve their outcomes. This concern motivated extensive work on personalized recourse, typically operationalized through single-agent~\cite{Karimi22,Verma2020CounterfactualEF} or multi-agent~\cite{Kanamori22a,naggita2025learning,Ley2023,Carrizosa24,Pedapati2020} frameworks.
These approaches assume access to the agents' initial feature states and action spaces and identify the minimum cost set of actions that lead to a desirable prediction. In contrast, the social planner in our setting lacks such information and instead releases limited signals, namely whether adjacent role models are positive or negative influences, enabling agents to improve outcomes by emulating adjacent positive role models.

\paragraph{Strategic learning.}
Our line of inquiry is closely related to strategic learning under manipulation graphs~\cite{ToscaRU22, HanruiZVC21, SabaAKH24, paclearn, AhmadiHBN22, LeeYSH24}, where each agent's reaction set is shaped by its local neighborhood. The key difference is that the social planner in our setting does not control the labeling function and only provides agents with limited information about it by revealing a small set of labels. Our work is also related to research on strategic learning under imitative strategic behavior and partial information release, in which agents strategically modify their features by observing or imitating the strategies of social role models~\cite{TianXZM24, HodaHVN19, RaabRLY21, ZhangXKM22} or of historical feature-prediction pairs~\cite{Cohen2024, YahavCZ21, GaneshVII21}. Unlike these studies, we do not assume agents know which role models to emulate; rather, we study how selectively revealing labels for a subset of role models can guide agents toward better decisions.

\paragraph{Influence maximization.}
Lastly, our objective is analogous to influence maximization, which aims to iteratively identify the most influential nodes to shape agents' behaviors~\cite{DomingosPRM01,RichardsonMD02,KempeDKJ03,KamarthiHVP20}. Traditional methods typically assume a monotone, submodular objective and apply greedy strategies to maximize influence across multi-step diffusion processes~\cite{LiYFJ18, LiYGH23}. Although effective under these assumptions, such recursive methods are computationally demanding. By contrast, we study one-step models on bipartite graphs, where the objective may remain monotone but become supermodular. In this setting, greedy strategies can perform poorly, but our approach avoids the complexity of multi-step diffusion.

\section{Problem Formulation}\label{sec:model_prelim}
We model the setting as an unweighted bipartite graph $\graph = (\Xs \cup \Ts, E)$ (Figure~\ref{fig:intro_bipartite_graph}), where $\Xs$ is the set of $n$ agents (left-hand nodes), $\Ts$ is the set of $m$ targets (right-hand nodes), and $E$ contains edges between each agent and the targets it can emulate. Let $f: \Ts \to \{-1, +1\}$ be the true target labeling function only known to the social planner. 
For an agent $x \in \Xs$, let $N(x) = \{t \in \Ts : (x,t) \in E\}$ denote its neighborhood, and 
$\delta_x^{+} = |\{t \in N(x): f(t)=+1\}|$ and $\delta_x^{-} = |\{t \in N(x): f(t)=-1\}|$ be the number of positive and negative neighbors, respectively. Of the $m = \mpos + \mneg$ targets, $\mpos$ are positive, representing desirable behaviors agents should emulate, and $\mneg$ are negative, representing behaviors agents should avoid. We next describe how each agent chooses a target to emulate from their neighborhood and how the social planner selects a subset of targets whose information (i.e., labels) is revealed.
\begin{figure}[ht!]
    \centering
    \begin{minipage}[t]{0.33\textwidth}
        \vspace{0pt} 
        \centering
        \begin{tikzpicture}[
            font=\footnotesize, 
            agent/.style={
                circle,
                fill=#1!30, 
                inner sep=1.5pt, 
                minimum size=0.65cm, 
                text centered
            },
            posAgent/.style={agent={blue}},
            negAgent/.style={agent={red}},
            goodTarget/.style={agent={green}},
            badTarget/.style={agent={orange}},
            unknownTarget/.style={agent={gray}},
            edgeLabel/.style={font=\footnotesize, midway, fill=white, inner sep=0.1cm}
        ]
        
            \node[negAgent] (a1) at (0,1.4)  {\(x_{1}\)};
            \node[negAgent] (a2) at (0,0.7)  {\(x_{2}\)};
            \node[unknownTarget] (t1) at (3,1.4) {\(t_{1}^{+}\)};
            \node[unknownTarget] (t2) at (3,0.7) {\(t_{2}^{+}\)};
            \node[unknownTarget] (t3) at (3,0)   {\(t_{3}^{-}\)};
            
            \draw (a1) -- (t1);
            \draw (a1) -- (t3);
            \draw (a2) -- (t2);
            \draw (a2) -- (t3);
        
        \end{tikzpicture}
    \end{minipage}%
    \hspace{0.03\textwidth}
    \begin{minipage}[t]{0.63\textwidth}
        \vspace{0pt} 
        \caption[Bipartite graphical model]{An unweighted bipartite graph in which the LHS nodes \(\{x_1, x_2\}\) represent agents, the RHS nodes \(\{t_1, t_2, t_3\}\) represent targets, and edges connect agents to targets they can emulate. Initially, agents do not know whether a target is positive or negative.}
        \label{fig:intro_bipartite_graph}
    \end{minipage}
\end{figure}

\subsubsection*{Agents' choice of who to emulate}
Agents do not observe targets' labels and rely only on those revealed in the set $S \subseteq \Ts$. For an agent $x$, let $P_x^{+}(S) = \{t \in N(x) \cap S : f(t)=+1\}$ denote the adjacent targets revealed as positive, and $P_x^{-}(S) = \{t \in N(x) \cap S : f(t)=-1\}$ denote those revealed as negative. If no adjacent targets are revealed, meaning either $S=\emptyset$ or $N(x)\cap S=\emptyset$, the agent emulates a target selected uniformly at random from $N(x)$. Otherwise, the agent assigns zero probability to adjacent targets revealed as negative. If any adjacent targets are revealed as positive, it selects uniformly among them; if none are positive, it selects uniformly among the adjacent unlabeled targets. If all targets in $N(x)$ are revealed as negative, the probability that agent $x$ emulates a positive target is $0$. Accordingly, we define the total probability mass that agent $x$ assigns to positively labeled target neighbors given the revealed set $S$ as follows:
\[
    Q^{S}(x) = 
    \begin{cases}
        1 & \text{if } |P^{+}_{x}(S)|>0,\\
        \dfrac{\delta^{+}_{x}}{\delta^{+}_{x} + 
        (\delta^{-}_{x} - |P^{-}_{x}(S)|)} & \text{if } |P^{+}_{x}(S)|=0 \text{ and } 
        N(x)>0,\\
        0 & \text{otherwise.}
    \end{cases}
\]

\subsubsection*{The $\splan$ information reveal}
Assume the $\splan$ has full knowledge of the graph, including all agents and the finite set of targets, and observes each target's true label through the labeling function $f: \Ts \to \{+1, -1\}$. Further, the $\splan$ is aware of the aforementioned process by which agents choose which target in their neighborhood to emulate.  Given a target reveal budget $K \in \mathbb{N}$, the objective of the $\splan$ is to reveal the labels of the subset of targets\footnote{When clear from context, ``reveal a subset of targets'' denotes ``reveals the labels of a subset of targets''} $S \subseteq \Ts$  with $|S| \le K$ that maximizes the probability that the agents choose to emulate positively labeled targets. That is, the social planner aims to find
\begin{align*}
    S^\star = \argmax_{\substack{S:\, |S|\le K}}\;F(S).
\end{align*}
where the social welfare function $F$ is defined as:
\begin{equation}
\label{eq:social_welfare}
    F(S) = \sum_{x \in \Xs} Q^S(x)
\end{equation}
The gain in social welfare from revealing $S$ is defined as the difference between the social welfare under $S$ and the social welfare under the empty set:
\begin{equation}
\label{eq:gain_social_welfare}
    G(S) = F(S) - F(\emptyset)
\end{equation}
The marginal gain of revealing a target $t \in \Ts \setminus S$ given a revealed set $S \subseteq \Ts$ is defined as:
\begin{equation}
\label{eq:marginalgain}
    \D_{t}(S) = F(S \cup \{t\}) - F(S)
\end{equation}

\subsection{Monotonicity and Submodularity of the Social Welfare Function} \label{subsec:monosub}
\label{sec:monotonicity_submodularity}

In this section, we first demonstrate that the social welfare function is a monotonically increasing function, and then explore the conditions under which it is submodular.
\begin{proposition}
    The social welfare function is a monotonically increasing function. That is, for any set of revealed targets \(A \subseteq \Ts\), $F(A \cup \{t\}) \geq F(A)$ for all $t \in \Ts \setminus A$.
\label{prop:monotone}
\end{proposition}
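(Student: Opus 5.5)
Since $F(S)=\sum_{x\in\Xs}Q^S(x)$ is a sum over agents, it suffices to show that each summand is monotone: for every agent $x$, every $A\subseteq\Ts$ and every $t\in\Ts\setminus A$, we want $Q^{A\cup\{t\}}(x)\ge Q^A(x)$. Summing these inequalities over $x$ then gives $F(A\cup\{t\})\ge F(A)$. Fix $x$, $A$, and $t$. If $t\notin N(x)$, then $P^+_x$ and $P^-_x$ are unchanged, so $Q$ is unchanged and there is nothing to prove. We may therefore assume $t\in N(x)$, which in particular means $N(x)\neq\emptyset$. We then split according to the label $f(t)$ and the state of $x$ under $A$.

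\textbf{Case 1: $Q^A(x)=1$.} Here $|P^+_x(A)|>0$. Adding $t$ cannot remove elements of $P^+_x$, so $|P^+_x(A\cup\{t\})|>0$ and $Q^{A\cup\{t\}}(x)=1$.

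\textbf{Case 2: $|P^+_x(A)|=0$ and $f(t)=+1$.} Now $P^+_x(A\cup\{t\})=\{t\}$, so $Q^{A\cup\{t\}}(x)=1$. Every value of $Q$ lies in $[0,1]$, since the middle-case fraction has numerator at most its denominator. Hence $Q^{A\cup\{t\}}(x)=1\ge Q^A(x)$.

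\textbf{Case 3: $|P^+_x(A)|=0$ and $f(t)=-1$.} Adding $t$ leaves $P^+_x$ empty and increases $|P^-_x|$ by exactly one.
\begin{itemize}
\item If $\delta_x^+>0$, both $Q^A(x)$ and $Q^{A\cup\{t\}}(x)$ are given by the middle formula $\delta^+_x/(\delta^+_x+\delta^-_x-|P^-_x(\cdot)|)$. The numerator is fixed and the denominator strictly decreases but stays at least $\delta^+_x>0$. So the ratio does not decrease; it in fact strictly increases.
\item If $\delta^+_x=0$, both values equal $0$. The one subtlety is the degenerate situation where all neighbors are revealed negative. Then the middle formula has denominator $0$, and the definition sends this to $0$ via the ``otherwise'' branch; we interpret the case split accordingly. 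In that situation, too, both values are $0$.
\end{itemize}

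The argument is essentially routine. The only step needing care is Case 3: checking that the denominator never hits $0$ while $\delta_x^+>0$, and handling the all-negative boundary case consistently with the definition's ``otherwise'' branch.
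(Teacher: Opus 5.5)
Your proof is correct and follows the same per-agent approach as the paper: show $Q^{A\cup\{t\}}(x)\ge Q^A(x)$ for each agent and sum. Your explicit Case 3 (a revealed negative neighbor lowers the denominator by one while keeping it at least $\delta_x^+>0$, with the all-negative case giving $0$) supplies the argument the paper omits, since the paper only handles the positive-target cases and then asserts that ``no agent's probability decreases.''
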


Intuitively, revealing an additional target cannot reduce an agent's probability of selecting a positive target from its neighborhood. The proof is provided in Appendix~\ref{sec:app-prelimmonotonicty} for completeness.

We now analyze the submodularity of the social welfare function when the $\splan$ can reveal only positive targets and when revealed targets include negative ones. 
\begin{definition}[Submodularity]
\label{def:submodularity}
    A function $F: 2^{\Ts} \to \mathbb{R}_{\geq 0}$ is submodular if the marginal gain (Eqn.~\ref{eq:marginalgain}) of adding a revealed target \(t\) to a smaller revealed target set \(A \subseteq B\) is at least as large as the marginal gain of adding it to a larger revealed target set \(B\). That is, for every $A, B \subseteq \Ts$ where $A \subseteq B$, and every $t \in \Ts \setminus B$, we have
    \begin{align*}
        F(A \cup \{t\}) - F(A) \geq F(B \cup \{t\}) - F(B).
    \end{align*}
\end{definition}

Proposition~\ref{prop:sw_posonly} establishes that when the $\splan$ is restricted to revealing only positive targets, the social welfare function $F(S)$ is monotone and submodular.
\begin{proposition}
\label{prop:sw_posonly}
    When the $\splan$ is restricted to only revealing positive targets, then 
    $F: 2^{\Ts^+} \to \mathbb{R}_{\geq 0}$ is submodular. 
\end{proposition}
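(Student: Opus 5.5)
When only positive targets can be revealed, $P_x^{-}(S)=\emptyset$ for every $S\subseteq\Ts^+$, so each agent's term takes one of two values. Define the baseline $q_x=\delta_x^+/(\delta_x^++\delta_x^-)$ when $N(x)\neq\emptyset$, and $q_x=0$ otherwise. Then
\[
Q^S(x)=q_x+(1-q_x)\,\mathbbm{1}\{N(x)\cap S\neq\emptyset\},
\]
since $Q^S(x)=1$ exactly when some adjacent revealed target exists, and every revealed target is positive. Summing over agents gives
\[
F(S)=\sum_{x\in\Xs} q_x+\sum_{x\in\Xs}(1-q_x)\,\mathbbm{1}\{N(x)\cap S\neq\emptyset\}.
\]
This is a constant plus a weighted coverage function with nonnegative weights $w_x=1-q_x$. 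The set of agents covered by a target $t$ is $\{x: t\in N(x)\}$.

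The plan is to verify the submodular inequality of Definition~\ref{def:submodularity} directly from this decomposition. Fix $A\subseteq B\subseteq\Ts^+$ and $t\in\Ts^+\setminus B$. The marginal gain is
\[
\D_t(A)=\sum_{x:\,t\in N(x),\;N(x)\cap A=\emptyset} w_x ,
\]
and $\D_t(B)$ has the same form with $A$ replaced by $B$. Because $A\subseteq B$, the condition $N(x)\cap B=\emptyset$ implies $N(x)\cap A=\emptyset$. So the index set for $B$ is contained in the index set for $A$. Since $w_x\ge 0$, it follows that $\D_t(A)\ge\D_t(B)$. Nonnegativity of $F$ is immediate, and monotonicity is already given by Proposition~\ref{prop:monotone}.

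No step here is a real obstacle. The only point needing care is rewriting $Q^S(x)$ exactly in the two-valued form. This uses that $|P_x^-(S)|=0$ for all $S$, so the middle case of the definition of $Q^S(x)$ reduces to the constant $q_x$. It also needs the degenerate agents with $N(x)=\emptyset$: for them $N(x)\cap S=\emptyset$ always holds, so the indicator is $0$ and $Q^S(x)=0=q_x$, and their weight never contributes.
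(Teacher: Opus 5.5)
Your proof is correct and takes essentially the same route as the paper's: both treat $F$ on $\Ts^+$ as a coverage function and use the fact that the agents already satisfied under $A$ form a subset of those satisfied under $B$, so the set of agents newly helped by $t$ can only shrink. Your version is actually more precise. You weight each newly covered agent by $1-q_x$, whereas the paper writes the marginal gain as the unweighted count $|\gamma(t)\setminus\gamma(A)|$, which overstates the contribution of agents with $0<q_x<1$. The containment argument is unaffected either way.
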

The function $F$ is monotone (Proposition~\ref{prop:monotone}), and the marginal gain from revealing a positive target is at least as large when the current revealed target set is small as when it is large and more such targets are already known.  Formal proof in Appendix~\ref{sec:app-prelimsubmod-posonly}.

Proposition~\ref{prop:sw_negonly} shows that when the revealed targets include negative ones, $F$ remains monotone, but might not necessarily be submodular.
\begin{proposition}
\label{prop:sw_negonly}
    When the targets the $\splan$ reveals include negative ones, then the social welfare function might not necessarily be submodular. 
\end{proposition}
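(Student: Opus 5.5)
Since the statement only asserts that $F$ need not be submodular once negative targets may be revealed, we will prove it with an explicit counterexample: sets $A \subseteq B \subseteq \Ts$ and a target $t \notin B$ such that
\[
F(A\cup\{t\})-F(A) < F(B\cup\{t\})-F(B),
\]
which violates Definition~\ref{def:submodularity}. The mechanism to exploit is that revealing negatives is complementary. A single agent adjacent only to negative targets gets $Q^S(x)=0$ regardless of $S$, so that gives nothing. An agent with one positive and two negative neighbors does better: revealing one negative raises $Q$ from $1/3$ to $1/2$, a gain of $1/6$. Revealing the second negative after the first raises $Q$ from $1/2$ to $1$, a gain of $1/2$. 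Marginal returns to negative reveals increase, and this is the supermodular behavior we need.

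Concretely, take one agent $x$ with $N(x)=\{t_1^+, t_2^-, t_3^-\}$. Set $A=\emptyset$, $B=\{t_2^-\}$ and $t=t_3^-$, and compute $Q$ from the case definition in the formulation. With nothing revealed, $N(x)\cap S=\emptyset$, so $x$ picks uniformly from $N(x)$ and $Q^{\emptyset}(x)=1/3$. Revealing only $t_3^-$ gives $Q^{\{t_3\}}(x)=1/(1+1)=1/2$, so $F(A\cup\{t\})-F(A)=1/6$. Next, $Q^{B}(x)=1/2$. Revealing both negatives gives $Q^{B\cup\{t\}}(x)=1/(1+0)=1$, so $F(B\cup\{t\})-F(B)=1/2$. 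Since $1/6<1/2$, submodularity fails. The figure's graph works too: take $x_1$ with $N(x_1)=\{t_1^+,t_3^-\}$, $A=\emptyset$, $B=\{t_1\}$ and $t=t_3$. The gain from $t_3$ at $A$ is $1-1/2=1/2$, and at $B$ it is $0$. That direction is consistent with submodularity, so we use the three-target agent, which can also be appended to any graph as a separate component.

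The only care needed is the boundary cases of $Q^S$. We must check that "$|P^+_x(S)|=0$ and $N(x)>0$" is the branch that applies, and that the uniform choice when nothing is revealed agrees with the formula at $|P^-_x(S)|=0$. It does, since $\delta^+_x/(\delta^+_x+\delta^-_x)=1/3$. Optionally, we can remark that the example extends to any agent with $\delta^+_x=1$ and $\delta^-_x=k\ge 2$. The successive gains are $1/k-1/(k+1)$ for the first negative reveal, and so on, ending with a final jump of $1/2$ when the last negative is revealed. This shows the non-submodularity is robust and foreshadows the $\frac{2}{\sqrt{n}+2}$ greedy lower bound mentioned in the contributions. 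No step is a real obstacle; the argument is a direct computation.
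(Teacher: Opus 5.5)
Your counterexample is correct and is essentially the paper's own proof. The paper uses four copies of your gadget: each agent has a private positive target and the two shared negatives $t_5^-,t_6^-$, with $A=\emptyset$, $B=\{t_5^-\}$, $t=t_6^-$, giving marginal gains $2/3 < 2$, i.e., four times your per-agent gains $1/6$ and $1/2$.

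One wording fix: you say the graph of Figure~\ref{fig:intro_bipartite_graph} ``works too,'' but as you then observe, it does not. There each agent has only one negative neighbor, so every $Q^S(x)$ is a coverage-type function and $F$ is actually submodular. Drop that phrase.
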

\begin{proof}[Proof sketch]
    Consider an agent adjacent to at least two positive and two negative targets.  Revealing an additional positive target results in $0$ marginal gain, whereas revealing an additional negative target before any adjacent positive target is revealed results in increasing marginal gain because the probability of emulating an adjacent positive target increases with increase in the number of revealed adjacent negative targets. Full proof in Appendix~\ref{sec:app-prelimsubmod-negonly}.
\end{proof}

\section{The Standard Model}\label{sec:standardmodel}
We begin with a simple and intuitive greedy algorithm that selects up to $K$ targets to maximize social welfare (Section~\ref{sec:standard_greedy}). As shown in Appendix~\ref{sec:app-classicgreedy}, this algorithm runs in polynomial time.

Its performance, however, depends critically on the structure of the welfare function. Although the function is always monotone, submodularity hinges on the planner’s disclosure policy. When disclosure is restricted to positive targets, submodularity is preserved, and the greedy algorithm achieves the standard $(1 - 1/e)$-approximation guarantee. Once negative targets are allowed, submodularity may fail, and the algorithm can perform arbitrarily poorly. 
To restore the guarantee, we introduce a proxy welfare function that remains submodular even when negative targets can be revealed. Additionally, when all agents have atmost $c$ negative target neighbors, we show that this proxy achieves a constant-factor approximation to the true optimal welfare gain (Section~\ref{subsec:approxs}).

Beyond the single-group case, we investigate the algorithm's performance on multiple agent subpopulations, formalizing a fairness guarantee that ensures each group $a$ attains welfare (gain) proportional to the maximum achievable under budget $K$ if $a$ is catered to in isolation (Section~\ref{sec:fairness}).

Finally, we assess the potential for stronger algorithmic results and demonstrate that the existing guarantees are essentially tight. When only positive targets can be revealed,  no polynomial-time algorithm can achieve a substantially better worst-case guarantee, and when both positive and negative targets can be revealed, there is no approximation solution (Section~\ref{sec:np-hardness}). 
See Appendix~\ref{sec:app_standard_algos} for missing proofs and alternative greedy strategies.

\subsection{The Greedy Algorithm} 
\label{sec:standard_greedy}
The main result of this section is a greedy algorithm that selects up to $K$ targets to maximize social welfare (Algorithm~\ref{alg:greedy_lbreveal}). Proposition~\ref{prop:opt_runtime_greedy} in Appendix~\ref{sec:app-classicgreedy} analyzes its complexity.

\subsubsection*{Overview of Algorithm~\ref{alg:greedy_lbreveal}}
At each iteration, Algorithm~\ref{alg:greedy_lbreveal} reveals the target \(t^\star \in \Ts \setminus S_{\mathrm{g}}\) that yields the highest marginal gain \((F(S_{\mathrm{g}} \cup \{t^\star\}) - F(S_{\mathrm{g}}))\). The process repeats until no unrevealed target yields a positive marginal gain or when the budget is exhausted.

Unless otherwise stated, the initial revealed target set is empty \(S'=\emptyset\). Algorithm~\ref{alg:greedy_lbreveal} is run with target set \(\Ts'=\Ts^{+}=\{t\in\Ts \mid f(t)=+1\}\) when target reveal is restricted to only positive targets, with \(\Ts'=\Ts^{-}=\{t\in\Ts \mid f(t)=-1\}\) when restricted to negative targets, and with \(\Ts'=\Ts\) when no restriction is imposed.

\begin{algorithm}[ht!]
\caption{Greedy Target Reveal}
\label{alg:greedy_lbreveal}
\begin{algorithmic}[1]
    \State \textbf{Input:} Graph $\graph = (\Xs \cup \Ts, E)$, targets \(\Ts'\), 
    labels $\{f(t)\}_{t \in \Ts'}$, budget $K$, initial set $S'=\emptyset$
    \State \textbf{Output:} Solution set 
    $S_{\mathrm{g}}  \subseteq \Ts'$ with $|S_{\mathrm{g}}| \leq K$,
    and social welfare $F(S_{\mathrm{g}})$
    \State $S_{\mathrm{g}} \gets S'$
    \While{$|S_{\mathrm g}| \leq K$}
      \State $t^\star\gets\arg\max_{t\in\Ts'\setminus S_{\mathrm g}}\big(F(S_{\mathrm g}\cup\{t\})-F(S_{\mathrm g})\big)$
      \If{$F(S_{\mathrm g}\cup\{t^\star\}) = F(S_{\mathrm g})$}
        \State \textbf{break}
      \EndIf
      \State $S_{\mathrm g}\gets S_{\mathrm g}\cup\{t^\star\}$
    \EndWhile
    \State \Return $(S_{\mathrm g}, \ F(S_{\mathrm g}))$
\end{algorithmic}
\end{algorithm}

\subsection{Approximation Guarantees}
\label{subsec:approxs}
We evaluate the social welfare achieved by the classic greedy algorithm (Algorithm \ref{alg:greedy_lbreveal}) under three information disclosure regimes: revealing only positive targets, revealing only negative targets, and revealing both. We compare the resulting social welfare to the optimal solution.

\subsubsection*{The $\splan$ is restricted to revealing only positive targets}\label{subsec:onlyposnodes}
By Proposition~\ref{prop:sw_posonly}, restricting the planner to positive targets makes $F$ monotone and  submodular, implying that Algorithm~\ref{alg:greedy_lbreveal} achieves a $(1 - 1/e)$-approximation (Theorem~\ref{thm:monotonesub}).

\begin{theorem}
\label{thm:monotonesub}
   When the $\splan$ can only reveal positive targets, Algorithm~\ref{alg:greedy_lbreveal} achieves an \((1 - 1/e)\)-approximation for the  \(\displaystyle \max_{\substack{S:\, |S|\le K \\ f(t)=1\ \forall t\in S}}\!\!F(S) \) problem. That is, $F(S_g) \ge (1 - 1/e)\, F(S^\star)$, where $S_g$ is the greedy solution and $S^\star$ is the optimal solution. 
\end{theorem}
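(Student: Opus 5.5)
The plan is to reduce the statement to the Nemhauser–Wolsey–Fisher theorem for greedy maximization of a monotone submodular function under a cardinality constraint. We will apply it to $F$ restricted to subsets of the ground set $\Ts^{+}$. Its hypotheses are exactly the two earlier results. Proposition~\ref{prop:monotone} gives monotonicity of $F$ on all of $2^{\Ts}$, hence on $2^{\Ts^{+}}$. Proposition~\ref{prop:sw_posonly} gives submodularity of $F$ on $2^{\Ts^{+}}$.

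Since the paper's $F$ need not vanish at $\emptyset$, we will run the argument on the gain $G(S)=F(S)-F(\emptyset)$. This function is normalized, monotone and submodular, because subtracting a constant does not change marginal gains. We then carry out the standard argument directly, to handle the early-termination rule of Algorithm~\ref{alg:greedy_lbreveal}.

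\textbf{Step 1 (one-step progress).} Let $S_i$ be the greedy set after $i$ iterations. Let $S^\star\subseteq\Ts^{+}$ be optimal, and without loss of generality $|S^\star|=K$ by monotonicity. By monotonicity and then submodularity,
\[
G(S^\star)\le G(S_i\cup S^\star)\le G(S_i)+\sum_{t\in S^\star\setminus S_i}\D_t(S_i)\le G(S_i)+K\,\D_{t_{i+1}}(S_i),
\]
where $t_{i+1}$ is the greedy choice. This yields $G(S^\star)-G(S_{i+1})\le (1-1/K)\bigl(G(S^\star)-G(S_i)\bigr)$.

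\textbf{Step 2 (early termination).} If the loop breaks because the maximum marginal gain is $0$, the same inequality gives $G(S^\star)\le G(S_i)$. The greedy set is then already optimal and the bound holds trivially.

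\textbf{Step 3 (iterate).} Otherwise greedy performs $K$ iterations. We will read the loop condition as producing at most $K$ elements, consistent with the output specification $|S_g|\le K$. Iterating Step 1 gives $G(S^\star)-G(S_g)\le(1-1/K)^K G(S^\star)\le e^{-1}G(S^\star)$, so $G(S_g)\ge(1-1/e)\,G(S^\star)$.

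\textbf{Step 4 (convert back to $F$).} Adding $F(\emptyset)\ge 0$ to both sides gives
\[
F(S_g)\ge(1-1/e)\,G(S^\star)+F(\emptyset)\ge(1-1/e)\,F(S^\star),
\]
which is the claimed guarantee.

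The main obstacle is bookkeeping rather than mathematics. We must check that the greedy argmax ranges over $\Ts'=\Ts^{+}$ in this regime, so that every set encountered lies in the domain where Proposition~\ref{prop:sw_posonly} applies. We must also handle the off-by-one in the while condition and the nonzero $F(\emptyset)$, as done in Steps 2–4. All substantive work, namely submodularity, is supplied by Proposition~\ref{prop:sw_posonly}.
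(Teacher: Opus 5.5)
Your proposal is correct and follows the paper's route: the paper combines monotonicity (Proposition~\ref{prop:monotone}) with submodularity on $2^{\Ts^{+}}$ (Proposition~\ref{prop:sw_posonly}) and cites Nemhauser et al.; you unroll that standard greedy argument on the normalized gain $G$, which also gives the paper's remark that $G(S_g)\ge(1-1/e)\,G(S^\star)$. Your extra bookkeeping (early termination, the loop's off-by-one, nonzero $F(\emptyset)$) is sound; the only nit is that the step ``without loss of generality $|S^\star|=K$'' is unnecessary and fails when $|\Ts^{+}|<K$, since Step~1 only uses $|S^\star\setminus S_i|\le K$.
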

This guarantee follows from the classical result of Nemhauser et al. [1978]~\cite{Nemhauser78}.

Note that when the $\splan$ is restricted to revealing only positive targets, the welfare gain (Eqn.~\ref{eq:gain_social_welfare}) is monotone and submodular because $F(\emptyset)$ is a constant and $F(S)$ is monotone and submodular. Consequently, the approximation guarantee in Theorem~\ref{thm:monotonesub} extends to this setting. That is, $G(S_g) \ge (1 - 1/e)\, G(S^\star)$ where $S_g$ is the greedy solution and $S^\star$ is the optimal solution.

\subsubsection*{The $\splan$ can only reveal negative targets}\label{subsec:onlyneg-approx}
By Proposition~\ref{prop:sw_negonly}, restricting the social planner to revealing only negative targets preserves monotonicity but not submodularity of $F$. We construct an example where the approximation ratio of Algorithm~\ref{alg:greedy_lbreveal} can be strictly below $3/\sqrt{2n}$, where $n$ is the number of agents (Theorem~\ref{thm:neg-appratio} in Appendix~\ref{sec:app_sm_approxs}).

\subsubsection*{The $\splan$ can reveal both positive and negative targets}\label{subsec:posneg-approx}
Assume the social planner may reveal both positive and negative targets. By Proposition~\ref{prop:sw_negonly}, including negative targets in the revealed set preserves monotonicity but not the submodularity of true social welfare function $F$. We construct an example where the approximation ratio of Algorithm~\ref{alg:greedy_lbreveal} can be strictly below 
\(2/(\sqrt{n}+2)\),  where \(n\) is the number of agents (Appendix~Theorem~\ref{thm:negpos-appratio}).

Now assume each agent has at most $c$ negative target neighbors for some constant $c \ge 1$. That is, we assume  $\delta_x^- \le c$ for all $x \in \Xs$. We call such an agent $c$-bounded. Definition~\ref{def:proxy_sw} introduces the proxy social welfare function, with proxy welfare gains defined in Definition~\ref{def:proxywelfare_gain}.
\begin{definition}[Proxy social welfare function]
\label{def:proxy_sw}
\begin{equation}
    F_p(S) = \sum_{x \in \Xs} Q^{S}_{p}(x), \qquad \mathrm{where}
\label{eq:proxy_sw}
\end{equation}
\[
Q^{S}_{p}(x) = 
\begin{cases}
1,
& \mathrm{if} \ |P^{+}_{x}(S)|>0, \\
\dfrac{\delta_x^{+}}{|N(x)|-\mathbf{1}_{\{|P^{-}_{x}(S)|\geq1\}}}
\left(\!1+\dfrac{ \max\{0,|P^{-}_{x}(S)|-1\}}{|N(x)|}\!\right)
& \mathrm{if} \ |P^{+}_{x}(S)|=0 \ \mathrm{and} \ |N(x)|>0, \\
0,
& \mathrm{otherwise.}
\end{cases}
\]
is the proxy total probability mass assigned by agent $x$ to positively labeled targets in its neighborhood given the revealed set $S$. 
In particular, if we reveal an agent's negative neighbor, we only increase $Q^{S}_{p}(x)$ by the amount that revealing the first negative neighbor helped. 
\end{definition}
\begin{definition}[Proxy welfare gain]
\label{def:proxywelfare_gain}
The proxy social welfare gain from revealing $S$ is defined as the difference between the proxy welfare under $S$ and the welfare under the empty set:
\begin{equation}
    G_p(S) = F_p(S) - F(\emptyset)
\label{eq:proxygain_social_welfare}
\end{equation}
\end{definition}

By definition (Eqn.~\ref{eq:social_welfare}~and Defn.~\ref{def:proxywelfare_gain}), the proxy welfare (gain) is less than or equal to the true welfare (gain). What we show is that if agents are $c$-bounded for some constant $c$, then in fact the proxy welfare (gain) is within a constant factor of the true welfare (gain). 
That is, $F_p(S) \leq F(S) \leq cF_p(S)$ and $G_p(S) \leq G(S) \leq cG_p(S)$ (Appendix~Lemma~\ref{lem:proxy}).  Moreover, when the social planner can reveal both positive and negative targets, the proxy social welfare function is submodular, and the proxy is submodular on the gain (Appendix~Lemmas~\ref{lem:proxy-sub} and~\ref{col:proxygain-sub}). Consequently, \textit{proxy-greedy}, defined as running the classical greedy algorithm on the proxy welfare function rather than the true welfare, achieves a constant-factor approximation to the true social welfare gain (Theorem~\ref{thm:negpos-appratio2}). 
Note that for any solution set, the proxy achieves $(1/c)$-factor approximation to the true optimal welfare (Appendix~Remark~\ref{rem:1/capprox}).
\begin{theorem}
\label{thm:negpos-appratio2}
    When revealed targets may include negative ones, and all agents are $c$-bounded for some constant $c \geq 1$, the proxy-greedy algorithm achieves a constant-factor approximation to the true social welfare gain. That is, if $S_p$ denotes the set revealed by proxy-greedy and $S^\star$ an optimal set under the true social welfare, then $G(S_p) \geq \frac{1-1/e}{c} G(S^\star)$.
\end{theorem}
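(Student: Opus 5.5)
The plan is to combine three facts already cited in the text: the sandwich bound of Lemma~\ref{lem:proxy}, the submodularity of the proxy gain from Lemmas~\ref{lem:proxy-sub} and~\ref{col:proxygain-sub}, and the Nemhauser et al.\ guarantee~\cite{Nemhauser78} used in Theorem~\ref{thm:monotonesub}. The chain of inequalities we aim for is
\begin{equation*}
G(S_p) \;\ge\; G_p(S_p) \;\ge\; (1-1/e)\,G_p(S_p^\star) \;\ge\; (1-1/e)\,G_p(S^\star) \;\ge\; \frac{1-1/e}{c}\,G(S^\star),
\end{equation*}
where $S_p^\star$ is an optimal set of size at most $K$ for the proxy gain $G_p$.

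The steps, in order, are as follows.
\begin{enumerate}
\item Establish that $G_p$ is a legitimate input to the greedy theorem. Since $G_p(S)=F_p(S)-F(\emptyset)$ differs from $F_p$ by a constant, $G_p$ and $F_p$ have the same marginal gains. Hence running greedy on $F_p$ is the same as running it on $G_p$, and submodularity transfers by Lemma~\ref{col:proxygain-sub}.
\item Check monotonicity of $F_p$ directly from Definition~\ref{def:proxy_sw}. Revealing a positive neighbor sends $Q_p^S(x)$ to $1$. Revealing a negative neighbor can only decrease the denominator, through the indicator, or increase the factor $1+\max\{0,|P^-_x(S)|-1\}/|N(x)|$. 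All values stay at most $1$, so the jump to $1$ is nonnegative.
\item Check normalization, $G_p(\emptyset)=0$. With nothing revealed, $Q_p^\emptyset(x)=\delta_x^+/|N(x)|=Q^\emptyset(x)$, so $F_p(\emptyset)=F(\emptyset)$ and $G_p$ is nonnegative by monotonicity.
\item Apply Nemhauser et al.\ to the monotone submodular normalized function $G_p$ under the cardinality constraint $K$. This yields $G_p(S_p)\ge(1-1/e)\max_{|S|\le K}G_p(S)$.
\item Use $S^\star$ as a feasible competitor in that maximum, giving $G_p(S_p^\star)\ge G_p(S^\star)$.
\item Apply Lemma~\ref{lem:proxy} twice: $G(S^\star)\le c\,G_p(S^\star)$ on the optimal side, and $G(S_p)\ge G_p(S_p)$ on the algorithm side.
\end{enumerate}

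One subtlety concerns the early-stopping rule of Algorithm~\ref{alg:greedy_lbreveal}: it halts when no marginal gain is positive. This does not hurt the guarantee. With zero marginal gains for all remaining elements, submodularity implies that the current set already attains $\max_{|S|\le K} G_p(S)$, because $G_p(S^\star_p)\le G_p(S_g)+\sum_{t\in S^\star_p}\Delta_t(S_g)$.

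The main obstacle is the bound $G(S)\le c\,G_p(S)$ on gains, not just on welfare values. If one only had $F\le cF_p$, subtracting $F(\emptyset)$ would not preserve the factor. I would rely on Lemma~\ref{lem:proxy} as stated. If it needed reproving, I would argue per agent. When a positive neighbor is revealed, the true and proxy gains coincide. Otherwise, with $r=|P^-_x(S)|\le\delta_x^-\le c$ revealed negatives, the true gain is $\delta^+/(|N|-r)-\delta^+/|N|$ and the proxy gain is at least $\delta^+/(|N|-1)-\delta^+/|N|$. I would then bound the ratio of these two increments by $c$, for instance by writing each as a sum of successive increments and comparing each increment to the first. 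This is the step whose constant I would check most carefully.
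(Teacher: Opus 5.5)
Your chain $G(S_p)\ge G_p(S_p)\ge(1-1/e)\,G_p(S_p^\star)\ge(1-1/e)\,G_p(S^\star)\ge\frac{1-1/e}{c}\,G(S^\star)$ is exactly the paper's proof, and your extra checks correctly fill in details the paper leaves implicit: monotonicity and normalization of the proxy, and the harmless early-stopping rule. One caution on your optional re-derivation of $G\le c\,G_p$: with $r=|P^-_x(S)|$, use the exact proxy gain $r\,\delta_x^+/\bigl(|N(x)|(|N(x)|-1)\bigr)$ and compare totals, which gives the per-agent ratio $\frac{|N(x)|-1}{|N(x)|-r}\le 1+\frac{\delta_x^- -1}{\delta_x^+}\le c$, whereas comparing true increments to the first one term by term, or using only the first increment as a lower bound, loses an extra factor of order $c$.
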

\begin{proof}
    Proof in Appendix~\ref{subsec:app-posneg-constapprox}
\end{proof}

\subsection{Fairness}\label{sec:fairness}
Using Algorithm~\ref{alg:greedy_lbreveal}, along with the true and proxy social welfare functions and the $c$-boundedness property, we assess the fairness of the greedy algorithm in a grouped setting.

Suppose agents are divided into $w$ groups $A_1,\ldots,A_w$, and the social planner has a total reveal budget of $K$. 
For each group $a \in [w]$, define
\[
\mathrm{OPT}_{a}^{k}
= \max_{\substack{S \subseteq \Ts \\ |S| \le k}}
\left( \sum_{x \in A_a} Q^{S}(x) - \sum_{x \in A_a} Q^{\emptyset}(x) \right),
\qquad 1 \le k \le K.
\]
as the maximum social welfare gain for group $a$ using a budget of $k$, assuming we focus solely on that group.
Throughout this section, we assume that the number of groups $w$ is a constant and that the total target reveal budget is 
$K \ge w$. A revealed target set \textit{satisfies all groups} if every group $a \in [w]$ receives social welfare gain proportional to $\mathrm{OPT}_{a}^{K}$. This goal is natural because no group can achieve more than $O(\mathrm{OPT}_{a}^{K})$ social welfare, even if it were allocated the entire budget. We now characterize conditions under which this fairness guarantee can be met.

For arbitrary graphs, if the social planner can only reveal positive targets and allocates a budget of $K_a = \lceil K/w \rceil$ to each group, then all groups can be satisfied (Appendix Theorem~\ref{thm:pos_fairsub}). In contrast, when negative targets may be revealed, there exist graph structures where no solution satisfies all groups (Appendix Remark~\ref{rem:posneg_notfair}). 
On the other hand, when the revealed targets may include negative ones, but all agents are $c$-bounded, then running the proxy-greedy algorithm separately on each group with a budget of $K_a$ guarantees that each group $a$ is helped by $\Omega(\mathrm{OPT}_{a}^{K})$  (Appendix Corollary~\ref{col:proxy-fairness}).

\subsection{NP-Hardness Results}\label{sec:np-hardness}
In this section, we show that when the $\splan$ is restricted to positive targets and the social welfare function is monotone and submodular, hardness follows from a reduction from the max-$K$-cover problem. When the planner is restricted to only revealing negative targets, we prove hardness of the maximization problem via a reduction from the $K$-clique problem in a $\theta$-regular graph.

\begin{theorem}
\label{thm:np-hardness_posonly}
Given a set of agents $x_1,\ldots,x_n$ and a set of targets $\Ts$, when the $\splan$ can only reveal positive targets 
$\Ts^+ = \{t \in \Ts | f(t)=+1\}$, the problem of finding $S_g \subseteq \Ts^+$ of size at most $K \ (|S_g| \le K)$ that maximizes social welfare $F(S_g)$ is NP-hard. Also, unless $\mathrm{P}=\mathrm{NP}$, the problem of finding a positive target subset $S_g \subseteq \Ts^+$ of size $K$ that maximizes social welfare cannot be approximated within a factor better than 
$1-1/e$.
\end{theorem}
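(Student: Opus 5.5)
We reduce from max-$K$-cover. An instance consists of a universe $U=\{u_1,\ldots,u_n\}$, a family of subsets $\mathcal{C}=\{C_1,\ldots,C_m\}$ and a budget $K$; the goal is to choose $K$ sets covering as many elements as possible. Deciding whether $K$ sets cover at least a given number of elements is NP-hard. By Feige's result, unless $\mathrm{P}=\mathrm{NP}$, no polynomial-time algorithm approximates max-$K$-cover within $1-1/e+\epsilon$ for any $\epsilon>0$. Our plan is an approximation-preserving reduction in which the welfare gain $G(S)=F(S)-F(\emptyset)$ equals a fixed multiple of the coverage, up to an additive term we can make negligible.

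\textbf{Construction.} For each element $u_i$ create an agent $x_i$. For each set $C_j$ create a positive target $t_j^+$, with an edge $(x_i,t_j^+)$ whenever $u_i\in C_j$. To every agent also attach $M$ fresh private negative targets. Since the planner reveals only positive targets, $P^-_x(S)=\emptyset$ always. Hence $Q^S(x_i)=1$ if some $t_j^+\in S$ has $u_i\in C_j$, and otherwise $Q^S(x_i)=Q^\emptyset(x_i)=\delta^+_{x_i}/(\delta^+_{x_i}+M)$. It follows that
\[
G(S)=\sum_{i\,:\,x_i \text{ covered by } S}\Bigl(1-\tfrac{\delta^+_{x_i}}{\delta^+_{x_i}+M}\Bigr),
\]
and each term lies in $[1-m/M,\,1]$. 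Agents of elements in no set can simply be dropped. Choosing $M=m\cdot n^2$ (polynomial size) gives $\mathrm{cov}(S)(1-1/n^2)\le G(S)\le \mathrm{cov}(S)$.

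\textbf{NP-hardness.} Since $G(S)$ and $\mathrm{cov}(S)$ differ by less than $1/n$, and coverage is an integer, comparing $G$ against a suitable threshold decides whether coverage $\ge L$. Alternatively, one can phrase the problem as maximizing $F$ directly, since $F(\emptyset)$ is a computable constant. This settles the decision version, so the optimization problem is NP-hard.

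\textbf{Inapproximability.} Suppose a polynomial-time algorithm returns $S$ with $F(S)\ge\alpha F(S^\star)$ for some $\alpha>1-1/e$. Since $F(\emptyset)=\sum_i \delta^+_{x_i}/(\delta^+_{x_i}+M)\le nm/M=1/n$, this additive term is negligible relative to $\mathrm{OPT}_{\mathrm{cov}}\ge1$. So $F$ and $\mathrm{cov}$ agree up to a factor $1\pm O(1/n)$ on every set. The set $S$ would then cover at least $(\alpha-O(1/n))\,\mathrm{OPT}_{\mathrm{cov}}$ elements, beating $1-1/e$ for large $n$ and contradicting Feige. Instances with bounded $n$ are solvable by brute force, so they do not affect the argument.

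\textbf{Main obstacle.} The delicate step is handling the baseline $F(\emptyset)$. Without the padding negatives, uncovered agents still contribute a fractional baseline, and agents may also have additional unrevealable positive neighbors, so $F$ does not equal coverage. Adding many private negative neighbors drives the baseline to zero while keeping the reduction polynomial. We must also make sure these padding targets are never useful to reveal, which holds because only positive targets are allowed.
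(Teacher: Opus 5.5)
Your proof is correct, and it departs from the paper's in one consequential choice: how much negative padding each agent gets.

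The paper attaches exactly $\delta^+_{x_i}$ private negatives to each agent. Every agent then starts at $1/2$, and $F(S)=\frac{n}{2}+\frac{1}{2}\mathrm{cov}(S)$ holds exactly, so NP-hardness follows with threshold $\frac{n}{2}+\frac{\mathcal{E}}{2}$ and no rounding. However, its inapproximability step compares $\bigl(F(S)-\frac{n}{2}\bigr)/\bigl(F(S^\star)-\frac{n}{2}\bigr)$. That transfers the $1-1/e$ barrier only to the gain $G$. On $F$ itself that instance family is easy: the baseline is at least half of $F(S^\star)\le n$, so the empty set is already a $\frac{1}{2}$-approximation and greedy a $(1-\frac{1}{2e})$-approximation.

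Your choice $M=mn^2$ pushes the baseline down to $F(\emptyset)\le 1/n$. This gives $\mathrm{cov}(S)\le F(S)\le \mathrm{cov}(S)+\frac{1}{n}$ for all $S$, so the barrier transfers to $F$ directly, which is what the statement literally claims. The cost is an approximate rather than exact correspondence, which your integrality argument and the $O(1/n)$ slack handle.

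One wording fix: ``agree up to a factor $1\pm O(1/n)$ on every set'' is false for sets of zero coverage. What you actually need is the additive bound above together with $\mathrm{OPT}_{\mathrm{cov}}\ge 1$, which gives $\mathrm{cov}(S)\ge \alpha F(S^\star)-\frac{1}{n}\ge(\alpha-\frac{1}{n})\mathrm{OPT}_{\mathrm{cov}}$. Your brute-force remark for bounded $n$ is fine, since at most $n$ sets are ever useful.
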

\begin{proof}[Proof sketch]
    We prove NP-hardness by a polynomial-time reduction from max-$K$-cover. Given a universe $U=\{e_1,\ldots,e_n\}$ and sets $\mathcal C=\{C_1,\ldots,C_m\}$, we create an instance of our problem by creating one agent for each element and one positive target for each set, connecting an agent to the target if and only if it is in the set. To ensure all agents have the same initial welfare, each agent is also connected to a number of private negative targets equal to its positive degree, ensuring that before any positive target is revealed, every agent contributes $1/2$, so $F(\emptyset)=n/2$. Revealing a positive target raises the contribution of all adjacent agents from $1/2$ to $1$, implying that for any $S\subseteq\Ts^+$ with 
    $|S|\le K$, $F(S)=\frac{n}{2}+\frac12\bigl|\bigcup_{t_j\in S}C_j\bigr|$. Hence, achieving welfare at least 
    $W=\frac{n}{2}+\frac{\mathcal E}{2}$ is equivalent to covering at least $\mathcal E$ elements with at most $K$ sets, establishing NP-hardness. Moreover, since the welfare gain beyond the baseline is exactly proportional to the achieved coverage, any approximation for maximizing social welfare induces an approximation of the same factor for max-$K$-cover. Therefore, unless $\mathrm{P}=\mathrm{NP}$, no polynomial-time algorithm can approximate the problem within a factor better than $1-1/e$. Formal proof is included in Appendix~\ref{sec:sm_nphardness-posonly}.
\end{proof}

\begin{remark}
    If the $\splan$ can reveal both positive and negative targets, the problem of revealing a subset $S_g \subseteq \Ts$ with  
    $|S_g| \leq K$ that maximizes social welfare remains NP-hard. This follows directly from 
    Theorem~\ref{thm:np-hardness_posonly} where even though both positive and negative targets can be revealed, revealing negative targets results in less social welfare than revealing positive ones, so the $\splan$'s optimal strategy reduces to the revealing only positive targets. Hence, permitting both types of targets does not change the NP-hardness of the problem.
\end{remark}

\begin{theorem}
\label{thm:np-hardness_negonly}
Given a graph $\graph = (\Xs \cup \Ts, E)$ with $n = |\Xs|$ agents and target set $\Ts$, suppose the $\splan$ can reveal only negative targets $\Ts^{-} = \{t \in \Ts \mid f(t) = -1\}$. Then finding a subset $S_g \subseteq \Ts^{-}$ with $|S_g| \leq K$ that maximizes social welfare is NP-hard.
\end{theorem}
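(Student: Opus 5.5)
We prove NP-hardness by a polynomial-time reduction from $K$-clique in a $\theta$-regular graph $H=(V,E_H)$, which is NP-complete. The intended correspondence is that vertices of $H$ become negative targets and edges of $H$ become agents. Revealing a set $S$ of negative targets should give the largest welfare exactly when many agents have \emph{both} of their designated negative targets revealed. This is precisely the supermodular (increasing-returns) behaviour identified in Proposition~\ref{prop:sw_negonly}.

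\textbf{Construction.} For each vertex $v\in V$ create a negative target $t_v$. For each edge $e=\{u,v\}\in E_H$ create an agent $x_e$ adjacent to $t_u$, $t_v$, and one private positive target $p_e$. Thus $\delta^+_{x_e}=1$ and $\delta^-_{x_e}=2$. Since only negative targets may be revealed, $|P^+_{x}(S)|=0$ throughout, so
\[
Q^S(x_e)=\frac{1}{1+2-|S\cap\{t_u,t_v\}|}\in\left\{\tfrac13,\tfrac12,1\right\}.
\]
Writing $e(S)$ for the number of edges of $H$ inside $S$, and using $\theta$-regularity, the number of edges with exactly one endpoint in $S$ is $\theta|S|-2e(S)$. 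Hence
\[
F(S)=\tfrac13|E_H| + \left(\tfrac12-\tfrac13\right)\bigl(\theta|S|-2e(S)\bigr) + \left(1-\tfrac13\right)e(S)
= \tfrac13|E_H| + \tfrac{\theta}{6}|S| + \tfrac13 e(S).
\]

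\textbf{Equivalence.} By monotonicity (Proposition~\ref{prop:monotone}) we may assume $|S|=K$ whenever $K\le |V|$. The welfare is then $\frac13|E_H|+\frac{\theta K}{6}+\frac13 e(S)$. Since $e(S)\le\binom K2$ with equality iff $S$ is a clique, $H$ has a $K$-clique iff the maximum welfare is at least
\[
W=\tfrac13|E_H|+\tfrac{\theta K}{6}+\tfrac13\tbinom K2 .
\]
The reduction is clearly polynomial, with $n=|E_H|$ agents and $|V|+|E_H|$ targets.

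\textbf{Main obstacle.} The delicate step is making the linear term depend only on $|S|$. Without regularity, the count of singly-covered edges depends on the degrees in $S$, and a high-degree non-clique could beat a clique. This is exactly why the reduction uses $K$-clique in $\theta$-regular graphs. I would cite its NP-completeness (e.g., via the standard padding argument that regularizes a graph by adding copies and dummy vertices without creating new $K$-cliques for $K\ge3$). I would also double-check the weights $\frac12-\frac13$ versus $1-\frac13$ to confirm the coefficient on $e(S)$ is strictly positive after the substitution.
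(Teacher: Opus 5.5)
Your proof is correct and follows the paper's own reduction: $K$-clique in a $\theta$-regular graph, one negative target per vertex, and one agent per edge with a private positive target, giving $F(S)=\frac13|E_H|+\frac{\theta}{6}|S|+\frac13 e(S)$. Your version is slightly more complete than the paper's. By keeping $e(S)$ general and using monotonicity to force $|S|=K$, you make the ``only if'' direction explicit, whereas the paper's edge count implicitly assumes $S$ is already a clique.
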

\begin{proof}[Proof sketch]
    We prove NP-hardness by a polynomial-time reduction from the $K$-clique problem in a 
    $\theta$-regular graph. Given an instance $\mathcal G_c=(V,E_c)$, we create an instance of our problem by creating by constructing a bipartite instance with one negative target $t_v^-$ per vertex $v\in V$, one positive target $t_{uv}^+$ per edge $\{u,v\}\in E_c$, and one agent $x_{uv}$ per edge, with neighborhood $N(x_{uv})=\{t_u^-,t_v^-,t_{uv}^+\}$. With no revealed targets, each agent contributes $1/3$ to welfare. Revealing a single adjacent negative target increases that agent's contribution by $1/6$, while revealing both adjacent negatives increases it by $2/3$. Setting the budget to $K$ and the threshold $W$ to $\frac{n}{3}+\frac{K\theta}{6}+\frac{1}{3}\binom{K}{2}$, a set $S\subseteq \Ts^-$ of size $K$ achieves social welfare of at least $W$ if and only if the corresponding vertices form a $K$-clique since non-clique pairs fail to realize the $\binom{K}{2}$ agents that gain the full $2/3$ increase in probability for emulating a positive target. Thus, deciding whether such a set $S$ exists is NP-hard, and because a candidate solution can be verified in polynomial time, the decision problem is NP-complete, which implies NP-hardness of the optimization problem. Formal proof is included in Appendix~\ref{sec:sm_nphardness-negonly}.
\end{proof}

\section{Budgeted Interventions and Generalization Guarantees}
The standard model (Section~\ref{sec:standardmodel}) operates under the assumption that agents can observe their target neighbors but cannot distinguish which of them are positive. While this assumption captures many scenarios, it overlooks a key practical limitation: even when a planner reveals a welfare-maximizing set of targets, some agents may still have a very low probability of emulating a positive target. To address this, we extend the standard model to include budget-constrained interventions that go beyond passive disclosure of targets' information. Specifically, we introduce the targeted intervention model (Section~\ref{sec:intervention_model}) that directly connects agents prone to emulating negative targets to positive ones, and the coverage radius model (Section~\ref{sec:coveragemodel}) that amplifies the visibility of positive targets, so agents can observe and emulate them.  Finally, since social planners in practice may observe only a subset of the network, we generalize the standard model to this setting and provide a sample-complexity guarantee when the planner observes a sampled subset of agents (Section~\ref{sec:learn}).

\subsection{The Targeted Interventions Model}
\label{sec:intervention_model}
Consider a setting where the $\splan$ identifies high-risk agents, specifically those most likely to emulate a negative target, and directly connects them to positive targets either before (\textit{pre-reveal}) or after (\textit{post-reveal}) executing the classic greedy algorithm under budget $K$. 
Below, we present an overview of both targeted intervention approaches. 
Additional details appear in Appendix~\ref{sec:app_interventionmodel_algos}.

\subsubsection*{Overview of the pre- and post-reveal intervention algorithms}
Let $B$ denote the intervention budget, and let $S_o$ be the set of targets revealed either before or after executing the classical greedy algorithm (Algorithm~\ref{alg:greedy_lbreveal}) with a  $K$ target reveal budget.
Let $\Xs_{\mathrm{hr}} \subseteq \Xs$ denote the set of at most $B$ high-risk agents selected for intervention by directly connecting them to a positive target.

Intervening on agent $x \in \Xs_{\text{hr}}$ raises its social welfare from $Q^{S_o}(x)$ to $1$. If $Q^{S_o}(x) = 1$, then the intervention was redundant. The closer $Q^{S_o}(x)$ is to $0$, the larger the intervention gain (i.e., the difference between social welfare from pre- or post-reveal intervention and Algorithm~\ref{alg:greedy_lbreveal}).
In \textit{pre-reveal intervention} (Appendix~Algorithm \ref{alg:TI_greedy_label_reveal}), total social welfare equals the welfare from applying Algorithm \ref{alg:greedy_lbreveal} to the updated graph after removing the intervened-on agents and their edges, plus the welfare from intervening on the high-risk agents $Q' \leq B$.
In \textit{post-reveal intervention} (Appendix~Algorithm \ref{alg:greedy_TI_label_reveal}), total social welfare equals the welfare returned by Algorithm \ref{alg:greedy_lbreveal} on the full graph, plus the welfare from intervening on high-risk agents $Q' = \sum_{x \in \Xs_{\text{hr}}} \big(1 - Q^{S_o}(x)\big)$.

\subsection{The Coverage Radius Model}\label{sec:coveragemodel}
In the standard and targeted intervention models, agents observe role models in their neighborhood but don't know who is positive or negative. Here, agents are instead unaware of role models in their neighborhood. Therefore, the social planner increases the visibility of positive role models to guide agents toward better decisions. See Appendix~\ref{sec:app_coveragemodel_algos} for more details.

Formally, consider a geometric bipartite graph $\graph^{+} = (\Xs \cup \Ts^+, E)$ with a set of \textit{d}-featured agents $\Xs=\{x_1,\ldots,x_n\}\subset \mathbb{R}^d$ on the left-hand side and positive targets 
$\Ts^+=\{t_1,\ldots,t_m\}\subset \mathbb{R}^d$ on the right. Each target is labeled positive, and an (unobserved) edge exists between an agent and a target if their Euclidean distance is at most $r$.

To make adjacent positive role models visible to agents so the agents can emulate them, the social planner could either expand agents' visibility or increase the reach of targets. Since the former is trivial, the coverage radius model focuses on interventions from the targets' perspective.  

Each target $t_i$ is assigned a radius $r_i\ge 0$, and an agent $x_j$ is reached if $\|t_i-x_j\|_2 \le r_i$ for some $i \in [m]$. Initially, $r_i=0$ for all $i \in [m]$, and a total radius budget $R$ constrains the intervention. 
The objective of the social planner is to maximize the number of agents reached:
\begin{equation}
\label{eq:coverage-max}
    \begin{aligned}
    \max_{\, r_1,\ldots,r_m \ge 0}\quad
    & \sum_{j=1}^{n} \mathbf{1}\!\left( \exists i \in [m]: \|t_i-x_j\|_2 \le r_i \right) \\
    \text{s.t.}\quad
    & \sum_{i=1}^{m} r_i \le R .
    \end{aligned}
\end{equation}

Algorithm~\ref{alg:radius_greedy_coverage} in Appendix~\ref{sec:app_coveragemodel_algos} presents a greedy approach to this problem, and Appendix~\ref{sec:app-cmexps} demonstrates its effectiveness on real-world datasets.

\subsection{Learning Setting}\label{sec:learn}
Consider a setting where the left-hand side of the bipartite graph $\mathcal{G} = (\Xs \cup \Ts, E)$ is replaced by a probability distribution $D$ over agents. The social planner draws agents i.i.d. from $D$ and for each agent, the planner observes its neighborhood (adjacent targets) and the probability of emulating a positive target. The planner's goal is to reveal a subset of targets $S\subseteq \Ts$ whose social welfare deviates from the true value (Eq.~\ref{eq:fs_dist}) by at most $\varepsilon$.
\begin{equation}
\label{eq:fs_dist}
F(S) = \mathbb{E}_{x \sim D} \left[ Q^S(x) \right]
\end{equation}
Given a budget $K$ and a sample graph $\mathcal{G} = (\Xs \cup \Ts, E)$ with 
$\mathcal{X}$ agents drawn i.i.d. from $D$, the $\splan$ runs Algorithm~\ref{alg:greedy_lbreveal} on this train graph $\mathcal{G}$ and returns the revealed target set $S_g \subseteq \Ts$ with 
$|S_g| \le K$ as its hypothesis. For a new agent $x_i$, $Q^{S_g}(x_i)$ estimates the agent's probability of emulating a positive target, and the performance of the hypothesis is measured as social welfare per agent. Theorem~\ref{thm:sc_bound} gives a sufficient sample size to ensure, with high probability, that the welfare returned by the hypothesis is within $\varepsilon$ of the true value.
\begin{theorem}
\label{thm:sc_bound}
Let $S_g \subseteq \Ts$ be the target set revealed by Algorithm~\ref{alg:greedy_lbreveal} 
on graph $\mathcal{G} = (\mathcal{X} \cup \mathcal{T},E)$, where
$\mathcal{X}$ is a set of agents sampled independently from ${\cal D},$  $|\Ts|=m$, and the target reveal budget is \(K\). There exists a universal constant $C>0$, such that for any $\varepsilon>0, \delta \leq 1$, if $$|\mathcal{X}| \ge  C((\varepsilon^2)^{-1}(K\log m + \log (1/\delta))),$$
then with probability at least $1-\delta$ the social welfare of the revealed target set $S_g$ differs from its true value by at most $\varepsilon$.
\end{theorem}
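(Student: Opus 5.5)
The plan is a uniform convergence argument over the finite class of candidate revealed sets. The output $S_g$ of Algorithm~\ref{alg:greedy_lbreveal} depends on the sample, so we cannot apply a concentration bound to $S_g$ directly. Instead we bound the deviation simultaneously for every set the algorithm could return. Let
\[
\mathcal{H}_K=\{S\subseteq\Ts : |S|\le K\}.
\]
This is the hypothesis class, and $S_g\in\mathcal{H}_K$ always holds regardless of the sample. Its size is $|\mathcal{H}_K|=\sum_{j=0}^{K}\binom{m}{j}\le (m+1)^K$, hence $\log|\mathcal{H}_K|\le K\log(m+1)=O(K\log m)$ for $m\ge 2$. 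The trivial case $m=1$ is handled by adjusting the constant.

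Fix $S\in\mathcal{H}_K$ and write the empirical welfare per agent as
\[
\hat F(S)=\frac{1}{|\Xs|}\sum_{x\in\Xs}Q^S(x).
\]
Since the agents are i.i.d.\ from $D$ and, by its definition, $Q^S(x)\in[0,1]$ for every agent, $\hat F(S)$ is an average of independent $[0,1]$-valued variables with mean $F(S)=\mathbb{E}_{x\sim D}[Q^S(x)]$ as in Eq.~\ref{eq:fs_dist}. Hoeffding's inequality gives
\[
\Pr\bigl[|\hat F(S)-F(S)|>\varepsilon\bigr]\le 2e^{-2|\Xs|\varepsilon^2}.
\]

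Next, take a union bound over $\mathcal{H}_K$. The probability that some $S\in\mathcal{H}_K$ deviates by more than $\varepsilon$ is at most $2(m+1)^K e^{-2|\Xs|\varepsilon^2}$. This is at most $\delta$ once
\[
|\Xs|\ge \frac{1}{2\varepsilon^2}\bigl(K\log(m+1)+\log(2/\delta)\bigr),
\]
which follows from $|\Xs|\ge C\varepsilon^{-2}(K\log m+\log(1/\delta))$ for a suitable universal $C$, absorbing $\log 2$ and $\log(m+1)\le 2\log m$. On this event every $S$ in the class is estimated within $\varepsilon$, in particular the data-dependent $S_g$. So the welfare the planner reports for $S_g$, namely $\hat F(S_g)$ as returned by the algorithm normalized per agent, differs from its true value $F(S_g)$ by at most $\varepsilon$.

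The only delicate point is this dependence of $S_g$ on the sample, and the union bound handles it. No submodularity or greedy structure is used, so the bound holds even when negative targets are revealed. A minor detail is that $\delta\le 1$ keeps $\log(1/\delta)\ge 0$. As a corollary, one could also derive $F(S_g)\ge \hat F(S^\star)$-type comparisons to the optimum up to $2\varepsilon$, but the statement only requires the estimation guarantee.
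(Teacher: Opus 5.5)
Your proof is correct and is essentially the paper's argument: Hoeffding's inequality for each fixed $S$ with $|S|\le K$, then a union bound over all such sets so that the sample-dependent $S_g$ is covered. Your count $(m+1)^K$ is in fact more accurate than the paper's $m^K$, which misses, e.g., the empty set when $K=1$. One correction: the case $m=1$ cannot be fixed by enlarging $C$, because then $K\log m=0$ and the additive $O(1)$ term in $\log(2|\mathcal{H}_K|/\delta)$ cannot be absorbed when $\log(1/\delta)$ is tiny; a single sample with $Q^S(x)\in\{0,1\}$ equally likely then misses $F(S)=1/2$ by $1/2$, so in that degenerate case the bound itself needs $\log(m+1)$ in place of $\log m$.
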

\begin{proof}
    See Appendix~\ref{sec:app_learningsetting}
\end{proof}

\section{Experiments}\label{sec:experiments}
We conduct extensive experiments to evaluate the performance of greedy strategies in practical settings under the standard model without information disclosure restrictions, and to assess the proposed algorithms under alternative model settings using semi-synthetic geometric bipartite graphs generated from the Adult, Student Performance (Mathematics and Portuguese), and Garment Workers Productivity datasets. Details on the datasets and preprocessing procedures are provided in Appendix~\ref{subsec:app-datasets}.

\paragraph{Generation of geometric bipartite graphs.} 
We generate the geometric bipartite graph from two feature sets 
$(\mathcal{X}{\mathrm{LHS}} \in \mathbb{R}^{n \times \rho} \ \text{and} \ \mathcal{X}{\mathrm{RHS}} \in \mathbb{R}^{m^{\star} \times \rho})$ extracted from a given dataset, where $\rho$ denotes the number of features, $n$ the number of agents, and $m^{\star}$ the number of \textit{all} targets.
First, we compute the pairwise distances between the agents ($\mathcal{X}_{\mathrm{LHS}}$) and the targets 
$(\mathcal{X}_{\mathrm{RHS}})$:
\[D_{ij} = \|x_i - x_j\|_2 , \quad i \in [n],\; j \in [m^{\star}]\]
For each agent $i$, its neighborhood $\mathcal{N}(i)$ is defined either by the $k$NN method, where a target $j \in \mathcal{N}(i)$ iff it is among the $k_{\max}\geq 1$ closest targets to $i$ according to $D_{ij}$, or by a distance threshold method, where target $j \in \mathcal{N}(i)$ iff $D_{ij} \le \ell$. The edge set is then given by $E = \{(i,j): j \in \mathcal{N}(i)\}$.\\
Now, together with the \textit{used} targets $\displaystyle \mathcal{T} = \bigcup_{i=1}^n \mathcal{N}(i) \in \mathbb{R}^{m \times \rho}$ and their labels $f(j) = y_{\mathrm{RHS}}[j]$ for $j \in \mathcal{T}$, the bipartite graph is given by $\graph = (\mathcal{X}_{\mathrm{LHS}}\cup \mathcal{T}, E)$. 

See Appendix~\ref{subsec:app-graphstats} for more details on the generated graphs.

\paragraph{Algorithms, parameters, and evaluation metrics.}
In the single-group standard model setting, we compare social welfare without budget constraints $F(S_{\mathrm{full}})$ and with zero budget $F(S_{\mathrm{o}})$ to budgeted strategies: random selection, classic greedy, heuristic greedy, and bruteforce search. 
In the grouped setting, we compare average group social welfare (gain) (total group welfare (gain) divided by group size) achieved by the Algorithm \ref{alg:greedy_lbreveal} when applied to (i) the full graph and (ii) male and female bipartite subgraphs constructed from the Adult, Math, and Portuguese datasets.

Under the targeted intervention model, for varying target-reveal $(K)$ and intervention $(B)$ budgets, we evaluate the intervention gains achieved by the pre- and post-reveal intervention. 
These gains are respectively defined as
$\Delta_F(\mathrm{ig}, \mathrm{g}) = F(S_{\mathrm{ig}}) - F(S_g), \ \text{and} \ \Delta_F(\mathrm{gi}, \mathrm{g}) = F(S_{\mathrm{gi}}) - F(S_g), $ where $F(S_g)$, $F(S_{\mathrm{ig}})$, and $F(S_{\mathrm{gi}})$ denote the welfare returned by Algorithms~\ref{alg:greedy_lbreveal}, \ref{alg:TI_greedy_label_reveal}, and \ref{alg:greedy_TI_label_reveal}, respectively.

In the learning setting, we report training $(\mathrm{tr})$ and testing $(\mathrm{ts})$ performance averaged over $100$ independent train-test splits with different random seeds. Here, we report the $\mathrm{tr}$ and $\mathrm{ts}$ performance results using two metrics: $\mathrm{Perf}_{2}$, where $100$ denotes success on all helpable agents (those with both positive and negative target neighbors), including all sampled agents; and $\mathrm{Perf}_{3}$, where 100 denotes success on all helpable agents, excluding unhelpable ones. Full details on algorithms, parameters, and evaluation metrics used are included in Appendices \ref{subsec:app-algosexp} and \ref{subsec:app-perfeval}.

Empirical results for various settings are reported below and in Appendices~\ref{sec:app-smresults}--\ref{sec:app-lsexps}.

\subsection{Empirical Results under the Standard Model}
\label{subsec:exp_sm}

\paragraph{One group setting.} 
The performance of the budgeted strategies heavily depends on the network structure. When connectivity is low, overall social welfare remains very small, regardless of the algorithm or budget used (see Appendix Tables~\ref{tab:math_kmax_r_stats} and \ref{tab:portuguese_kmax_r_stats} where 
$\ell \le 5.0$, and corresponding results in Figures~\ref{fig:app_sr_sg} and \ref{fig:app_shg_shr}, subfigures (\textbf{f,g}), when $\ell \le 5.0$).
As connectivity increases, particularly in threshold-generated graphs (Appendix Tables~\ref{tab:adult_kmax_r_stats}--\ref{tab:productivity_kmax_r_stats}), the social welfare achieved by classic greedy often matches the maximum achievable ($F(S_{\mathrm{full}})$) (Figure~\ref{fig:adult_sr_sg_thresh}; Appendix Figures~\ref{fig:app_sr_sg}--\ref{fig:adult-port_sstar_sg_shg}, subfigures \textbf{e--h}), because more positive targets are connected to nearly all helpable agents. 
Additionally, when executed at the same $K$, Algorithm~\ref{alg:greedy_lbreveal} consistently outperforms random 
(Figure~\ref{fig:adult_sr_sg_knn}; Appendix Figures~\ref{fig:app_adult_sr_sg_knn}--\subref{fig:app_prod_sr_sg_knn}). Even with high budgets and connectivity, random selection can yield comparably very low social welfare (cf.~Figure~\ref{fig:sr_sg_shr_shg}).
These results and those in Appendix~\ref{sec:app-smresults} suggest that although greedy may have weaker theoretical guarantees without information disclosure constraints, it performs well in practice, likely because the graphs are typically well-connected and balanced.

\paragraph{Fairness in a grouped setting.}
With low graph connectivity and before revealing any targets (i.e., $K=0$), the female group generally has lower average social welfare than the male group (Appendix Figures~\ref{fig:app_adult_knn_undivided}--\subref{fig:app_port_knn_undivided} and \ref{fig:app_adult_knn_divided}--\subref{fig:app_port_knn_divided}). 
As connectivity and the budget increase, the average group social welfare (gain) increases and is closely similar across groups (Appendix Figures~\ref{fig:app_adult_thresh_undivided}--\subref{fig:app_port_thresh_undivided} and 
\ref{fig:app_adult_thresh_divided}--\subref{fig:app_port_thresh_divided}), both in the case where the greedy is run exclusively on a specific group at $K/2$ (Appendix Figure~\ref{fig:app_knn_thresh_divided}) and when it's run on the whole graph at a budget of $K$ (Appendix Figure~\ref{fig:app_knn_thresh_undivided}). 
See Appendix~\ref{sec:app=fairnessexp} for more empirical results on fairness under the standard model.

\begin{figure}[ht!]
\captionsetup[subfigure]{justification=Centering}
\begin{subfigure}[t]{0.48\textwidth}
    \includegraphics[width=0.9\textwidth]{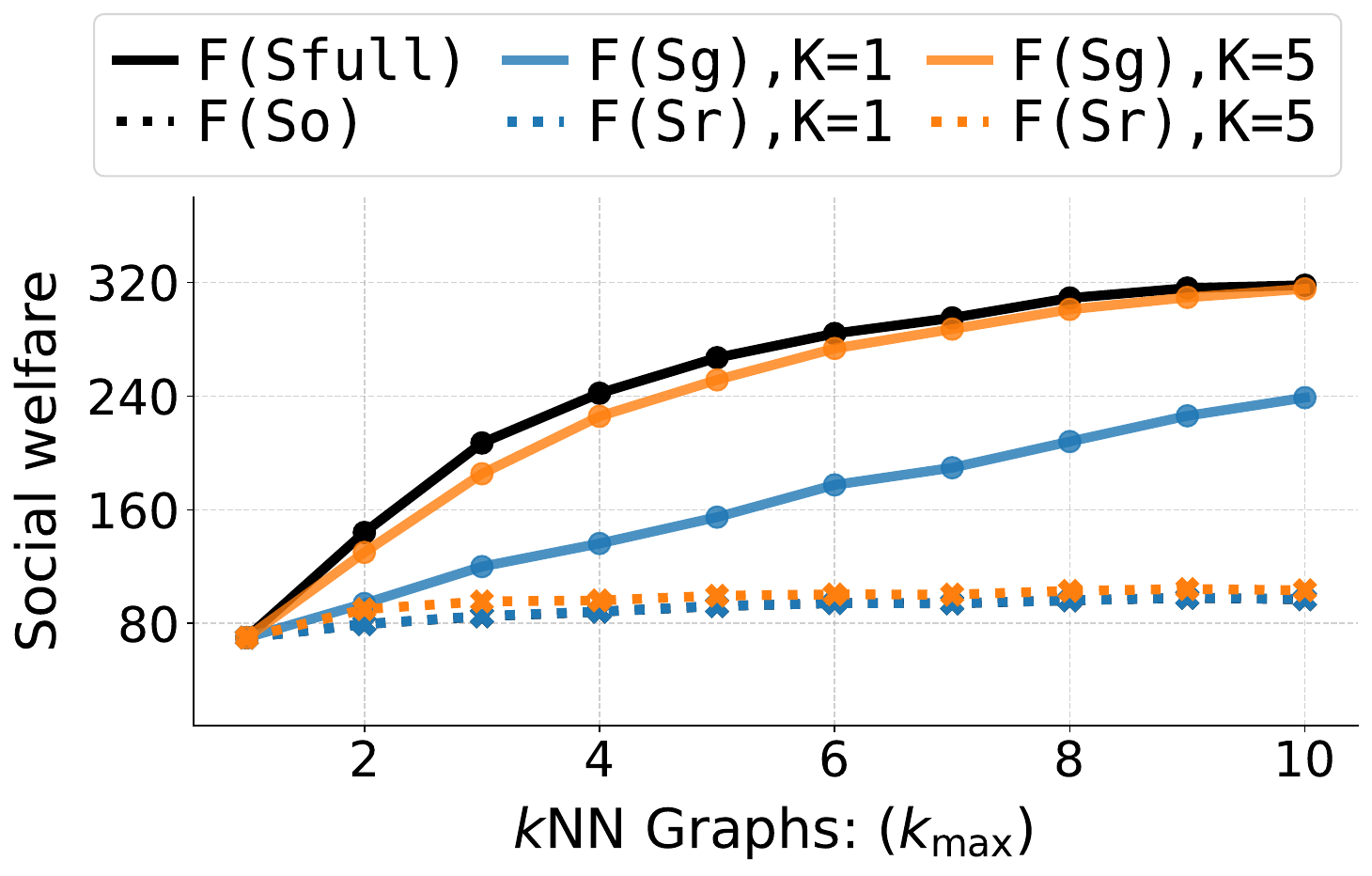}
    \caption{The \(k\)NN generated graphs: $F(S_{\mathrm{g}})$ vs. $F(S_{\mathrm{r}})$}
    \label{fig:adult_sr_sg_knn}
\end{subfigure}
\hfill
\begin{subfigure}[t]{0.48\textwidth}
    \includegraphics[width=0.9\linewidth]{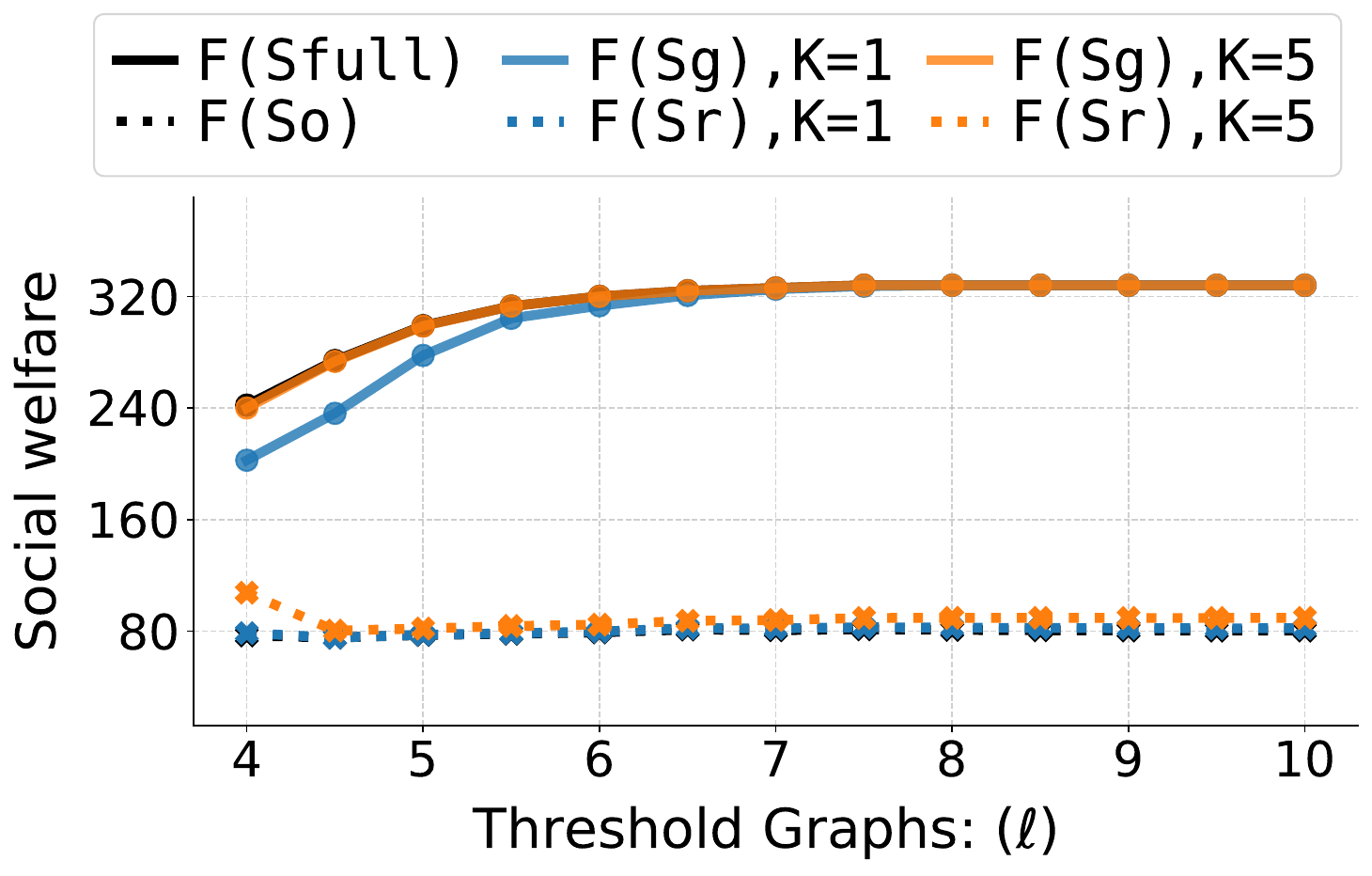}
    \caption{The threshold generated graphs: $F(S_{\mathrm{g}})$ vs. $F(S_{\mathrm{r}})$}
    \label{fig:adult_sr_sg_thresh}
\end{subfigure}
\caption{Comparative social welfare returned by random $F(S_{\mathrm{r}})$ and classical greedy $F(S_{\mathrm{g}})$ algorithms on \textbf{Adult} dataset for $K =\{1,5\}$. Black lines mark the maximum $F(S_{\mathrm{full}})$ and minimum $F(S_{\mathrm{o}})$ welfare. Classic greedy consistently outperforms random when executed at the same target reveal budget $K$.}
\label{fig:sr_sg_shr_shg}
\end{figure}

\subsection{Empirical Results under the Targeted Interventions Model}
\label{subsec:exp_itm}
Overall, intervention gains are upper-bounded by the intervention budget $B$ and are larger with a smaller target reveal budget $K$ (Figure~\ref{fig:prod_itm_knn_thresh}; Appendix Figures~\ref{fig:adult_math_itm_knn_thresh} and \ref{fig:port_prod_itm_knn_thresh}) and in graphs where many agents lack positive neighbors (e.g., Appendix Figures~\ref{fig:adult_itm_knn1} and \ref{fig:adult_itm_knn5}). 
When classic greedy is already optimal and most agents have positive neighbors, intervention becomes redundant or underutilized, leading to little or no intervention gains
(Appendix  Figures~\ref{fig:adult_math_itm_knn_thresh} and \ref{fig:port_prod_itm_knn_thresh} (subfigures (\textbf{b,d,f,h}))).
Post-reveal interventions always yield positive intervention gains that are also usually at least as large as those from pre-reveal interventions (Figure~\ref{fig:prod_itm_knn_thresh};  
Appendix  Figures~\ref{fig:adult_math_itm_knn_thresh} and \ref{fig:port_prod_itm_knn_thresh} (subfigures (\textbf{a,c,e,g}))). As shown in Figure~\ref{fig:prod_itm_knn_thresh}, pre-reveal intervention can sometimes yield negative intervention gains because removing high-risk agents and their edges early may distort the graph, causing Algorithm~\ref{alg:greedy_lbreveal} to reveal a target set with lower social welfare than it would have otherwise, especially when high-risk agents already had high probabilities of positive emulation. See Appendix~\ref{sec:app-itmexps} for more empirical results under targeted intervention.
\begin{figure}[ht!]
    \centering
    \begin{minipage}[t]{0.48\textwidth}
        \vspace{2pt} 
        \centering
         \includegraphics[width=0.8\linewidth]{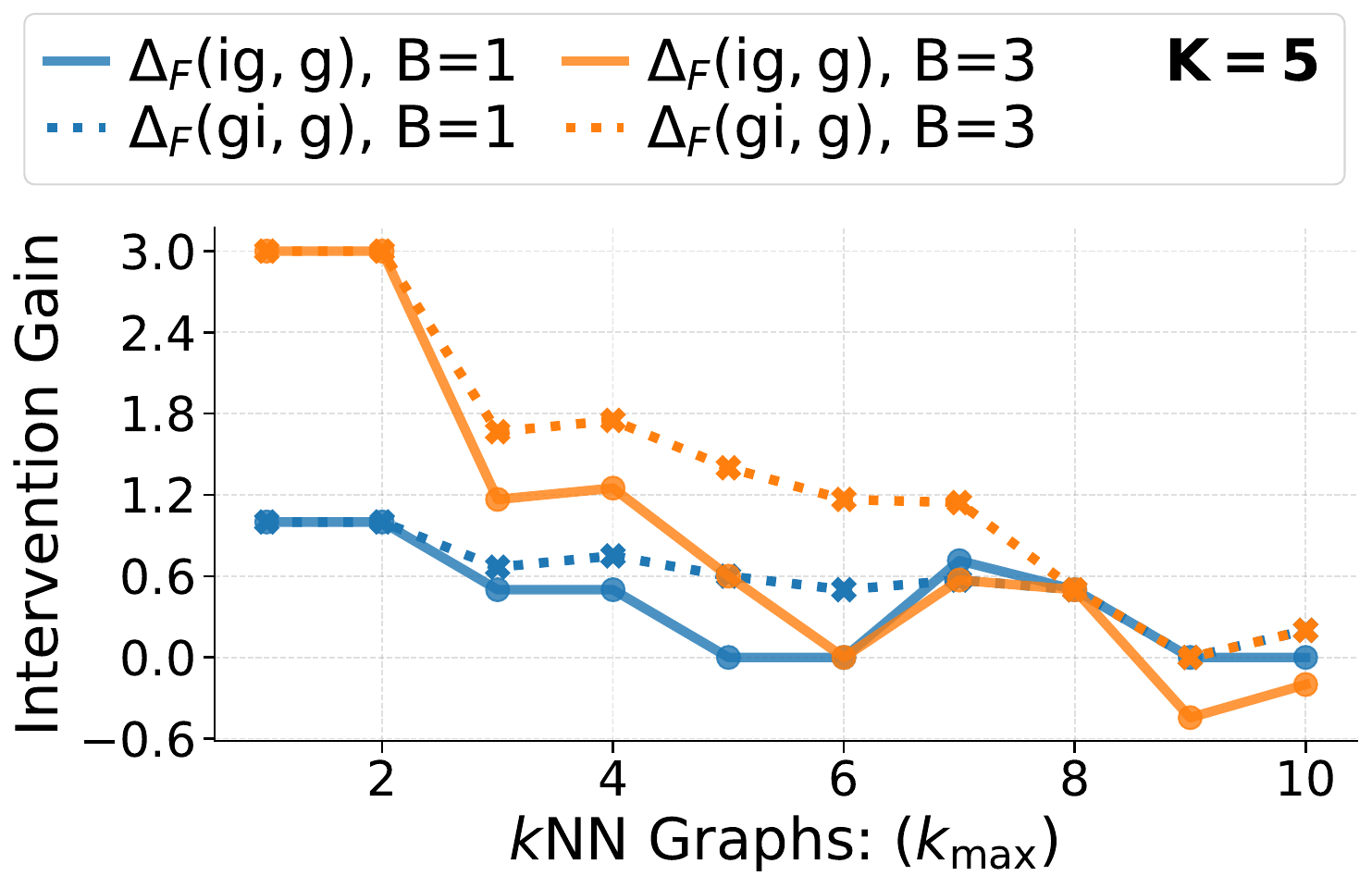}
    \end{minipage}%
    \hspace{0.02\textwidth}
    \begin{minipage}[t]{0.48\textwidth}
        \vspace{0pt} 
        \caption[Comparison of pre-reveal and post-reveal intervention gains]{Comparison of pre-reveal ($\Delta_{F}(ig,g)$) and post-reveal ($\Delta_{F}(gi,g))$) intervention gains, for $K\!=\!5$ with 
        $B\!=\!\{1,3\}$ across $k$NN graphs on the \textbf{Productivity} dataset. Gains increase with decrease in $K$, and pre-reveal gains may be negative.}
        \label{fig:prod_itm_knn_thresh}
    \end{minipage}
\end{figure}

\subsection{Empirical Results under the Learning Setting}
\label{subsec:exp_ls}
The evaluation of Algorithm~\ref{alg:greedy_lbreveal} over $100$ randomized train-test splits shows that the average training performance generally matches or slightly exceeds testing performance across all datasets and metrics (Figure~\ref{fig:math_learn_knn_thresh} and Appendix Figures~\ref{fig:app_adult-math_learn_knn_thresh} and \ref{fig:app_port-prod_learn_knn_thresh}). 
As graph connectivity increases, especially in threshold-generated graphs, training and testing performances converge, and the impact of a higher budget diminishes (Appendix Figures~\ref{fig:app_adult-math_learn_knn_thresh} and \ref{fig:app_port-prod_learn_knn_thresh} 
(subfigures (\textbf{d--f, j--l}))). 
In contrast, lower connectivity, particularly in $k$NN graphs, amplifies budget effects, with higher budgets consistently producing better or equal train/test performance  (Figure~\ref{fig:math_learn_knn_thresh} and Appendix 
Figures~\ref{fig:app_adult-math_learn_knn_thresh} and \ref{fig:app_port-prod_learn_knn_thresh}  (subfigures (\textbf{a--c, g--i}))). 
Lastly, train/test performance depends on graph structure, neighbor positivity, and metric; e.g., when 
$|N(x)|=1$ for all $x \in \Xs$, $\mathrm{Perf}_{2}$ equals the fraction of agents connected to positive targets (Figure~\ref{fig:math_learn_knn_perf2}), and $\mathrm{Perf}_{3}=0$ reflects number of helpable agents  (Figure~\ref{fig:math_learn_knn_perf3}). See Appendix~\ref{sec:app-lsexps} for more empirical learning results.

\begin{figure}[b!]
\centering
\begin{subfigure}[t]{0.48\textwidth}
    \centering
    \includegraphics[width=0.9\linewidth]{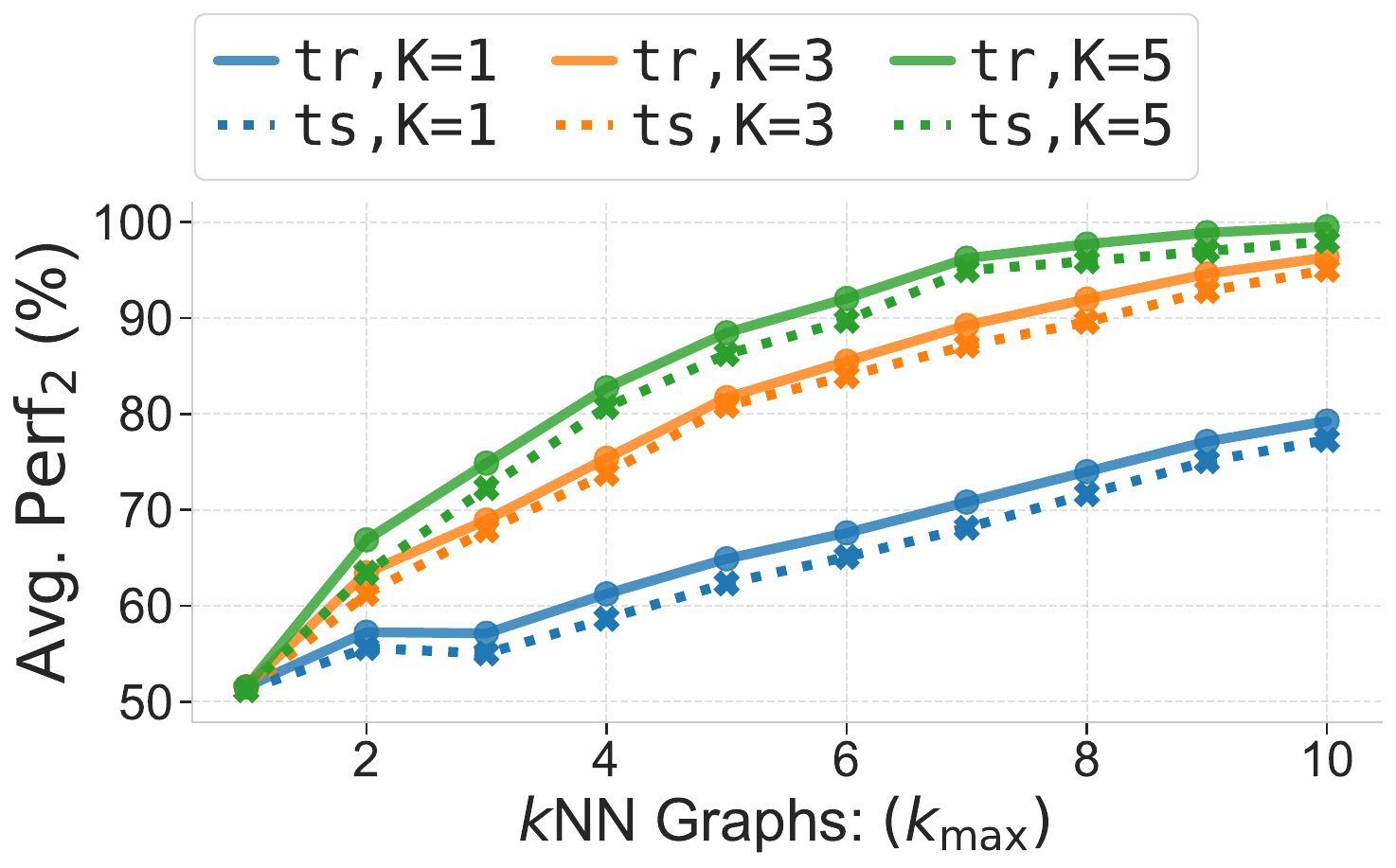}
    \caption{The \(k\)NN generated graphs: \(\mathrm{Perf}_{2}\)}
    \label{fig:math_learn_knn_perf2}
\end{subfigure}
\hfill
\begin{subfigure}[t]{0.48\textwidth}
    \centering
    \includegraphics[width=0.9\linewidth]{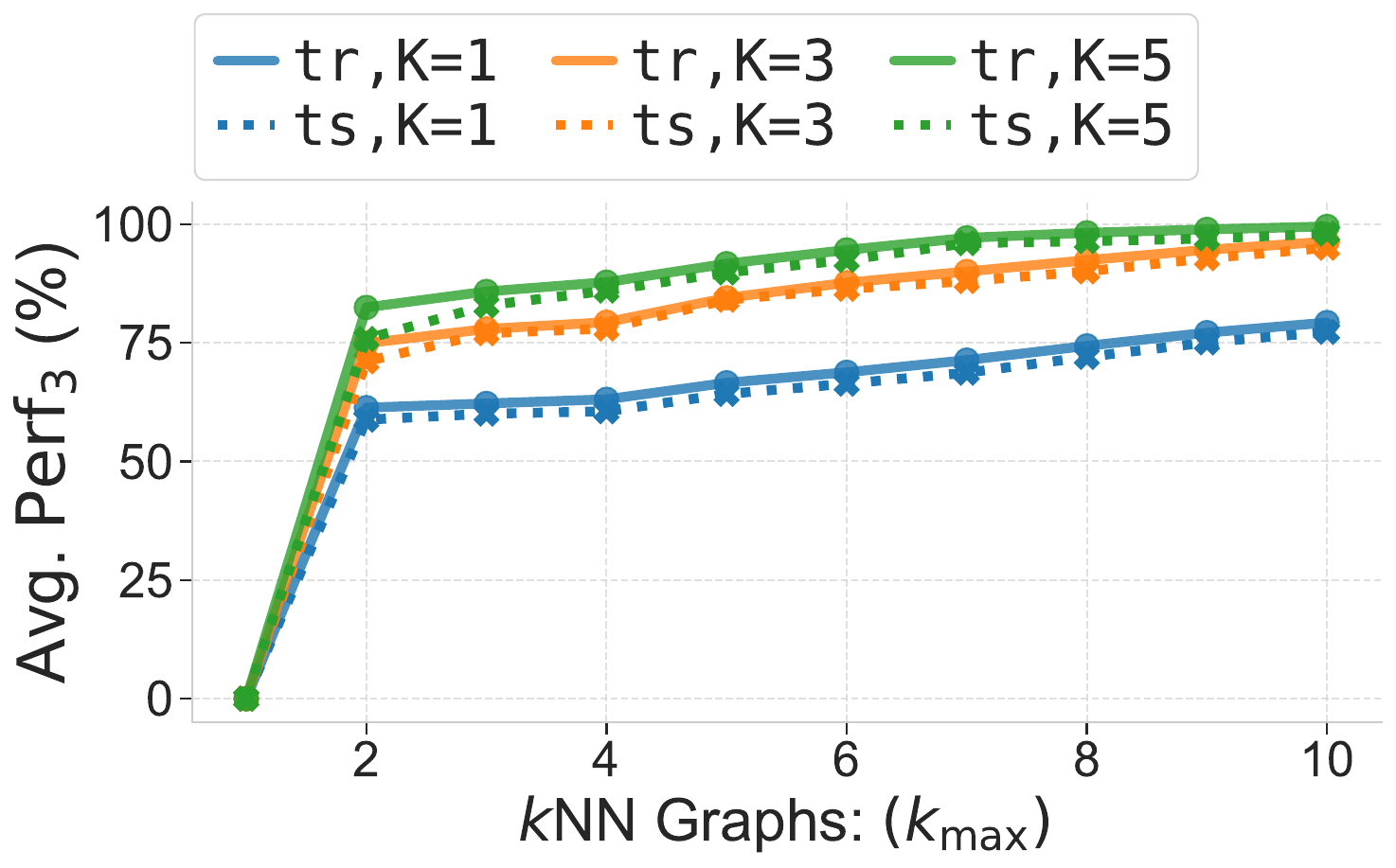}
    \caption{The \(k\)NN generated graphs: \(\mathrm{Perf}_{3}\)}
    \label{fig:math_learn_knn_perf3}
\end{subfigure}
\caption{Analysis of the training ($\mathrm{tr}$) and testing ($\mathrm{ts}$) performance $(\mathrm{Perf}_2, \mathrm{Perf}_3)$ scores when Algorithm~\ref{alg:greedy_lbreveal} is run under budget $K=\{1,5\}$ on $k$NN graphs (see Appendix Table~\ref{tab:math_kmax_r_stats}) from the \textbf{Math} dataset. When each agent in train/test sets has at most one neighbor, $\mathrm{Perf}_3$ is zero (\subref{fig:math_learn_knn_perf3}). Both 
$\mathrm{Perf}_2$ and $\mathrm{Perf}_3$ increase with $K$.} 
\label{fig:math_learn_knn_thresh}
\end{figure}

\section{Conclusion}
We propose various greedy strategies to help agents make good decisions when they observe targets within their social circles but lack information about the targets' labels. Although theoretical performance guarantees may weaken once negative targets can be in the revealed set because the true social welfare function may become supermodular, empirical evidence suggests that the classic greedy algorithm would likely perform well in practice, as graphs are more likely to be well-connected and balanced. To preserve submodularity, we introduce a proxy welfare function that achieves a constant-factor approximation to the true optimal welfare (gains) when agents are $c$-bounded. When agents are divided into groups, the proxy ensures that each group's welfare gain is within a constant factor of the optimum under full budget allocation. We also study interventions in which a social planner either directly connects high-risk agents to positive role models or increases the visibility of selected role models when agents are otherwise unaware of them. Future work could incorporate weighted edges to model heterogeneous emulation probabilities, extend our modeling setup to strategic classification by treating emulation of positive targets as improvement and negative targets as gaming, and generalize the deterministic graph to a stochastic setting (e.g., bipartite stochastic block models) where targets are positive with probability $p$ and edges from agents to positive and negative targets form with probabilities $\qpos$ and $\qneg$, respectively.

\section*{Acknowledgments}
This work was supported in part by the National Science Foundation under grants CCF-2212968 and ECCS-2216899, by the Simons Foundation under the Simons Collaboration on the Theory of Algorithmic Fairness, and by the Office of Naval Research MURI Grant N000142412742.

\bibliographystyle{alpha}
\bibliography{references}

\clearpage
\appendix
\section{Additional Practical Applications}\label{sec:app_prac-examples}

\paragraph{Filing taxes.} 
Consider a tax-compliance model in which the tax authority acts as the social planner and taxpayers act as agents. Each agent relies on their neighborhood of role models, such as peers, local preparers, online discussions, and past returns, to guide their decisions in various scenarios. Agents don't know which of these role models are positive or negative, that is, who is and isn't compliant. To help agents become tax-compliant, the social planner distributes outreach materials, such as short booklets with a few selected positive and or negative examples that demonstrate how to handle specific scenarios, which corresponds to the role models the social planner chooses to reveal\footnote{For example, IRS tax return booklets: \url{https://www.irs.gov/forms-pubs/ebook}}. For instance, the social planner might highlight an illustrative example showing how a student should report internship income, or a cautionary scenario that explains the consequences of failing to report it.  When an agent encounters a scenario depicted with a positive example in the outreach material, they follow that example; if it is depicted negatively, they avoid making the same choice. If the scenario is not covered in the outreach material, the agent selects a role model from their neighborhood uniformly at random and emulates them, for instance, by imitating a peer's past filing behavior, even without knowing whether that peer complied with the rules.

\paragraph{Training videos.} 
Consider a company responsible for creating workplace training videos for client firms with the aim of promoting appropriate professional conduct, such as handling sensitive or confidential information.\footnote{For example, Vector Solutions: \url{https://www.vectorsolutions.com}} In this setting, the video producer acts as the social planner, and the agents are the employees who watch the training videos and subsequently make decisions in workplace situations. For simplicity, assume that the social planner distributes the same standardized training module to all agents, even though their roles and day-to-day environments may differ slightly. For example, a company's level 4 and 5 employees might receive the same training video on workplace professional conduct. Employees operate in diverse roles and environments, and rely on role models such as colleagues when deciding how to act in various unfamiliar workplace situations. 
Because the social planner can produce only a limited number of dramatized scenarios, they must select those that improve decision-making across a large and heterogeneous workforce. Positive scenarios, such as reporting a suspicious email to technical support, demonstrate desirable conduct. Negative scenarios, such as the consequences of sharing confidential information with a fraudulent sender, discourage similar agents from making similar mistakes. When employees encounter a situation that closely matches one portrayed positively in the training, they follow the demonstrated action. When a similar situation is portrayed negatively, they avoid the depicted choice and instead look to alternative role models for guidance. If employees encounter a situation not covered in the training, they select a local role model uniformly at random, for instance by imitating how a colleague handled a comparable circumstance.

\section{Missing Proofs for Section~\ref{sec:model_prelim}}
\label{sec:app_preliminaries}

\subsection{Proof of Proposition~\ref{prop:monotone}}\label{sec:app-prelimmonotonicty}

\begin{proof}[Proof of Proposition~\ref{prop:monotone}]
    Fix an initial revealed target set $A \subseteq \Ts$ and let $t \in \Ts \setminus A$. 
    Compare the social welfare under $A$ with that under $A \cup \{t\}$. Any agent that already selects a positive target with probability $1$ under $A$ continues to do so when $t$ is revealed. Agents for whom $t$ provides the first revealed positive target neighbor gain probability of $1$ for choosing a positive target to emulate, and no agent's probability decreases. 
    
    Therefore, the total probability of choosing a target node under $A \cup \{t\}$ is always at least as large as the total probability under $A$ in every realization of the random process. The same conclusion holds in expectation $F(A \cup \{t\}) \ge F(A)$. This proves that the social welfare function is monotonically increasing since expanding the revealed target set never lowers the probability that agents select positive targets to emulate.
\end{proof}

\subsection{Proof of Proposition~\ref{prop:sw_posonly}}\label{sec:app-prelimsubmod-posonly}

\begin{proof}[Proof of Proposition~\ref{prop:sw_posonly}]
    Let $F(A)$ denote the social welfare generated by the revealed positive target set $A$, and let $A \subseteq B \subseteq \Ts^+$ where $\Ts^{+}=\{t\in\Ts \mid f(t)=+1\}$. Submodularity requires that for any $A, B \subseteq \Ts^+$ and any 
    $t \in \Ts^+ \setminus B$,
    \[
    F(A \cup \{t\}) - F(A) \geq F(B \cup \{t\}) - F(B).
    \]
    By Proposition~\ref{prop:monotone}, $F$ is monotone: $A \subseteq B$ implies $F(A) \le F(B)$. Revealing a positive target $t$ sets each adjacent agent's probability of emulating a positive target to $1$, unless it is already equal to $1$ under the current revealed set.
    For any set $S \subseteq \Ts^+$, let $\gamma(S)$ be the set of agents whose probability of emulating a positive target equals $1$ under $S$. Monotonicity implies that 
    $\gamma(A) \subseteq \gamma(B)$.
    
    Adding a revealed positive target $t$ to the revealed positive target sets  $A$ and $B$ yields
    \[
    F(A \cup \{t\}) - F(A) = |\gamma(A \cup \{t\})| - |\gamma(A)| = |\gamma(t) \setminus \gamma(A)|.
    \]
    \[
    F(B \cup \{t\}) - F(B) = |\gamma(B \cup \{t\})| - |\gamma(B)|  = |\gamma(t) \setminus \gamma(B)|.
    \]
    
    Since $\gamma(A) \subseteq \gamma(B)$,
    \[
    \gamma(t) \setminus \gamma(A) \supseteq \gamma(t) \setminus \gamma(B),
    \]
    because any agent that already had a probability of $1$ for emulating a positive target under the larger revealed set $B$ is excluded on the right-hand side. That is, adding a revealed positive target $t$ to a large set $B$ yields lower marginal gain because most agents are already committed to positive targets in $B$, so $t$ might be redundant. Adding $t$ to a small set $A$ instead makes it more likely that additional agents now have a probability of $1$ for emulating a positive target. Therefore
    \[
    |\gamma(t) \setminus \gamma(A)| \ge |\gamma(t) \setminus \gamma(B)|.
    \]
    Thus, revealing a positive target $t$ when the revealed positive target set is smaller affects weakly more agents than when the known set is larger. The marginal contribution of revealing a new positive target $t$ declines as the revealed positive target set grows, which establishes submodularity: $ F(A \cup \{t\}) - F(A) \geq F(B \cup \{t\}) - F(B)$.
\end{proof}

\subsection{Proof of Proposition~\ref{prop:sw_negonly}}\label{sec:app-prelimsubmod-negonly}

\begin{figure}[!t]
\centering
    \begin{tikzpicture}[
        node distance=1cm and 1cm,
        agent/.style={
            circle,
            fill=#1!30, 
            inner sep=1.5pt, 
            minimum size=0.8cm, 
            text centered
        },
        inputNode/.style={agent={red}},
        posTarget/.style={agent={gray}},
        negTarget/.style={agent={gray}},
        connect/.style={thick},
        scale=0.77
    ]
    
    \foreach \i in {1,2,3,4} {
        \node[inputNode] (x\i) at (\i*2,0) {$x_{\i}$};
    }
    \foreach \i in {1,2,3,4} {
        \node[posTarget] (tp\i) at (\i*2,3) {$t_{\i}^{+}$};
    }
    
    \node[negTarget] (tn5) at (4,-3) {$t_{5}^{-}$};
    \node[negTarget] (tn6) at (6,-3) {$t_{6}^{-}$};
    
    \foreach \i in {1,2,3,4} {
        \draw[connect] (x\i) -- (tp\i);
        \draw[connect] (x\i) -- (tn5);
        \draw[connect] (x\i) -- (tn6);
    }
    \end{tikzpicture}

    \caption{A bipartite graph where \(n= 4, \mneg = 2\) and \(\mpos = 4\).}
    \label{fig:not_submod_proof}
\end{figure}

\begin{example}[Proof of Proposition~\ref{prop:sw_negonly}]
\label{ex:mod_submod_proof}
    Consider the bipartite graph in Figure~\ref{fig:not_submod_proof}, with $n=4$ agents, $\mpos=4$ positive targets, and $\mneg=2$ negative targets. Each agent is adjacent to a unique positive target and to both the negative targets. Example \ref{ex:mod_submod_proof} illustrates that when the $\splan$ is restricted to revealing negative targets in this setting, the social welfare function is not submodular. In particular, it shows that there exists $A, B \subseteq \Ts^-$ with 
    $\Ts^{-}=\{t\in\Ts \mid f(t)=-1\}$ and $A \subseteq B$, and for some $t \in \Ts^- \setminus B$ such that
    \[
    F(A \cup {t}) - F(A) < F(B \cup {t}) - F(B).
    \]
    Given bipartite graph in Figure~\ref{fig:not_submod_proof} and restriction to only revealing negative targets. Let the initially revealed negative target set be $A=\emptyset$, resulting in a social welfare of $F(A)=4/3$.
    Next, let revealed negative target set $B=\{t_5^{-}\}$, and corresponding resulting social welfare $F(B)=2$.
    Now consider revealing another negative target $t_6^{-}\in \Ts^{-} \setminus B$. The marginal gains become
    \[
    F(A \cup \{t_6^{-}\}) - F(A) = 2 - \tfrac{4}{3} = 0.6667, \qquad
    F(B \cup \{t_6^{-}\}) - F(B) = 4 - 2 = 2.
    \]
    Thus the marginal gain is larger when starting from a larger revealed negative target set $B$ than from a smaller one $A$, contradicting the diminishing returns condition required for submodularity.
\end{example}

\section{Supplementary Material for Section~\ref{sec:standardmodel}}\label{sec:app_standard_algos}

\subsection{Proof of Proposition~\ref{prop:opt_runtime_greedy}}
\label{sec:app-classicgreedy}

\begin{proposition}
Algorithm~\ref{alg:greedy_lbreveal} runs in \(O(Kmn\delta )\) time.
\label{prop:opt_runtime_greedy}
\end{proposition}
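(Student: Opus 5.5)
The plan is to bound the cost of the main loop of Algorithm~\ref{alg:greedy_lbreveal} as (number of iterations) $\times$ (candidates per iteration) $\times$ (cost of one marginal-gain evaluation). Throughout, $\delta$ denotes the maximum agent degree, $\delta = \max_{x\in\Xs}|N(x)|$, and the graph is stored as adjacency lists.

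\textbf{Preprocessing.} For each agent $x$, I compute $\delta_x^{+}$, $\delta_x^{-}$ and $|N(x)|$ once, in $O(n\delta)$ total time. I also maintain, for each agent, the counters $|P^{+}_x(S_{\mathrm g})|$ and $|P^{-}_x(S_{\mathrm g})|$. With these counters, $Q^{S_{\mathrm g}}(x)$ can be read off from the case definition of $Q^S(x)$ in $O(1)$ time.

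\textbf{Iteration and candidate counts.} The while loop runs at most $K$ times (up to $K+1$ with the loop condition as written, which is still $O(K)$): each pass either adds a target or breaks. In each pass, $t^\star$ is found by scanning all at most $m$ candidates $t\in\Ts'\setminus S_{\mathrm g}$.

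\textbf{Cost of one evaluation.} For a candidate $t$, only agents adjacent to $t$ can change, so
\[
F(S_{\mathrm g}\cup\{t\})-F(S_{\mathrm g})=\sum_{x\in N(t)}\bigl(Q^{S_{\mathrm g}\cup\{t\}}(x)-Q^{S_{\mathrm g}}(x)\bigr).
\]
Each term is $O(1)$, because adding $t$ only increments one counter of $x$. Since $|N(t)|\le n$, one evaluation costs $O(n)$.

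A cruder but safe alternative is to recompute $F$ from scratch for each candidate. This costs $O(n\delta)$, since for each agent one scans $N(x)$ and checks membership in $S_{\mathrm g}$ using an $O(1)$ indicator array. I would present this version because it directly gives the stated bound.

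\textbf{Updating after selection.} Once $t^\star$ is chosen, the counters for the agents in $N(t^\star)$ are updated in $O(n)$ time, which is dominated by the scan.

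\textbf{Total.} Multiplying gives $O(K)\cdot O(m)\cdot O(n\delta) = O(Kmn\delta)$, and the preprocessing $O(n\delta)$ is absorbed into this. The incremental version would in fact give $O(Kmn)$, which is also $O(Kmn\delta)$ since $\delta\ge 1$.

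The only point needing care is making explicit what $\delta$ means and what data structures are used, so that each evaluation of $F$ genuinely costs $O(n\delta)$ rather than, for example, $O(nm)$ from naive set-membership tests. Using a Boolean array indexed by target resolves this.
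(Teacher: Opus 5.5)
Your proposal is correct and follows the paper's proof: $O(K)$ iterations, at most $m$ candidates per iteration, and $O(n\delta)$ to recompute $F$ as a sum of $Q^{S}(x)$ over $n$ agents of degree at most $\delta$, giving $O(Kmn\delta)$. Beyond what the paper states, you add the data-structure details (indicator array, per-agent counters) and the $K+1$ loop-condition subtlety, and your incremental remark is sound; in fact, since $\sum_{t}|N(t)| \le n\delta$, incremental evaluation gives a total of $O(K(m+n\delta))$.
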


\begin{proof}
At each iteration, to determine whether to add a target \(t \in \Ts\) to revealed target set, Algorithm~\ref{alg:greedy_lbreveal} computes the resultant marginal gain, which first, involves computing \(F(S_{\mathrm{g}}\cup \{t\})\), a sum over \(Q^{S_{\mathrm{g}}\cup \{t\}}(x)\) for all \(n\) agents \(x \in \Xs\). If each agent has degree of atmost \(\delta = |N(x)|\), then the time complexity of computing the social welfare \(F(S_{\mathrm{g}}\cup \{t\})\) is \(O(n\delta )\). Computing the marginal gain given the previous and the new social welfare is \(O(1)\). Repeating this process for \(m\) targets across atmost \(K\) iterations results in a total time complexity of \(O(Kmn\delta )\).
\end{proof}

\subsection{Bruteforce Algorithm}\label{sec:app-bruteforce}

\begin{algorithm}[ht!]
\caption{Bruteforce Target Reveal}
    \label{alg:bruteforce_lbreveal}
    \begin{algorithmic}[1]
    \State \textbf{Input:} Bipartite graph $\graph = (\Xs \cup \Ts, E)$, labels $\{f(t)\}_{t \in \Ts}$, budget \(K\)
    \State \textbf{Output:} Optimal solution set $S^\star \subseteq \Ts \ \text{with} \ |S^\star| \leq K$ and social welfare $F(S^\star)$
    \State Initialize $S^\star \gets \emptyset$
    \ForAll{$S \subseteq \Ts$ with $|S| \leq K$}
        \If{$\big(F(S) > F(S^\star\big)$ or $\big(F(S) = F(S^\star)$ and $|S| < |S^\star|\big)$} 
            \State $S^\star \gets S$
            \State $F(S^\star) \gets F(S)$
        \EndIf
    \EndFor
    \State \Return $(S^\star, F(S^\star))$
    \end{algorithmic}
\end{algorithm}

\begin{proposition}
The bruteforce algorithm (Algorithm~\ref{alg:bruteforce_lbreveal}) runs in 
\(O(m^{K} n \delta)\) time.
\label{prop:opt_runtime_bruteforce}
\end{proposition}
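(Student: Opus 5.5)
The plan is to count the number of candidate subsets that Algorithm~\ref{alg:bruteforce_lbreveal} enumerates, bound the cost of processing each one, and multiply the two.

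\textbf{Step 1: counting subsets.} The loop ranges over all $S \subseteq \Ts$ with $|S| \le K$, so the number of iterations is
\[
\sum_{k=0}^{K}\binom{m}{k} .
\]
Using $\binom{m}{k} \le m^k/k!$, this sum is bounded by $\sum_{k=0}^{K} m^k$. For $m \ge 2$ that is at most $2m^K$, hence $O(m^K)$ when $K$ is treated as a parameter. Even for $m=1$ the count is at most $K+1$, which is a constant factor for fixed $K$.

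\textbf{Step 2: cost per subset.} For each candidate $S$, the algorithm evaluates $F(S)=\sum_{x\in\Xs} Q^S(x)$. As in the proof of Proposition~\ref{prop:opt_runtime_greedy}, computing $Q^S(x)$ requires scanning $N(x)$ once. During that scan we record $|P_x^+(S)|$ and $|P_x^-(S)|$, with membership in $S$ tested in $O(1)$ using a boolean indicator array over $\Ts$. Since $\delta_x^{+}$ and $\delta_x^{-}$ are available from the same scan, each $Q^S(x)$ takes $O(\delta)$ time, where $\delta$ is the maximum degree. So $F(S)$ costs $O(n\delta)$.

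The remaining per-iteration work is the comparison with $F(S^\star)$ and the possible update of $S^\star$. The comparison is $O(1)$. Copying $S$ costs $O(K)$, and we can absorb this by storing only an index or pointer to the current best subset, or by noting $K \le m$ and charging it to the evaluation. Generating the next subset (for example by lexicographic enumeration of $k$-combinations) takes amortized $O(K)$ time. Setting and clearing the indicator array takes $O(K)$ per subset.

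\textbf{Step 3: combining.} Multiplying the two bounds gives a total of $O(m^K \cdot (n\delta + K))$, which is $O(m^K n\delta)$.

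The only mildly delicate step is absorbing the $O(K)$ enumeration and bookkeeping overhead into $n\delta$. This holds whenever $n\delta \ge K$; otherwise it is handled by noting that only targets adjacent to some agent matter, so $m \le n\delta$ and hence $K \le n\delta$. I expect no other obstacle.
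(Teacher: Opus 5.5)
Your proposal is correct and follows the same route as the paper's proof: bound the number of enumerated subsets by $O(m^K)$ and the cost of evaluating $F(S)$ for each by $O(n\delta)$, then multiply. You are more careful than the paper, which simply asserts there are $m^K$ subsets and ignores the enumeration and bookkeeping overhead.

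One small simplification: to absorb the $O(|S|)$ per-subset overhead you do not need to prune isolated targets or assume $K \le m$. Since $k\binom{m}{k} \le m^k$, we get $\sum_{k=0}^{K} k\binom{m}{k} \le 2m^K$ for $m \ge 2$. For $m=1$ there are only two subsets rather than $K+1$, so no factor depending on $K$ appears.
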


\begin{proof}
Algorithm \ref{alg:bruteforce_lbreveal} enumerates all \(m^{K}\) candidate subsets of possible target reveals and selects the one that yields the highest social welfare. Evaluating social welfare for a single subset takes \(O(n\delta)\) time, so the overall running time of Algorithm \ref{alg:bruteforce_lbreveal} is \(O(m^{K} n \delta)\).
\end{proof}

\subsection{Proofs for Section~\ref{subsec:approxs}}\label{sec:app_sm_approxs}

\subsubsection{Proof of Theorem~\ref{thm:neg-appratio}}
\label{subsec:app-onlyneg-approx}

\begin{theorem}
\label{thm:neg-appratio}
    When the $\splan$ is restricted to only revealing negative targets, there exists a graph and a budget $K$ for which Algorithm~\ref{alg:greedy_lbreveal} attains an approximation ratio strictly less than $\frac{3}{\sqrt{2n}}$.
\end{theorem}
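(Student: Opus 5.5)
The plan is to build an explicit instance in which greedy is lured by negative targets that each give a small but immediate gain. The optimal set instead reveals a group of negative targets whose value only appears once several of them are revealed together. This exploits the supermodularity from Proposition~\ref{prop:sw_negonly}. I would reuse the gadget of the proof sketch of Theorem~\ref{thm:np-hardness_negonly}, in its complete-graph form.

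\textbf{Construction.} Take $k$ \emph{special} negative targets $u_1,\dots,u_k$. For each pair $i<j$ add one agent $x_{ij}$ with $N(x_{ij})=\{u_i,u_j,t^+_{ij}\}$, where $t^+_{ij}$ is a private positive target. Under $Q^S$, $x_{ij}$ contributes $1/3$ with nothing revealed, $1/2$ if exactly one of $u_i,u_j$ is revealed, and $1$ if both are. With $K=k$, revealing all specials gives gain $\binom{k}{2}\cdot\frac23$. Next add \emph{decoy} negative targets $d_1,\dots,d_k$, each with a few private agents whose neighbourhoods are built so that revealing $d_\ell$ alone yields a marginal gain strictly larger than $(k-1)/6$. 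That quantity is the largest marginal gain of any special target before any other special is revealed. Ties would be broken against greedy, or avoided by a small perturbation.

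\textbf{Greedy analysis.} I would show by induction that greedy picks only decoys. While no special is revealed, each special offers $(k-1)/6$, which is below every unused decoy's gain. Decoys have disjoint agents, so their gains do not change as other decoys are revealed. Greedy therefore spends the whole budget on decoys and gets $F(\emptyset)+k\cdot(\text{decoy gain})$. The optimum is at least $F(\{u_1,\dots,u_k\})=\binom{k}{2}+(\text{decoy baseline})$. I would then express both quantities and $n=\binom{k}{2}+(\#\text{decoy agents})$ in terms of $k$, and check that the ratio is below $3/\sqrt{2n}$, using $n\approx k^2/2$ so that $\sqrt{2n}\approx k$.

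\textbf{Main obstacle.} The difficulty is in tuning the baselines. The ratio is of total welfare $F$, not of gain, and in the naive gadget each pair agent already contributes $1/3$. That puts a $\Theta(k^2)$ baseline into the greedy value and would make the ratio a constant, not $O(1/k)$.

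I would attack this in two ways. First, shrink the pair agents' baseline and the value of revealing a single special. For example, give each pair agent more special negatives, so the agent is indexed by a larger subset, sits in a denser hypergraph, and has baseline $1/(r+1)$. Then choose decoys whose gain per reveal is $\Theta(1)$ rather than $\Theta(k)$, which keeps greedy's welfare $O(k)$ while the optimum is about $n$. Second, if that fails, change the decoy structure so that greedy's first choices make the remaining special marginals tiny, and then compare the greedy welfare directly against $\binom{k}{2}$.

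I expect the constant $3/\sqrt{2}$ to come out of this balancing: greedy's welfare should be about $\tfrac32 k$ against an optimum of about $n$, with $n\approx k^2/2$. So I would pick parameters to hit exactly that, and finally verify strictness of the inequality for the chosen $k$.
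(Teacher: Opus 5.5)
Your proposal never exhibits an instance that works. The only concrete construction, the complete-graph pair gadget with decoys, fails, as you note yourself: the $\binom{k}{2}$ pair agents keep $1/3$ each under greedy, so the ratio stays bounded below by a constant. The two repairs are only directions, with no parameters fixed and nothing computed, so the existence claim is not established.

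The missing ingredient, which is what the paper uses, is to attach \emph{every} good agent to \emph{all} the special negatives and to take many such agents. The paper uses $\lfloor\kappa^2/2\rfloor$ agents, each adjacent to all $\kappa+1$ special negatives plus a private positive. Their combined baseline is then only about $\kappa/2$, and revealing one special raises welfare by $\lfloor\kappa^2/2\rfloor/((\kappa+1)(\kappa+2))<\tfrac12$. The decoys are $\kappa+1$ agents, each with one private positive and one private negative, so each decoy negative gives exactly $\tfrac12$. With $K=\kappa+1$ your frozen-gain induction goes through verbatim. Greedy reveals only decoy negatives and gets $\lfloor\kappa^2/2\rfloor/(\kappa+2)+\kappa+1\approx\tfrac32\kappa$, while revealing all specials gives $\lfloor\kappa^2/2\rfloor+(\kappa+1)/2$.

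Your final step, tuning to the $\tfrac32k$ versus $k^2/2$ balance and then checking strictness, needs more care than you suggest. That balance meets $3/\sqrt{2n}$ only to leading order, so the strict inequality is decided by lower-order terms, and with the paper's own parameters it fails. Since $2n=2\lfloor\kappa^2/2\rfloor+2\kappa+2\ge(\kappa+1)^2$, the last step of the paper's chain, $3/\kappa<3/\sqrt{2n}$, goes the wrong way. For example, at $\kappa=4$ the ratio is $38/63\approx0.603$, whereas $3/\sqrt{26}\approx0.588$.

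A choice that does work is to make the good group as large as the decoy comparison allows. Take $k\ge2$ specials, $K=k$, $b=k(k+1)/2-1$ good agents, and $k$ decoy agents. Then:
\begin{itemize}
\item a special's initial gain is $\tfrac12-\tfrac1{k(k+1)}<\tfrac12$, so greedy still takes only decoys;
\item greedy obtains $\tfrac{3k}{2}-\tfrac1{k+1}$;
\item the optimum is at least $b+\tfrac k2=(k^2+2k-2)/2$;
\item $n=(k^2+3k-2)/2$.
\end{itemize}
Hence the ratio is strictly below $3k/(k^2+2k-2)\le3/\sqrt{2n}$. The last inequality is equivalent to $k^3+2k^2-8k+4\ge0$, which holds for all $k\ge2$.
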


\begin{proof}
    The proof is in Example~\ref{ex:revealonlynegs_notsub} below.
\end{proof}

\begin{example}
\label{ex:revealonlynegs_notsub}
    Let $\kappa \in \mathbb{N}$ with $\kappa > 3$, and set the reveal budget to $K \coloneqq \kappa + 1$.
    Consider the bipartite graph shown in Figure~\ref{fig:notsubmodx}, with $n = \lfloor \frac{\kappa^2}{2} \rfloor + \kappa + 1$ agents, $\mneg = 2\kappa + 2$ negative targets, and $\mpos = n$ positive targets. There are two types of agents, referred to as group 1 and group 2. Group 1 consists of $\lfloor \frac{\kappa^2}{2} \rfloor$ agents, each denoted $x_{\star}$. Each $x_{\star}$ is connected to a distinct positive target $t_{\star}^{+}$ and to $\kappa + 1$ of the negative targets $t_{\star}^{-}$. 
    Group 2 consists of $\kappa + 1$ agents, each denoted $x_{\star}^{'}$, and each is connected to a unique pair consisting of one positive and one negative target $(t_{\star}^{'+}, t_{\star}^{'-})$. 
    
    Initially, the social welfare is \(F(\emptyset) =  \frac{\lfloor \frac{\kappa^2}{2} \rfloor}{\kappa+2} + \frac{\kappa+1}{2}\).  
    At first iteration, we analyze which negative target the greedy algorithm picks:
    (1) If any one of the negative targets connected to group 1 agents
    (i.e, any of \(t_{*}^{-}\)), then it achieves a social welfare of
    \(\frac{\lfloor \frac{\kappa^2}{2} \rfloor}{\kappa+1} + \frac{\kappa+1}{2}\);
    (2) If any one of the negative targets connected to group 2 agents (i.e, any of \(t_{*}^{'-}\)), then it achieves a social welfare of
    \(\frac{\lfloor \frac{\kappa^2}{2} \rfloor}{\kappa+2} + \frac{\kappa}{2} + 1\). 
    The resulting social welfare of case 2 is deceptively higher than that of case 1 because, although it initially appears higher, it can mislead the algorithm towards a suboptimal path.

    It can be verified that Algorithm~\ref{alg:greedy_lbreveal} reveals, at each iteration, one of the $t_{*}^{'-}$ targets, until the budget \(K=\kappa+1\) is fully used.
    As a result, at \(K=\kappa+1\) budget, \(\kappa+1\) agents can each emulate a positive target with probability \(1\), and the rest of the 
    \(\lfloor \frac{\kappa^2}{2} \rfloor\) agents can each do this at probability \(\frac{1}{\kappa+2}\). 
    Therefore the approximation ratio is  
    \(
    \frac{\frac{\lfloor \frac{\kappa^2}{2} \rfloor}{\kappa+2} + \kappa+1}{\lfloor \frac{\kappa^2}{2} \rfloor + \frac{\kappa+1}{2}} = 
    \frac{ \frac{\lfloor \frac{\kappa^2}{2} \rfloor + \kappa^2 + 3\kappa + 2 }{\kappa+2} }{ \lfloor \frac{\kappa^2}{2} \rfloor + \frac{\kappa+1}{2} } <
    \frac{\kappa^2 \left( 3 + \frac{6}{\kappa} + \frac{4}{\kappa^2} \right)}{\kappa^3 \left( 1 + \frac{3}{\kappa} + \frac{3}{\kappa^2} + \frac{2}{\kappa^3}\right)} <
    \frac{3}{\kappa} < {\frac{3}{\sqrt{2n}}}
    \)
    which is significantly much worse than \(1-\frac{1}{e}\) for \(K=\kappa+1, \kappa>3\). 

\end{example}

\begin{figure}[b!]
\centering
    \begin{tikzpicture}[
            node distance=1cm and 1cm,
            agent/.style={
                circle,
                fill=#1!30, 
                inner sep=1.5pt, 
                minimum size=1.1cm, 
                text centered
            },
            inputNode/.style={agent={red}},
            posTarget/.style={agent={gray}},
            negTarget/.style={agent={gray}},
            connect/.style={thick},
            scale=0.88
        ]

        \foreach \i in {1,2,3} {
            \node[inputNode] (x\i) at (\i*2.4,0) {$x_{\i}$};
        }
        \node at (8.3,0) {$\ldots$};
        \node[inputNode] (xn) at (9.4,0) {$x_{\lfloor \frac{\kappa^2}{2} \rfloor}$};
        
        \foreach \i in {1,2,3} {
            \node[posTarget] (tp\i) at (\i*2.4,2.4) {$t_{\i}^{+}$};
        }
        \node at (8.3,2.4) {$\ldots$};
        \node[posTarget] (tpn) at (9.4,2.4) {$t_{\lfloor \frac{\kappa^2}{2} \rfloor}^{+}$};
        
        \foreach \i in {1,2,3} {
            \node[negTarget] (tn\i) at (\i*2.4,-2.4) {$t_{\i}^{-}$};
        }
        \node at (8.3,-2.4) {$\ldots$};
        \node[negTarget] (tnn) at (9.4,-2.4) {$t_{\kappa+1}^{-}$};
        
        \foreach \i in {1,2,3,n} {
            \draw[connect] (x\i) -- (tp\i);
            \foreach \j in {1,2,3,n} {
                \draw[connect] (x\i) -- (tn\j);
            }
        }

        \foreach \i in {1,2,3} {
            \node[inputNode] (xp\i) at (9.5+\i*2.0,0) {$x_{\i}^{'}$};
        }
        \node at (16.8,0) {$\ldots$};
        \node[inputNode] (xpn) at (18.0,0) {$x_{\kappa+1}^{'}$};
        
        \foreach \i in {1,2,3} {
            \node[posTarget] (tpp\i) at (9.5+\i*2.0,2.4) {$t_{\i}^{'+}$};
        }
        \node at (16.8,2.4) {$\ldots$};
        \node[posTarget] (tppn) at (18.0,2.4) {$t_{\kappa+1}^{'+}$};
        
        \foreach \i in {1,2,3} {
            \node[negTarget] (tnp\i) at (9.5+\i*2.0,-2.4) {$t_{\i}^{'-}$};
        }
        \node at (16.8,-2.4) {$\ldots$};
        \node[negTarget] (tnpn) at (18.0,-2.4) {$t_{\kappa+1}^{'-}$};
        
        \foreach \i in {1,2,3,n} {
            \draw[connect] (xp\i.north) -- (tpp\i.south);
            \draw[connect] (xp\i.south) -- (tnp\i.north);
        }
    
    \end{tikzpicture}

    \caption{A bipartite graph example with statistics: \(n= \lfloor \frac{\kappa^2}{2} \rfloor + \kappa+1, \mneg = 2\kappa+2,\) and \(\mpos = n\).  Algorithm~\ref{alg:greedy_lbreveal} achieves an approximation ratio of \(\frac{3}{\sqrt{2n}}\)}
    \label{fig:notsubmodx}
\end{figure}

\subsubsection{Proof of Theorem~\ref{thm:negpos-appratio}}
\label{subsec:app-posneg-approx}

\begin{theorem}
    When the $\splan$ can reveal both positive and negative targets, there exists a graph and a budget $K$ for which Algorithm~\ref{alg:greedy_lbreveal} attains an approximation ratio strictly less than $2/\sqrt{n}+2$.
\label{thm:negpos-appratio}
\end{theorem}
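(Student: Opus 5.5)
The plan is to prove the statement by an explicit bad instance, in the spirit of Example~\ref{ex:revealonlynegs_notsub}. The instance pairs a large ``hidden payoff'' group, which needs several negative reveals together to be helped, with a set of decoy agents whose positive targets look more attractive to Algorithm~\ref{alg:greedy_lbreveal} at every step. Fix $\kappa \in \mathbb{N}$ large and set $K \coloneqq \kappa$.

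\textbf{Construction.} The graph has two groups.
\begin{itemize}
\item \textbf{Group 1:} $N \approx \kappa^2$ agents (exact value to be tuned). Each agent $x_\star$ is adjacent to one private positive target and to the same $\kappa$ shared negative targets. So $Q^{\emptyset}(x_\star) = 1/(\kappa+1)$, and revealing all $\kappa$ shared negatives drives every group-1 agent to $1$.
\item \textbf{Group 2 (decoys):} $\kappa$ agents $x'_\star$. Each is adjacent to one private positive target and to $M$ private negative targets, with $M \gg \kappa$. So $Q^{\emptyset}(x'_\star) = 1/(M+1) \approx 0$, and revealing its positive target yields gain $M/(M+1) \approx 1$.
\end{itemize}

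\textbf{Step 1: compare all marginal gains at a generic greedy state.} The state is one where $j$ shared negatives are revealed and some decoy positives are revealed. The relevant gains are:
\begin{itemize}
\item a new shared negative gives $N\big(\tfrac{1}{\kappa-j} - \tfrac{1}{\kappa-j+1}\big)$;
\item a group-1 private positive gives at most $\kappa/(\kappa+1)$;
\item a decoy private negative gives a tiny amount;
\item a decoy positive gives $M/(M+1)$.
\end{itemize}
Choosing $M > \kappa$ and $N < \kappa(\kappa+1)\cdot M/(M+1)$ makes the decoy positive the strict argmax at $j=0$. Since greedy never reveals a shared negative, $j$ stays $0$, and induction shows greedy spends the whole budget on the $\kappa$ decoy positives. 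Hence
\[
F(S_g) = \frac{N}{\kappa+1} + \kappa .
\]

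\textbf{Step 2: lower-bound the optimum.} Revealing the $\kappa$ shared negatives gives
\[
F(S^\star) \ge N + \frac{\kappa}{M+1}.
\]

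\textbf{Step 3: compare with $2/(\sqrt{n}+2)$, where $n = N+\kappa$.} This is the main obstacle. With $N \approx \kappa^2$ the crude ratio is about $2/\kappa$, and $\sqrt{n}$ is about $\kappa$, so the constants are razor-thin and the naive choice may fail by a lower-order term.

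The first thing I would try is pushing $N$ as close as possible to the greedy threshold $\kappa(\kappa+1)$. Concretely, take $N = \kappa(\kappa+1) - 1$ and let $M \to$ large so that $M/(M+1)$ exceeds $N/(\kappa(\kappa+1))$. Then
\[
F(S_g) \approx \kappa + \kappa = 2\kappa, \qquad F(S^\star) \approx \kappa^2+\kappa, \qquad \sqrt{n} = \sqrt{\kappa^2 + 2\kappa - 1} < \kappa+1,
\]
so the ratio is about $2/(\kappa+1)$ versus $2/(\sqrt{n}+2) > 2/(\kappa+3)$. That inequality goes the wrong way, so the instance must be modified, and I would try two modifications.
\begin{itemize}
\item \textbf{Make the decoys cheaper in agent count.} Let each decoy positive be shared by many decoy agents. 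This raises each decoy's greedy gain, which permits a larger $N$ relative to $\kappa$, but it also inflates $F(S_g)$, so it must be balanced carefully.
\item \textbf{Make the optimum help decoys for free.} Let the optimum use one extra budget unit, i.e.\ $K = \kappa+1$ with $\kappa+1$ shared negatives, as in Example~\ref{ex:revealonlynegs_notsub}.
\end{itemize}
In either version I would then redo the elementary inequality manipulation, as in Example~\ref{ex:revealonlynegs_notsub}, to verify that $F(S_g)/F(S^\star) < 2/(\sqrt{n}+2)$ for all $\kappa$ beyond a small threshold.
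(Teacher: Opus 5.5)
\textbf{The gap is Step 3.} Your Steps 1 and 2 are correct, but the strict inequality in Step 3 is the whole content of the claim, and you do not supply it. Moreover, no tuning of your family can supply it.

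When the optimum is the all-shared-negatives set (the only regime where greedy does badly), your ratio is $\frac{N/(c+1)+c}{N+c/(M+1)}$, essentially $\frac{1}{c+1}+\frac{c}{N}$. The greedy constraint forces $N<c(c+1)$. By AM--GM, $\frac{1}{c+1}+\frac{c}{N}\ge \frac{2}{\sqrt N}\sqrt{\frac{c}{c+1}}$. Because $\sqrt N\bigl(1-\sqrt{c/(c+1)}\bigr)\le \frac{\sqrt N}{c+1}<1<2\sqrt{c/(c+1)}$, this bound already exceeds $\frac{2}{\sqrt N+2}>\frac{2}{\sqrt n+2}$. So the leading terms can only match near $N\approx c^2$, where AM--GM is tight, and the second-order term always has the wrong sign, exactly as you observed.

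Neither of your repairs escapes this:
\begin{itemize}
\item If each decoy positive serves $d\ge 2$ agents, greedy's value becomes $\frac{N}{c+1}+cd$. The ratio is then essentially $\frac{1}{c+1}+\frac{cd}{N}\ge 2\sqrt{\frac{cd}{N(c+1)}}\ge \frac{2}{\sqrt N}$, while $n$ only grows.
\item Taking $K=\kappa+1$ with $\kappa+1$ shared negatives is the same family with $c=\kappa+1$.
\end{itemize}

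\textbf{The paper's proof does not close this gap either.} Its proof (Example~\ref{ex:revealposneg_notsub}) is your construction without decoys: $n=\kappa^2$ agents, each with a private positive and all $\kappa+1$ shared negatives, and $K=\kappa+1$. Greedy reveals private positives, since their gain $\frac{\kappa+1}{\kappa+2}$ beats $\frac{\kappa^2}{(\kappa+1)(\kappa+2)}$. The optimum reveals the negatives, for value $\kappa^2$. The ratio is exactly $\frac{2\kappa^2+2\kappa+1}{\kappa^2(\kappa+2)}=\frac{2}{\kappa+2}+\frac{2}{\kappa(\kappa+2)}+\frac{1}{\kappa^2(\kappa+2)}$. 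This is strictly \emph{larger} than $\frac{2}{\kappa+2}=\frac{2}{\sqrt n+2}$, and even larger than $\frac{2}{\sqrt n+1}$. The paper's concluding ``$<\frac{2}{\kappa+2}$'' reverses the inequality.

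What these instances rigorously give is a ratio below $\frac{2}{\sqrt n}$. For the paper's instance this follows from $2\kappa^2+2\kappa+1<2\kappa(\kappa+2)$. Your own Steps 1--2 give it too, with $N=c(c+1)-1$, $M\ge c(c+1)$ and $c\ge 2$. So you can prove that weaker $2/\sqrt n$ statement; the strict $2/(\sqrt n+2)$ bound would need a genuinely different construction, and the paper does not provide one. (Read literally, the typeset bound $2/\sqrt n+2$ exceeds $1$ and the claim is vacuous. The intended bound is $2/(\sqrt n+2)$, as in the main text, and you were right to target it.)
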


\begin{proof}
    The proof is in Example~\ref{ex:revealposneg_notsub} below.
\end{proof}

\begin{example}
\label{ex:revealposneg_notsub}
    Let $\kappa \in \mathbb{N}$ with $\kappa \ge 3$, and set the reveal budget to $K \coloneqq \kappa + 1$.
    Consider the bipartite graph in Figure \ref{fig:notsubmod}. 
    There are \(n = \kappa^{2}\) agents, each connected to a unique positive target and to all the \(\mneg = \kappa + 1 = \sqrt{n} + 1\) negative targets. 
        
    Before any target is revealed, the social welfare is \(F(\emptyset) =  \frac{\kappa^2}{\kappa+2}\). Revealing any negative target in the first iteration increases the welfare to \(F(\{t^{-}_{*}\}) =  \frac{\kappa^2}{\kappa+1}\), giving a marginal gain of 
    \(\Delta_{t^{-}_{*}}(\emptyset) = \frac{\kappa^2}{\kappa+1} - \frac{\kappa^2}{\kappa+2} = \frac{\kappa^2}{(\kappa+1)(\kappa+2)}\). In contrast, revealing any positive target yields \(F(\{t^{+}_{*}\}) =  \frac{\kappa^2 + \kappa + 1}{\kappa+2}\), with marginal gain  \(\Delta_{t^{+}_{*}}(\emptyset) = \frac{\kappa^2 + k + 1}{\kappa+2} - \frac{\kappa^2}{\kappa+2} = \frac{\kappa+1}{\kappa+2} > \frac{\kappa^2}{(\kappa+1)(\kappa+2)}\). 
        
    Consequently, Algorithm \ref{alg:greedy_lbreveal} reveals a positive target in the first iteration and continues to do so in subsequent steps. The problem is that, under the budget \(K= \kappa+1\), revealing the negative targets would achieve the optimal social welfare \(F(S^\star)=\kappa^2\). However, the marginal gain from releasing negative targets is not only initially much smaller than that of positive targets but also increases only when additional negative targets are revealed. As a result, the Algorithm~\ref{alg:greedy_lbreveal}  never reveals them.
    
    At \(K=\kappa+1\) budget, \(\kappa+1\) agents can each emulate a positive target with probability \(1\), and the rest of the agents can each do this at probability \(\frac{1}{\kappa+2}\).
    Therefore the approximation ratio is  
    \(\frac{\kappa+1 + \frac{\kappa^2-(\kappa+1)}{\kappa+2}}{\kappa^2} = 
    \frac{2 + \frac{2}{k} + \frac{1}{k^2}}{k+2} = \frac{2}{k+2} + \frac{2}{k(k+2)} + \frac{1}{k^2(k+2)} 
     < \frac{2}{\kappa+2} =  {\frac{2}{\sqrt{n}+2}}\)  which is significantly much worse than \(1-\frac{1}{e}\) for \(K=\kappa+1, \kappa>2\). 
\end{example}
\begin{figure}[ht!]
\centering
    \begin{tikzpicture}[
        node distance=1cm and 1cm,
        agent/.style={
            circle,
            fill=#1!30, 
            inner sep=1.5pt, 
            minimum size=0.88cm, 
            text centered
        },
        inputNode/.style={agent={red}},
        posTarget/.style={agent={gray}},
        negTarget/.style={agent={gray}},
        connect/.style={thick},
        scale=0.88
    ]
    
    \foreach \i in {1,2,3} {
        \node[inputNode] (x\i) at (\i*2,0) {$x_{\i}$};
    }
    \node at (8,0) {$\ldots$};
    \node[inputNode] (xn) at (10,0) {$x_{\kappa^{2}}$};
    
    \foreach \i in {1,2,3} {
        \node[posTarget] (tp\i) at (\i*2,2.2) {$t_{\i}^{+}$};
    }
    \node at (8,2.2) {$\ldots$};
    \node[posTarget] (tpn) at (10,2.2) {$t_{\kappa^{2}}^{+}$};

    \foreach \i in {1,2,3} {
        \node[negTarget] (tn\i) at (\i*2,-2.2) {$t_{\i}^{-}$};
    }
    \node at (8,-2.2) {$\ldots$};
    \node[negTarget] (tnn) at (10,-2.2) {$t_{\kappa+1}^{-}$};
    
    \foreach \i in {1,2,3,n} {
        \draw[connect] (x\i.north) -- (tp\i.south);
        \foreach \j in {1,2,3,n} {
            \draw[connect] (x\i.south) -- (tn\j.north);
        }
    }
    
    \end{tikzpicture}

    \caption{A bipartite graph example with statistics: \(n= \kappa^2, \mneg = \kappa+1,\) and \(\mpos = n\).  Algorithm~\ref{alg:greedy_lbreveal} achieves an approximation ratio of \(\frac{2}{\sqrt{n}+2}\).}
    \label{fig:notsubmod}
\end{figure}

\subsubsection{Proof of Theorem~\ref{thm:negpos-appratio2}}
\label{subsec:app-posneg-constapprox}

\begin{lemma}
\label{lem:proxy}
    If all agents are $c$-bounded, then for any target set $S$, both the true social welfare and gain are approximated by their proxy counterparts within a factor of $c$. That is, $F_p(S) \leq F(S) \leq cF_p(S)$ and $G_{p}(S) \leq G(S) \leq c G_{p}(S)$.
\end{lemma}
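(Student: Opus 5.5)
The plan is to prove both chains of inequalities agent by agent. Since $F$, $F_p$, $G$ and $G_p$ are all sums over $x\in\Xs$ of per-agent quantities, and $F_p(\emptyset)=F(\emptyset)$ (both reduce to $\delta_x^+/|N(x)|$ per agent), it suffices to fix an agent $x$ and a set $S$ and compare $Q^S(x)$ with $Q^S_p(x)$, and $Q^S(x)-Q^\emptyset(x)$ with $Q^S_p(x)-Q^\emptyset(x)$.

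Write $d=|N(x)|=\delta_x^++\delta_x^-$ and $j=|P^-_x(S)|$. First I dispose of the trivial cases. If $|P^+_x(S)|>0$, both $Q$ and $Q_p$ equal $1$, so the two values and the two gains coincide. If $d=0$ or $\delta_x^+=0$, both $Q$ and $Q_p$ are $0$. If $j=0$, the two formulas coincide at $\delta_x^+/d$. This leaves the main case $|P^+_x(S)|=0$, $\delta_x^+\ge 1$ and $1\le j\le \delta_x^-\le c$. Here $d-j\ge\delta_x^+\ge 1$ and $d\ge 2$, so every denominator is positive.

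In the main case I compute both gains in closed form:
\[
Q^S(x)-Q^\emptyset(x)=\frac{\delta_x^+\,j}{d(d-j)}, \qquad Q^S_p(x)-Q^\emptyset(x)=\frac{\delta_x^+}{d-1}\cdot\frac{d+j-1}{d}-\frac{\delta_x^+}{d}=\frac{\delta_x^+\,j}{d(d-1)}.
\]
Their ratio is $(d-1)/(d-j)$. Since $j\ge 1$, this ratio is at least $1$, which gives $G_p$-gain $\le G$-gain per agent. For the upper bound, use $d-j\ge\delta_x^+$ and $d-1=\delta_x^++\delta_x^--1\le \delta_x^++c-1$:
\[
\frac{d-1}{d-j}\le \frac{\delta_x^++c-1}{\delta_x^+}=1+\frac{c-1}{\delta_x^+}\le c,
\]
using $\delta_x^+\ge1$. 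Summing over agents yields $G_p(S)\le G(S)\le cG_p(S)$.

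The welfare statement follows by adding back the common baseline $Q^\emptyset(x)\ge 0$. From $0\le g_p\le g\le c\,g_p$ for the per-agent gains, we get $Q_p=Q^\emptyset+g_p\le Q^\emptyset+g=Q$. We also get $Q=Q^\emptyset+g\le Q^\emptyset+c\,g_p\le c(Q^\emptyset+g_p)=cQ_p$, using $c\ge1$. Summing gives $F_p(S)\le F(S)\le cF_p(S)$.

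I expect the main obstacle to be the upper bound $G\le cG_p$. A naive route through the welfare ratio, bounding $Q_p\ge\delta_x^+/d$, only gives a factor of $c+1$, and it does not transfer to gains because subtracting the baseline goes the wrong way. The key step is therefore to compute the gains explicitly, so that the ratio collapses to $(d-1)/(d-j)$, and then to use $\delta_x^+\ge1$ together with $c$-boundedness.
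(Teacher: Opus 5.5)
Your proof is correct and follows the same route as the paper: compare the true and proxy gains agent by agent over the common baseline $Q^{\emptyset}(x)=Q^{\emptyset}_p(x)$, use $\delta_x^+\ge 1$ and $\delta_x^-\le c$ to bound their ratio by $c$, then sum and add the baseline back. Your explicit formulas $\frac{\delta_x^+ j}{d(d-j)}$ and $\frac{\delta_x^+ j}{d(d-1)}$, with ratio $\frac{d-1}{d-j}$, cover every $1\le j\le\delta_x^-$ and every $\delta_x^+\ge 1$. The paper's argument is less complete: it only evaluates the extremal configuration $\delta_x^+=1$, $|P_x^-(S)|=\delta_x^-$ (where the ratio is exactly $\delta_x^-$) and leaves implicit that this case is the worst.
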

\begin{proof}
    First, we show that the proxy social welfare is always less than or equal to the true social welfare.
    For a given agent $x$, consider three cases. One, if the agent has no neighbors $N(x)=\emptyset$, then 
    $Q^{S}_{p}(x)=Q^{S}(x) = 0$. 
    Second, if there is at least one revealed positive target neighbor of $x$ in $S$, then 
    $Q^{S}_{p}(x) = Q^{S}(x) = 1$. 
    Otherwise, the proxy probability mass is less than or equal to the true one, since 
    $Q^{S}_{p}(x)$ only increases by the amount that revealing the first negative neighbor helped which is atmost the increase in $Q^{S}(x)$.
    This then implies that $Q^{S}_{p}(x) \leq Q^{S}(x)$. Summing over all agents $x \in \Xs$ proves that for all $S \subseteq \Ts$, $F_{p}(S) \leq F(S)$.

    Next, we show that $F_p(S) \geq \frac{1}{c}F(S)$.     
    Let $t \in \Ts \setminus A$ be a target revealed by the social planner such that $S=A\cup\{t\}$.  Assume that each agent has at most $c \geq 1$ negative neighbors, $\delta^{-}_{x} \leq c$ for all $x \in \Xs$. 
    Then if $t$ is revealed to be positive $|P^{+}_{x}(S)|\geq 1$, both the proxy and true social welfare go up by atmost $1$, that is,  $Q^{S}_{p}(x)-Q^{\emptyset}_{p}(x) = Q^{S}(x)-Q^{\emptyset)}(x) = 1-\frac{\delta^{+}_{x}}{|N(x)|}$.
    If $t$ is negative and $|P^{+}_{x}(S)|>0$, then there will be no effect on both the proxy and true social welfare. 
    Otherwise, if $t$ is negative and $|P^{+}_{x}(S)|=0$ and $|P^{-}_{x}(S)|=\delta^{-}_{x}$, 
    the true social welfare increases by 
    \begin{align*}
        Q^{S}(x)-Q^{\emptyset}(x) 
        & =    1 - \frac{\delta^{+}_{x}}{\delta^{+}_{x}+\delta^{-}_{x}}\\
        & \geq 1 - \frac{1}{1+\delta^{-}_{x}}  \quad (\text{if } \delta^{+}_{x} =1) \\
        & \geq 1 - \frac{1}{1+c}  \quad  (\text{if } \delta^{-}_{x} \leq c)\\
        & =    \frac{c}{1+c}
    \end{align*}
    and the proxy social welfare increases by 
    \begin{align*}
        Q^{S}_{p}(x)-Q^{\emptyset}_{p}(x) 
        & =   \frac{\delta_x^{+}}{\delta^{+}_{x}+\delta^{-}_{x}-1}
        \left(1+\frac{\delta^{-}_{x}-1}{\delta^{+}_{x}+\delta^{-}_{x}}\right) - 
        \frac{\delta^{+}_{x}}{\delta^{+}_{x}+\delta^{-}_{x}}\\
        & \geq \frac{1}{\delta^{-}_{x}}
        \left(\frac{2\delta^{-}_{x}}{1+\delta^{-}_{x}}\right) - 
        \frac{1}{1+\delta^{-}_{x}}   \quad (\text{if } \delta^{+}_{x} =1) \\   
        & \geq \frac{2}{(1+c)} - \frac{1}{1+c}   \quad  (\text{if } \delta^{-}_{x} \leq c)\\  
        & =    \frac{1}{1+c} 
    \end{align*}
    In this case $\frac{Q^{S}_{p}(x)-Q^{\emptyset}_{p}(x)}{Q^{S}(x)-Q^{\emptyset}(x)} \geq \frac{1}{c}.$ Therefore, 
    $\left(Q^{S}_{p}(x)-Q^{\emptyset}_{p}(x)\right) \geq \frac{1}{c}\left(Q^{S}(x)-Q^{\emptyset)}(x)\right)$.
    Lastly, since $Q^{\emptyset}_{p}(x)=Q^{\emptyset}(x)$, and if $Q^{S}(x)>Q^{\emptyset}(x)$, then summing over all agents $x \in \Xs$, for all $S$, $F_p(S) \geq \frac{1}{c}F(S)$.
    Put together, $F_{p}(S) \leq F(S) \leq c F_{p}(S)$, and $G_{p}(S) \leq G(S) \leq c G_{p}(S)$.
\end{proof}

\begin{lemma}
\label{lem:proxy-sub}
    When the social planner can reveal positive and negative targets, the proxy social welfare function is submodular. That is, for every $A, B \subseteq \Ts$ where $A \subseteq B$, every $t \in \Ts \setminus B$, 
    $F_{p}(A \cup \{t\}) - F_{p}(A) \geq F_{p}(B \cup \{t\}) - F_{p}(B)$
\end{lemma}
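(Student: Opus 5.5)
The plan is to show the property agent by agent. By \eqref{eq:proxy_sw}, $F_p$ is a sum over $x\in\Xs$ of the terms $Q^S_p(x)$, and a nonnegative sum of submodular set functions is submodular. So it suffices to fix an agent $x$ and show that $S\mapsto Q^S_p(x)$ is submodular.

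For a fixed $x$, $Q^S_p(x)$ depends on $S$ only through two quantities: whether $|P^+_x(S)|>0$, and the count $k(S)=|P^-_x(S)|$. Write $d=|N(x)|$ and $a=\delta^+_x$. When no positive neighbor is revealed, $Q^S_p(x)=g(k(S))$ with
\[
g(0)=\frac{a}{d},\qquad g(k)=\frac{a}{d-1}\Bigl(1+\frac{k-1}{d}\Bigr)\quad (k\ge 1).
\]
Fix $A\subseteq B$ and $t\notin B$. If $t\notin N(x)$, both marginals are $0$. If $d=0$, the agent contributes $0$ always. If $d=1$ and the neighbor is negative, then $a=0$ and $Q^S_p(x)=0$ always. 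So assume otherwise.

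I would first record two facts about $g$.

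\textbf{Fact 1: constant increments.} We have $g(1)-g(0)=\frac{a}{d(d-1)}$, and $g(k+1)-g(k)=\frac{a}{d(d-1)}$ for every $k\ge1$. This is exactly the design intent of the proxy: every revealed negative neighbor helps only as much as the first one did.

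\textbf{Fact 2: $g\le 1$ on the feasible range $0\le k\le \delta^-_x$.} Since $g$ is increasing, it suffices to check $k=\delta^-_x\ge1$. With $a=d-\delta^-_x$, this reduces to
\[
(d-\delta^-_x)(d+\delta^-_x-1)=d^2-d-(\delta^-_x)^2+\delta^-_x\le d(d-1).
\]
Together with $k(A)\le k(B)$, these facts give monotonicity, $Q^A_p(x)\le Q^B_p(x)$: if $B$ contains a revealed positive neighbor, then $Q^B_p(x)=1\ge Q^A_p(x)$; otherwise use that $g$ is increasing.

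Next I would split on the label of $t$.

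\textbf{Case 1: $t$ positive.} The marginal gain at a set $S$ is $1-Q^S_p(x)$ if $S$ has no revealed positive neighbor of $x$, and $0$ otherwise. If $B$ has a revealed positive neighbor, the $B$-marginal is $0$, and the $A$-marginal is $\ge 0$ because $Q^A_p(x)\le 1$. If $B$ has none, then neither does $A$, and $1-Q^A_p(x)\ge 1-Q^B_p(x)$ follows from monotonicity.

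\textbf{Case 2: $t$ negative.} If $B$ has a revealed positive neighbor, the $B$-marginal is $0$, and the $A$-marginal is nonnegative (it is either $0$ or an increment of $g$). Otherwise, neither $A$ nor $B$ has one. Then both marginals equal $g(k+1)-g(k)$ at $k=k(A)$ and $k=k(B)$ respectively, and by Fact 1 these coincide. So diminishing returns holds, with equality.

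The main obstacle is Case 2. This is precisely where the true $Q^S$ fails to be submodular, as Proposition~\ref{prop:sw_negonly} shows. The whole argument rests on verifying that the proxy's increments in $k$ are identical, including the transition from $k=0$ to $k=1$, where the indicator $\mathbf{1}_{\{|P^-_x(S)|\ge1\}}$ changes the denominator. I would do this computation carefully first. Fact 2 is the secondary point that must not be overlooked: it guarantees the marginal of a positive target is never negative.
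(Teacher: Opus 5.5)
Your proposal is correct and takes essentially the same route as the paper. Both reduce to a single agent, split on the label of $t$, use that a revealed positive neighbor contributes $1-Q^S_p(x)$ or $0$ while each revealed negative neighbor (absent a revealed positive one) contributes the same constant $\frac{\delta^+_x}{|N(x)|(|N(x)|-1)}$, and then sum over agents. Your version is more careful than the paper's: it explicitly checks the $k=0\to 1$ increment, the bound $Q^S_p(x)\le 1$ that the positive-target case needs but the paper does not state, and the degenerate agents with $|N(x)|\le 1$.
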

\begin{proof}
    Consider an agent $x$, and two cases where $t$ is adjacent to $x$ and is either positive or negative. 
    If the adjacent target $t$ is positive, then 
    $Q^{B \cup \{t\}}_{p}(x)-Q^{B}_{p}(x) = 1-Q^{B}_{p}(x)$ if there was previously no positive target neighbors of $x$ in $B$, and $0$ if other positive target neighbors were already revealed 
    $|P^{+}_{x}(B)|>0$.

    In the second case, if target $t$ is adjacent to $x$ and is revealed as negative, then the marginal gain is a constant $Q^{B \cup \{t\}}_{p}(x)-Q^{B}_{p}(x) = \frac{\delta^{+}_{x}}{|N(x)| (|N(x)|-1)}$ defined by the gain from revealing the first negative target.

    In both cases, the marginal is non-increasing as the revealed target set grows, because revealing an additional adjacent target of the same label yields zero gain if it's positive and a constant gain if it's negative. Therefore, 
    $Q^{A \cup \{t\}}_{p}(x)-Q^{A}_{p}(x) \geq Q^{B \cup \{t\}}_{p}(x)-Q^{B}_{p}(x)$ and summing over $x \in \Xs$ and given the sum rule for submodular functions, 
    $F_{p}(A \cup \{t\}) - F_{p}(A) \geq F_{p}(B \cup \{t\}) - F_{p}(B).$ 
\end{proof}

\begin{corollary}
\label{col:proxygain-sub}
    When the social planner can reveal positive and negative targets, the proxy is submodular on the gain. That is, for every 
    $A, B \subseteq \Ts$ where $A \subseteq B$, every $t \in \Ts \setminus B$, 
    $G_{p}(A \cup \{t\}) - G_{p}(A) \geq G_{p}(B \cup \{t\}) - G_{p}(B)$
\end{corollary}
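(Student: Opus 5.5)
The plan is to reduce the statement directly to Lemma~\ref{lem:proxy-sub}. By Definition~\ref{def:proxywelfare_gain}, $G_p(S) = F_p(S) - F(\emptyset)$. Here $F(\emptyset)=\sum_{x\in\Xs} Q^{\emptyset}(x)$ is a fixed number determined only by the graph and the labels, not by $S$. So $G_p$ is $F_p$ shifted by a constant, and the constant should cancel in every marginal difference.

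Concretely, fix $A \subseteq B \subseteq \Ts$ and $t \in \Ts \setminus B$. First I will write
\[
G_p(A\cup\{t\}) - G_p(A) = \bigl(F_p(A\cup\{t\}) - F(\emptyset)\bigr) - \bigl(F_p(A) - F(\emptyset)\bigr) = F_p(A\cup\{t\}) - F_p(A).
\]
The same computation gives $G_p(B\cup\{t\}) - G_p(B) = F_p(B\cup\{t\}) - F_p(B)$. Next I will invoke Lemma~\ref{lem:proxy-sub}, which gives $F_p(A\cup\{t\}) - F_p(A) \ge F_p(B\cup\{t\}) - F_p(B)$ for exactly this choice of $A$, $B$, $t$. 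Chaining the two equalities with this inequality yields the claimed inequality for $G_p$.

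The only point needing care is that the subtracted quantity is the true baseline $F(\emptyset)$ rather than $F_p(\emptyset)$. The argument does not depend on this, since any $S$-independent constant cancels. Still, I would note for consistency that $Q^{\emptyset}_p(x)=Q^{\emptyset}(x)$ for every $x$: with $S=\emptyset$ the indicator and the $\max\{0,\cdot\}$ term both vanish, so $Q^{\emptyset}_p(x)=\delta_x^+/|N(x)|=Q^{\emptyset}(x)$. Hence $G_p(S)=F_p(S)-F_p(\emptyset)$ and $G_p(\emptyset)=0$, which is the normalization needed later when invoking Nemhauser et al.\ for Theorem~\ref{thm:negpos-appratio2}. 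I will also remark that monotonicity of $G_p$ follows the same way from the per-agent nonnegativity of the marginals of $F_p$ established in the proof of Lemma~\ref{lem:proxy-sub}.

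There is no substantive obstacle; all the real content lies in Lemma~\ref{lem:proxy-sub}, and this corollary is a one-line cancellation.
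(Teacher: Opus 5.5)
Your proof is correct and matches the paper's argument: the $S$-independent baseline $F(\emptyset)$ cancels in each marginal difference, which reduces the claim to Lemma~\ref{lem:proxy-sub}. Your side observation is also correct: $Q^{\emptyset}_p(x)=Q^{\emptyset}(x)$, hence $G_p(\emptyset)=0$, which is the normalization the paper leaves implicit when it later applies Nemhauser et al.
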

\begin{proof}
    This follows directly from Lemma~\ref{lem:proxy-sub}. Since 
    $G_{p}(A \cup \{t\}) - G_{p}(A) = F_{p}(A \cup \{t\}) - F(\emptyset) - F_{p}(A) + F(\emptyset)$ and 
    $G_{p}(B \cup \{t\}) - G_{p}(B) = F_{p}(B \cup \{t\}) - F(\emptyset) - F_{p}(B) + F(\emptyset)$, 
    then the proxy is submodular on the gain.
\end{proof}

\begin{remark}
\label{rem:1/capprox}
    Assume revealed targets may include negative ones. 
    If all agents are $c$-bounded, then with respect to the true social welfare, $F(\emptyset) \geq \frac{1}{c}F(S^\star)$ where
    $S^\star$ denotes the optimal revealed set when using the true social welfare function (Eqn.~\ref{eq:social_welfare}). 
    Let $S$ be either $\emptyset$ or any arbitrary solution set. Ignore any agent $x \in \Xs$ where 
    $\delta^{+}_{x} = 0$ since $Q^{S}(x) = Q^{S^\star}(x) = 0$. For agents with at least one positive target,  
    $Q^{S}(x) \geq \frac{1}{c+1}Q^{S^\star}(x)$ since $Q^{S^\star}(x) \le 1$ and $Q^{S}(x) \geq \frac{1}{c+1}$. 
    Summing over all such agents yields $F(S) \geq \frac{1}{c+1}F(S^\star)$. For a large $c$, 
    $F(S) \geq \frac{1}{c}F(S^\star)$. Thus, any solution set (including empty set) achieves a 
    $(1/c)$-factor approximation to the true optimal welfare.
\end{remark}

\begin{proof}[Proof of Theorem~\ref{thm:negpos-appratio2}]
    For a target reveal budget of $K$, let $S$ denote the solution returned by Algorithm~\ref{alg:greedy_lbreveal}, and let $S^\star$ denote the optimal solution set under the true social welfare function $F$ (Eqn.~\ref{eq:social_welfare}). 
    Let $S_p$ denote the solution returned by proxy-greedy, and $S_p^\star$ the optimal solution set under the proxy social welfare function $F_p$ (Defn.~\ref{def:proxy_sw}).
    Since $S_p^\star$ maximizes $F_p$, then $G_p(S^{\star}_{p}) \geq  G_p(S^{\star})$. 
    Since by Lemma~\ref{lem:proxy} $G_p(S^\star) \geq \frac{1}{c}G(S^\star)$, then 
    $G_p(S^{\star}_{p}) \geq G_p(S^\star) \geq \frac{1}{c}G(S^\star)$.
    By Lemma~\ref{lem:proxy-sub}, the proxy welfare function is submodular and therefore proxy-greedy achieves a $(1-1/e)$-approximation to the optimal proxy welfare, i.e.,
    $G_{p}(S_p) \ge (1-1/e) G_{p}(S^{\star}_{p}).$ 
    Since by Lemma~\ref{lem:proxy}, $G(S_p) \geq G_p(S_p)$, then, 
    $G(S_p) \geq G_p(S_p) \geq (1-1/e)F_p(S^{\star}_{p}) \geq \frac{1-1/e}{c} G(S^\star)$.
    Thus, $G(S_p) \geq \frac{1-1/e}{c} G(S^\star)$.
\end{proof}

\subsection{Proofs for Section~\ref{sec:fairness}}\label{sec:sm_fairnessproofs}

\begin{lemma}
\label{lem:fairsub-whole}
    If the social welfare function is submodular, then $\mathrm{OPT}^{K_a} \geq \mathrm{OPT}^{K}/w$.
\end{lemma}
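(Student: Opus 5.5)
Here $K_a=\lceil K/w\rceil$, and $\mathrm{OPT}^{k}$ is the best welfare gain achievable with budget $k$ (for the whole population, or for a fixed group $a$; the argument is identical). The plan is the standard ``splitting an optimal solution'' argument for monotone submodular functions. Let $G(S)=F(S)-F(\emptyset)$ be the gain. Since $F$ is assumed submodular and monotone (Proposition~\ref{prop:monotone}), $G$ is also monotone and submodular with $G(\emptyset)=0$. Fix an optimal set $S^\star$ with $|S^\star|\le K$ and $G(S^\star)=\mathrm{OPT}^{K}$.

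First, partition $S^\star$ into $w$ disjoint pieces $S_1,\dots,S_w$, each of size at most $\lceil K/w\rceil=K_a$. This is possible because $w\lceil K/w\rceil\ge K\ge|S^\star|$; some pieces may be empty.

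Second, show that $G$ is subadditive on disjoint sets: $G(S_1\cup\dots\cup S_w)\le\sum_{i}G(S_i)$. Prove this by induction on the number of pieces. Write $G(T\cup S_i)-G(T)$ as a telescoping sum of marginal gains of the elements of $S_i$ added one by one on top of $T$. By submodularity (Definition~\ref{def:submodularity}), each such marginal gain is at most the corresponding marginal gain when the elements are added on top of $\emptyset$ instead of $T$. Summing these marginal gains gives $G(T\cup S_i)-G(T)\le G(S_i)-G(\emptyset)=G(S_i)$. Iterating over $i=1,\dots,w$ gives the claim. This is the only step with real content, but it is routine; the main care needed is to apply the diminishing-returns inequality with the nested pair $\emptyset\cup\{\text{earlier elements of }S_i\}\subseteq T\cup\{\text{earlier elements of }S_i\}$.

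Third, combine the two steps: $\mathrm{OPT}^{K}=G(S^\star)\le\sum_{i=1}^{w}G(S_i)\le w\max_i G(S_i)$. Each $S_i$ is feasible for budget $K_a$, so $\max_i G(S_i)\le\mathrm{OPT}^{K_a}$. Hence $\mathrm{OPT}^{K_a}\ge\mathrm{OPT}^{K}/w$.

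In the positive-only setting of Proposition~\ref{prop:sw_posonly}, the same argument works with every $S_i\subseteq\Ts^+$.
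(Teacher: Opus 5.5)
Your proof is correct, but it takes a different route from the paper. The paper enumerates the optimal set as $S_K^\star=\{t_1,\ldots,t_K\}$ and forms the prefixes $S_i=\{t_1,\ldots,t_i\}$. It asserts that the marginal gains $F(S_i)-F(S_{i-1})$ are non-increasing, so by averaging the first $K_a$ of them already give at least a $K_a/K\ge 1/w$ fraction of $\mathrm{OPT}^{K}$. That bounds $\mathrm{OPT}^{K_a}$ from below, since $S_{K_a}$ is feasible; the paper writes this as an equality, where only $\ge$ holds. You instead split $S^\star$ into $w$ blocks of size at most $K_a$ and use subadditivity of the normalized gain $G$.

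Your argument uses only the diminishing-returns inequality on disjoint sets and does not depend on how $S^\star$ is ordered. The paper's argument implicitly needs a particular ordering. Submodularity does not make marginal gains non-increasing along an arbitrary enumeration: for a modular $F$, listing a low-weight element before a high-weight one gives increasing gains. It does hold if each $t_i$ is chosen greedily as the element of $S^\star\setminus S_{i-1}$ with the largest marginal gain. With that fix, the prefix argument gives the slightly sharper bound $\mathrm{OPT}^{k}\ge\frac{k}{K}\,\mathrm{OPT}^{K}$ for every $k\le K$. Partitioning gives only $\mathrm{OPT}^{K}/\lceil K/k\rceil$. For $k=K_a=\lceil K/w\rceil$ both give the stated factor $1/w$.

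One small remark: the monotonicity you invoke is not needed anywhere in your subadditivity step or in the final comparison.
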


\begin{proof}
    Let $S_K^{\star}=\{t_1,\ldots,t_K\}$ be the optimal revealed target set for the whole graph under budget $K$, so that $\mathrm{OPT}^K = F(S_K^{\star}) - F(\emptyset)$, and define $S_K^{\star}$ prefix sets as $S_i=\{t_1,\ldots,t_i\}$ for $i\in[K]$, with $S_0=\emptyset$. 
    Thus $\mathrm{OPT}^K$ as a sum of successive marginal gains is 
    $\sum_{i=1}^K \bigl(F(S_i)-F(S_{i-1})\bigr) - F(\emptyset)$.
    Similarly, for $K_a=\lceil K/w\rceil$, optimal welfare gain is 
    $\mathrm{OPT}^{K_a}=\sum_{i=1}^{K_a} \bigl(F(S_i)-F(S_{i-1})\bigr) - F(\emptyset)$. 
    By submodularity of the social welfare function, these marginal gains form a non-increasing sequence, that is,
    $F(S_j)-F(S_{j-1}) \ge F(S_i)-F(S_{i-1})$ for all $j<i$. 
    Therefore, the average marginal gain over the first $K_a$ targets is at least the average over all $K$ targets. That is, $\frac{1}{K_a}\mathrm{OPT}^{K_a} \geq \frac{1}{K}\mathrm{OPT}^{K}$. 
    Since $K_a = \lceil K/w \rceil$ then $\mathrm{OPT}^{K_a} \geq \mathrm{OPT}^{K}/w$.
\end{proof}

\begin{corollary}
\label{col:fairsub}
    If the social welfare function is submodular, then $\mathrm{OPT}_{a}^{K_a} \geq \mathrm{OPT}_{a}^{K} / w$.
\end{corollary}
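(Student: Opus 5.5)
The plan is to apply Lemma~\ref{lem:fairsub-whole} to a single group rather than to the whole graph. Fix a group $a\in[w]$ and define the group-restricted welfare $F_a(S)=\sum_{x\in A_a}Q^S(x)$ and gain $G_a(S)=F_a(S)-F_a(\emptyset)$, so that $\mathrm{OPT}_a^k=\max_{|S|\le k}G_a(S)$. The hypothesis that the social welfare function is submodular should be read as holding for the per-agent terms $Q^S(x)$. In the only regime where the paper establishes submodularity (positive-only reveals, Proposition~\ref{prop:sw_posonly}), the proof in fact goes agent by agent. Each agent's marginal contribution is $\mathbf{1}[x\in\gamma(t)\setminus\gamma(S)]$ times a nonnegative constant, and this is non-increasing in $S$. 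A nonnegative sum of submodular functions is submodular, so $F_a$ is monotone and submodular. Subtracting the constant $F_a(\emptyset)$ preserves both properties, so $G_a$ is monotone submodular with $G_a(\emptyset)=0$.

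With this in place, I would rerun the argument of Lemma~\ref{lem:fairsub-whole} on $G_a$, taking care with the ordering of marginals, which that proof glosses over. Let $S^\star=\{t_1,\dots,t_K\}$ attain $\mathrm{OPT}_a^K$, and order its elements greedily with respect to $G_a$. That is, choose $t_i$ to maximize the marginal $G_a(S_{i-1}\cup\{t\})-G_a(S_{i-1})$ among the remaining elements of $S^\star$, where $S_i=\{t_1,\dots,t_i\}$. Submodularity then makes the marginals $\Delta_i=G_a(S_i)-G_a(S_{i-1})$ non-increasing. For each $j>i$, $\Delta_i$ is at least the marginal of $t_j$ at $S_{i-1}$, and this is at least the marginal of $t_j$ at $S_{j-1}\supseteq S_{i-1}$.

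A cleaner alternative avoids the ordering step. Partition $S^\star$ into $w$ blocks of size at most $K_a=\lceil K/w\rceil$. Then
\[
\mathrm{OPT}_a^K=G_a(S^\star)\le\sum_{b=1}^{w}G_a(B_b)\le w\max_b G_a(B_b)\le w\,\mathrm{OPT}_a^{K_a}.
\]
The first inequality is subadditivity, which follows from submodularity together with $G_a(\emptyset)=0$. The last inequality holds because each block is feasible for budget $K_a$. I would present this partition version as the main argument and mention the prefix-average argument of Lemma~\ref{lem:fairsub-whole} only as an equivalent route. In the prefix version, the average of the first $K_a$ marginals is at least the average of all $K$ marginals, so $\mathrm{OPT}_a^{K_a}\ge G_a(S_{K_a})\ge (K_a/K)\,\mathrm{OPT}_a^K\ge \mathrm{OPT}_a^K/w$.

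The main obstacle is the first step: justifying that submodularity passes to the group-restricted function $F_a$. Submodularity of the aggregate $F$ over all agents does not by itself imply submodularity of a sub-sum. I would resolve this by proving submodularity per agent, as the proof of Proposition~\ref{prop:sw_posonly} effectively does, and noting that this is how the hypothesis is meant. The same reasoning applies to the proxy $F_p$ through the per-agent argument in Lemma~\ref{lem:proxy-sub}, which is what is needed later for the fairness corollary. The remaining steps are routine.
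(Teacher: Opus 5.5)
Your proposal is correct, and your main argument takes a different route from the paper's. The paper's proof is a one-line reduction: it treats group $a$'s subgraph as the whole graph and invokes Lemma~\ref{lem:fairsub-whole}, whose proof is the prefix-average argument on an optimal $K$-set.

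As written, that lemma has two flaws. First, it asserts that the marginals of an \emph{arbitrarily} ordered optimal set are non-increasing ``by submodularity.'' This is false in general: two positive targets with disjoint neighborhoods of sizes $1$ and $5$, listed in that order, give increasing marginals. Second, it equates $\mathrm{OPT}^{K_a}$ with a prefix sum, where only $\ge$ holds. Your greedy ordering of $S^\star$ repairs exactly these points.

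Your partition argument avoids ordering altogether. It needs only submodularity and $G_a(\emptyset)=0$, not even monotonicity. The repaired prefix argument gives the slightly sharper bound $\mathrm{OPT}_a^{K_a}\ge (K_a/K)\,\mathrm{OPT}_a^K$, but both yield $1/w$.

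You are also right that the paper tacitly assumes the group-restricted function inherits the hypothesis. In this model, though, you do not need to reinterpret the hypothesis per agent:
\begin{itemize}
\item An agent's $Q^S(x)$ fails to be submodular on the allowed ground set precisely when $\delta_x^+\ge 1$ and two of its negative neighbors $n_1,n_2$ are revealable.
\item On the triple $(\emptyset,\{n_1\},n_2)$, every agent's submodularity slack is $\le 0$, and it is strictly negative for such an agent.
\end{itemize}
Hence submodularity of the aggregate $F$ already forces per-agent, and therefore group-wise, submodularity. This justifies the paper's ``view the subgraph as the whole graph'' step directly.
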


\begin{proof}
    Let $\mathrm{OPT}_{a}^{K}$ denote the optimal social welfare gain obtained when we focus exclusively on group $a$ with budget $K$, and let $\mathrm{OPT}_{a}^{K_a}$ denote the optimal welfare gain under a smaller budget $K_a = \lceil K/w \rceil$. Since both welfare gains depend only on the nodes and edges within group $a$, then that graph can itself be viewed as the entire graph.
    Thus, by Lemma~\ref{lem:fairsub-whole},
    $\mathrm{OPT}_a^{K_a} \geq \mathrm{OPT}_a^{K}/w$.
\end{proof}

\begin{theorem}
\label{thm:pos_fairsub}
For any arbitrary graph, given a limit $K \geq w$ on the reveal budget and restriction to revealing positive targets, Algorithm~\ref{alg:greedy_lbreveal} outputs a solution set that is simultaneously ($(1-1/e)/w$)-approximately optimal for each group. That is, for any group $a \in [w]$, the algorithm reveals target set $S_{K_a} \subseteq \Ts^{+}$ with 
$|S_{K_a}|\leq K_a$ such that $G(S_{K_a}) \geq \frac{(1-1/e)}{w}\mathrm{OPT}_{a}^{K}$, where $\mathrm{OPT}_{a}^{K}$ is the maximum social welfare gain for group $a$ with budget $K$, restricting candidate targets to $\Ts^{+}$ and optimizing only over that group.
\end{theorem}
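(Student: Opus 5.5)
The plan is to combine the per-group budget-splitting bound of Corollary~\ref{col:fairsub} with the $(1-1/e)$ guarantee of Theorem~\ref{thm:monotonesub}, applied group by group. For each group $a\in[w]$, let $F_a(S)=\sum_{x\in A_a}Q^S(x)$ and $G_a(S)=F_a(S)-F_a(\emptyset)$, both restricted to $S\subseteq\Ts^+$. The algorithm I analyze runs Algorithm~\ref{alg:greedy_lbreveal} separately on each group's subgraph (the agents $A_a$ together with their positive target neighbors) with budget $K_a=\lceil K/w\rceil$, producing $S_{K_a}$. The final revealed set is the union $\bigcup_a S_{K_a}$. Its size is at most $w\lceil K/w\rceil$, which is at most $K+w-1$; the assumption $K\ge w$ keeps this within a factor $2$ of $K$. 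If a strict budget of $K$ is required, one can instead use $\lfloor K/w\rfloor\ge 1$ and absorb the resulting constant. I will follow the statement's own accounting and treat the per-group budget as $K_a$.

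\textbf{Step 1 (submodularity per group).} The argument of Proposition~\ref{prop:sw_posonly} is per-agent: revealing a positive neighbor sets $Q$ to $1$. It therefore applies verbatim to the sum over $A_a$ only, so $F_a$ and $G_a$ are monotone (Proposition~\ref{prop:monotone}) and submodular on $2^{\Ts^+}$. Consequently Theorem~\ref{thm:monotonesub}, in its gain form, gives
\[
G_a(S_{K_a})\ \ge\ (1-1/e)\,\mathrm{OPT}_a^{K_a},
\]
where $\mathrm{OPT}_a^{K_a}$ is the best gain for group $a$ with budget $K_a$ over positive targets.

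\textbf{Step 2 (budget reduction).} Corollary~\ref{col:fairsub} (via Lemma~\ref{lem:fairsub-whole}, using submodularity from Step 1) gives $\mathrm{OPT}_a^{K_a}\ge \mathrm{OPT}_a^K/w$. Chaining this with Step 1 yields $G_a(S_{K_a})\ge \frac{1-1/e}{w}\mathrm{OPT}_a^K$.

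\textbf{Step 3 (combining groups).} The gain to group $a$ from the union is at least $G_a(S_{K_a})$, because $F_a$ is monotone and the baseline $F_a(\emptyset)$ is unchanged. So every group simultaneously receives the bound from Step 2.

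The main obstacle is Step 2's reliance on Lemma~\ref{lem:fairsub-whole}. Its proof orders the optimal $K$-set's elements and asserts that marginal gains along that order are non-increasing. That claim is false for an arbitrary order, so I would repair the argument directly. Partition $S_K^\star$ (the optimum for group $a$) into $w$ blocks of size at most $K_a$. Subadditivity of the monotone submodular gain gives $G_a(S_K^\star)\le\sum_{j}G_a(\text{block}_j)\le w\,\mathrm{OPT}_a^{K_a}$, which is exactly the needed inequality.
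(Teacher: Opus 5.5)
Your proof is correct and has the same skeleton as the paper's proof:
\begin{itemize}
\item run greedy on each group over $\Ts^+$ with budget $K_a=\lceil K/w\rceil$;
\item apply the $(1-1/e)$ bound to the normalized, monotone submodular group gain $G_a$, which gives $(1-1/e)\,\mathrm{OPT}_a^{K_a}$;
\item conclude with $\mathrm{OPT}_a^{K_a}\ge \mathrm{OPT}_a^K/w$ via Corollary~\ref{col:fairsub}.
\end{itemize}

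The real difference is how you justify that last inequality. The paper's proof of Lemma~\ref{lem:fairsub-whole} labels $S_K^\star$ arbitrarily, asserts that the prefix marginal gains are non-increasing, and equates $\mathrm{OPT}^{K_a}$ with the first $K_a$ of them. You are right that the monotonicity claim fails for an arbitrary order: list a target adjacent to one helpable agent before one adjacent to ten. The prefix sum is also only a lower bound on $\mathrm{OPT}^{K_a}$. The paper's route can be rescued by ordering $S_K^\star$ greedily, with each next element maximizing its marginal gain among the remaining elements of $S_K^\star$. Submodularity then does give non-increasing gains, and averaging yields $\mathrm{OPT}^k\ge \frac{k}{K}\mathrm{OPT}^K$ for every $k\le K$. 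Your fix partitions $S_K^\star$ into $w$ blocks of size at most $K_a$ and uses subadditivity of the normalized gain. It needs no ordering and is the cleaner repair. It gives the factor $\lceil K/k\rceil$ rather than $K/k$, which does not matter here since $\lceil K/K_a\rceil\le w$.

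Your Step~3 addresses something the paper's proof skips: the union of the $w$ per-group sets can have size $K+w-1>K$. So what is actually proved, by you and by the paper, is the per-group ``that is'' clause, not a guarantee under a strict total budget of $K$.

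Note that the $\lfloor K/w\rfloor$ variant does not preserve the stated constant. For $w\ge 2$ and $K=2w-1$, each group gets budget $1$. The best general bound is then $\mathrm{OPT}_a^1\ge \mathrm{OPT}_a^K/(2w-1)$, which is tight for $K$ disjoint equal-value targets. So the constant drops to $(1-1/e)/(2w-1)$, not $(1-1/e)/w$.

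Finally, you describe the group subgraph as $A_a$ ``together with their positive target neighbors.'' Keep each agent's full neighborhood, including negative targets, and restrict only the candidate set to $\Ts^+$. Otherwise every agent with a positive neighbor already has $Q^{\emptyset}(x)=1$, and the objective changes.
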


\begin{proof}
    Let each group $a$ be assigned a budget $K_a = \lceil K/w \rceil$, such that Algorithm~\ref{alg:greedy_lbreveal} run exclusively on each group $a$ returns a target set $S_{K_a} \subseteq \Ts^{+}$ with $|S_{K_a}|\leq K_a$ such that the social welfare gain of the group is given by $G(S_{K_a}) = F(S_{K_a}) - \sum_{x \in A_a} Q^{\emptyset}(x)$. 
    Since the social planner is restricted to only revealing positive targets and the social welfare gain function is monotone and submodular, then for each group $a \in [w]$, $G(S_{K_a}) \geq (1-1/e)\mathrm{OPT}_{a}^{K_a}$. By Corollary~\ref{col:fairsub}, 
    $(1-1/e)\mathrm{OPT}_{a}^{K_a} \geq \frac{(1-1/e)}{w}\mathrm{OPT}_{a}^{K}$.  Combining everything, it then follows that for any group $a \in [w]$, $G(S_{K_a}) \geq \frac{(1-1/e)}{w}\mathrm{OPT}_{a}^{K}$. 
\end{proof}

\begin{remark}
\label{rem:posneg_notfair}
  When the revealed target set includes negative targets and the social welfare function is monotone but not necessarily submodular (cf. Proposition~\ref{prop:sw_negonly}), there may be no solution that substantially benefits multiple groups at once. For example, consider a case with two groups, each defined by a bipartite graph illustrated in Figure~\ref{fig:notsubmodx} in Appendix~\ref{sec:app_sm_approxs}. In this case, achieving high social welfare may require allocating all or nearly all of the target reveal budget $K$ to a single group.
\end{remark}

\begin{corollary}
\label{col:proxy-fairness}
    If the revealed targets include negative ones, and all agents are  $c$-bounded, then under the proxy-greedy algorithm, each group $a$ receives welfare gain at least $\Omega(\mathrm{OPT}_{a}^{K})$.
\end{corollary}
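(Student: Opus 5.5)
The plan is to chain four facts: proxy-greedy's guarantee on the proxy gain, a budget-splitting argument for the submodular proxy, and two applications of the sandwich bounds of Lemma~\ref{lem:proxy}. Everything is restricted to group $a$. Let $G^a(S)=\sum_{x\in A_a}(Q^S(x)-Q^\emptyset(x))$ be the group-$a$ true gain and $G^a_p(S)=\sum_{x\in A_a}(Q^S_p(x)-Q^\emptyset(x))$ the proxy gain. As in Corollary~\ref{col:fairsub}, the subgraph induced by $A_a$ and its neighbors can be treated as a whole graph. The agents of $A_a$ are still $c$-bounded, so Lemma~\ref{lem:proxy}, Lemma~\ref{lem:proxy-sub} and Corollary~\ref{col:proxygain-sub} all apply to $G^a$ and $G^a_p$. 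Let $S_{K_a}$ be the output of proxy-greedy run on group $a$ with budget $K_a=\lceil K/w\rceil$. Define $\mathrm{OPT}^{k}_{a,p}=\max_{|S|\le k}G^a_p(S)$.

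\emph{Step 1.} Since $G^a_p$ is monotone submodular and $G^a_p(\emptyset)=0$, Nemhauser et al.\ gives $G^a_p(S_{K_a})\ge(1-1/e)\,\mathrm{OPT}^{K_a}_{a,p}$. Monotonicity of $F_p$ follows from the per-agent increments computed in the proof of Lemma~\ref{lem:proxy-sub}, which are all nonnegative.

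\emph{Step 2.} Show $\mathrm{OPT}^{K_a}_{a,p}\ge \mathrm{OPT}^{K}_{a,p}/w$. Here I would not reuse Lemma~\ref{lem:fairsub-whole} as written, since it implicitly assumes that prefixes of an optimal set are optimal. Instead, take an optimal $K$-set $T$ for $G^a_p$ and partition it into $w$ blocks $T_1,\dots,T_w$, each of size at most $K_a$. Submodularity together with $G^a_p(\emptyset)=0$ gives subadditivity, so $G^a_p(T)\le\sum_i G^a_p(T_i)\le w\,\mathrm{OPT}^{K_a}_{a,p}$.

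\emph{Step 3.} Transfer back to the true gain using Lemma~\ref{lem:proxy} twice. First, $G^a(S_{K_a})\ge G^a_p(S_{K_a})$. Second, if $S^\star_a$ attains $\mathrm{OPT}^K_a$, then $\mathrm{OPT}^K_{a,p}\ge G^a_p(S^\star_a)\ge \frac1c G^a(S^\star_a)=\frac1c\mathrm{OPT}^K_a$. Chaining Steps 1--3 gives
\[
G^a(S_{K_a})\ \ge\ \frac{1-1/e}{c\,w}\,\mathrm{OPT}^K_a .
\]
With $c$ and $w$ constants, this is $\Omega(\mathrm{OPT}^K_a)$.

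The main obstacle is the lower half of Lemma~\ref{lem:proxy} at the level of gains, $G_p\ge G/c$, applied agent by agent. Its proof only checks the extreme case where all negative neighbors are revealed and $\delta^+_x=1$. I will rely on the lemma as stated. If it needs shoring up, I would bound, for each agent with no revealed positive neighbor and $j\ge1$ revealed negatives, the ratio of true to proxy increments by a function of $\delta^-_x\le c$.

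A secondary subtlety is the final combination. If the groups' sets are unioned, monotonicity of $F$ (Proposition~\ref{prop:monotone}) ensures group $a$'s gain under the union is at least $G^a(S_{K_a})$. The union uses $w\lceil K/w\rceil$ targets, which is at most $2K$ since $K\ge w$, so this costs only a constant factor in budget.
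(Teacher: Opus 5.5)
Your proof is correct, and it is more careful than the paper's. The paper never splits the budget. It reruns the chain behind Theorem~\ref{thm:negpos-appratio2} on group $a$:
$G(S_{p,a})\ge G_p(S_{p,a})\ge(1-1/e)\,\mathrm{OPT}^K_{p,a}\ge\frac{1-1/e}{c}\,\mathrm{OPT}^K_a$,
where $\mathrm{OPT}^K_{p,a}$ is the budget-$K$ proxy optimum. Neither $K_a$ nor Corollary~\ref{col:fairsub} appears.

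Read literally, that argument holds only if proxy-greedy gets the whole budget $K$ on group $a$. It therefore gives the better constant $\frac{1-1/e}{c}$, with no $w$, but it is a single-group statement rather than a simultaneous guarantee under a total budget of order $K$. Under the procedure described in Section~\ref{sec:fairness}, with budget $\lceil K/w\rceil$ per group, the paper's middle inequality does not follow. Your Step~2 is exactly the missing ingredient, at the cost of the factor $w$.

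Your partition-plus-subadditivity step is also a sound replacement for Lemma~\ref{lem:fairsub-whole}. You slightly misplace that lemma's flaw, though. Its prefix equality is only needed in the harmless direction ($\mathrm{OPT}^{K_a}$ is at least the gain of any feasible prefix). The real gap is the claim that marginal gains along an arbitrary enumeration of the optimal set are nonincreasing. Submodularity gives that only for a greedy enumeration.

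Your explicit accounting of the union's size is likewise missing from the paper. To stay within $K$ exactly, give each group $\lfloor K/w\rfloor\ge K/(2w)$ targets, which is valid since $K\ge w$. Then partition an optimal $K$-set into at most $2w$ blocks to get $\frac{1-1/e}{2cw}\,\mathrm{OPT}^K_a$.

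The per-agent fallback you sketch for the lower half of Lemma~\ref{lem:proxy} works, so you need not rely on that lemma's incomplete proof. Take an agent with $\delta^+_x\ge1$, no revealed positive neighbor, and $j\ge1$ revealed negatives.
\begin{itemize}
\item The proxy increment over $Q^{\emptyset}(x)$ is $\frac{\delta^+_x j}{|N(x)|(|N(x)|-1)}$.
\item The true increment is $\frac{\delta^+_x j}{|N(x)|(|N(x)|-j)}$.
\item Their ratio $\frac{|N(x)|-1}{|N(x)|-j}$ is at most $j\le\delta^-_x\le c$, because $|N(x)|\ge j+1$.
\end{itemize}
In the remaining cases ($j=0$, a revealed positive neighbor, or $\delta^+_x=0$) the two increments are equal.
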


\begin{proof}
    For a given target reveal budget $K$, let $G(S_a)$ denote the welfare gain obtained by running Algorithm~\ref{alg:greedy_lbreveal} exclusively on group $a \in [w]$, and let $\mathrm{OPT}^K_a$ be the corresponding optimal social welfare gain under the true social welfare function $F$ (Eqn.~\ref{eq:social_welfare}). 
    Let $G_{p}(S_{p,a})$ denote the proxy social welfare gain obtained by running proxy-greedy on that group, and let 
    $\mathrm{OPT}^{K}_{p,a}$ be the optimal gain under the proxy social welfare function $F_p$ (Defn.~\ref{def:proxy_sw}). 
    By Lemma~\ref{lem:proxy}, $G(S_{p,a}) \geq G_{p}(S_{p,a})$, and by Theorem~\ref{thm:negpos-appratio2}, 
    $G_{p}(S_{p,a}) \geq (1 - 1/e) \mathrm{OPT}^K_{p,a} \geq \frac{1 - 1/e}{c}\mathrm{OPT}^{K}_{a}$, for any group $a \in [w]$.
    Put together, $G(S_{p,a}) \geq \frac{1 - 1/e}{c} \mathrm{OPT}^{K}_{a}.$ 
\end{proof}

\subsubsection*{Equal proxy social welfare doesn't imply similar emulation choices}
Consider the example shown in Figure~\ref{fig:not_submod_proof}, where there are $4$ agents, each connected to a unique positive target and $2$ common negatives. 
Here, revealing $2$ negative or $2$ positive targets results in equal proxy social welfare $(8/3)$, but the two reveals have different implications on agents' emulation choices. 
Revealing two positive targets ensures that $2/4$ agents can emulate a positive target with certainty and the other two can emulate a positive target with probability $1/3$ and a negative target with probability $1/3$. On the other hand, revealing two negative targets ensures that all the agents can emulate a positive target with probability $2/3$ and a negative target with probability $0$ since revealing a negative target means agents avoid it or their probability of emulating it is $0$. In this case, revealing $2$ negative targets results in better emulation choices for all agents, but proxy-greedy might choose to reveal $2$ positive targets instead.

\subsection{Proofs for Section~\ref{sec:np-hardness}}
\label{sec:sm_nphardness}

\subsubsection{Proof of Theorem~\ref{thm:np-hardness_posonly}}
\label{sec:sm_nphardness-posonly}

\begin{proof}[Proof of Theorem~\ref{thm:np-hardness_posonly}]
We prove NP-hardness by reducing the max-$K$-cover problem to the problem below.

\begin{problem}
\label{pr:posonly_problem}
Consider a bipartite graph $\mathcal{G} = (\mathcal{X} \cup \Ts, E)$ with agents 
$\mathcal{X}$ and targets $\Ts$, where each target $t \in \Ts$ has a label $f(t) \in \{+1,-1\}$. 
For each agent $x \in \mathcal{X}$, let their neighborhood be $N(x) = \{ t \in \Ts \mid (x,t) \in E \}$. The goal of the $\splan$ is to find a subset $S \subseteq \Ts^{+}$ of at most $K$ positively labeled targets, where $\Ts^{+} = \{t \in \Ts \mid f(t)=+1 \}$ that, when revealed, maximizes social welfare $F(S) = \sum_{x \in \mathcal{X}} Q^{S}(x)$.  The decision problem asks whether there exists such a positive target subset $S$ with $F(S) \ge W$ where $W$ is the welfare threshold.
\end{problem}

We prove the NP-hardness by reducing the max-$K$-cover problem with varied sized sets to Problem~\ref{pr:posonly_problem}.
In the max-$K$-cover problem, we are given a universe of elements $U = \{e_1,\dots,e_n\}$, a family of sets $\mathcal{C} = \{C_1,\dots,C_m\}$ with $C_j \subseteq U$, and a budget of $K$. The goal is to select at most $K$ sets out of $\mathcal{C}$ whose union covers at least $\mathcal{E}$ elements in the universe.

First, we reduce max-$K$-cover to Problem~\ref{pr:posonly_problem} by constructing, in polynomial time, an instance in which selecting a positive target set $S$ with $|S|\le K$ corresponds exactly to choosing up to $K$ sets in the max-$K$-cover instance, and the resulting social welfare reflects the achieved coverage. 
For each element $e_i \in U$ we create an agent $x_i$, and for each set $C_j \in \mathcal{C},$ a positive target $t_j$, with an edge $(x_i,t_j)$ if and only if $e_i \in C_j$. To ensure that all agents have a similar initial contribution to social welfare, each agent $x_i$ with neighborhood size $|N(x_i)|$ is additionally connected to $|N(x_i)|$ unique negative targets, distinct across agents. 
In this construction, before any positive targets are revealed, each agent contributes $\frac{1}{2}$ to social welfare, so $F(\emptyset)=\frac{n}{2}$. Revealing a positive target $t_j$ increases the contribution of every adjacent agent from $\frac{1}{2}$ to $1$. 

Then, for any revealed set $S\subseteq \Ts^+$ where $|S_g| \leq K$,
\[
F(S) = \frac{n}{2} + \frac{1}{2}\left|\bigcup_{t_j \in S} C_j\right|.
\]
Setting the welfare threshold to $W = \frac{n}{2} + \frac{\mathcal{E}}{2}$, the condition $F(S) \geq W$ is equivalent to
$\left|\bigcup_{t_j \in S} C_j\right| \geq \mathcal{E}.$ Thus, there exists a set of at most $K$ revealed positive targets achieving welfare at least $W$ if and only if there exist at most $K$ sets covering at least 
$\mathcal{E}$ elements in the original instance. Since $F(S)$ is computable in polynomial time, Problem~\ref{pr:posonly_problem} lies in NP, its decision version is NP-complete, and the corresponding optimization problem of selecting $S_g\subseteq \Ts^+$ with $|S_g|\le K$ to maximize social welfare is NP-hard.

Next, we show that the reduction preserves approximation hardness. Since the max-$K$-cover problem is NP-hard to approximate within any factor strictly greater than $1-\frac{1}{e}$ unless $\mathrm{P}=\mathrm{NP}$, any approximation for Problem~\ref{pr:posonly_problem} would yield an approximation of the same quality for max-$K$-cover. 
In Problem~\ref{pr:posonly_problem}, for any positive target solution set $S$ and an optimal solution set $S^\star$, the construction ensures that
$\frac{F(S)-\frac{n}{2}}{F(S^\star)-\frac{n}{2}} = \frac{\big|\bigcup_{t_j \in S} C_j\big|}{\big|\bigcup_{t_j \in S^\star} C_j\big|}.$
Therefore, a polynomial time $\alpha$-approximation for maximizing social welfare in Problem~\ref{pr:posonly_problem} induces a polynomial time $\alpha$-approximation for maximizing coverage in the max-$K$-cover problem. Consequently, if Problem~\ref{pr:posonly_problem} admitted a polynomial-time approximation factor strictly better than $1-\frac{1}{e}$, then max-$K$-cover would also admit such an approximation, contradicting known hardness results. Hence, unless $\mathrm{P}=\mathrm{NP}$, selecting at most $K$ positive targets to maximize social welfare cannot be approximated within a factor better than $1-\frac{1}{e}$.
\end{proof}

\subsubsection{Proof of Theorem~\ref{thm:np-hardness_negonly}}
\label{sec:sm_nphardness-negonly}

\begin{proof}[Proof of Theorem~\ref{thm:np-hardness_negonly}]
We prove NP-hardness by reducing the $K$-clique problem in a graph where all vertices have the same degree $\theta$, to the problem below.

\begin{problem}
\label{pr:negonly_problem}
Consider a bipartite graph $\mathcal{G} = (\mathcal{X} \cup \Ts, E)$ with agents 
$\mathcal{X}$ and targets $\Ts$, where each target $t \in \Ts$ has a label $f(t) \in \{+1,-1\}$. 
For each agent $x \in \mathcal{X}$, let their neighborhood be $N(x) = \{ t \in \Ts \mid (x,t) \in E \}$. The goal of the $\splan$ is to find a subset $S \subseteq \Ts^{-}$ of at most $K$ negatively labeled targets, where $\Ts^{-} = \{t \in \Ts \mid f(t)=-1 \}$ that, when revealed, maximizes social welfare $F(S) = \sum_{x \in \mathcal{X}} Q^{S}(x)$.  The decision problem asks whether there exists such a negative target subset $S$ with $F(S) \ge W$ where $W$ is the welfare threshold?
\end{problem}

We prove the NP-hardness by reducing the $K$-clique problem where all vertices have the same degree $\theta$, to Problem~\ref{pr:negonly_problem}.
In the $K$-clique problem, we are given a graph $\mathcal{G}_c = (V,E_c)$ where $|V|=m$ and $|E_c|=n$. The goal is to find if a clique of size $K$ exists in graph $\mathcal{G}_c$. That is $C\subseteq V, \  |C| \leq K$ such that every pair in $C$ is an edge.

First, we show how to construct an instance of Problem~\ref{pr:negonly_problem} from any instance of the $K$-clique problem in polynomial time, such that revealing a negative target set $S$ with $|S| \leq K$ corresponds exactly to finding a clique of size $K$ in the $K$-clique instance, and the resulting social welfare $F(S)$ is at least the number of edges in the clique.
To do so, we  create one negative target $t_v^-$ for each vertex $v\in V$, one positive target $t_{uv}^+$ for each edge $\{u,v\}\in E_c$, and one agent $x_{uv}$ for each edge $\{u,v\}\in E_c$. Therefore each agent's neighborhood is defined as $N(x_{uv})=\{t_u^-,\, t_v^-,\, t_{uv}^+\}$, and a negative target with no edge has no agent, because if an agent were connected to it, it would contribute $0$ to social welfare.  
The target reveal budget is set to $K$ and the welfare threshold to $W.$

In this construction, each agent initially, before revealing any negative target (i.e., $S=\emptyset$), contributes $\frac{1}{3}$ to the social welfare, that is, $F(\emptyset)=\frac{n}{3}$. Revealing a negative target $t_u^-$ increases the contribution of each agent connected to it by $\frac{1}{6}$ because their probability of emulating a positive target goes from $\frac{1}{3}$ to $\frac{1}{2}$. Revealing two negative targets $\{t_u^-,\, t_v^-\}$ increases the contribution of each agent $x_{uv}$ connected to them by $\frac{2}{3}$ since their probability of emulating a positive target in their neighborhood goes from $\frac{1}{3}$ to $1$. 
Every negative target in the revealed target set $S\subseteq\Ts^-, \ |S| \leq K$ is connected to $K-1$ other negative targets in $S$ and 
$\theta-(K-1)$ negative targets outside the target set $S$. For $K$ negative targets inside $S$, there are $K(\theta-(K-1)) =  K\theta-2\binom{K}{2}$ agents with one endpoint in the $S$, and within $S$, there are $\binom{K}{2} = K(K-1)/2$ agents with $2$ endpoints in $S$.
Hence, the social welfare satisfies
\[
F(S) = \frac{n}{3} + \frac{1}{6} \Bigg(K\theta-2\binom{K}{2}\Bigg) + \frac{2}{3}\binom{K}{2} =  \frac{n}{3} + \frac{K\theta}{6} + \frac{1}{3}\binom{K}{2} ,
\]
so that there exists a $K$-clique \textit{iff} there exists $S$ with $|S|=K$ such that $F(S) \ge W.$
Consequently, Problem~\ref{pr:negonly_problem} is NP-hard because it can be reduced from the $K$-clique problem. Since any candidate positive target set $S$ can be verified in polynomial time, the decision problem is in NP, and therefore NP-complete. Therefore, the problem of finding a positive target subset $S_g \subseteq \Ts$ with $|S_g| \leq K$ that maximizes social welfare is NP-hard.
\end{proof}

\subsection{Alternative Greedy Strategies}\label{sec:alternative_greedy_strategies}
Due to the performance limitations of the classic greedy approach for budgeted target selection aimed at maximizing social welfare (as discussed in Section~\ref{subsec:approxs}), this section proposes alternative greedy strategies, examines them, and compares their effectiveness with that of the classic method.

\subsubsection{The \textit{d}-step Lookahead Greedy Approach}\label{sec:app-lookahead}

The \(d\)-step lookahead greedy algorithm (Algorithm~\ref{alg:dstep_greedy_lbreveal}) generalizes Algorithm~\ref{alg:greedy_lbreveal} by revealing up to \(d \in \mathbb{Z}_{\geq 1}\) targets per iteration, chosen to maximize the marginal gain in social welfare.

When $d$ in Algorithm~\ref{alg:dstep_greedy_lbreveal} is equivalent to the target reveal budget (\(d = K\)), Algorithm~\ref{alg:dstep_greedy_lbreveal} reduces to bruteforce search (Appendix~\ref{sec:app-bruteforce}), which evaluates all \(m^{K}\) possible $K$ sized subsets of targets \(\Ts\) to identify the one that maximizes social welfare.

\begin{proposition}
  \label{prop:opt_runtime_dstep_lookahead_greedy} 
  The \(d\)-step lookahead greedy (Algorithm~\ref{alg:dstep_greedy_lbreveal}) runs in \(O\Big(\frac{K}{d}m^{d}n\delta \Big)\) time.
\end{proposition}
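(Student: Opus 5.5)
The running time is the number of outer iterations multiplied by the cost of one iteration, and I will bound the two factors separately, as in the proofs of Proposition~\ref{prop:opt_runtime_greedy} and Proposition~\ref{prop:opt_runtime_bruteforce}.

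\emph{Number of iterations.} Each iteration of Algorithm~\ref{alg:dstep_greedy_lbreveal} adds up to $d$ targets to $S_{\mathrm g}$. The loop stops once the budget $K$ is used or no candidate gives a positive gain. Every non-final iteration adds exactly $d$ targets, because it reveals a block of size $\min\{d, K-|S_{\mathrm g}|\}$. Hence there are at most $\lceil K/d\rceil = O(K/d)$ iterations. Since $d \le K$, we have $\lceil K/d\rceil \le 2K/d$, so the ceiling costs only a constant factor.

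\emph{Cost of one iteration.} The algorithm scans every candidate block $D \subseteq \Ts\setminus S_{\mathrm g}$ with $|D|\le d$. As in the brute-force analysis, there are $\sum_{j\le d}\binom{m}{j} = O(m^{d})$ such blocks when $d$ is treated as fixed. More precisely, this sum is at most $m^d$ times a constant once $m\ge 2$; in the brute-force proof the paper writes this simply as $m^K$. For each block, the algorithm evaluates $F(S_{\mathrm g}\cup D)$ by summing $Q^{S_{\mathrm g}\cup D}(x)$ over the $n$ agents. Computing $Q^{S}(x)$ only needs $|P^+_x(S)|$ and $|P^-_x(S)|$. These come from one pass over $N(x)$, using a membership array for $S$ that is updated in $O(d)$ time per block. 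So one evaluation costs $O(n\delta)$, exactly as in Proposition~\ref{prop:opt_runtime_greedy}. Forming the marginal gain and updating the running maximum adds $O(1)$. One iteration therefore costs $O(m^{d}n\delta)$.

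\emph{Combining.} Multiplying the two bounds gives $O\big(\tfrac{K}{d}m^{d}n\delta\big)$. Two sanity checks match the paper's conventions: $d=1$ recovers $O(Kmn\delta)$, and $d=K$ recovers the brute-force bound $O(m^K n\delta)$.

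The only delicate point is the enumeration count. The bound $\sum_{j\le d}\binom{m}{j}\le m^d$, up to a constant, needs $d$ constant or at least $d\le m/2$. I would state this assumption, or adopt the paper's convention from Proposition~\ref{prop:opt_runtime_bruteforce}. I would also note that the $O(d)$ cost of building each candidate set is absorbed into $n\delta$, assuming $d \le n\delta$; otherwise it contributes a lower-order additive term.
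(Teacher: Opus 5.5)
Your proposal is correct and follows the same route as the paper ($O(K/d)$ outer iterations, each enumerating $O(m^d)$ candidate blocks at $O(n\delta)$ per welfare evaluation). Your iteration count is in fact sounder than the paper's, which infers $O(K/d)$ iterations from each iteration revealing \emph{at most} $d$ targets, though two small refinements are worth making: the claim that every non-final block has size exactly $\min\{d,K-|S_{\mathrm g}|\}$ needs the monotonicity of $F$ (the $\arg\max$ ranges over sets of size \emph{at most} $b$, but a full-size block always attains the maximum, so ties can be broken toward it), and your extra assumption on $d$ is unnecessary because $\sum_{j\le d}\binom{m}{j}\le\sum_{j\le d}m^j/j!\le e\,m^d$ for every $d$ and every $m\ge 1$.
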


\begin{proof}
At each iteration, Algorithm~\ref{alg:dstep_greedy_lbreveal} reveals a target set of size at most \(d\) with the maximum marginal gain, that is 
$\displaystyle S^{iter} = \arg\!\!\max_{\substack{S \subseteq (\Ts \setminus S_{\mathrm{lg}}) \\ |S| \le b}} \big(F(S_{\mathrm{lg}} \cup S) - F(S_{\mathrm{lg}})\big)$.
To find this subset, among the possible number of subsets \(O(m^d)\), the algorithm evaluates the new social welfare \(F(S_{\text{lg}} \cup S) \) for each candidate subset $S$, which takes \(O(m^d n\delta)\) time in total. 
Since each iteration reveals at most \(d\) targets, then there are at most \(O(\frac{K}{d})\) iterations. Therefore, the total running time is \(O(\frac{K}{d} m^d n\delta )\).
\end{proof}

\begin{algorithm}[htp!]
\caption{The \(d\)-step Lookahead Greedy Target Reveal}
    \label{alg:dstep_greedy_lbreveal}
    \begin{algorithmic}[1]
    \State \textbf{Input:} Graph $\graph = (\Xs \cup \Ts, E)$, labels $\{f(t)\}_{t \in \Ts}$, budget \(K\), depth \(d\)
    \State \textbf{Output:} Solution set $S_{\mathrm{lg}} \subseteq \Ts \ \text{with} \ |S_{\mathrm{lg}}| \leq K$, and social welfare $F(S_{\mathrm{lg}})$
    \State Initialize $S_{\mathrm{lg}} \gets \emptyset$
    \While{$|S_{\mathrm{lg}}| \leq K$}
        \State $b \gets \min(d, \ K - |S_{\mathrm{lg}}|)$
        \State $\displaystyle S^{iter} \gets \arg\!\max_{\substack{S \subseteq (\Ts \setminus S_{\mathrm{lg}}) \\ |S| \le b}} \big(F(S_{\mathrm{lg}} \cup S) - F(S_{\mathrm{lg}})\big)$
        \If{$F(S_{\mathrm{lg}} \cup S^{iter}) > F(S_{\mathrm{lg}})$}
            \State $S_{\mathrm{lg}} \gets S_{\mathrm{lg}} \cup S^{iter}$
        \Else
            \State \textbf{break}
        \EndIf
    \EndWhile
    \State \Return $(S_{\mathrm{lg}}, \ F(S_{\mathrm{lg}}))$
    \end{algorithmic}
\end{algorithm}

\subsubsection*{Comparison of Classic (Algorithm \ref{alg:greedy_lbreveal}) to Lookahead (Algorithm \ref{alg:dstep_greedy_lbreveal})}

While classic greedy algorithm runs in polynomial time $(O(Kmn\delta))$ (Proposition~\ref{prop:opt_runtime_greedy}), the $d$-step lookahead greedy algorithm incurs a higher computational cost of $O\Big(\frac{K}{d} m^{d} n \delta \Big)$.

There exist cases (Proposition~\ref{prop:2lookaheadvgreedy}) where at a relatively low computational cost, the $2$-step lookahead significantly outperforms the classic greedy approach. However, as shown in Proposition~\ref{prop:klookaheadvgreedy}, there exist cases where $d$-step lookahead surpasses Algorithm~\ref{alg:greedy_lbreveal} only when the lookahead depth ($d$) is the equal to the target reveal budget ($K$). Therefore, to try and balance good performance with computational efficiency, we propose the heuristic greedy approach (Appendix~\ref{sec:app_heuristicgreedy}).

\begin{proposition}
\label{prop:2lookaheadvgreedy}
    There exists a graph and a budget $K$ for which a $2$-step lookahead algorithm finds the exact optimal solution with low computational overhead, while the classic greedy algorithm attains an approximation ratio strictly less than $\frac{2}{\sqrt{n}+1}$.
\end{proposition}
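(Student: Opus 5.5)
The plan is to prove the statement by an explicit construction in the spirit of Example~\ref{ex:revealposneg_notsub}. The key idea is to exploit the supermodularity of $F$ along pairs of negative targets, exhibited in Proposition~\ref{prop:sw_negonly}. I would build an instance in which the only large welfare gains come from revealing \emph{pairs} of shared negative targets. Revealing just one negative of a pair should look worse to Algorithm~\ref{alg:greedy_lbreveal} than revealing a positive target, whereas the full pair is worth far more. A $2$-step lookahead evaluates all subsets of size at most $2$ at each iteration (Algorithm~\ref{alg:dstep_greedy_lbreveal}), so it sees the joint value of a pair; classic greedy only sees the first-step marginal gain and is lured onto positive targets.

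\textbf{Construction.} Take $\kappa$ disjoint groups of agents. Every agent in group $j$ is adjacent to one private positive target (or $p$ private positive targets, with $p$ a tunable parameter) and to two negative targets $t^-_{j,1},t^-_{j,2}$ shared by its group. Set $K=2\kappa$ and choose the group size $g$ so that the total number of agents is $n=g\kappa$. For an agent with $p$ private positives, the quantities I will compute are the following.
\begin{itemize}
\item The baseline value is $Q^\emptyset(x)=p/(p+2)$.
\item Revealing one of its positives raises its value to $1$, a gain of $2/(p+2)$.
\item Revealing one shared negative raises its value to $p/(p+1)$.
\item Revealing both shared negatives raises its value to $1$.
\end{itemize}

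\textbf{Steps, in order.} First, compute $F(\emptyset)$ and the optimum. Revealing the two negatives of every group uses exactly $K=2\kappa$ reveals and gives $F=n$, which is clearly optimal since each $Q^S(x)\le 1$. Second, show that the $2$-step lookahead picks one whole negative pair in each of its $\kappa$ iterations. The pair gain is $g(1-p/(p+2))$, which dominates any set of two positives (gain at most $4/(p+2)$) and any mixed set. Hence the lookahead reaches $F(S^\star)=n$ in $K/2$ iterations, at cost $O(\kappa m^2 n\delta)$ by Proposition~\ref{prop:opt_runtime_dstep_lookahead_greedy}. Third, prove by induction over the iterations that greedy's best single move is always a positive target, i.e.
\[
\frac{2}{p+2} \ \ge\ \frac{gp}{(p+1)(p+2)}.
\]
Since revealing a positive never changes any other agent's negative-reveal gain, this comparison is identical at every step. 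Greedy therefore ends with $2\kappa$ agents at value $1$ and the rest at $p/(p+2)$. Finally, compute the resulting ratio and compare it to $2/(\sqrt{n}+1)$.

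\textbf{Main obstacle.} The hard step is the tension between the third step and the final ratio. Keeping greedy on positives requires $g\le 2(p+1)/p$. On the other hand, a small baseline $p/(p+2)$ and many agents per negative pair are exactly what make the ratio small. With private positives alone, the inequality above forces $g$ to be bounded, so the ratio stays constant rather than $O(1/\sqrt{n})$.

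To break this tension, I would modify the construction in the following ways.
\begin{itemize}
\item Mimic Example~\ref{ex:revealposneg_notsub} more closely: have $\kappa^2$ agents in total and let each agent see only one positive but many negatives, arranged so that only one pair per agent is cheap to reveal.
\item Alternatively, add decoy positive targets, each adjacent to a moderate number of otherwise unhelpable agents, so that greedy's positive gains are inflated to match or exceed the single-negative gains. This uses adversarial tie-breaking where needed, as at $g=4$, $p=1$.
\end{itemize}
The parameters would then be tuned so that $n\approx\kappa^2$ and greedy's welfare is about $2\kappa+O(1)$ against an optimum of order $\kappa^2$. That yields a ratio just below $2/(\kappa+1)=2/(\sqrt{n}+1)$, with the strict inequality coming from the same algebra as in Example~\ref{ex:revealposneg_notsub}.
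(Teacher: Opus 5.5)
Your proposal has a genuine gap: it never exhibits an instance that works. The explicit construction fails by your own computation, and the two repairs are only sketched. Moreover, the pair-based skeleton cannot be rescued by tuning $g,p$ or by adding decoys. Greedy always has $F(S_g)\ge F(\emptyset)$, and every agent with $\delta_x^+\ge 1$ satisfies $Q^{\emptyset}(x)\ge \frac{1}{1+\delta_x^-}\ge \frac{Q^{S^\star}(x)}{1+\delta_x^-}$. So whenever every agent has at most two negative neighbors, $F(S_g)\ge \frac13 F(S^\star)$; this is Remark~\ref{rem:1/capprox} with $c=2$. Decoy positives only increase $F(S_g)$, so they cannot change this. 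To get a ratio of order $1/\sqrt n$, the agents the optimum helps must each have on the order of $\sqrt n$ negative neighbors. Your repair (1) does not fix this as phrased either. If the optimum reveals only one pair among an agent's $r$ negatives, that agent only moves from $\frac{1}{r+1}$ to $\frac{1}{r-1}$, so the optimum is not of order $\kappa^2$.

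The missing idea, which the paper's proof (Example~\ref{ex:lookaheadVclassic}) uses, is to let every agent see \emph{all} the negatives. Take $n=\kappa^2$ agents, each with a private positive target and all adjacent to the same $\kappa$ negative targets, and set $K=\kappa$.
\begin{itemize}
\item Then $F(\emptyset)=\frac{\kappa^2}{\kappa+1}$. A single negative raises each of the $\kappa^2$ agents by only $\frac{1}{\kappa}-\frac{1}{\kappa+1}$, a total of $\frac{\kappa}{\kappa+1}$. This exactly ties the gain of a positive.
\item After $j\ge 1$ positives, a negative's gain $\frac{\kappa^2-j}{\kappa(\kappa+1)}$ is strictly smaller than a positive's. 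So greedy, breaking the first tie toward a positive, reveals only positives and ends at $F(S_g)=\kappa+\frac{\kappa^2-\kappa}{\kappa+1}=\frac{2\kappa^2}{\kappa+1}$.
\item The $2$-step lookahead compares pairs. Two negatives give $\frac{\kappa^2}{\kappa-1}=\kappa+1+\frac{1}{\kappa-1}$, versus $\kappa+1-\frac{1}{\kappa+1}$ for two positives and $\kappa+1-\frac{1}{\kappa}$ for a mixed pair. The analogous comparison holds in every later round, so it reveals all $\kappa$ negatives and reaches $F(S^\star)=\kappa^2$.
\end{itemize}

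The resulting ratio is exactly $\frac{2}{\kappa+1}=\frac{2}{\sqrt n+1}$. So this instance gives the bound with equality, after an adversarial tie-break, not strictly. You also cannot borrow strictness from Example~\ref{ex:revealposneg_notsub}. Its final inequality is reversed, since $\frac{2}{\kappa+2}+\frac{2}{\kappa(\kappa+2)}+\frac{1}{\kappa^2(\kappa+2)}>\frac{2}{\kappa+2}$. In fact, take $r$ shared negatives, $K=r$, and $n\le r^2$ agents, which is the condition for greedy to stay on positives. Then the ratio $\frac{r^2+n}{(r+1)n}$ is never below $\frac{2}{\sqrt n+1}$. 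So a strict inequality needs a further idea, or the claim should be read with $\le$.
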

\begin{proof}
    Proof is shown in Example~\ref{ex:lookaheadVclassic} below.
\end{proof}
\begin{example}
\label{ex:lookaheadVclassic}
    Let $\kappa \in \mathbb{N}$ with $\kappa \ge 3$, and set the reveal budget to $K \coloneqq \kappa$.
    Consider the bipartite graph with \(n=\kappa^{2}\) agents, where each agent is adjacent to a unique positive target and to all \(\mneg=\kappa\) negative targets. 
    
    Initially, the social welfare is $F(\emptyset)=\frac{\kappa^{2}}{\kappa+1}.$
    Algorithm~\ref{alg:greedy_lbreveal} is indifferent in the first step: revealing either a positive or a negative target yields the same marginal gain. That is, \(\Big(\frac{\kappa^2}{k}=\frac{\kappa^2 + \kappa}{\kappa+1} = \kappa\Big) -  \frac{\kappa^2}{\kappa+1}.\)
    Once this tie appears, the initial choice dictates the entire trajectory. An initial positive target reveal leads Algorithm~\ref{alg:greedy_lbreveal} to keep selecting positive targets and would require \(\kappa^{2}\) iterations for Algorithm~\ref{alg:greedy_lbreveal} to achieve the optimal social welfare. On the other hand, an initial negative target reveal commits it to negative targets and reaches the optimum in only \(\kappa\) iterations.
    
    In contrast, Algorithm \ref{alg:dstep_greedy_lbreveal} with \(d=2\) foresees these outcomes and consistently selects negative targets. As a result, it attains the optimal solution for every \(\kappa>1\) consistent with this bipartite graph structure. This shows that a two-step lookahead greedy approach, while remaining comparatively inexpensive to compute, can guarantee an exact solution. Algorithm~\ref{alg:greedy_lbreveal} doesn't. When it commits to positive targets, its approximation ratio can be less than \(\frac{2}{\sqrt{n}+1}\).
\end{example}

\begin{proposition}
\label{prop:klookaheadvgreedy}
    There exists a graph and a budget $K$ for which a $K$-step lookahead algorithm finds the exact optimal solution with high computational overhead, while the classic greedy algorithm attains an approximation ratio strictly less than 
    $1/\sqrt{0.5 n}$.
\end{proposition}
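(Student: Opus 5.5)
The plan is to adapt the two-group construction of Example~\ref{ex:revealonlynegs_notsub} (Figure~\ref{fig:notsubmodx}). We fix $\kappa$, set $K:=\kappa$, and build two kinds of agents.

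\textbf{Hard group.} There are $n_1$ agents. Each is adjacent to its own private positive target and to all $K$ shared negative targets $t^-_1,\dots,t^-_K$. Revealing all $K$ shared negatives drives every such agent to $Q=1$. This gives a lower bound $F(S^\star)\ge n_1$, and the (brute-force) $K$-step lookahead of Algorithm~\ref{alg:dstep_greedy_lbreveal} attains this value.

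\textbf{Decoy group.} There are $K$ agents. Each is adjacent to a private positive target and to $r$ private negative targets, with $r$ chosen so that revealing a decoy positive has gain $\tfrac{r}{r+1}$. This must beat two alternatives:
\begin{itemize}
\item the gain of a hard-group private positive, $\tfrac{K}{K+1}$;
\item the gain of one more shared negative when $j<K$ are already revealed, namely $n_1\bigl(\tfrac{1}{K-j}-\tfrac{1}{K-j+1}\bigr)$.
\end{itemize}

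\textbf{Step 1: greedy follows the decoys.} I will verify by induction over the $K$ iterations that Algorithm~\ref{alg:greedy_lbreveal} always prefers an untouched decoy positive. Since greedy never reveals a shared negative, the relevant shared-negative gain is always the $j=0$ value $\tfrac{n_1}{K(K+1)}$. The condition is therefore
\[
\tfrac{n_1}{K(K+1)} < \tfrac{r}{r+1},
\]
which forces $n_1 = O(K^2)$.

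\textbf{Step 2: compare values.} Greedy ends at
\[
F(S_g) = K + \tfrac{n_1}{K+1},
\]
while $F(S^\star)\ge n_1 + \tfrac{K}{r+1}$. With $n\approx n_1+K$, this gives the ratio.

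\textbf{Step 3: intermediate lookahead fails.} I will show that lookahead with $d<K$ also fails, by comparing the best $d$-subset of decoys against $d$ shared negatives.

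\textbf{Main obstacle: the constants.} With one private positive per hard agent, the greedy residual $\tfrac{n_1}{K+1}$ is about $K$ when $n_1\approx K^2$. The ratio then comes out near $2/\sqrt n$, slightly above the claimed $1/\sqrt{0.5n}=\sqrt2/\sqrt n$. So I would tune the construction, for example in one of these ways:
\begin{itemize}
\item give hard agents extra shared negatives ($K+1$ of them, with budget $K+1$ as in Example~\ref{ex:revealonlynegs_notsub}), shrinking their baseline;
\item exploit a tie in the greedy step, as in Example~\ref{ex:lookaheadVclassic}, allowing $n_1 = K(K+1)$ exactly;
\item count $n$ so that $n\approx 2K^2$.
\end{itemize}
I would then recompute the inequality chain as in Example~\ref{ex:revealonlynegs_notsub}.

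\textbf{Second difficulty: ruling out smaller lookahead.} Shared-negative gains concentrate at the last reveal, but $K-1$ shared negatives already yield about $n_1/2$. Hence I would additionally make $r$ large and attach several decoy agents to each decoy positive. The goal is that any $d<K$ decoy block has gain exceeding $n_1\bigl(\tfrac1{K-d+1}-\tfrac1{K+1}\bigr)$, while the full $K$-block of shared negatives still dominates.
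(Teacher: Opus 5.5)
\textbf{The gap.} The gap is the ``main obstacle'' you flag, and no tuning of constants closes it.

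\emph{Your family.} Work in the regime you are designing for ($r>K$, with $S^\star$ the set of shared negatives). Step~2 then gives
\[
\frac{F(S_g)}{F(S^\star)}=\frac{K+\frac{n_1}{K+1}}{n_1+\frac{K}{r+1}}\;\ge\;\frac{(K+1)+\frac{n_1}{K+1}-1}{n_1+1}\;\ge\;\frac{2\sqrt{n_1}-1}{n_1+1}.
\]
This is at least $\sqrt{2/n_1}\ge\sqrt{2/n}$ for every $n_1\ge 7$, whatever $K$ and $r$ are. In small cases the decoys only hurt: they add $K$ agents to $n$ but less than $1$ to $F(S^\star)$.

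\emph{Your three tunings.}
\begin{itemize}
\item Using $K+1$ shared negatives with budget $K+1$ is a reindexing.
\item Pushing $n_1$ to a tie point still leaves the ratio near $2/\sqrt n$. For instance, $n_1=K^2$ without decoys, as in Example~\ref{ex:lookaheadVclassic}, gives exactly $2/(\sqrt n+1)$, which is below $\sqrt{2/n}$ only for $K\le 2$.
\item Enlarging $n$ lowers the target $1/\sqrt{0.5n}$, so it moves the wrong way. So do larger $r$ and attaching several decoy agents to each decoy positive, since both only raise $F(S_g)$.
\end{itemize}

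\emph{The obstruction is general.} Let $q$ be the least $Q^{S_g}(x)$ over agents that have a positive neighbor but are left unresolved by greedy, and let $r:=F(S_g)/F(S^\star)$.
\begin{itemize}
\item $r\ge q$.
\item Each of the $K$ greedy steps gains at least $1-q$, because revealing a positive neighbor of the minimizer is always available. Hence $F(S_g)\ge F(\emptyset)+K(1-q)$.
\item Agents greedy resolves number at most $F(S_g)$.
\item Let $b$ be the number of unresolved agents that $S^\star$ resolves through its positives. Some positive of $S^\star$ then offers gain at least $b(1-q)/K$ at every step, so $b\le F(S_g)/(1-q)$.
\item Every other unresolved agent satisfies $Q^{S^\star}(x)\le (K+1)Q^{\emptyset}(x)$.
\end{itemize}
Summing gives $F(S^\star)\le F(S_g)+\frac{F(S_g)}{1-q}+(K+1)\bigl(F(S_g)-K(1-q)\bigr)$, i.e.
\[
r\Bigl(K+2+\frac{1}{1-r}\Bigr)\;\ge\;1+\frac{K(K+1)(1-r)}{n}.
\]
This forces $r\ge(2-o(1))/\sqrt n$. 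So no large-$n$ instance can witness the proposition in the welfare form used by the paper's examples and by your Step~2. If you measured gains $G$ instead, your family would already work, with ratio about $K/n_1\approx 1/\sqrt n$. It is the baseline $n_1/(K+1)$ inside $F(S_g)$ that doubles the ratio.

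\textbf{What does work: a small explicit instance.} You do not need decoys, because the hard agents' own private positives already play that role. Take $K=2$ and three agents, each adjacent to its own positive target and to the same two negative targets.
\begin{itemize}
\item $F(\emptyset)=1$.
\item Greedy first compares a positive gain of $2/3$ with a negative gain of $1/2$, then $2/3$ with $1/3$. It reveals two positives and ends at $F(S_g)=7/3$.
\item Revealing both negatives gives $F=3=n$, which is optimal. The $K$-step lookahead, which here is brute force, finds it.
\item The ratio is $7/9<\sqrt{2/3}\approx 0.816$.
\end{itemize}
Your Step~3 and ``second difficulty'' are not needed for the statement as written.

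\textbf{Comparison with the paper.} The paper's argument (Example~\ref{ex:exp_lookaheadVclassic}) uses a single group with no decoys: $n=\lfloor\kappa^2/(\kappa-2)\rfloor+\kappa=2\kappa+2$ agents for $\kappa\ge 7$, all sharing the $\kappa+1$ negatives, with $K=\kappa+1$. Its final display divides greedy's welfare by $\kappa^2$ instead of $F(S^\star)=n$. The true ratio is $\frac{\kappa+3}{2(\kappa+2)}>\frac12$, far above $1/\sqrt{0.5n}=1/\sqrt{\kappa+1}$. So the paper's proof also does not establish the bound, for exactly the reason above. You are right that the constants are the crux, but the remedy is a small explicit instance, not a tuned asymptotic family.
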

\begin{proof}
    Proof is shown in Example~\ref{ex:exp_lookaheadVclassic} below.
\end{proof}

\begin{example}
\label{ex:exp_lookaheadVclassic}
    Let $\kappa \in \mathbb{N}$ with $\kappa \geq 7$, and set the reveal budget to $K \coloneqq \kappa + 1$.
    Consider the bipartite graph with \(n = \lfloor \kappa^2 / (\kappa-2) \rfloor + \kappa\) agents, each connected to a unique positive target and to all \(\kappa + 1\) negative targets. 
    
    Initially, the social welfare is \((\lfloor \kappa^2 / (\kappa-2) \rfloor + \kappa)/(\kappa+2)\). 
    Selecting any negative target increases welfare to \((\lfloor \kappa^2 / (\kappa-2) \rfloor + \kappa)/(\kappa+1)\), whereas selecting a positive target results in larger increase 
    \((2\kappa + 1 + \lfloor \kappa^2 / (\kappa-2) \rfloor)/(\kappa+2)\). 
    As a result, Algorithm~\ref{alg:greedy_lbreveal} only reveals positive targets, despite the negative ones being better in the long run, capable of achieving the optimal welfare of \(\lfloor \kappa^2 / (\kappa-2) \rfloor + \kappa\) at budget \(K\). 
    
    With only positive targets revealed, \(\kappa + 1\) agents emulate a positive target with probability one, while the remaining agents
    do so with probability \(1/(\kappa+2)\). 
    This yields an approximation ratio of \(\frac{\kappa+1 + \frac{\lfloor \frac{\kappa^2}{\kappa-2} \rfloor + \kappa -(\kappa+1)}{\kappa+2}}{\kappa^2} < \frac{1}{\kappa} < 1/\sqrt{0.5 n}\).
    
    In contrast, Algorithm~\ref{alg:dstep_greedy_lbreveal} achieves an exact solution for \(\kappa \ge 7\) when its depth 
    \(d = K\). While it performs significantly better than the classic greedy algorithm, it incurs substantially higher computational cost.
\end{example}

\subsubsection{The (Interactive) Heuristic Greedy Approaches}\label{sec:app_heuristicgreedy}
In this section, first, we present the heuristic greedy approach and then then present the interactive heuristic greedy algorithm. Although the proposed heuristics are flexible and can incorporate various algorithms, here we focus on the setting in which the inserted algorithm is the classic greedy method (Algorithm~\ref{alg:greedy_lbreveal}).

\begin{algorithm}[ht!]
\caption{Heuristic Greedy Target Reveal}
\label{alg:heuristic_greedy_lbreveal}
\begin{algorithmic}[1]

\State \textbf{Input:} Graph $\graph = (\Xs \cup \Ts, E)$,  labels $\{f(t)\}_{t \in \Ts}$, budget \(K\)
\State \textbf{Output:} Solution set $S_{\mathrm{hg}}  \subseteq \Ts  \ \text{with} \ |S_{\mathrm{hg}} | \leq K$, and social welfare $F(S_{\mathrm{hg}} )$

\State $\mathcal{R} \leftarrow \emptyset$
\For{$\kappa = 0$ to $K$}
    \State $(S_+^{(\kappa)}, F_+^{(\kappa)}) \gets 
    \textsc{GreedyLabelReveal}(\graph, \Ts^+, f, \kappa)$
    \Comment{Algorithm~\ref{alg:greedy_lbreveal} with \(\Ts'\!=\!\Ts^{+}\)}
    \State $(S_-^{(\kappa)}, F_-^{(\kappa)}) \gets 
    \textsc{GreedyLabelReveal}(\graph, \Ts^-, f, K-\kappa)$
    \Comment{Algorithm~\ref{alg:greedy_lbreveal} with \(\Ts'\!=\!\Ts^{-}\)}

    \State $S^{(\kappa)} \gets S_+^{(\kappa)} \cup S_-^{(\kappa)}$
    \State $F^{(\kappa)} \gets F(S^{(\kappa)})$
    \State $\mathcal{R} \gets \mathcal{R} \cup \{(\kappa, S^{(\kappa)}, F^{(\kappa)})\}$
\EndFor
\State $(\kappa_{\mathrm{hg}}, S_{\mathrm{hg}}, F_{\mathrm{hg}}) 
\leftarrow \displaystyle \arg\max_{(\kappa, S^{(\kappa)}, F^{(\kappa)}) \in \mathcal{R}} F^{(\kappa)}$
\State \Return $(S_{\mathrm{hg}}, F(S_{\mathrm{hg}}))$
\end{algorithmic}
\end{algorithm}

\subsubsection*{The heuristic greedy approach} 

\paragraph{Overview of heuristic greedy algorithm (Algorithm~\ref{alg:heuristic_greedy_lbreveal}).} 
The heuristic greedy algorithm proceeds as follows. Given a budget \(K\), the heuristic greedy algorithm considers all budget splits \(\kappa \in \{0,\ldots,K\}\). For each \(\kappa\), Algorithm~\ref{alg:greedy_lbreveal} is run in parallel with budget \(\kappa\) and restriction to positive targets \(\Ts'=\Ts^{+}=\{t\in\Ts \mid f(t)=+1\}\), producing \(S_{+}^{\kappa}\), and with budget \(K-\kappa\) and restriction to negative targets \(\Ts'=\Ts^{-}=\{t\in\Ts \mid f(t)=-1\}\), producing \(S_{-}^{K-\kappa}\). The algorithm returns the \(S_{\mathrm{hg}}\) that maximizes the social welfare, i.e.,
\[S_{\mathrm{hg}}  = \arg\!\max_{\kappa \in [0,K]} F\Big(S_+^{(\kappa)} \cup S_-^{(K-\kappa)}\Big)\]

\paragraph{The heuristic greedy algorithm runs in polynomial time.} 
Proposition~\ref{prop:opt_runtime_heuristic_greedy_lbreveal} shows that 
Algorithm~\ref{alg:heuristic_greedy_lbreveal} runs in \(O(K^2mn\delta )\) time, where  $\delta$ is the maximum agent degree, $m$ the number of targets, and $n$ the number of agents.

\begin{proposition}
The heuristic greedy algorithm (Algorithm~\ref{alg:heuristic_greedy_lbreveal}) runs in \(O\Big(K^{2}mn\delta \Big)\) time.
\label{prop:opt_runtime_heuristic_greedy_lbreveal}
\end{proposition}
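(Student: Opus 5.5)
The running time of Algorithm~\ref{alg:heuristic_greedy_lbreveal} will be bounded by charging its work to calls of Algorithm~\ref{alg:greedy_lbreveal} and using Proposition~\ref{prop:opt_runtime_greedy}. The outer loop ranges over $\kappa \in \{0,1,\ldots,K\}$, so it performs $K+1 = O(K)$ iterations. In each iteration it makes the following calls and evaluations.
\begin{itemize}
\item One call to \textsc{GreedyLabelReveal} with candidate set $\Ts^{+}$ and budget $\kappa \le K$.
\item One call with candidate set $\Ts^{-}$ and budget $K-\kappa \le K$.
\item One evaluation of $F(S^{(\kappa)})$.
\item Constant-time bookkeeping to append a triple to $\mathcal{R}$.
\end{itemize}

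For each greedy call, I will reuse the argument of Proposition~\ref{prop:opt_runtime_greedy} with the candidate set $\Ts'$ in place of $\Ts$. Each greedy iteration evaluates $F(S_{\mathrm g}\cup\{t\})$ for at most $|\Ts'| \le m$ candidates. Each evaluation costs $O(n\delta)$, because $Q^{S}(x)$ depends only on the at most $\delta$ neighbours of $x$, and we sum over $n$ agents. The number of greedy iterations is at most the budget, which is at most $K$. So each call costs $O(Kmn\delta)$, and the two calls together also cost $O(Kmn\delta)$.

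The single evaluation $F(S^{(\kappa)})$ costs $O(n\delta)$. Assuming membership in $S$ is tested in $O(1)$, for example via a boolean array over $\Ts$, this term is dominated by the greedy calls.

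Summing over the $O(K)$ outer iterations gives $O(K)\cdot O(Kmn\delta) = O(K^2 mn\delta)$. The final $\arg\max$ over $\mathcal{R}$ scans $K+1$ stored values, costing $O(K)$, which is dominated.

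The only delicate point is the while-loop guard $|S_{\mathrm g}| \le K$ in Algorithm~\ref{alg:greedy_lbreveal}: I must confirm that it runs at most $\kappa+1$ (respectively $K-\kappa+1$) iterations, and hence $O(K)$ iterations, even when the budget passed in is $0$. The same point is already handled implicitly in Proposition~\ref{prop:opt_runtime_greedy}, and an off-by-one there does not affect the asymptotics. Otherwise the proof is a routine composition of known bounds.
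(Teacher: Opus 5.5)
Your proposal is correct and follows the paper's argument. There are $K+1$ outer iterations, each dominated by the two calls to Algorithm~\ref{alg:greedy_lbreveal} costing $O(\kappa mn\delta)+O((K-\kappa)mn\delta)=O(Kmn\delta)$, plus an $O(K)$ final selection; you are slightly more careful than the paper, which treats the evaluation of $F(S^{(\kappa)})$ as constant-time rather than $O(n\delta)$, but this does not change the bound.
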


\begin{proof}
At each iteration, Algorithm \ref{alg:greedy_lbreveal} is executed separately on the positive and negative targets.
If each of the positive and negative target sets has size of at most \(m\), then for any fixed iteration and reveal budget of
$\kappa \in \{0,\dots, K\}$, the separate Algorithm \ref{alg:greedy_lbreveal} calls require 
$O(\kappa m n \delta) + O((K-\kappa) m n \delta)$ time, equal to $O(K m n \delta)$.
All remaining operations within the loop take constant time.
After the $K$ iterations, the total cost becomes $O(K^{2} m n \delta).$
To compute among the solutions, the one with the optimal social welfare would take $O(K)$ time.
Thus, Algorithm \ref{alg:heuristic_greedy_lbreveal} runs in $O(K^{2} m n \delta)$ time.
\end{proof}

\subsubsection*{The interactive heuristic greedy approach} 
Unlike the heuristic greedy approach which runs Algorithm \ref{alg:greedy_lbreveal} independently on the positive and negative targets, we analyze the interactive variant that couples the two phases. 

The interactive heuristic greedy algorithm first applies Algorithm \ref{alg:greedy_lbreveal} to either the positive or negative targets for a budget of \(\kappa \in \{0,\ldots,K\}\). Then the revealed target set produced in this step becomes the initial set \(S'\) for a second run of Algorithm \ref{alg:greedy_lbreveal} on the opposite target set, using the remaining budget \(K-\kappa\). Below is the outline of the procedure.

\begin{algorithm}[ht!]
\caption{Interactive Heuristic Greedy Target Reveal}
\label{alg:inter_heuristic_greedy_lbreveal}
\begin{algorithmic}[1]

\State \textbf{Input:} Graph $\graph = (\Xs \cup \Ts, E)$, labels $\{f(t)\}_{t \in \Ts}$, budget \(K\)
\State \textbf{Output:} Solution set $S_{\mathrm{ihg}}  \subseteq \Ts \ \text{with} \ |S_{\mathrm{ihg}} | \leq K$ and social welfare $F(S_{\mathrm{ihg}} )$

\State $\mathcal{R} \leftarrow \emptyset$
\For{$\kappa = 0,1,\ldots,K$}
  \State $(S_{+}^{(\kappa)}, F_{+}^{(\kappa)}) \leftarrow 
  \textsc{GreedyLabelReveal}(\graph, \Ts^+, f, \kappa)$
  \State $(S_{-}^{(\kappa)}, F_{-}^{(\kappa)}) \leftarrow 
  \textsc{GreedyLabelReveal}(\graph, \Ts^-, f, \kappa)$

  \State $(S_{+}^{i(\kappa)}, F_{+}^{i(\kappa)}) \leftarrow 
  \textsc{GreedyLabelReveal}(\graph, \Ts^+, f, K-\kappa, S'=S_{-}^{(\kappa)})$
  \State $(S_{-}^{i(\kappa)}, F_{-}^{i(\kappa)}) \leftarrow 
  \textsc{GreedyLabelReveal}(\graph, \Ts^-, f, K-\kappa, S'=S_{+}^{(\kappa)})$  

  \State $\mathcal{R} \leftarrow \mathcal{R} \cup \{(\kappa, S_{+}^{i(\kappa)}, F_{+}^{i(\kappa)}, S_{-}^{i(\kappa)}, F_{-}^{i(\kappa)})\}$
\EndFor
\State $(\kappa^{\star}, v^{\star}) \leftarrow \displaystyle \arg\max_{\kappa \in \{0,\ldots,K\}, \; v \in \{+, -\}} F_{v}^{i(\kappa)}$
\State $(S_{\mathrm{ihg}}, F_{\mathrm{ihg}})  \leftarrow (S_{v^{\star}}^{i(\kappa^{\star})}, F_{v^{\star}}^{i(\kappa^{\star})})$
\State \Return $(S_{\mathrm{ihg}}, F(S_{\mathrm{ihg}}))$

\end{algorithmic}
\end{algorithm}

\paragraph{Overview of Algorithm~\ref{alg:inter_heuristic_greedy_lbreveal}.} 
There are two settings we consider. In the first, Algorithm \ref{alg:greedy_lbreveal} is run on the positive targets before the negative targets. In the second case, the algorithm is run on the negative targets first, followed by a run on the positive targets for the remaining budget.

\textbf{Case 1:} Let \(S_+^{(\kappa)}\) be the solution of Algorithm~\ref{alg:greedy_lbreveal} when \(\Ts'=\Ts^{+}\) at a given budget \(\kappa\). Then, given the initialization \(S'=S_+^{(\kappa)}\), let \(S_{-}^{i(\kappa)}\) be the remaining part of solution set of Algorithm~\ref{alg:greedy_lbreveal} when \(\Ts'=\Ts^{-}\) at a given budget \(K-\kappa\) such that \(| S_{-}^{i(\kappa)} | \leq K\). 
That is, 
\[
S_{-}^{i(\kappa)} =   \textsc{GreedyLabelReveal}(\graph, \Ts^-, f, K-\kappa, S'=S_{+}^{(\kappa)})
\]

\textbf{Case 2:} 
Let \(S_-^{(\kappa)}\) be the solution of Algorithm~\ref{alg:greedy_lbreveal} when \(\Ts'=\Ts^{-}\) at a given budget \(\kappa\). Then, given the initialization \(S'=S_-^{(\kappa)}\), 
let \(S_{+}^{i(\kappa)}\) be the remaining part of solution set of Algorithm~\ref{alg:greedy_lbreveal} when \(\Ts'=\Ts^{+}\) at a given budget \(K-\kappa\) such that \(| S_{+}^{i(\kappa)} | \leq K\). 
That is, 
\[
S_{+}^{i(\kappa)} =   \textsc{GreedyLabelReveal}(\graph, \Ts^+, f, K-\kappa, S'=S_{-}^{(\kappa)})
\]
Run cases 1 and 2 until \(\kappa=K\). Then, for each case, find the best solution, that is, 
\[
\kappa_{+} = \arg\max_{\kappa \in \{0,\ldots, K\}}  F\bigl(S_{+}^{i(\kappa)}\bigr), \quad
\kappa_{-} = \arg\max_{\kappa \in \{0,\ldots, K\}}  F\bigl(S_{-}^{i(\kappa)}\bigr),
\]
Finally
\[
S_{\mathrm{ihg}}, F(S_{\mathrm{ihg}}) =
\begin{cases}
S_{-}^{i(\kappa_{-})}, F(S_{-}^{i(\kappa_{-})}), & \mathrm{if} \ F\bigl(S_{-}^{i(\kappa_{-})}\bigr) > F\bigl(S_{+}^{i(\kappa_{+})}\bigr),\\[4pt]
S_{+}^{i(\kappa_{+})}, F(S_{+}^{i(\kappa_{+})}), & \mathrm{otherwise.}
\end{cases}
\]

\subsubsection{Comparison of the Greedy Strategies}
\label{sec:app-comp-gstrategies}
Similar to Algorithm~\ref{alg:greedy_lbreveal}, Algorithms~\ref{alg:heuristic_greedy_lbreveal}~and~\ref{alg:inter_heuristic_greedy_lbreveal} have polynomial-time complexity, making them significantly more efficient than the \(d\)-step lookahead greedy algorithm.

As shown in Proposition~\ref{prop:handLBc}, the heuristic greedy algorithm can outperform the classic greedy approach and achieve performance comparable to that of the 
\(d\)-step lookahead algorithm. Nevertheless, there are instances where it performs no better than the classic greedy method, while the \(d\)-step lookahead algorithm with 
\(d = K\) produces the optimal solution (Proposition~\ref{prop:LBhandc}), albeit at a substantially higher computational cost.

Notably, when revealed targets include negative ones and the social welfare function $F$ maybe supermodular, greedy methods face a trade-off between performance and computational cost. Therefore to asses practical performance, we empirically evaluate the classic, heuristic, and $K$-step lookahead greedy algorithms in a semi-synthetic setting (Section~\ref{sec:experiments}).

\begin{proposition}
\label{prop:handLBc}
There exists graphs and target reveal budget values for which the heuristic greedy algorithm strictly outperforms the classic greedy approach and achieves performance comparable to that of the \(d\)-step lookahead algorithm.
\end{proposition}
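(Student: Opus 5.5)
We prove the statement by exhibiting one explicit instance: the graph of Example~\ref{ex:revealposneg_notsub} (Figure~\ref{fig:notsubmod}) with budget $K=\kappa+1$. There are $n=\kappa^2$ agents. Each agent is adjacent to a private positive target and to all $\kappa+1$ shared negative targets. The plan is to compute three values on this graph: the welfare of classic greedy, of heuristic greedy (Algorithm~\ref{alg:heuristic_greedy_lbreveal}), and of the $d$-step lookahead (Algorithm~\ref{alg:dstep_greedy_lbreveal}). We then show that heuristic greedy attains the optimum $F(S^\star)=\kappa^2$, exactly as the lookahead does, while classic greedy stays below $\frac{2}{\sqrt n+2}F(S^\star)$.

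\emph{Classic greedy.} Example~\ref{ex:revealposneg_notsub} already computes this. At every step a positive target has marginal gain $\frac{\kappa+1}{\kappa+2}$, which exceeds the gain of any negative target. Hence Algorithm~\ref{alg:greedy_lbreveal} reveals only positive targets. Its welfare is $\kappa+1+\frac{\kappa^2-\kappa-1}{\kappa+2}<\frac{2}{\kappa+2}\kappa^2$.

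\emph{Heuristic greedy.} It suffices to look at the single split $\kappa'=0$, i.e., all budget spent on negatives. The positive-restricted run with budget $0$ returns $\emptyset$. The negative-restricted run (Algorithm~\ref{alg:greedy_lbreveal} with $\Ts'=\Ts^-$ and budget $K=\kappa+1$) proceeds as follows.
\begin{itemize}
\item Every negative target is adjacent to every agent, so all candidates are symmetric.
\item After $j$ negatives have been revealed, each agent contributes $\frac{1}{\kappa+2-j}$. Revealing one more raises this to $\frac{1}{\kappa+1-j}$, a strictly positive gain.
\item Hence the \textbf{break} condition never fires, and all $\kappa+1$ negatives are revealed.
\end{itemize}
After this, each agent's only unrevealed neighbor is its positive target, so $Q^S(x)=1$ for every agent. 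The welfare is $F=\kappa^2$, which is the global maximum because $F\le n$. Since Algorithm~\ref{alg:heuristic_greedy_lbreveal} returns the best split, $F(S_{\mathrm{hg}})=\kappa^2=F(S^\star)$. This is strictly larger than the classic greedy value for every $\kappa\ge 3$.

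\emph{Lookahead.} With $d=K$, Algorithm~\ref{alg:dstep_greedy_lbreveal} coincides with the bruteforce search (Algorithm~\ref{alg:bruteforce_lbreveal}), so it also returns welfare $\kappa^2$. The heuristic therefore matches the lookahead exactly. It does so at the polynomial cost $O(K^2mn\delta)$ of Proposition~\ref{prop:opt_runtime_heuristic_greedy_lbreveal}, versus $O(\frac{K}{d}m^dn\delta)$ for the lookahead.

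The only step needing real care is the negative-only greedy run. We must confirm that the monotonically increasing marginal gains never trigger early termination and that the full budget is spent on negatives. This follows from the closed form $\frac{\kappa^2}{\kappa+1-j}-\frac{\kappa^2}{\kappa+2-j}>0$ for $j=0,\dots,\kappa$.

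A more general family also works, which supports the plural ``graphs'' in the statement. We can vary the number of agents as in Example~\ref{ex:exp_lookaheadVclassic}, taking $n=\lfloor \kappa^2/(\kappa-2)\rfloor+\kappa$ agents and budget $K=\kappa+1$ with $\kappa\ge 7$. The same argument applies verbatim: classic greedy reveals only positives, while the all-negative split reaches the optimum $n$, matching the $K$-step lookahead.
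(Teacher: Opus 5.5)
Your argument is essentially the paper's: its proof just cites Examples~\ref{ex:revealposneg_notsub}, \ref{ex:lookaheadVclassic}, and \ref{ex:exp_lookaheadVclassic}, in which heuristic greedy (via the all-negative split) and the lookahead reach the optimum while classic greedy reveals only positives, and you carry out that verification explicitly for the first and third instances. One correction, inherited from Example~\ref{ex:revealposneg_notsub}: classic greedy's welfare is $\kappa+1+\frac{\kappa^2-\kappa-1}{\kappa+2}=\frac{2\kappa^2+2\kappa+1}{\kappa+2}$, which is \emph{larger} than $\frac{2\kappa^2}{\kappa+2}$, so your bound of $\frac{2}{\sqrt n+2}F(S^\star)$ is false; the strict gap that the proposition actually needs still holds, because $2\kappa^2+2\kappa+1<\kappa^2(\kappa+2)$ is equivalent to $\kappa^3>2\kappa+1$, which is true for all $\kappa\ge 2$.
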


\begin{proof}
In Examples~\ref{ex:revealposneg_notsub}, \ref{ex:lookaheadVclassic}, and \ref{ex:exp_lookaheadVclassic}, both the heuristic greedy algorithm and the \(d\)-step lookahead greedy algorithm recover the optimal solution exactly. In contrast, Algorithm~\ref{alg:greedy_lbreveal} achieves approximation ratios strictly smaller than
\(\frac{2}{\sqrt{n}+2}, \ \frac{2}{\sqrt{n}+1}\), and \(\frac{1}{\sqrt{0.5 n}}\), respectively.
\end{proof}

\begin{proposition}
\label{prop:LBhandc}
There exists a graph and a budget $K$ for which the \(K\)-step lookahead greedy algorithm strictly outperforms the classic and heuristic greedy approaches.
\end{proposition}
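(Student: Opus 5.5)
We will reuse the instance of Example~\ref{ex:revealonlynegs_notsub} (Figure~\ref{fig:notsubmodx}) with budget $K=\kappa+1$ and $\kappa>3$. The approach has four steps: compute the optimum, bound classic greedy, bound every budget split in heuristic greedy, and compare with the $K$-step lookahead. The intended gap is $\Theta(\kappa^2)$ for the optimum versus $O(\kappa)$ for both greedy methods.

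\emph{Optimum.} Reveal all $\kappa+1$ shared negatives $t_1^-,\dots,t_{\kappa+1}^-$. Every group-1 agent then has all its negatives revealed and no revealed positive, so its only remaining unlabeled neighbor is its positive target and $Q^{S}(x)=1$. Hence
\[
F(S^\star)\ge \lfloor \kappa^2/2\rfloor + \tfrac{\kappa+1}{2}.
\]
By Proposition~\ref{prop:opt_runtime_dstep_lookahead_greedy} and the remark that $d=K$ reduces Algorithm~\ref{alg:dstep_greedy_lbreveal} to brute force, the $K$-step lookahead returns a set of welfare at least this value.

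\emph{Classic greedy.} We compare first-step marginal gains from $\emptyset$:
\begin{itemize}
\item a group-1 positive gains $1-\tfrac{1}{\kappa+2}$;
\item a group-2 positive or group-2 negative gains $\tfrac12$;
\item a shared negative gains $\lfloor\kappa^2/2\rfloor\bigl(\tfrac{1}{\kappa+1}-\tfrac{1}{\kappa+2}\bigr)<\tfrac12$.
\end{itemize}
So greedy first picks a group-1 positive. Each later group-1 positive still gains $\tfrac{\kappa+1}{\kappa+2}$. A shared negative gains at most $\tfrac12$, since fewer group-1 agents remain unhelped and no shared negative has yet been revealed. Hence greedy reveals only group-1 positives. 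Each revealed target raises at most one agent to $1$, so
\[
F(S_{\mathrm g})\le F(\emptyset)+K = O(\kappa).
\]

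\emph{Heuristic greedy.} Fix any split $\kappa'\in\{0,\dots,K\}$.
\begin{itemize}
\item The positive-only run with budget $\kappa'$ adds at most $\kappa'$ to welfare, because each positive target touches one agent.
\item The negative-only run with budget $K-\kappa'\le\kappa+1$ behaves as in Example~\ref{ex:revealonlynegs_notsub}. While no shared negative is revealed, a shared negative gains less than $\tfrac12$, whereas an unused group-2 negative gains exactly $\tfrac12$. There are $\kappa+1\ge K-\kappa'$ group-2 negatives available, so the run never reaches a shared negative and adds at most $(K-\kappa')/2$.
\end{itemize}
The union $S^{(\kappa')}$ therefore has the following structure: group-1 agents are helped only through revealed positives (at most $\kappa'$ of them), and no shared negative is revealed. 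So $F(S^{(\kappa')})\le F(\emptyset)+K=O(\kappa)$. Taking the maximum over $\kappa'$ keeps this bound.

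Comparing, $O(\kappa)<\lfloor\kappa^2/2\rfloor$ for $\kappa$ large enough, which gives the strict separation. The main obstacle is verifying these first-step and subsequent inequalities, namely that
\[
\lfloor\kappa^2/2\rfloor/\bigl((\kappa+1)(\kappa+2)\bigr)<\tfrac12,
\]
and checking tie-breaking; I would fix $\kappa$ large enough, say $\kappa\ge 6$, to make all these inequalities strict.
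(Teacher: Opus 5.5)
Your proof is correct and follows the paper's route. The paper likewise reuses the instance of Example~\ref{ex:revealonlynegs_notsub}, asserts that both greedy methods are misled, and says the $K$-step lookahead, being brute force, recovers the optimum of revealing all $\kappa+1$ shared negatives. Your write-up is more careful than the paper's one-line citation in two respects: you observe that unrestricted classic greedy picks group-1 positives (the Example's own analysis, and its ratio $3/\sqrt{2n}$, concern the negative-only run), and you bound every budget split of the heuristic greedy. This gives the $\Theta(\kappa^2)$ versus $O(\kappa)$ separation, which is all the statement needs.
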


\begin{proof}
    In Example~\ref{ex:revealonlynegs_notsub}, both the heuristic and the classic greedy algorithms are misled into selecting a suboptimal sequence of targets, yielding a solution with approximation ratio strictly smaller than \(\frac{3}{\sqrt{2n}}\). In contrast, the \(d\)-step lookahead greedy algorithm with \(d = K\) anticipates these unfavorable choices and avoids suboptimal trajectories, thereby recovering the optimal solution, though at a substantially higher computational cost.
\end{proof}

\begin{proposition}
\label{prop:hBgi}
There exists a graph and budget $K$ for which both the classic greedy and interactive heuristic greedy algorithms find an exact solution, whereas heuristic greedy doesn't.
\end{proposition}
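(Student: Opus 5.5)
The plan is to exhibit an explicit small instance. The heuristic greedy algorithm (Algorithm~\ref{alg:heuristic_greedy_lbreveal}) runs its positive-only and negative-only greedy calls \emph{independently}, so the weakness to exploit is \emph{redundancy}. We want one "hub" group of agents that is fixed equally well by its positive neighbor or by its negative neighbor. Both independent runs will then spend budget on this same group, and for every split $\kappa$ the union $S_+^{(\kappa)}\cup S_-^{(K-\kappa)}$ wastes one reveal. The classic and interactive variants evaluate marginal gains relative to the current revealed set, so they avoid this waste.

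Concretely, take $K=3$ and three agent groups.
\begin{itemize}
\item \textbf{Group $A$:} $a$ agents, each adjacent to a common positive $t_1^+$ and a common negative $t_1^-$. Revealing either target raises each agent from $1/2$ to $1$, so both give gain $a/2$, and they are fully redundant with each other.
\item \textbf{Group $B$:} $b$ agents, each adjacent to a common positive $t_2^+$ and to several private negatives. Revealing $t_2^+$ gives a large gain, while any single negative reveal gives only a tiny gain. So $B$ is essentially helped only by a positive.
\item \textbf{Group $C$:} $c$ agents, each adjacent to its own private positive and to a common negative $t_3^-$. Revealing $t_3^-$ gives gain $c/2$, while any single positive gives only $1/2$. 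So $C$ is essentially helped only by a negative.
\end{itemize}
We choose the weights so that $a/2$ exceeds both the gain from $t_2^+$ and $c/2$, and so that those two exceed every remaining single marginal gain. The optimum is then $\{t_1^{\pm},t_2^+,t_3^-\}$, which fixes all three groups.

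The key steps, in order, are as follows.
\begin{enumerate}
\item Compute $F(\emptyset)$ and verify the optimum by a short case check. Any $3$-set must spend one reveal on $A$, one on $t_2^+$, and one on $t_3^-$ to collect all three large gains.
\item Trace the classic greedy algorithm. It first picks $t_1^+$ or $t_1^-$ (either is fine). Its partner then has zero marginal gain, so greedy next picks $t_2^+$ and $t_3^-$ in order of size, and it reaches the optimum.
\item Trace the interactive heuristic (Algorithm~\ref{alg:inter_heuristic_greedy_lbreveal}) at $\kappa=2$ in the positive-first case. Positive greedy returns $\{t_1^+,t_2^+\}$, and negative greedy initialized at $S'=\{t_1^+,t_2^+\}$ with budget $1$ returns $t_3^-$, because $t_1^-$ now has zero marginal gain. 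The maximum over all splits is therefore optimal.
\item Enumerate the heuristic greedy splits $\kappa\in\{0,1,2,3\}$.
\begin{itemize}
\item $\kappa=0$: negatives only, $\{t_1^-,t_3^-,\cdot\}$, which misses $B$.
\item $\kappa=1$: $\{t_1^+\}\cup\{t_1^-,t_3^-\}$, which is redundant on $A$ and misses $B$.
\item $\kappa=2$: $\{t_1^+,t_2^+\}\cup\{t_1^-\}$, which misses $C$ because the negative run, being independent, greedily takes $t_1^-$.
\item $\kappa=3$: positives only, which misses $C$.
\end{itemize}
Every split loses at least $\min(\text{gain of }B,\ c/2)$ minus tiny terms, so heuristic greedy is strictly suboptimal.
\end{enumerate}

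The main obstacle is step 4 together with tie-breaking. We must ensure that in each independent run the hub target of $A$ strictly beats the alternatives, and that the leftover "tiny" gains (single private negatives of $B$, single private positives of $C$) never compensate for the missed group. To handle this, first fix small numbers such as $a=6$, $b=4$, $c=6$, and give each $B$-agent two private negatives so that $t_2^+$ yields gain $4\cdot 2/3$. Then check the inequalities directly, adjusting the weights to make all comparisons strict so that no tie-breaking assumptions are needed.
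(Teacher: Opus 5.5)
Your proposal is correct. It proves the proposition by a different counterexample from the paper's, and the comparison is worth making.

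The paper's Example~\ref{ex:comp-it-hg-g} lists an ad hoc graph with ten agents and nine targets (Table~\ref{tab:itgraph}) and sets $K=3$. It then simply asserts two outcomes:
\begin{itemize}
\item classic greedy and interactive greedy both return $\{t_0^+,t_6^-,t_9^-\}$;
\item heuristic greedy's best split attains a ratio of about $0.96$.
\end{itemize}
No marginal gains are shown. Checking the claim also requires following a tie in classic greedy's second step: after $t_9^-$, both $t_0^+$ and $t_6^-$ gain $3/4$. Both branches happen to reach the optimum.

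Your three-group gadget makes the failure mechanism explicit: the two independent runs each spend a reveal on group $A$, which either label fixes. It also reduces verification to a few strict inequalities, namely $a/2>\max\{2b/3,c/2\}$ and $2b/3,\,c/2>1/2$. The only remaining tie, $t_1^+$ versus $t_1^-$ in classic greedy, is harmless by design. The same redundancy is implicitly what drives the paper's table: $x_9$ is fixed by both $t_1^+$ and $t_9^-$, and each independent run picks one of them first.

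There is one concrete slip. Your suggested values $a=c=6$ violate your own requirement $a/2>c/2$. With them, $t_1^-$ and $t_3^-$ tie in the negative-only run. At $\kappa=2$, a tie-break toward $t_3^-$ produces $\{t_1^+,t_2^+,t_3^-\}$, which is optimal, so the example fails under that tie-break.

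Taking, e.g., $a=8$, $b=4$, $c=6$ fixes this. The four heuristic splits then give gains $43/6$, $7$, $20/3$, $43/6$, against the optimum $29/3$. Every greedy trace you describe goes through with strict comparisons.
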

\begin{proof}
    Proof is in Example~\ref{ex:comp-it-hg-g} below.
    \begin{example}
    \label{ex:comp-it-hg-g}
        Consider the bipartite graph in Table~\ref{tab:itgraph}, with ten agents and nine targets. Five targets are positive, \(\{t_0^{+}, t_1^{+}, t_2^{+}, t_3^{+}, t_5^{+}\}\), and four are negative, \(\{t_6^{-}, t_7^{-}, t_8^{-}, t_9^{-}\}\). Each row lists an agent along with its neighborhood.
            
        For a budget of $K=3$, the classic greedy and interactive heuristic greedy algorithms both yield an optimal target set \(\{t_0^{+},t_6^{-},t_9^{-}\}\), where \(\{t_0^{+}\}\) is positive and \(\{t_6^{-}, t_9^{-}\}\) are negative. In contrast, the heuristic greedy algorithm selects only negative targets \(\{t_6^{-},t_7^{-},t_9^{-}\}\), achieving an approximation ratio of $0.96$.
    \end{example}
\end{proof}

\begin{table}[ht!]
    \centering
    \begin{tabular}{c|l}
        \hline
        agent & neighborhood \\
        \hline
        \(x_0\) & \(\{ t_8^{-}, t_9^{-} \}\) \\
        \(x_1\) & \(\{ t_3^{+}, t_6^{-}, t_9^{-} \}\) \\
        \(x_2\) & \(\{ t_0^{+}, t_6^{-}, t_7^{-}, t_8^{-}, t_9^{-} \}\) \\
        \(x_3\) & \(\{ t_5^{+}, t_9^{-} \}\) \\
        \(x_4\) & \(\{ t_7^{-} \}\) \\
        \(x_5\) & \(\{ t_6^{-}, t_7^{-}, t_9^{-} \}\) \\
        \(x_6\) & \(\{ t_9^{-} \}\) \\
        \(x_7\) & \(\{ t_1^{+}, t_2^{+}, t_6^{-}, t_7^{-} \}\) \\
        \(x_8\) & \(\{ t_7^{-}, t_8^{-} \}\) \\
        \(x_9\) & \(\{ t_1^{+}, t_9^{-} \}\) \\
        \hline
    \end{tabular}
    \caption{A bipartite graph where each agent (first column) is connected to its neighborhood of targets (second column) with labels initially unknown to the agents.}
    \label{tab:itgraph}
\end{table}

\section{Supplementary Material for Section~\ref{sec:intervention_model}}
\label{sec:app_interventionmodel_algos}

In this section, we present pre- and post-reveal intervention algorithms (i.e., Algorithms~\ref{alg:TI_greedy_label_reveal} and \ref{alg:greedy_TI_label_reveal}, respectively) along with their runtime analysis.

\begin{algorithm}[ht!]
\caption{Pre-reveal Targeted Intervention}
\label{alg:TI_greedy_label_reveal}
\begin{algorithmic}[1]
    \State \textbf{Input: } 
    Graph $\graph = (\Xs \cup \Ts, E)$, labels $\{f(t)\}_{t \in \Ts}$, reveal budget $K$, intervene budget $B$
    \State \textbf{Output: }  Solution set $S_{\mathrm{ig}}$, and pre-reveal intervention social welfare $F(S_{\mathrm{ig}})$
    \State $S_o \gets \emptyset$
    \For{each agent $x \in \mathcal{X}$}
        \State $ Q^{S_o}(x) \gets |\{t\in N(x): f(t)=1\}| / |N(x)|$
    \EndFor
    \State $B'\gets \mathbf{1}[\exists\,t\in\Ts:f(t)=1]\cdot\min \bigl(B, \ |\{x\in \Xs: \ Q^{S_{o}}(x)<1\}|\bigr)$
    \State $\displaystyle \mathcal{X}_{\text{hr}}\gets \operatorname*{arg\,min}_{\mathcal{X}' \subseteq \mathcal{X}\ | \ |\mathcal{X}'| = B'}  \sum_{x \in \mathcal{X}'} Q^{S_{o}}(x)$
    \State $\graph' = \graph[\mathcal{X} \setminus \mathcal{X}_{\text{hr}}]$
    \State $(S_{\mathrm{ig}}, F(S)) \gets \textsc{GreedyLabelReveal}(\graph', \Ts, f, K)$
    \Comment{run Algorithm~\ref{alg:greedy_lbreveal}}
    \State $\displaystyle F(S_{\mathrm{ig}}) \gets F(S) + B'$ 
    \State \Return $(S_{\mathrm{ig}}, F(S_{\mathrm{ig}}))$
    
\end{algorithmic}
\end{algorithm}

\begin{algorithm}[ht!]
\caption{Post-reveal Targeted  Intervention}
\label{alg:greedy_TI_label_reveal}
\begin{algorithmic}[1]
    \State \textbf{Input: } 
    Graph $\graph = (\Xs \cup \Ts, E)$, labels $\{f(t)\}_{t \in \Ts}$, reveal budget $K$, intervene budget $B$
    \State \textbf{Output: }  Solution set $S_{\mathrm{gi}}$, and post-reveal intervention social welfare $F(S_{\mathrm{gi}})$
    \State $(S_o, F(S_o)) \gets \textsc{GreedyLabelReveal}(\graph, \Ts, f, K)$ \Comment{run Algorithm~\ref{alg:greedy_lbreveal}}
    \For{each agent $x \in \mathcal{X}$}
        \State $
        Q^{S_o}(x)\gets
        \begin{cases}
        1, & \text{if }|\{t\in N(x)\cap S_o:f(t)=1\}|>0,\\
        \dfrac{|\{t\in N(x)\setminus S_o:f(t)=1\}|}{|\{t\in N(x)\setminus S_o\}|}, & \text{if }|N(x)\setminus S_o|>0,\\
        0, & \text{otherwise.}
        \end{cases}$
    \EndFor
    \State $B'\gets \mathbf{1}[\exists\,t\in\Ts:f(t)=1]\cdot\min \bigl(B, \ |\{x\in \Xs: \ Q^{S_{o}}(x)<1\}|\bigr)$
    \State $\displaystyle\mathcal{X}_{\text{hr}}\gets \operatorname*{arg\,min}_{\mathcal{X}' \subseteq \mathcal{X}\ | \ |\mathcal{X}'| = B'}  \sum_{x \in \mathcal{X}'}Q^{S_{o}}(x)$
    \State $F(S_{\mathrm{gi}}) \gets F(S_o) + \sum_{x \in  \mathcal{X}_{\text{hr}}} (1 -Q^{S_{o}}(x))$
    \State $S_{\mathrm{gi}} \gets S_o$
    \State \Return $(S_{\mathrm{gi}}, F(S_{\mathrm{gi}}))$
\end{algorithmic}
\end{algorithm}

\subsubsection*{Runtime analysis for the targeted intervention model algorithms}
Algorithms~\ref{alg:TI_greedy_label_reveal} and \ref{alg:greedy_TI_label_reveal} run in \(O(Kmn\delta)\) time, where \(\delta\) is the maximum agent degree, \(m\) the number of targets, and \(n\) the number of agents.

\begin{proposition}
Algorithms~\ref{alg:TI_greedy_label_reveal} and \ref{alg:greedy_TI_label_reveal} each runs in \(O(Kmn\delta)\) time.
\label{prop:opt_runtime_TI_greedy}
\end{proposition}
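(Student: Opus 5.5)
The plan is to bound the cost of each phase of Algorithms~\ref{alg:TI_greedy_label_reveal} and \ref{alg:greedy_TI_label_reveal} separately and then add the bounds, with Proposition~\ref{prop:opt_runtime_greedy} covering the greedy call. Throughout, $\delta$ denotes the maximum agent degree, as in that proposition.

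For the pre-reveal algorithm (Algorithm~\ref{alg:TI_greedy_label_reveal}), the phases cost as follows:
\begin{enumerate}
\item Computing $Q^{S_o}(x)$ with $S_o=\emptyset$ only requires counting the positive neighbors of each agent, which takes $O(n\delta)$ time.
\item Computing $B'$ requires checking whether any positive target exists, in $O(m)$, and counting the agents with $Q^{S_o}(x)<1$, in $O(n)$.
\item The set $\Xs_{\mathrm{hr}}$ minimizing $\sum_{x\in\Xs'}Q^{S_o}(x)$ subject to $|\Xs'|=B'$ is attained by the $B'$ agents with the smallest values, since the objective is additive. Sorting, or a selection algorithm, gives this in $O(n\log n)$ or $O(n)$.
\item Building the induced subgraph $\graph'$ takes $O(n\delta+m)$.
\item Running Algorithm~\ref{alg:greedy_lbreveal} on $\graph'$, which has at most $n$ agents and $m$ targets, costs $O(Kmn\delta)$ by Proposition~\ref{prop:opt_runtime_greedy}.
\item The final addition costs $O(1)$.
\end{enumerate}

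The post-reveal algorithm (Algorithm~\ref{alg:greedy_TI_label_reveal}) has the same phases in a different order. The greedy call comes first and costs $O(Kmn\delta)$. Recomputing $Q^{S_o}(x)$ for every agent then takes $O(n\delta)$, provided membership in $S_o$ can be tested in $O(1)$; a boolean array over targets, built in $O(m)$, suffices. The selection of $\Xs_{\mathrm{hr}}$ and the summation over it cost $O(n\log n)$.

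The main point needing care is that the non-greedy overhead is dominated by $O(Kmn\delta)$. The terms $n\delta$, $m$ and $n$ are clearly absorbed for $K\ge1$ and $m,\delta\ge1$. The $n\log n$ sorting term is the only potential issue, and there are two ways to handle it.
\begin{itemize}
\item Use linear-time selection of the $B'$ smallest values, which costs $O(n)$.
\item Observe that with at most $\delta$ neighbors each, $Q^{S_o}(x)$ takes only $O(\delta^2)$ distinct rational values, so a bucket-based selection also works.
\end{itemize}
Either way the overhead is $O(n\delta + m)$. The degenerate case $K=0$ can be treated by reading the bound as $O((K+1)mn\delta)$, or by assuming $K\ge1$ as the paper implicitly does. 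Summing the phases then gives $O(Kmn\delta)$ for each algorithm.
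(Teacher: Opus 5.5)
Your phase-by-phase accounting matches the paper's proof: the greedy call costs $O(Kmn\delta)$ by Proposition~\ref{prop:opt_runtime_greedy}, computing $Q^{S_o}(x)$ for every agent costs $O(n\delta)$, and finding the high-risk agents is a ranking step. Your handling of the ranking is an improvement: the paper sorts in $O(n\log n)$ and simply asserts $O(Kmn\delta+n\delta+n\log n)=O(Kmn\delta)$, which fails when, say, $K=m=\delta=1$, whereas your linear-time selection of the $B'$ smallest values makes that absorption valid.
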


\begin{proof}
Focusing on the costly procedures in Algorithms~\ref{alg:TI_greedy_label_reveal} and \ref{alg:greedy_TI_label_reveal}, each of them requires running the classic greedy algorithm sub module for at most \(n\) agents, taking \(O(Kmn\delta)\) time. To identify the high-risk agents, both algorithms first compute  \(Q^{S_o}(x)\) for every agent, which adds \(O(n\delta)\) to the computation time. Then ranking agents by \(Q^{S_o}(x)\) value, would take \(O(n \log n)\) time. Combining these terms results in a total running time of $O(Kmn\delta + n\delta + n\log n) = O(Kmn\delta).$
\end{proof}

\section{Supplementary Material for Section~\ref{sec:coveragemodel}}
\label{sec:app_coveragemodel_algos}

\subsubsection*{Overview of Algorithm~\ref{alg:radius_greedy_coverage}}  
The coverage radius algorithm (Algorithm~\ref{alg:radius_greedy_coverage}) proceeds as follows. Let $c \in \{0,1\}^{n}$ denote coverage of $n$ agents where $1$ means that the agent is covered or reached, and \(0\) otherwise. 
For each target $t_{i}$, compute the distance to the nearest uncovered agent relative to the target's current radius $r_{i}$ that is, 
$\displaystyle g_{i} = \min_{j: c_{j} = 0} \|t_i-x_j\|_2 - r_i.$ 
Then, increase the radius $r_k$ of the target  $t_k$ with the smallest cost by that cost $r_k = r_k + g_k,$ update the remaining radius budget by subtracting the cost of coverage, and mark corresponding agent(s) as covered. Continue until the radius budget is exhausted.

\subsubsection*{Time complexity analysis} 

\begin{proposition}
Algorithm~\ref{alg:radius_greedy_coverage} runs in \(O(mn(\log n + d))\) time.
\label{prop:opt_runtime_covmodel}
\end{proposition}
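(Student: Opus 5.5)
The statement is Proposition~\ref{prop:opt_runtime_covmodel}: Algorithm~\ref{alg:radius_greedy_coverage} runs in $O(mn(\log n + d))$ time. The plan is to charge the cost to two phases, a preprocessing phase and the main greedy loop, and bound each separately.

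\textbf{Preprocessing.} For every target $t_i$, compute the $n$ Euclidean distances $\|t_i - x_j\|_2$. Each distance costs $O(d)$, so this takes $O(mnd)$ time in total. Then sort each target's list of agents by distance, which costs $O(n\log n)$ per target and $O(mn\log n)$ overall. Together with $O(n)$ to initialize the coverage vector $c$ and $O(m)$ for the radii, preprocessing costs $O(mn(\log n + d))$.

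\textbf{Main loop.} Give each target a pointer into its sorted list that indicates its nearest still-uncovered agent. The gap $g_i$ is then the distance at the pointer minus $r_i$.
\begin{itemize}
\item \emph{Iteration count.} Each iteration covers at least one new agent; otherwise the budget is exhausted and the loop stops. So there are at most $n$ iterations.
\item \emph{Per-iteration work.} Selecting $k = \arg\min_i g_i$ by a linear scan costs $O(m)$. After raising $r_k$, mark newly covered agents by advancing $t_k$'s pointer past every agent within the new radius, and deduct the budget. Summed over all iterations, this is $O(n)$ work plus the cost of pointer advances.
\item \emph{Pointer advances.} Some agents covered via $t_k$ may sit at the pointer position of other targets. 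Lazily skip covered entries when reading $g_i$, checking $c_j$ in $O(1)$ each time. Each pointer moves monotonically through at most $n$ entries, so all pointer movement costs $O(mn)$ overall.
\end{itemize}
The main loop therefore costs $O(nm + mn) = O(mn)$.

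\textbf{Total.} Summing the two phases gives $O(mnd + mn\log n + mn) = O(mn(\log n + d))$.

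The main obstacle is the amortization argument for the lazy pointer skipping. It must not become $O(mn)$ per iteration, which would give $O(mn^2)$. This requires two facts: an agent, once covered, is never uncovered, and each target's pointer only moves forward. If the paper's pseudocode recomputes the minimum $g_i$ from scratch each iteration, I would instead note that the sorting-based implementation realizes the same choices, and bound that implementation.
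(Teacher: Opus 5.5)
Your proposal is correct and uses the same decomposition as the paper: $O(mnd)$ to build the distance matrix, $O(mn\log n)$ to sort each target's row, and $O(mn)$ for the while loop, for a total of $O(mn(\log n + d))$. The paper simply asserts the $O(mn)$ loop bound, and your amortization supplies the missing justification: at most $n$ iterations since each covers a new agent, $O(m)$ scanning per iteration, and $O(mn)$ total monotone pointer advances. Algorithm~\ref{alg:radius_greedy_coverage} already implements exactly the lazy-skipping pointer $\mathit{ptr}$ you describe, so your fallback caveat is unnecessary.
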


\begin{proof}
Algorithm~\ref{alg:radius_greedy_coverage} begins by computing all pairwise distances between the $m$ positive targets and $n$ agents, forming the matrix $D \in \mathbb{R}^{m\times n}$. This step requires $O(mnd)$ time. Given these distances, it then sorts, for each positive target, the distances to all $n$ agents, which costs $O(mn\log n)$ overall. The while loop contributes at most $O(mn)$. Combining all parts, the total running time is
$O(mnd) + O(mn\log n) + O(mn) = O\bigl(mn(d+\log n)\bigr).$
\end{proof}

\begin{algorithm}[ht!]
\caption{Greedy Coverage Radius}
\label{alg:radius_greedy_coverage}
\begin{algorithmic}[1]
    \State \textbf{Input:} Agents $\mathcal{X}\in\mathbb{R}^{n \times \rho}$, targets $\Ts^{+}\in\mathbb{R}^{m \times \rho}$, radius budget $R\in\mathbb{R}_{\geq 0}$
    \State \textbf{Output: } Radii $r$, Covered agents $c$
    
    \State $D \gets [\,d_{ij}\,]_{i,j} \in \mathbb{R}^{m\times n}$
    \State $r_i \gets 0, \  \forall i \in [m]$, \quad $c_j \gets 0, \ \forall j \in [n]$
    \State $R' \gets R$ \Comment{remaining R budget}
    \State $(\mathit{sorted\_dist}, \mathit{sorted\_idx}) \gets \textit{sort\_along\_rows}(D)$
    
    \State $\mathit{ptr} \gets \mathbf{0}_m$
    \While{$R' > 0$}
    
        \State $\mathit{cost}_{i} \gets +\infty, \ \forall i \in [m]$
    
        \For{$i = 1$ to $m$}
            \While{$\mathit{ptr}_{i} < n$ and $c_{\mathit{sorted\_idx}_i[\mathit{ptr}_i]}$}
                \State $\mathit{ptr}_{i} \gets \mathit{ptr}_{i} + 1$
            \EndWhile
            \If{$\mathit{ptr}_{i} < n$}
                \State $\mathit{cost}_{i} \gets \mathit{sorted\_dist}_{i}[\mathit{ptr}_{i}] - r_{i}$
            \EndIf
        \EndFor
        \State $\displaystyle t \gets \arg\min_{i}\mathit{cost}_{i}$
    
        \If{$\mathit{cost}_{t} = +\infty$ or $\mathit{cost}_{t} > R'$}
            \State \textbf{break}
        \EndIf
        \State $r_{t}\gets r_{t} + \mathit{cost}_{t}$
        \State $R' \gets R' - \mathit{cost}_{t}$
        \State $a \gets \mathit{sorted\_idx}_{t}[\mathit{ptr}_{t}]$
        \State $c_a \gets 1$
        \State $\mathit{ptr}_{t} \gets \mathit{ptr}_{t} + 1$
    \EndWhile
    
    \State \Return $(r, c)$
\end{algorithmic}
\end{algorithm}

\section{Proof of Theorem~\ref{thm:sc_bound}}\label{sec:app_learningsetting}

\begin{proof}
Let $x_1,\dots,x_n\sim D$ be set of agents drawn i.i.d from \(D\). For any fixed revealed target set 
$S \subseteq \Ts,$ we can compute $Q^S(x_i) \in [0,1]$ for each agent. 
As a result, the empirical social welfare is given as $\hat F(S)=\frac{1}{n}\sum_{i=1}^n Q^S(x_i).$
Consider that the revealed target set $S \subseteq \Ts$ has error at least $\varepsilon$ for distribution $D$. That is, for a fixed revealed target set $S$, $\big|\hat F(S)-F(S)\big|\le\varepsilon$ with probability at least 
$1-\delta$. With a target reveal budget of $K$, each of the subsets $S$ will be of size at most $K$, and therefore, there is atmost $m^K$ potential subsets.

By Hoeffding's inequality, the probability that the revealed target set will have social welfare off by more than $\varepsilon$ can be bounded as follows, $\Pr\bigl(|\hat F(S)-F(S)|\ge\varepsilon\bigr)\le 2\exp(-2n\varepsilon^2).$ By union bound over all the possible subsets $m^K,$ then $\Pr\bigl(\exists S\in\mathcal \Ts:\ |\hat F(S)-F(S)|\ge\varepsilon\bigr) \le 2m^{K}\exp(-2n\varepsilon^2).$ 
To ensure this probability is at most $\delta,$ it suffices that $2m^{K}\exp(-2n\varepsilon^2) \le \delta,$ which holds whenever $n \ge C\Bigl(\frac{1}{\varepsilon^2}\bigl(K\log m + \log\tfrac{1}{\delta}\bigr)\Bigr)$ for a suitable universal constant $C>0$.  
Under this condition, all revealed target sets $S$ of size at most $K$, including $S_g$, have empirical social welfare within $\varepsilon$ of their true value with probability at least $1-\delta$.
\end{proof}

\section{Supplementary Material for Section~\ref{sec:experiments}}\label{sec:app-exps}

All experimentation, including bipartite graph generation, algorithmic computations and comparative analytics were performed on a CPU-based system with the following specifications: a 2.6-GHz 6-Core Intel Core i7 processor, 16 GB of 2400-MHz DDR4 RAM, and an Intel UHD Graphics 630 GPU with 1536 MB of memory. 

\subsection{Experimental Setup}\label{sec:app-setupexps}

\subsubsection{Datasets}\label{subsec:app-datasets}
We utilized four datasets obtained from the UCI Machine Learning Repository. The first was the \textbf{Adult} (Adult Income) dataset \cite{BeckerBK}. From this, we selected the following features: \textit{age, workclass, fnlwgt, education, education-num, marital-status, occupation, relationship, race, sex, capital-gain, capital-loss, hours-per-week, native-country,} and \textit{income}. The target variable, ``\textit{target}'', was defined as \(1\) if the ``\textit{income}'' value was was greater than \(50k\); otherwise, it was \(0\). Afterward, we removed the ``\textit{income}'' variable from the data features. 

The second dataset was \textbf{Productivity} (Garment Worker Productivity) \cite{UCIImranAAR21,ImranAAR21}. During preprocessing, we first removed the variables ``\textit{date}'' and ``\textit{day}''. Next, missing values in the ``\textit{wip}'' column, the only feature with missing data, were imputed with zeros. Outliers in the incentive column were then eliminated. The target variable, ``\textit{target}'', was defined as a binary indicator: if the difference between ``\textit{actual\_productivity}'' and ``\textit{targeted\_productivity}'' was greater than or equal to zero, the target was set to \(1\); otherwise to \(0\). Finally, we excluded ``\textit{actual\_productivity}'' and ``\textit{targeted\_productivity}'' from data features.

The third and fourth datasets were derived from the Student Performance dataset \cite{CortezP08}, specifically the \textbf{Portuguese} (Student-por) and \textbf{Math} (Student-mat) performance subsets. For both datasets, we defined the target variable, ``\textit{pass}'', as \(1\) if the sum of the three grade variables (\textit{G1, G2, G3}) was greater than or equal to \(35\), and \(0\) otherwise. After defining the target, we removed the grade variables from the feature set.

\paragraph{Preparation of datasets for graph generation.}
For all datasets, we label-encoded categorical variables, removed duplicate rows, and, when necessary, applied subsampling to ensure a maximum of $500$ rows. Each dataset was then divided into data features \(\mathcal{X}_{\mathrm{orig}}\) and labels \(y\), after which the features were standardized and transformed. 

To prepare a given dataset for bipartite graph generation, the feature data was randomly partitioned, with  \(90\%\) of the samples assigned to the left-hand side (LHS), $\mathcal{X}_{\mathrm{LHS}}$  and \(10\%\) to the right-hand side (RHS) $\mathcal{X}_{\mathrm{RHS}}.$ 
Labels of the LHS and RHS samples were directly retrieved from \(y\). We then remove all positively labeled samples from the LHS and disregard labels for the remaining samples. We retain all the RHS samples and their labels in the experiments.

\subsubsection{Statistics of the Generated Geometric Bipartite Graphs}
\label{subsec:app-graphstats}
For each generated bipartite graph, we report the following statistics: the dataset name (name), number of data features ($\rho$), maximum number of nearest targets in an agent's neighborhood ($k_{\max}$) or threshold for distance between targets and agents in an agents' neighborhood ($\ell$), number of agents ($n$), number of targets positive ($m^+$) and negative ($m^-$) targets, average agent degree (avg.LHS), number of agents with all-positive neighborhoods (only+Ns), all-negative neighborhoods (only-Ns), and empty-neighborhoods (emptyNs), and lastly, the number of positive targets connected to all helpable agents ($\textit{uni}^{+}$). 

For all experiments under the standard and targeted intervention models, the statistical properties of the bipartite graphs remain as described above. In the learning setting, each bipartite graph is split into training and testing sets. The training set contains \(70\%\) of the agents along with their associated edges, while the remaining \(30\%\) form the testing set. The targets and their labels are kept constant across both sets.  For experiments under the coverage radius model, as described above, each dataset was first split into \(\mathcal{X}_{\mathrm{LHS}}\), \(\mathcal{X}_{\mathrm{RHS}}\), and \(y_{\mathrm{RHS}}\). Then, only positive tar from the RHS were selected and initially assigned a radius of zero, so that no edges exist at the start.

\begin{table}[b!]
\centering
\footnotesize
\setlength{\tabcolsep}{3pt}
\renewcommand{\arraystretch}{0.75}
\setlength{\aboverulesep}{0pt}
\setlength{\belowrulesep}{0pt}
\setlength{\textfloatsep}{6pt}
\caption{Statistics of bipartite graphs generated from the \textbf{Adult} ($\rho=14$) dataset. For all graphs, $n=328$.}
\label{tab:adult_kmax_r_stats}
\begin{tabular}{lrrrrrrr}
\addlinespace[0.055cm]
\toprule
\addlinespace[0.055cm]
Param & value & $(m^-, m^+)$ & avg.LHS & only+Ns & only-Ns & emptyNs & $\textit{uni}^{+}$ \\
\addlinespace[0.055cm]
\midrule
\addlinespace[0.055cm]
$k_{\max}$ & 1  & (36,10) & 1.0  & 70 & 258 & 0 & 0 \\
 & 2  & (37,12) & 2.0  & 15 & 184 & 0 & 0 \\
 & 3  & (37,12) & 3.0  & 4  & 121 & 0 & 0 \\
 & 4  & (37,12) & 4.0  & 2  & 86  & 0 & 0 \\
 & 5  & (37,12) & 5.0  & 1  & 61  & 0 & 0 \\
 & 6  & (37,12) & 6.0  & 0  & 44  & 0 & 0 \\
 & 7  & (37,12) & 7.0  & 0  & 33  & 0 & 0 \\
 & 8  & (37,12) & 8.0  & 0  & 19  & 0 & 0 \\
 & 9  & (37,12) & 9.0  & 0  & 12  & 0 & 0 \\
 & 10 & (37,12) & 10.0 & 0  & 10  & 0 & 0 \\
\addlinespace[0.055cm]
\midrule
\addlinespace[0.055cm]
$\ell$ & 4.0  & (36,12) & 12.70 & 6 & 51 & 35 & 0 \\
 & 4.5  & (37,12) & 19.55 & 2 & 33 & 21 & 0 \\
 & 5.0  & (37,12) & 27.00 & 2 & 20 & 9  & 0 \\
 & 5.5  & (37,12) & 33.47 & 0 & 9  & 6  & 0 \\
 & 6.0  & (37,12) & 38.95 & 0 & 6  & 2  & 0 \\
 & 6.5  & (37,12) & 42.83 & 1 & 4  & 0  & 0 \\
 & 7.0  & (37,12) & 45.70 & 0 & 2  & 0  & 0 \\
 & 7.5  & (37,12) & 47.35 & 0 & 0  & 0  & 0 \\
 & 8.0  & (37,12) & 48.26 & 0 & 0  & 0  & 2 \\
 & 8.5  & (37,12) & 48.70 & 0 & 0  & 0  & 6 \\
 & 9.0  & (37,12) & 48.87 & 0 & 0  & 0  & 8 \\
 & 9.5  & (37,12) & 48.96 & 0 & 0  & 0  & 10 \\
 & 10.0 & (37,12) & 48.99 & 0 & 0  & 0  & 12 \\
\bottomrule
\end{tabular}
\end{table}

\begin{table}[b!]
\centering
\footnotesize
\setlength{\tabcolsep}{3pt}
\renewcommand{\arraystretch}{0.75}
\setlength{\aboverulesep}{0pt}
\setlength{\belowrulesep}{0pt}
\setlength{\textfloatsep}{6pt}
\caption{Statistics of bipartite graphs generated from the \textbf{Math} ($\rho=30$) dataset. For all graphs, $n=206$.}
\label{tab:math_kmax_r_stats}
\begin{tabular}{lrrrrrrr}
\addlinespace[0.055cm]
\toprule
\addlinespace[0.055cm]
Param & value & $(m^-, m^+)$ & avg.LHS & only+Ns & only-Ns & emptyNs & $\textit{uni}^{+}$ \\
\addlinespace[0.055cm]
\midrule
\addlinespace[0.055cm]
$k_{\max}$ & 1  & (19, 16) & 1.0  & 106 & 100 & 0 & 0 \\
 & 2  & (21, 17) & 2.0  & 60  & 52  & 0 & 0 \\
 & 3  & (22, 17) & 3.0  & 21  & 30  & 0 & 0 \\
 & 4  & (22, 17) & 4.0  & 13  & 14  & 0 & 0 \\
 & 5  & (22, 17) & 5.0  & 5   & 8   & 0 & 0 \\
 & 6  & (22, 17) & 6.0  & 5   & 6   & 0 & 0 \\
 & 7  & (22, 17) & 7.0  & 1   & 2   & 0 & 0 \\
 & 8  & (22, 17) & 8.0  & 0   & 1   & 0 & 0 \\
 & 9  & (22, 17) & 9.0  & 0   & 0   & 0 & 0 \\
 & 10 & (22, 17) & 10.0 & 0   & 0   & 0 & 0 \\
\addlinespace[0.055cm]
\midrule
\addlinespace[0.055cm]
$\ell$ & 4.0  & (4, 5)   & 0.09  & 10  & 5   & 190 & 1 \\
 & 4.5  & (9, 7)   & 0.21  & 14  & 13  & 174 & 0 \\
 & 5.0  & (12, 12) & 0.58  & 25  & 19  & 145 & 0 \\
 & 5.5  & (17, 15) & 1.71  & 24  & 34  & 97  & 0 \\
 & 6.0  & (21, 17) & 3.63  & 21  & 27  & 61  & 0 \\
 & 6.5  & (22, 17) & 7.09  & 8   & 22  & 33  & 0 \\
 & 7.0  & (22, 17) & 11.81 & 8   & 23  & 9   & 0 \\
 & 7.5  & (22, 17) & 17.55 & 2   & 12  & 3   & 0 \\
 & 8.0  & (22, 17) & 23.30 & 0   & 8   & 1   & 0 \\
 & 8.5  & (22, 17) & 28.29 & 1   & 3   & 0   & 0 \\
 & 9.0  & (22, 17) & 32.38 & 1   & 2   & 0   & 0 \\
 & 9.5  & (22, 17) & 35.23 & 0   & 1   & 0   & 0 \\
 & 10.0 & (23, 17) & 36.98 & 0   & 1   & 0   & 2 \\
\bottomrule
\end{tabular}
\end{table}

\begin{table}[b!]
\centering
\footnotesize
\setlength{\tabcolsep}{3pt}
\renewcommand{\arraystretch}{0.75}
\setlength{\aboverulesep}{0pt}
\setlength{\belowrulesep}{0pt}
\setlength{\textfloatsep}{6pt}
\caption{Statistics of bipartite graphs generated from the \textbf{Portuguese} ($\rho=30$) dataset. For all graphs, $n=224$.}
\label{tab:portuguese_kmax_r_stats}
\begin{tabular}{lrrrrrrr}
\addlinespace[0.055cm]
\toprule
\addlinespace[0.055cm]
Param & value & $(m^-, m^+)$ & avg.LHS & only+Ns & only-Ns & emptyNs & $\textit{uni}^{+}$ \\
\addlinespace[0.055cm]
\midrule
\addlinespace[0.055cm]
$k_{\max}$ & 1 & (26, 20) & 1.0 & 109 & 115 & 0 & 0 \\
 & 2 & (28, 20) & 2.0 & 70 & 74 & 0 & 0 \\
 & 3 & (28, 20) & 3.0 & 38 & 36 & 0 & 0 \\
 & 4 & (28, 21) & 4.0 & 26 & 22 & 0 & 0 \\
 & 5 & (28, 22) & 5.0 & 17 & 13 & 0 & 0 \\
 & 6 & (28, 22) & 6.0 & 12 & 8  & 0 & 0 \\
 & 7 & (28, 22) & 7.0 & 3  & 6  & 0 & 0 \\
 & 8 & (28, 22) & 8.0 & 3  & 2  & 0 & 0 \\
 & 9 & (28, 22) & 9.0 & 1  & 1  & 0 & 0 \\
 & 10 & (28, 22) & 10.0 & 1 & 0 & 0 & 0 \\
\addlinespace[0.055cm]
\midrule
\addlinespace[0.055cm]
$\ell$ & 4.0 & (1, 5)   & 0.04  & 4   & 0   & 219 & 2 \\
 & 4.5 & (9, 13)  & 0.21  & 18  & 6   & 196 & 0 \\
 & 5.0 & (14, 18) & 0.67  & 32  & 8   & 169 & 0 \\
 & 5.5 & (19, 20) & 1.50  & 30  & 28  & 129 & 0 \\
 & 6.0 & (24, 20) & 3.43  & 26  & 40  & 77  & 0 \\
 & 6.5 & (27, 21) & 7.12  & 15  & 29  & 41  & 0 \\
 & 7.0 & (28, 21) & 12.69 & 6   & 22  & 20  & 0 \\
 & 7.5 & (28, 22) & 19.76 & 4   & 16  & 7   & 0 \\
 & 8.0 & (28, 22) & 27.27 & 1   & 7   & 3   & 0 \\
 & 8.5 & (28, 22) & 34.06 & 0   & 3   & 1   & 0 \\
 & 9.0 & (28, 22) & 39.95 & 0   & 2   & 0   & 0 \\
 & 9.5 & (28, 22) & 44.29 & 0   & 0   & 0   & 0 \\
 & 10.0 & (28, 22) & 47.06 & 0  & 0   & 0   & 1 \\
\bottomrule
\end{tabular}
\end{table}

\begin{table}[b!]
\centering
\footnotesize
\setlength{\tabcolsep}{3pt}
\renewcommand{\arraystretch}{0.75}
\setlength{\aboverulesep}{0pt}
\setlength{\belowrulesep}{0pt}
\setlength{\textfloatsep}{6pt}
\caption{Statistics of bipartite graphs generated from the \textbf{Productivity} ($\rho=11$) dataset. For all graphs, $n=112$.}
\label{tab:productivity_kmax_r_stats}
\begin{tabular}{lrrrrrrr}
\addlinespace[0.055cm]
\toprule
\addlinespace[0.055cm]
Param & value & $(m^-, m^+)$ & avg.LHS & only+Ns & only-Ns & emptyNs & $\textit{uni}^{+}$ \\
\addlinespace[0.055cm]
\midrule
\addlinespace[0.055cm]
$k_{\max}$ & 1  & (8, 24)  & 1.0  & 77 & 35 & 0 & 0 \\
 & 2  & (11, 33) & 2.0  & 62 & 11 & 0 & 0 \\
 & 3  & (11, 34) & 3.0  & 52 & 0  & 0 & 0 \\
 & 4  & (11, 35) & 4.0  & 28 & 0  & 0 & 0 \\
 & 5  & (11, 37) & 5.0  & 16 & 0  & 0 & 0 \\
 & 6  & (11, 37) & 6.0  & 9  & 0  & 0 & 0 \\
 & 7  & (11, 37) & 7.0  & 4  & 0  & 0 & 0 \\
 & 8  & (11, 37) & 8.0  & 3  & 0  & 0 & 0 \\
 & 9  & (11, 37) & 9.0  & 3  & 0  & 0 & 0 \\
 & 10 & (11, 37) & 10.0 & 2  & 0  & 0 & 0 \\
\addlinespace[0.055cm]
\midrule
\addlinespace[0.055cm]
$\ell$ & 4.0  & (11, 39) & 20.33 & 0 & 1 & 7 & 0 \\
 & 4.5  & (11, 39) & 25.60 & 0 & 0 & 6 & 0 \\
 & 5.0  & (11, 39) & 32.12 & 0 & 0 & 6 & 0 \\
 & 5.5  & (11, 39) & 38.80 & 0 & 0 & 6 & 0 \\
 & 6.0  & (11, 39) & 44.24 & 0 & 0 & 5 & 1 \\
 & 6.5  & (11, 39) & 46.70 & 0 & 0 & 5 & 13 \\
 & 7.0  & (11, 39) & 47.49 & 0 & 0 & 5 & 25 \\
 & 7.5  & (11, 39) & 47.73 & 0 & 0 & 5 & 35 \\
 & 8.0  & (11, 39) & 47.77 & 0 & 0 & 5 & 39 \\
 & 8.5  & (11, 39) & 47.84 & 0 & 0 & 4 & 5 \\
 & 9.0  & (11, 39) & 48.05 & 0 & 0 & 4 & 25 \\
 & 9.5  & (11, 39) & 48.26 & 0 & 0 & 2 & 0 \\
 & 10.0 & (11, 39) & 48.71 & 0 & 0 & 2 & 16 \\
\bottomrule
\end{tabular}
\end{table}

\subsubsection{The Algorithms and Parameters Used}
\label{subsec:app-algosexp}
\textbf{The standard model.} For each bipartite graph and every budget \(K \in \{1,5\}\), we computed the social welfare returned by classic greedy \(F(S_{\mathrm{g}})\), the heuristic greedy \(F(S_{\mathrm{hg}})\), random \(F(S_{\mathrm{r}})\) (i.e., \(K\) targets are chosen uniformly at random), and random heuristic \(F(S_{\mathrm{hr}})\)  (i.e., modifies lines 4 and 5 of Algorithm~\ref{alg:heuristic_greedy_lbreveal} to use random algorithm instead).
For reference, we also compute social welfare with no budget constraints \(F(S_{\mathrm{full}})\), zero budget \(F(S_{\mathrm{o}})\), and the welfare returned by bruteforce search (or \(K\)-step lookahead greedy algorithm) \(F(S^{\star})\).

Note that for experiments under the standard model, Algorithm~\ref{alg:greedy_lbreveal} is executed without restriction on information disclosure.

\paragraph{The target interventions model.}
For each bipartite graph, and for every target reveal budget \(K \in \{1, 5\}\) and targeted intervention budget 
\(B \in \{1, 3\}\), we compare the pre-reveal and post-reveal (Algorithms~\ref{alg:TI_greedy_label_reveal} and \ref{alg:greedy_TI_label_reveal}) intervention gains (Eqns.~\ref{eq:prereval-gains} and \ref{eq:postreval-gains}).

\paragraph{The coverage radius model.}
We computed the number of agents covered using Algorithm~\ref{alg:radius_greedy_coverage}, for each of the $4$ graphs and radius budget \(R \in \{4, 4.5, 5, 5.5, 6, 6.5, 7, 7.5, 8, 8.5, 9, 9.5, 10\}\).

\paragraph{The learning setting.} 
The learning algorithm proceeds in two main stages. First, we apply the budget ($K$) constrained greedy algorithm (Algorithm~\ref{alg:greedy_lbreveal}) to the training set graph, where the revealed target set is the learned hypothesis. Next, we assess the performance of this hypothesis on the testing set graph.

\subsubsection{Performance and Evaluation}
\label{subsec:app-perfeval}
\subsubsection*{Under the standard, targeted intervention, and coverage models}
In both the standard and targeted intervention models, we compare algorithms by the social welfare they produce. The coverage model, in contrast, measures the number of agents that fall within reach after expanding the coverage radius of a selected set of positive targets.

In the standard model, each algorithm is evaluated by the social welfare \(F(S)\) it achieves under different target reveal budgets \(K\).
In the targeted intervention model, we analyze the difference in pre- and post-reveal intervention gains at different target  reveal budgets \(K\) and intervention budgets \(B\). Below is a definition of the intervention gains.
\paragraph{Pre-reveal intervention gains:} These are computed as the difference in social welfare returned by the Algorithm~\ref{alg:TI_greedy_label_reveal} ($F(S_{\mathrm{ig}}$) and that from Algorithm~\ref{alg:greedy_lbreveal} 
($ F(S_{\mathrm{g}}$):
\begin{equation}
\label{eq:prereval-gains}
    \Delta_F(\mathrm{ig}, \mathrm{g}) = F(S_{\mathrm{ig}}) - F(S_{\mathrm{g}}).
\end{equation}
\paragraph{Post-reveal intervention gains:} These are computed as the difference in social welfare returned by the Algorithm \ref{alg:greedy_TI_label_reveal} ($F(S_{\mathrm{gi}}$) and that from Algorithm~\ref{alg:greedy_lbreveal} 
($ F(S_{\mathrm{g}}$):
\begin{equation}
\label{eq:postreval-gains}
    \Delta_F(\mathrm{gi}, \mathrm{g}) = F(S_{\mathrm{gi}}) - F(S_{\mathrm{g}}).
\end{equation}

\subsubsection*{Under the learning setting}
Let $\mathcal{X}_{tr}$ and $\mathcal{X}_{ts}$ denote the training and testing agent sets. Due to notation simplicity, all performance metrics introduced below are defined over the training set $\mathcal{X}_{tr}$, but the same apply to the testing set $\mathcal{X}_{ts}$.
For each agent $x \in \Xs_{tr}$, let $N(x) \subseteq \Ts$ denote the set of targets in its neighborhood and define 
$\delta_x^{+} = | \{t \in N(x) : f(t)=+1 \}|$ and  $\delta_x^{-} = |\{t \in N(x) : f(t)=-1\}|$ as the number of positive and negative targets in that neighborhood, respectively.
To evaluate performance, we consider three kinds of agent subsets.
\[
\mathcal{X}_{tr}^{(1)}
= \{x \in \mathcal{X}_{tr} : \delta_x^{+} > 0 \},
\quad
\mathcal{X}_{tr}^{(2)}
= \mathcal{X}_{tr},
\quad
\mathcal{X}_{tr}^{(3)}
= \{x \in \mathcal{X}_{tr} : \delta_x^{+} > 0 \ \text{and} \ \delta_x^{-} > 0 \}.
\]
The first agent set $\mathcal{X}_{tr}^{(1)}$ consists of agents with at least one positive target in their neighborhood, the second  $\mathcal{X}_{tr}^{(2)}$ includes all the agents $\mathcal{X}_{tr}$, and the third  $\mathcal{X}_{tr}^{(3)}$ includes only \textit{helpable agents} (those with both positive and negative target neighbors).
Let $S_{\mathrm{tr}} \subseteq \Ts$ with $|S_{\mathrm{tr}}| \le K$ be the target set revealed by the classic greedy algorithm when run on the train graph $\graph_{tr} = (\mathcal{X}_{tr} \cup \Ts, E)$ at a budget of $K$, and let $F(S_{\mathrm{tr}})$ denote the resulting social welfare. The performance measures are defined as
\[
\mathrm{Perf}_{i}
= \frac{F(S_{\mathrm{tr}})}{|\mathcal{X}_{tr}^{(i)}|} \times 100\%,
\ i \in \{1,2\},
\quad
\mathrm{Perf}_{3}
= \frac{F(S_{\mathrm{tr}}) - |\{x \in \mathcal{X}_{tr} : \delta_x^{+}\geq 1 \ \text{and} \ \delta_x^{-}=0\}|}{|\mathcal{X}_{tr}^{(3)}|} \times 100\%.
\]
A score of $100$ with respect to $\mathrm{Perf}_{1}$ indicates success on agents with at least one positive target neighbor, excluding agents with empty or all-negative neighborhoods. For $\mathrm{Perf}_{2}$, a score of $100$ indicates success on all helpable agents, including all sampled agents, and a score of $100$ in $\mathrm{Perf}_{3}$ indicates success on all helpable agents, excluding unhelpable ones. 
Theoretical results use $\mathrm{Perf}_{2}$, while empirical analysis considers them all.

\subsection{Empirical Results under the Standard Model}\label{sec:app-smresults}

For all single-group results under the standard model,  \(F(S_{\mathrm{full}})\) represents the social welfare without any budget constraints, while \(F(S_{\mathrm{o}})\) corresponds to the social welfare when the budget is zero, and no targets are revealed. 
Under budget constraints, \(F(S_{\mathrm{r}})\) denotes the social welfare obtained by randomly revealing targets, \(F(S_{\mathrm{g}})\) corresponds that of Algorithm~\ref{alg:greedy_lbreveal}, \(F(S_{\mathrm{hr}})\) and \(F(S_{\mathrm{hg}})\) represent the social welfare achieved by the heuristic random and heuristic greedy strategies, respectively, and \(F(S^{\mathrm{*}})\) denotes the optimal social welfare computed via bruteforce search.

\begin{figure}[b!]
\vspace{0.66cm}
\captionsetup[subfigure]{justification=Centering}
\begin{subfigure}[t]{0.24\textwidth}
\centering
    \includegraphics[width=\textwidth]{arxiv_figures/aplot_Adult_sr_sg_knn.pdf}
    \caption{\(k\)NN graphs: $F(S_{\mathrm{g}})$ vs. $F(S_{\mathrm{r}})$}
    \label{fig:app_adult_sr_sg_knn}
\end{subfigure}
\begin{subfigure}[t]{0.24\textwidth}
\centering
    \includegraphics[width=\linewidth]{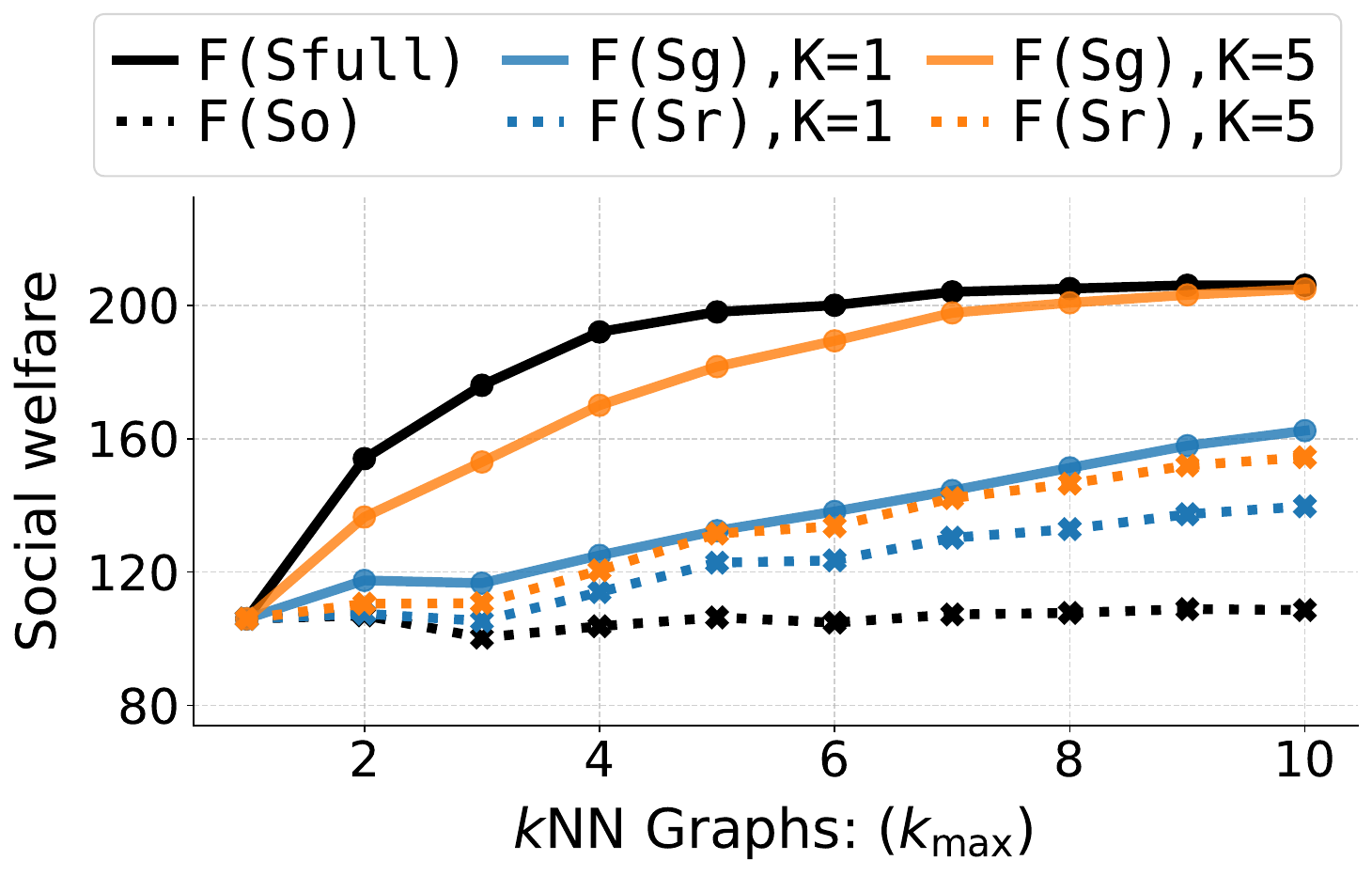}
    \caption{\(k\)NN graphs: $F(S_{\mathrm{g}})$ vs. $F(S_{\mathrm{r}})$}
    \label{fig:app_math_sr_sg_knn}    
\end{subfigure}
\begin{subfigure}[t]{0.24\textwidth}
\centering
    \includegraphics[width=\linewidth]{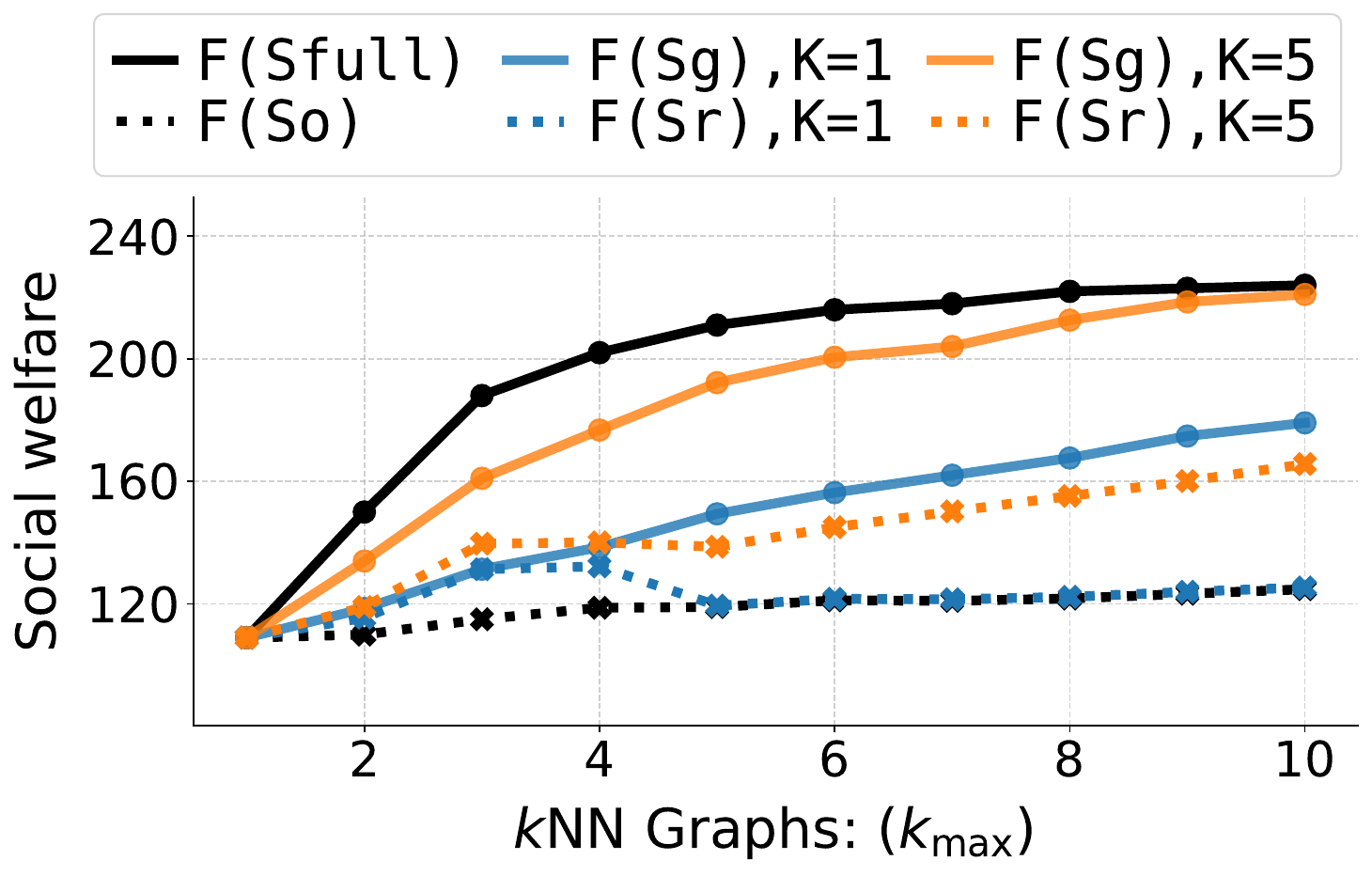}
    \caption{\(k\)NN graphs: $F(S_{\mathrm{g}})$ vs. $F(S_{\mathrm{r}})$}
    \label{fig:app_port_sr_sg_knn}
\end{subfigure}
\begin{subfigure}[t]{0.24\textwidth}
\centering
    \includegraphics[width=\linewidth]{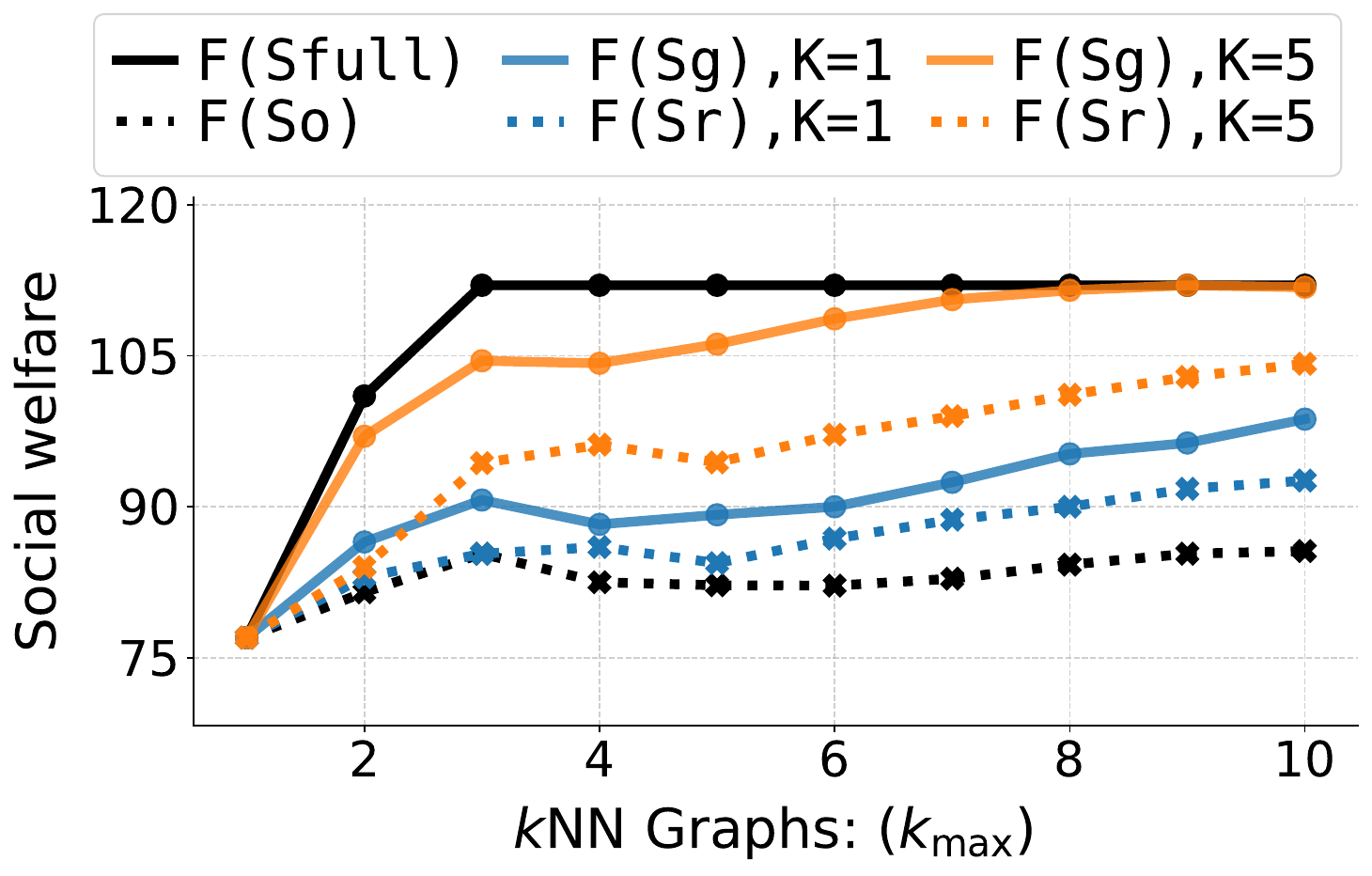}
    \caption{\(k\)NN graphs: $F(S_{\mathrm{g}})$ vs. $F(S_{\mathrm{r}})$}
    \label{fig:app_prod_sr_sg_knn}
\end{subfigure}\\[2ex]

\begin{subfigure}[t]{0.24\textwidth}
\centering
    \includegraphics[width=\textwidth]{arxiv_figures/aplot_Adult_sr_sg_thresh.pdf}
    \caption{Threshold graphs: $F(S_{\mathrm{g}})$ vs. $F(S_{\mathrm{r}})$}
    \label{fig:app_adult_sr_sg_thresh}
\end{subfigure}
\begin{subfigure}[t]{0.24\textwidth}
\centering
    \includegraphics[width=\linewidth]{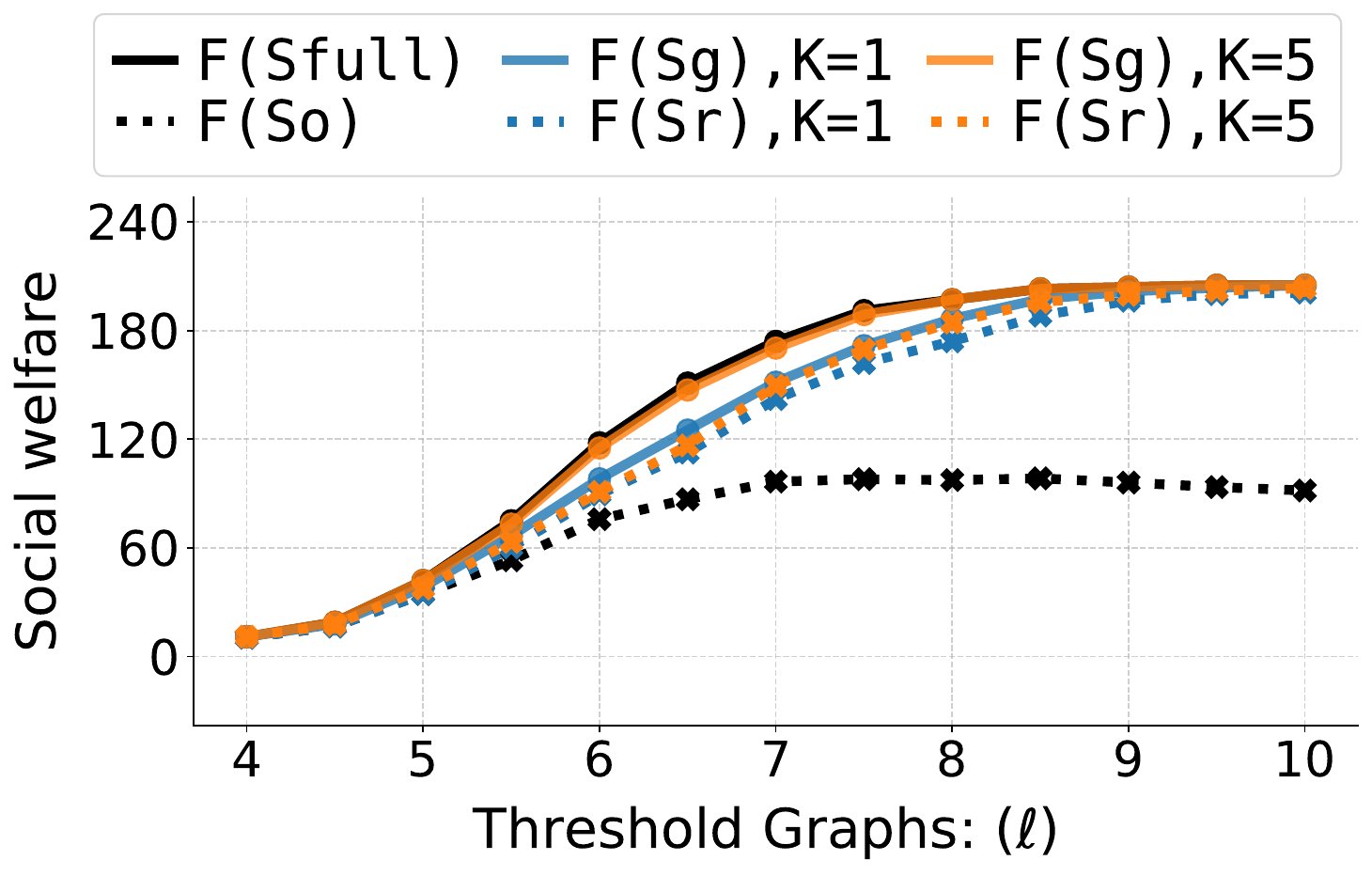}
    \caption{Threshold graphs: $F(S_{\mathrm{g}})$ vs. $F(S_{\mathrm{r}})$}
    \label{fig:app_math_sr_sg_thresh}
\end{subfigure}
\begin{subfigure}[t]{0.24\textwidth}
\centering
    \includegraphics[width=\linewidth]{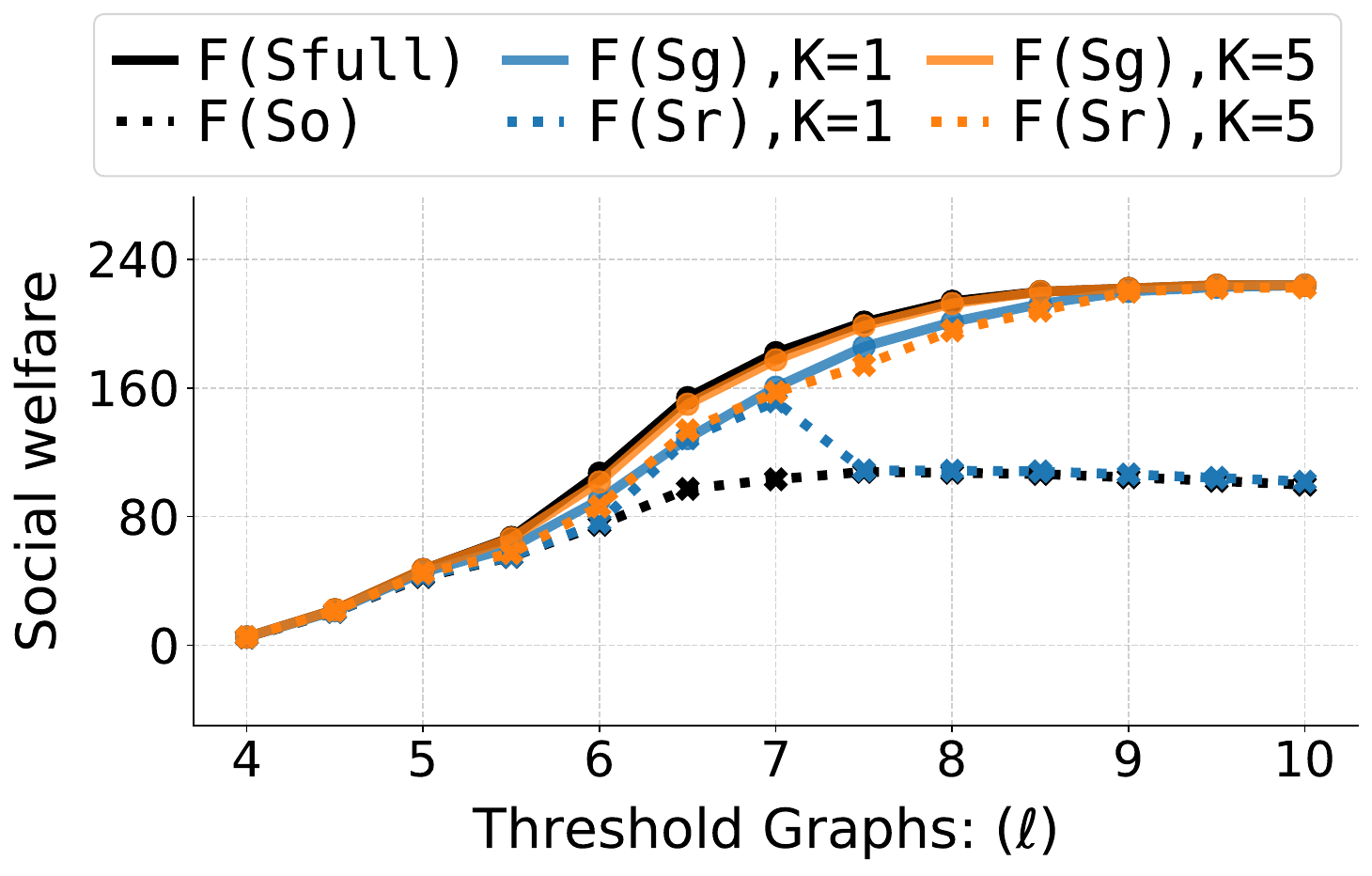}
    \caption{Threshold graphs: $F(S_{\mathrm{g}})$ vs. $F(S_{\mathrm{r}})$}
    \label{fig:app_port_sr_sg_thresh}
\end{subfigure}
\begin{subfigure}[t]{0.24\textwidth}
\centering
    \includegraphics[width=\linewidth]{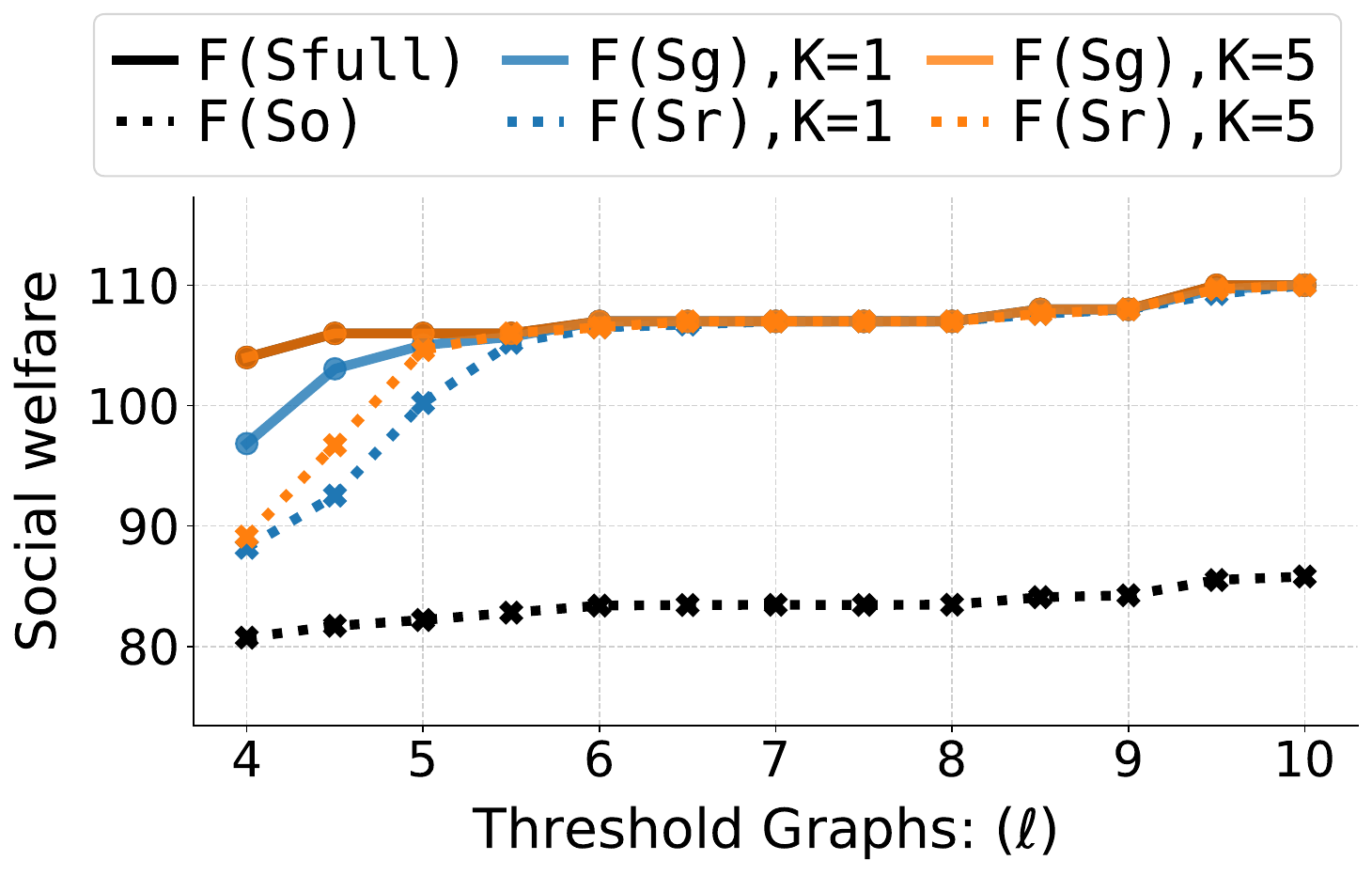}
    \caption{Threshold graphs: $F(S_{\mathrm{g}})$ vs. $F(S_{\mathrm{r}})$}
    \label{fig:app_prod_sr_sg_thresh}
\end{subfigure}
\caption{Comparative analysis of the social welfare generated from running the random \(F(S_{\mathrm{r}})\) and the classic greedy \(F(S_{\mathrm{g}})\) algorithms across \(k\)NN graphs (\subref{fig:app_adult_sr_sg_knn}--\subref{fig:app_prod_sr_sg_knn}) and the threshold graphs (\subref{fig:app_adult_sr_sg_thresh}--\subref{fig:app_prod_sr_sg_thresh}) from the \(4\) datasets (Tables~\ref{tab:adult_kmax_r_stats}--\ref{tab:productivity_kmax_r_stats}). 
Except on the Adult dataset, when the graphs are nearly complete, budgets \(K=\{1,5\}\) have a low effect and social welfare returned by both algorithms is almost equal (\subref{fig:app_math_sr_sg_thresh}--\subref{fig:app_prod_sr_sg_thresh}). In contrast, in sparser settings, at similar budget levels, \(F(S_{\mathrm{g}})\) is consistently higher than \(F(S_{\mathrm{r}})\) (\subref{fig:app_adult_sr_sg_knn}--\subref{fig:app_prod_sr_sg_knn}).}
\label{fig:app_sr_sg}
\begin{picture}(0,0)
    \put(9,300){{\parbox{4cm}{\centering \textbf{Adult}}}}
    \put(129,300){{\parbox{4cm}{\centering \textbf{Math}}}}
    \put(242,300){{\parbox{4cm}{\centering \textbf{Portuguese}}}}
    \put(359,300){{\parbox{4cm}{\centering \textbf{Productivity}}}}
\end{picture}
\end{figure}

\subsubsection*{General observations}
For all $k$NN generated graphs, when agents have atmost one target neighbor that is either positive or negative (Tables~\ref{tab:adult_kmax_r_stats}--\ref{tab:productivity_kmax_r_stats}, $k_{\max=1}$), 
revealing targets is unnecessary (Figures~\ref{fig:app_sr_sg}--\ref{fig:adult-port_sstar_sg_shg} subfigures (\textbf{a}--\textbf{d})). 

In sparse graphs, particularly when many agents have empty neighborhoods (Tables~\ref{tab:math_kmax_r_stats} and \ref{tab:portuguese_kmax_r_stats} where emptyNs $\ge 1$), overall social welfare remains close to zero regardless of the algorithm or budget 
(Figures~\ref{fig:app_math_sr_sg_thresh},\subref{fig:app_port_sr_sg_thresh} and 
\ref{fig:app_math_shg_shr_thresh},\subref{fig:app_port_shg_shr_thresh}).

As connectivity increases, especially in threshold-based graphs constructed with larger threshold values, resulting social welfare increases (e.g., Figures~\ref{fig:app_adult_sr_sg_thresh},\subref{fig:app_prod_sr_sg_thresh}) because of presence of positive target that are connected to all helpable agents (Tables~\ref{tab:adult_kmax_r_stats} and \ref{tab:productivity_kmax_r_stats}, $\ell\geq 8$ and $\textit{uni}^{+} \geq 1$).

\subsubsection*{Social welfare comparison: Algorithm~\ref{alg:greedy_lbreveal} vs. random selection ($F(S_{\mathrm{g}})$ vs. $F(S_{\mathrm{r}})$)}
In $k$NN generated graphs, the higher $k_{\max}$ is, the higher the social welfare achieved by the classic greedy algorithm, even at low budget levels (Figures~\ref{fig:app_adult_sr_sg_knn}--\subref{fig:app_prod_sr_sg_knn}). 
Although higher budgets generally lead to greater social welfare, particularly when using the classic greedy algorithm (Figures~\ref{fig:app_adult_sr_sg_knn}--\subref{fig:app_prod_sr_sg_knn}), and occasionally for the random algorithm as well (Figures~\ref{fig:app_math_sr_sg_knn},\subref{fig:app_prod_sr_sg_knn}), in some instances, such as in Figure~\ref{fig:app_adult_sr_sg_knn}, the budget appears to have little influence on the random algorithm's performance.
Overall, for the same budget \(K\), the Algorithm~\ref{alg:greedy_lbreveal} consistently attains social welfare that is at least that achieved by the random algorithm (Figures~\ref{fig:app_adult_sr_sg_knn}--\subref{fig:app_prod_sr_sg_knn}). But, when the random algorithm operates at a higher budget, it can occasionally result in higher social welfare than Algorithm~\ref{alg:greedy_lbreveal} at a lower budget 
(Figures~\ref{fig:app_port_sr_sg_knn}--\subref{fig:app_prod_sr_sg_knn}).
However, random selection generally performs poorly, and revealing additional targets often yields little to no increase in social welfare. E.g., see Figure~\ref{fig:app_adult_sr_sg_knn} for all values of \(k_{\max}\) and \(K\), and Figure~\ref{fig:app_port_sr_sg_knn} when \(k_{\max} \geq 5\) and \(K = 1\).

In threshold-based graphs, particularly those constructed with larger threshold values ($\ell$), the advantage of the greedy algorithm over the random baseline becomes more pronounced due to increased connectivity. Even with limited budgets, the greedy approach often attains near-maximal social welfare (Figures~\ref{fig:app_adult_sr_sg_thresh}--\subref{fig:app_prod_sr_sg_thresh}).
In contrast, the performance of the random algorithm varies substantially. In graphs where most targets are negative (Table~\ref{tab:adult_kmax_r_stats}), random selection yields very low social welfare (Figures~\ref{fig:app_adult_sr_sg_knn},\subref{fig:app_adult_sr_sg_thresh}). However, in graphs with a relatively large number of positive targets connected to nearly all helpable agents, where the probability of revealing one at random is higher (Tables~\ref{tab:math_kmax_r_stats}--\ref{tab:productivity_kmax_r_stats}), the random algorithm performs considerably better (Figures~\ref{fig:app_math_sr_sg_thresh}--\subref{fig:app_prod_sr_sg_thresh}).

\begin{figure}[ht!]
\vspace{0.66cm}
\captionsetup[subfigure]{justification=Centering}
\begin{subfigure}[t]{0.24\textwidth}
\centering
    \includegraphics[width=\textwidth]{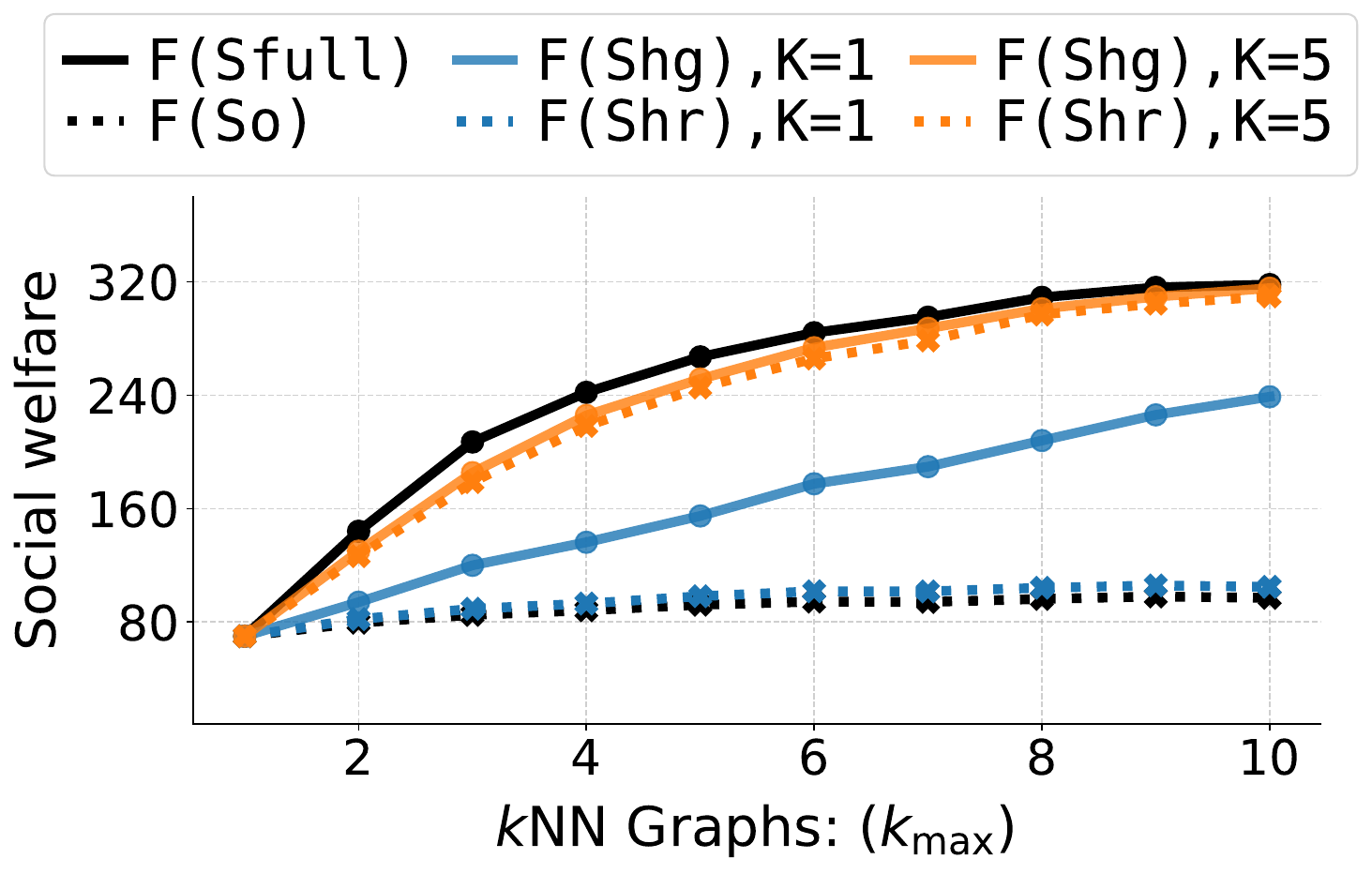}
    \caption{\(k\)NN graphs: \(F(S_{\mathrm{hg}})\) vs. \(F(S_{\mathrm{hr}})\)}
    \label{fig:app_adult_shg_shr_knn}
\end{subfigure}
\begin{subfigure}[t]{0.24\textwidth}
\centering
    \includegraphics[width=\linewidth]{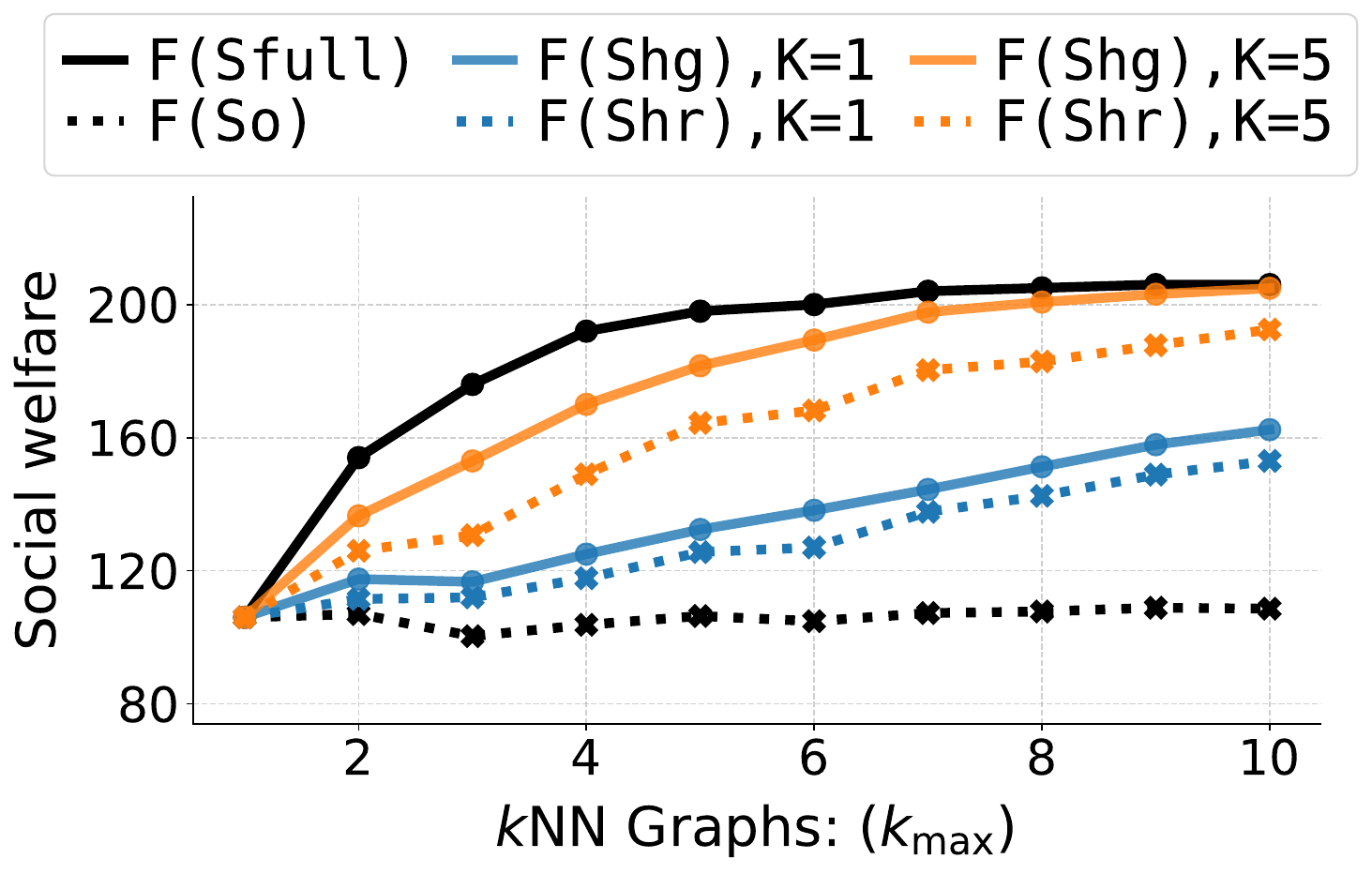}
    \caption{\(k\)NN graphs: \(F(S_{\mathrm{hg}})\) vs. \(F(S_{\mathrm{hr}})\)}
    \label{fig:app_math_shg_shr_knn}
\end{subfigure}
\begin{subfigure}[t]{0.24\textwidth}
\centering
    \includegraphics[width=\linewidth]{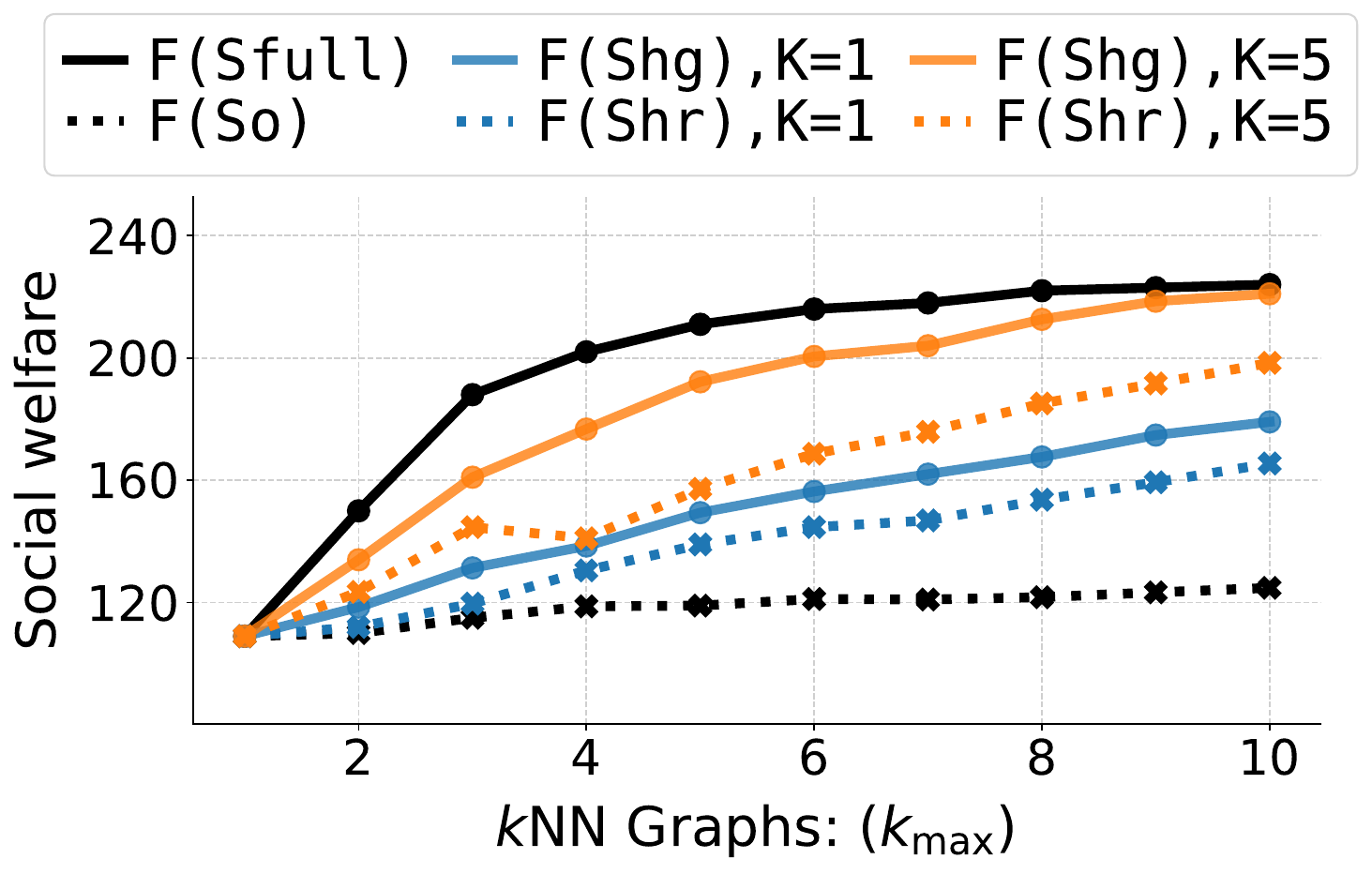}
    \caption{\(k\)NN graphs: \(F(S_{\mathrm{hg}})\) vs. \(F(S_{\mathrm{hr}})\)}
    \label{fig:app_port_shg_shr_knn}
\end{subfigure}
\begin{subfigure}[t]{0.24\textwidth}
\centering
    \includegraphics[width=\linewidth]{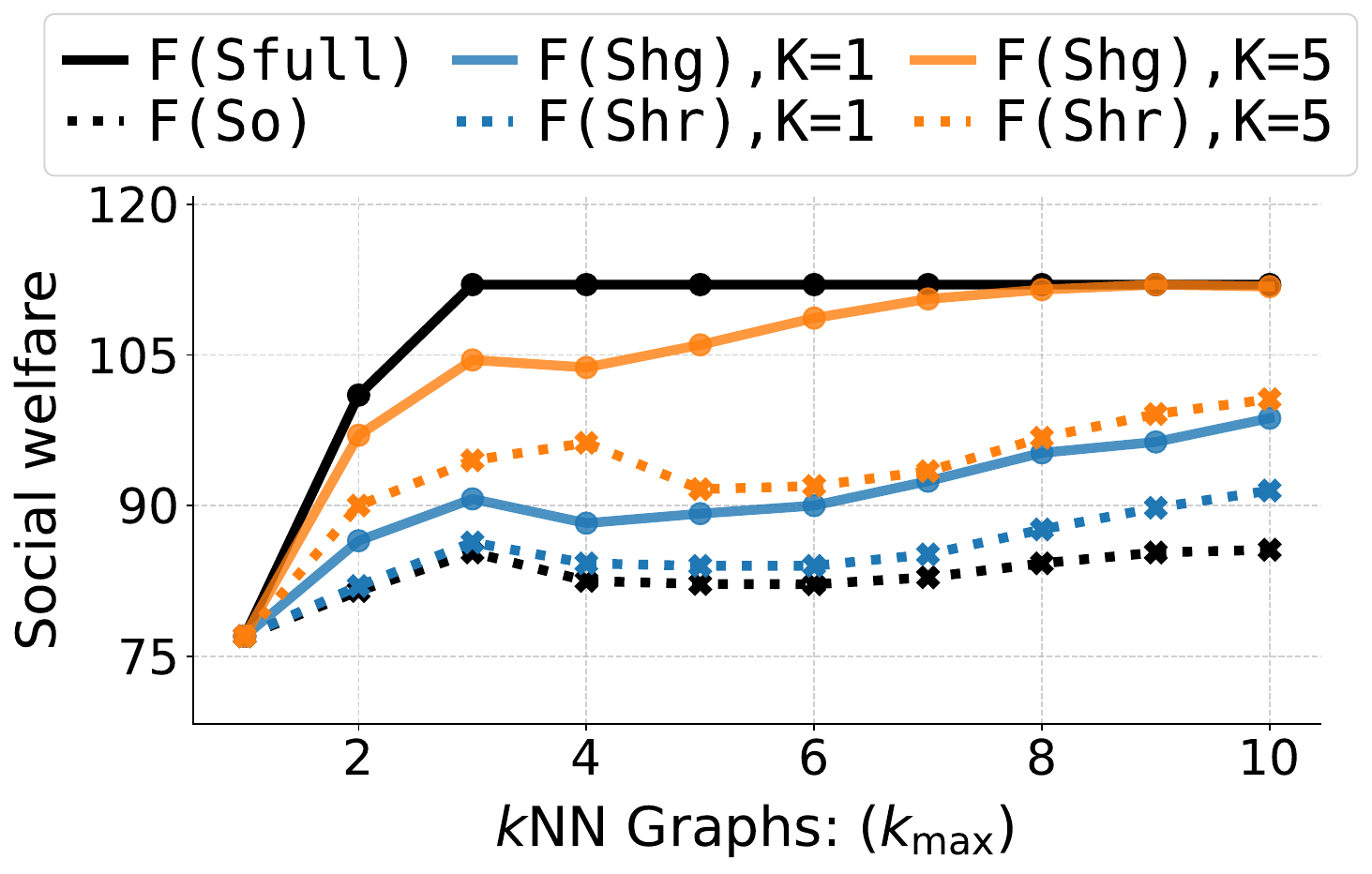}
    \caption{\(k\)NN graphs: \(F(S_{\mathrm{hg}})\) vs. \(F(S_{\mathrm{hr}})\)}
    \label{fig:app_prod_shg_shr_knn}
\end{subfigure}\\[2ex]

\begin{subfigure}[t]{0.24\textwidth}
\centering
    \includegraphics[width=\textwidth]{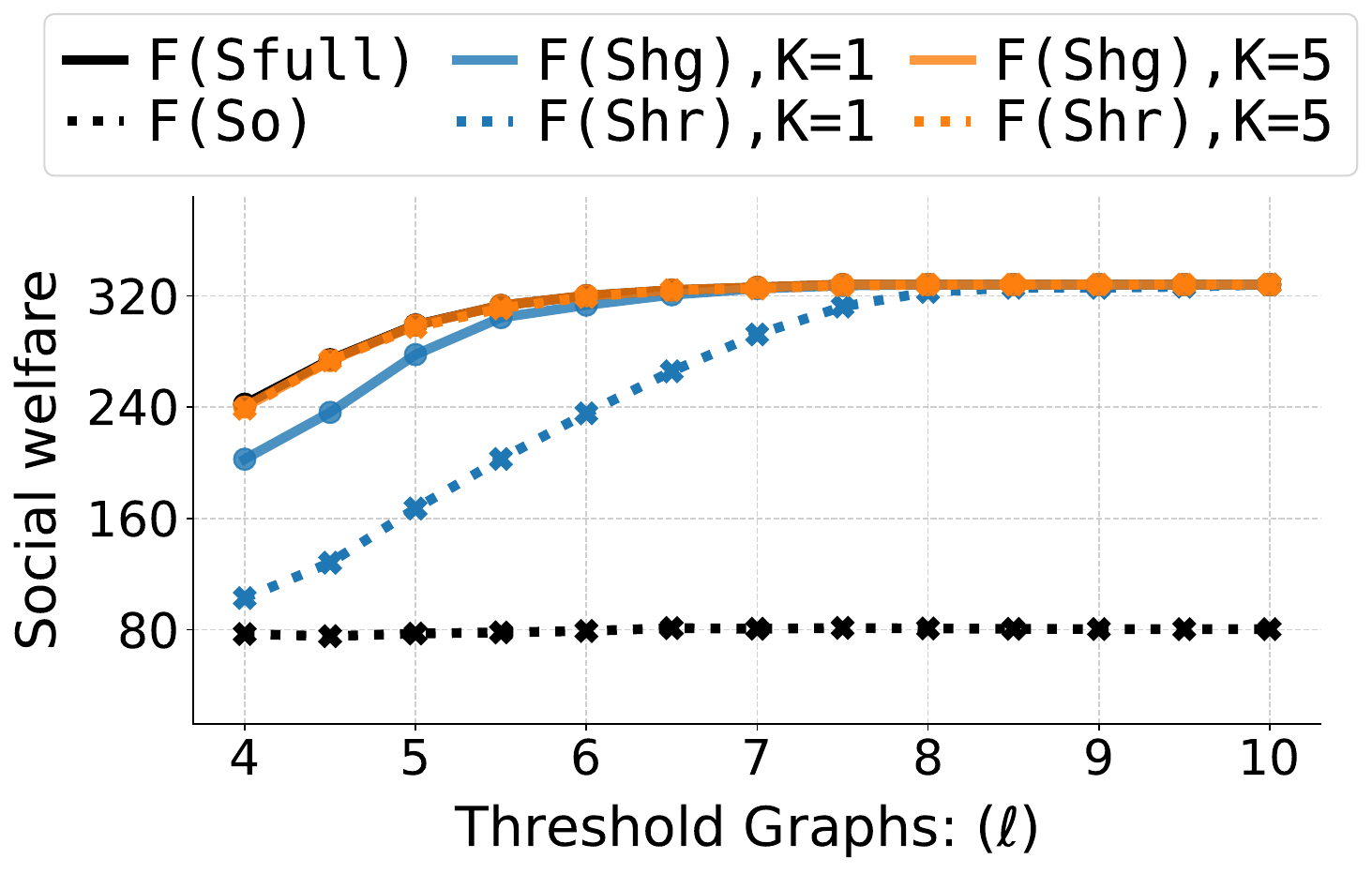}
    \caption{Threshold graphs: \(F(S_{\mathrm{hg}})\) vs. \(F(S_{\mathrm{hr}})\)}
    \label{fig:app_adult_shg_shr_thresh}
\end{subfigure}
\begin{subfigure}[t]{0.24\textwidth}
\centering
    \includegraphics[width=\linewidth]{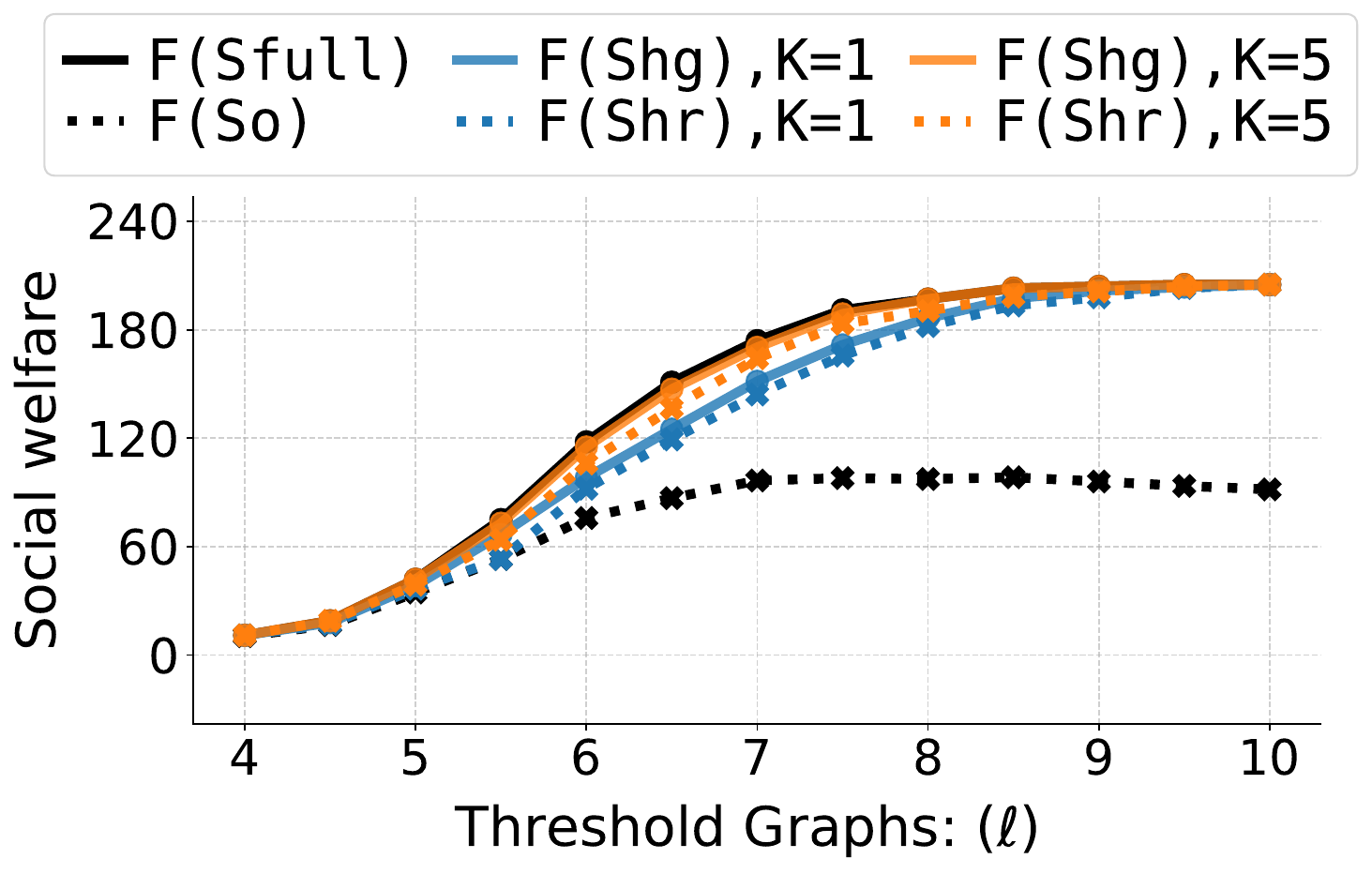}
    \caption{Threshold graphs: \(F(S_{\mathrm{hg}})\) vs. \(F(S_{\mathrm{hr}})\)}
    \label{fig:app_math_shg_shr_thresh}
\end{subfigure}
\begin{subfigure}[t]{0.24\textwidth}
\centering
    \includegraphics[width=\linewidth]{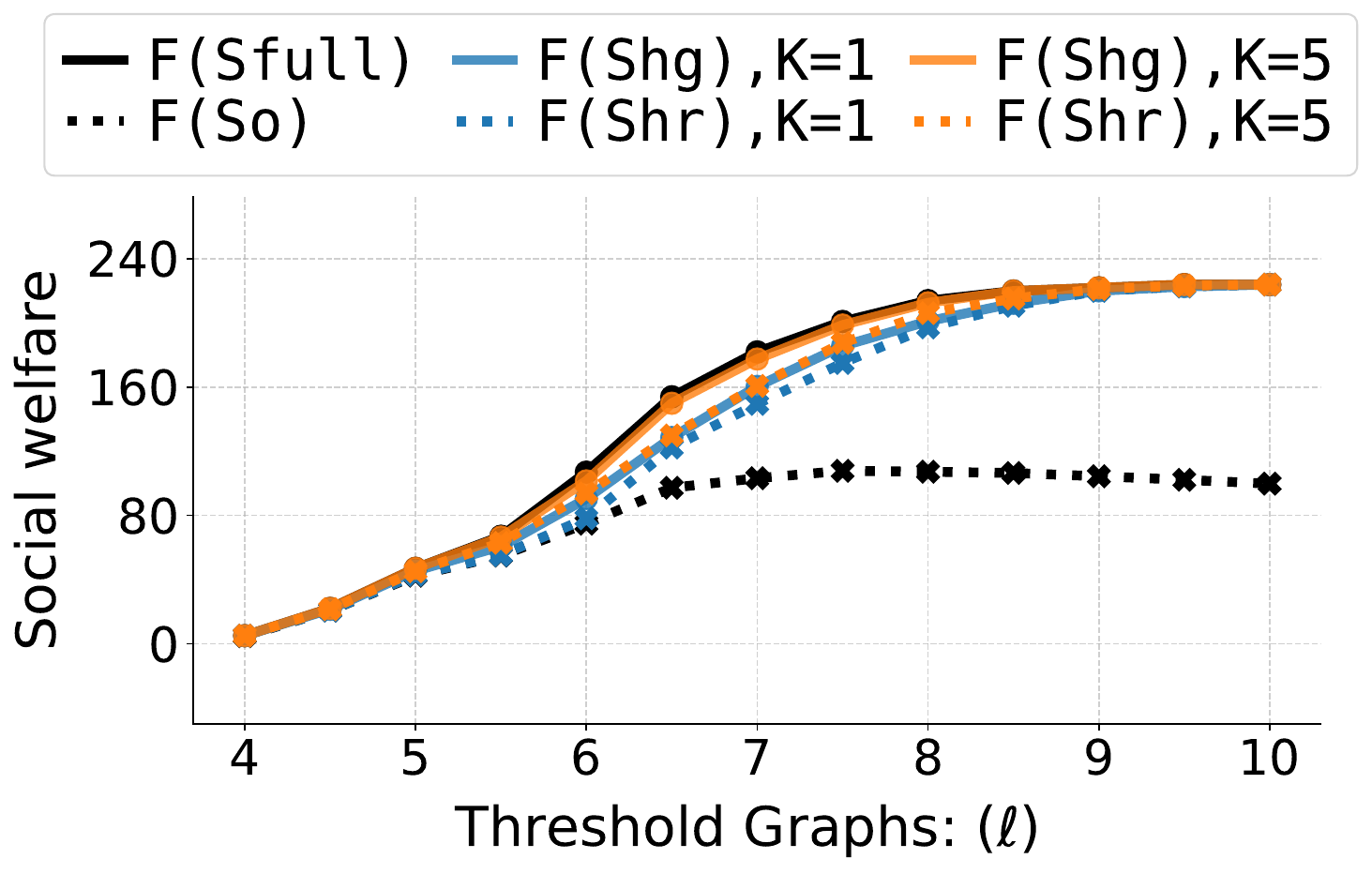}
    \caption{Threshold graphs: \(F(S_{\mathrm{hg}})\) vs. \(F(S_{\mathrm{hr}})\)}
    \label{fig:app_port_shg_shr_thresh}
\end{subfigure}
\begin{subfigure}[t]{0.24\textwidth}
\centering
    \includegraphics[width=\linewidth]{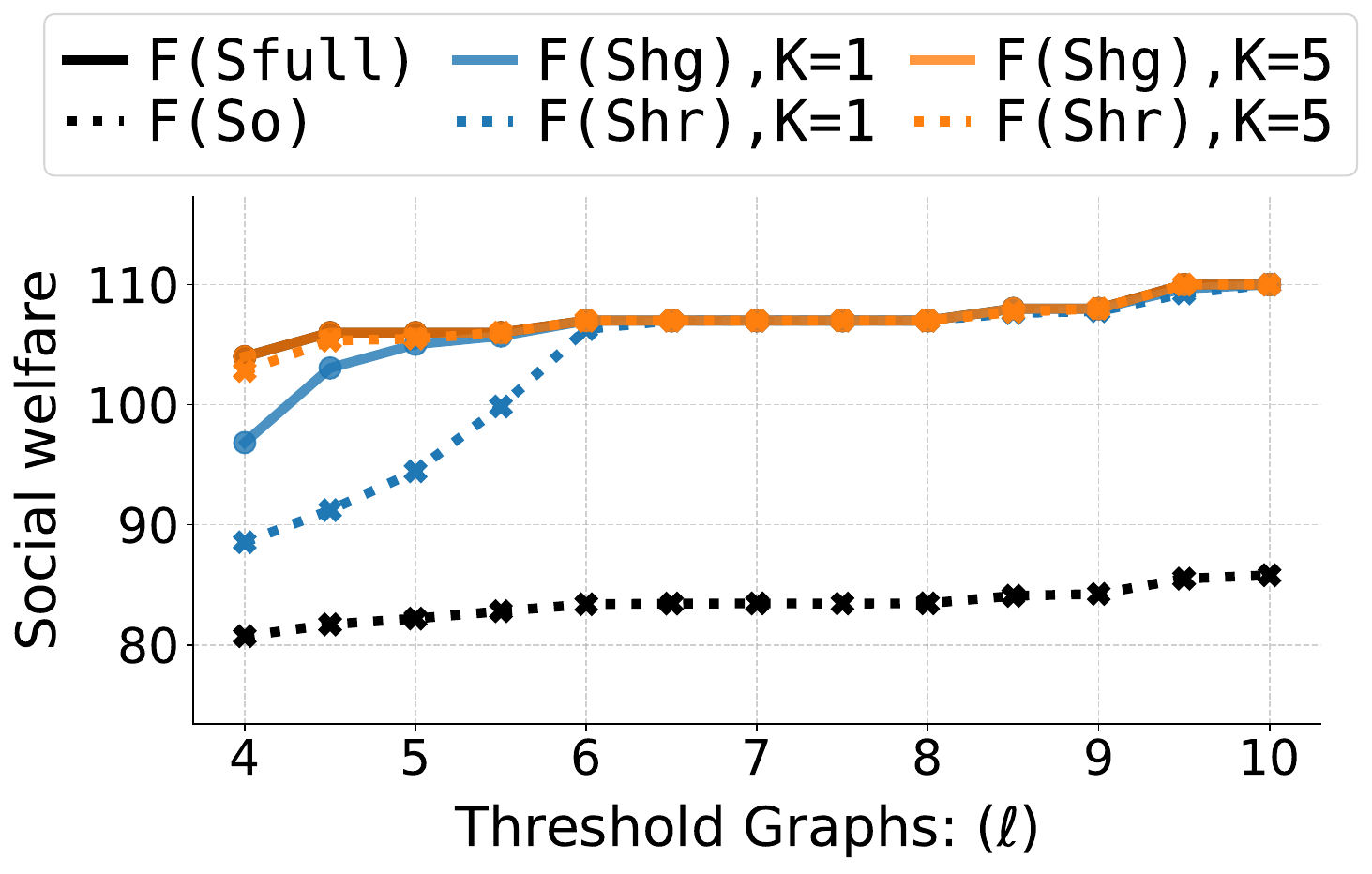}
    \caption{Threshold graphs: \(F(S_{\mathrm{hg}})\) vs. \(F(S_{\mathrm{hr}})\)}
    \label{fig:app_prod_shg_shr_thresh}
\end{subfigure}

\caption{Comparative analysis of the social welfare generated from running the heuristic random \(F(S_{\mathrm{hr}})\) and heuristic greedy \(F(S_{\mathrm{hg}})\) algorithms across \(k\)NN graphs (\subref{fig:app_adult_shg_shr_knn}--\subref{fig:app_prod_shg_shr_knn}) and the threshold graphs (\subref{fig:app_adult_shg_shr_thresh}--\subref{fig:app_prod_shg_shr_thresh}) from the \(4\) datasets (Tables~\ref{tab:adult_kmax_r_stats}--\ref{tab:productivity_kmax_r_stats}). 
In general, when the graphs are nearly complete, across budget levels \(K=\{1,5\}\), social welfare returned by both heuristic algorithms is almost equal (\subref{fig:app_adult_shg_shr_thresh}--\subref{fig:app_prod_shg_shr_thresh}). In sparser settings, at similar budget levels, \(F(S_{\mathrm{hg}})\) is consistently higher than \(F(S_{\mathrm{hr}})\) (\subref{fig:app_adult_shg_shr_knn}--\subref{fig:app_prod_shg_shr_knn}).}

\label{fig:app_shg_shr}
\begin{picture}(0,0)
    \put(9,303){{\parbox{4cm}{\centering \textbf{Adult}}}}
    \put(125,303){{\parbox{4cm}{\centering \textbf{Math}}}}
    \put(242,303){{\parbox{4cm}{\centering \textbf{Portuguese}}}}
    \put(359,303){{\parbox{4cm}{\centering \textbf{Productivity}}}}
\end{picture}
\end{figure}

\subsubsection*{Social welfare comparison: heuristic greedy vs. heuristic random ($F(S_{\mathrm{hg}})$ vs. $F(S_{\mathrm{hr}})$)}
Under low connectivity, particularly in graphs generated using the $k$NN method, the heuristic random algorithm still achieves a high social welfare but in some cases remains significantly below the heuristic greedy algorithm at a similar budget level 
(Figures~\ref{fig:app_adult_shg_shr_knn}--\subref{fig:app_prod_shg_shr_knn}). 
In settings where random algorithm produced low social welfare (Figures~\ref{fig:app_adult_sr_sg_knn},\subref{fig:app_port_sr_sg_knn}), the heuristic random algorithm performs markedly better (Figure~\ref{fig:app_adult_shg_shr_knn},\subref{fig:app_port_shg_shr_knn}). 
This improvement arises because the selection becomes localized, and randomly choosing among positive targets is more effective than selecting from all targets.
Performance further improves with higher connectivity: the social welfare returned by the heuristic random and heuristic greedy become nearly identical, even with graphs generated with a low threshold value (Figures~\ref{fig:app_math_shg_shr_thresh}--\subref{fig:app_prod_shg_shr_thresh}).

\subsubsection*{Social welfare comparison: bruteforce search vs. Algorithm~\ref{alg:greedy_lbreveal} vs. heuristic greedy ($F(S^\star)$ vs. $F(S_{\mathrm{g}})$ vs. $F(S_{\mathrm{hg}})$)}
Across bipartite graphs constructed via thresholding or $ k$NN, bruteforce search, Algorithm~\ref{alg:greedy_lbreveal}, and heuristic greedy achieve comparable social welfare across all budgets and datasets (Figure~\ref{fig:adult-port_sstar_sg_shg}). This similarity arises because the optimal target sets mainly consist of positive targets, leading all algorithms to converge on nearly identical solutions at similar budgets. These findings indicate that greedy approaches may still perform well in practice when there are no information disclosure restrictions, likely because graphs based on real-world tend to be well connected and balanced.

\begin{figure}[ht!]
\centering
\captionsetup[subfigure]{justification=centering}
\begin{minipage}{0.48\textwidth}
\centering
\textbf{Adult Dataset}
\end{minipage}
\hfill
\begin{minipage}{0.48\textwidth}
\centering
\textbf{Productivity Dataset}
\end{minipage}
\vspace{0.33em}
\textit{kNN Graphs}\par\medskip
\begin{subfigure}[t]{0.24\textwidth}
    \centering
    \includegraphics[width=\linewidth]{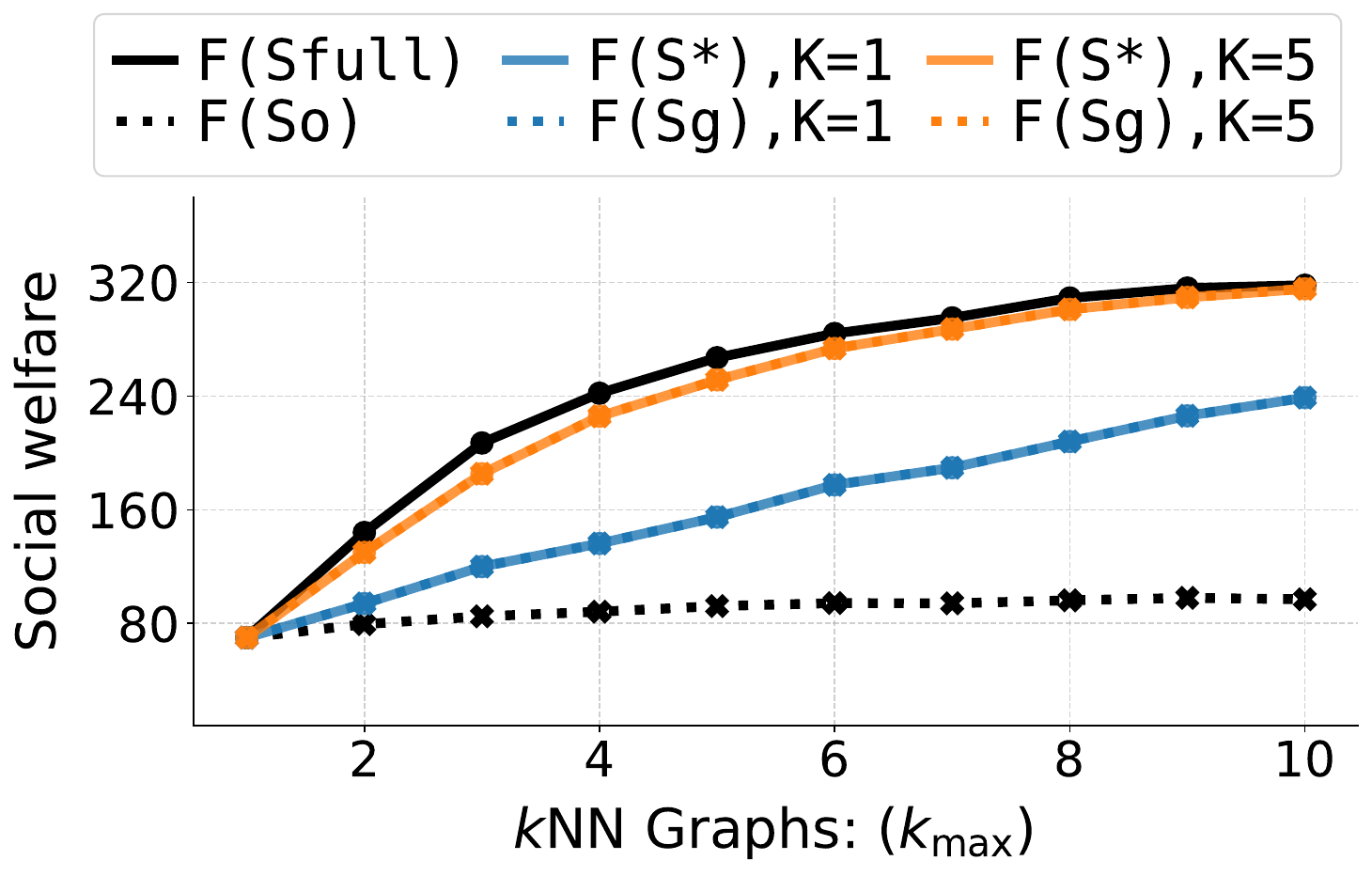}
    \caption{$F(S^{\star})$ vs.\ $F(S_{\mathrm{g}})$}
    \label{fig:app_adult_star_sg_knn}
\end{subfigure}
\begin{subfigure}[t]{0.24\textwidth}
    \centering
    \includegraphics[width=\linewidth]{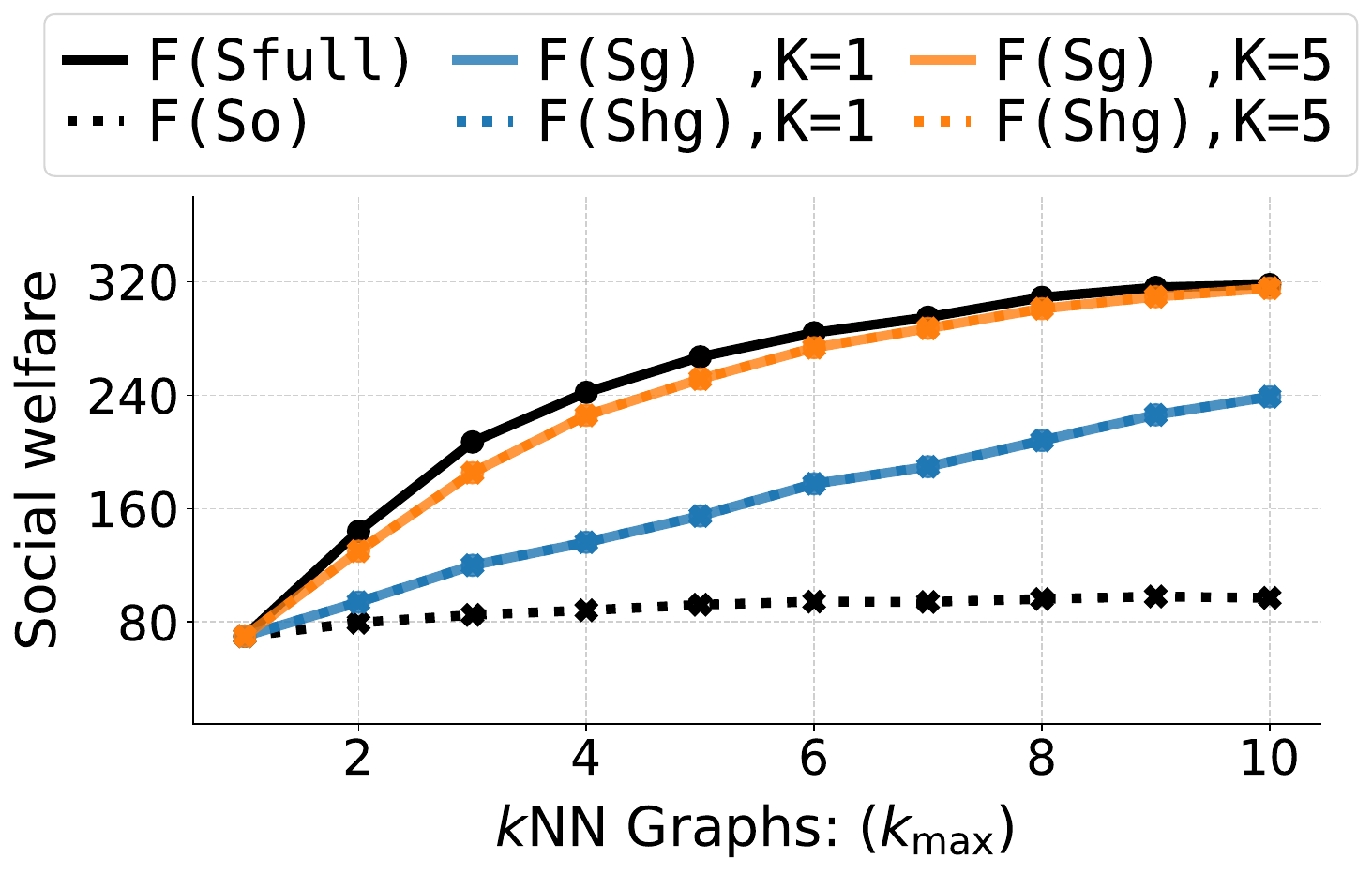}
    \caption{$F(S_{\mathrm{g}})$ vs.\ $F(S_{\mathrm{hg}})$}
    \label{fig:app_adult_sg_shg_knn}
\end{subfigure}
\hfill
\begin{subfigure}[t]{0.24\textwidth}
    \centering
    \includegraphics[width=\linewidth]{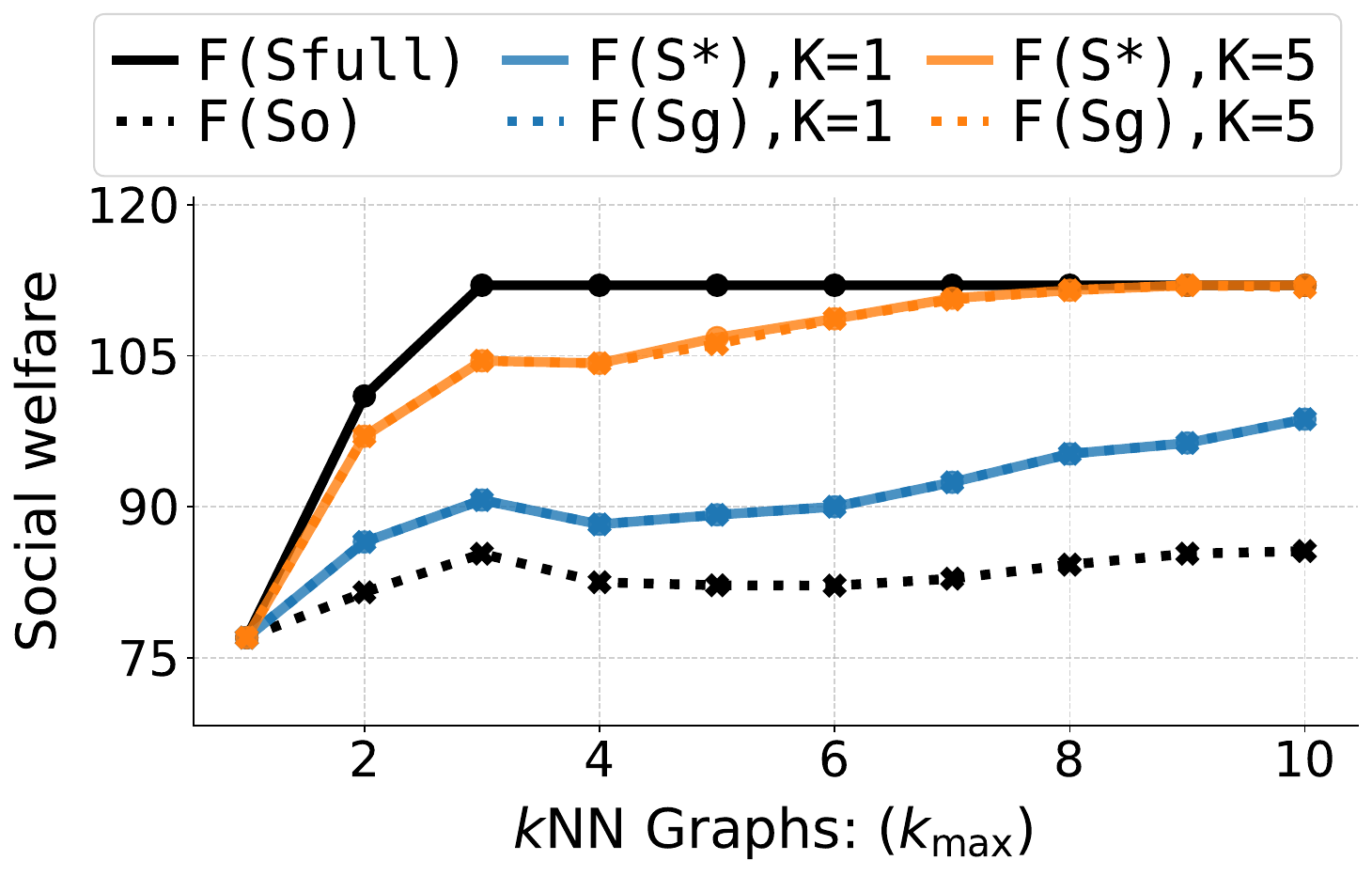}
    \caption{$F(S^{\star})$ vs.\ $F(S_{\mathrm{g}})$}
    \label{fig:app_prod_star_sg_knn}
\end{subfigure}
\begin{subfigure}[t]{0.24\textwidth}
    \centering
    \includegraphics[width=\linewidth]{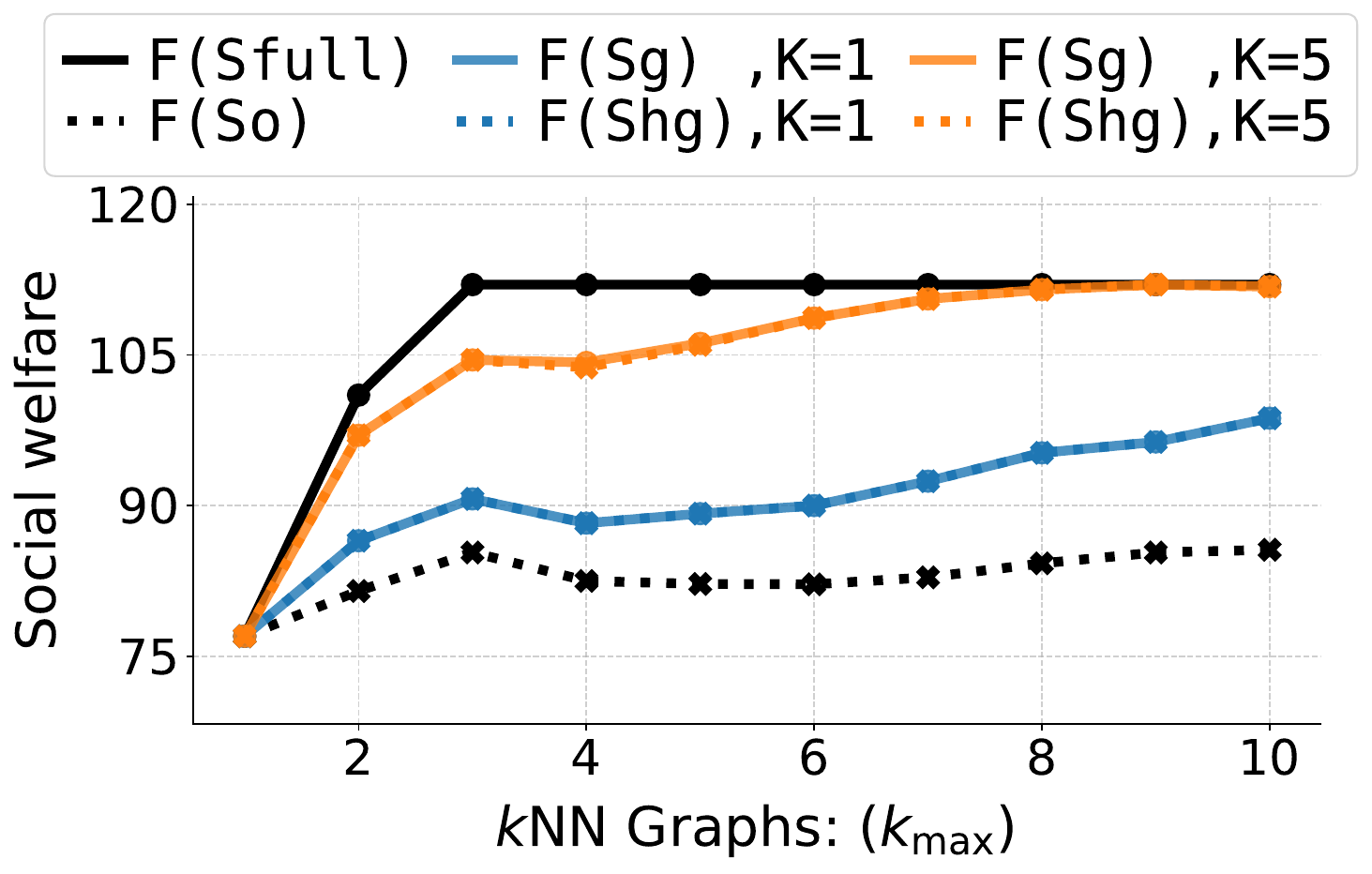}
    \caption{$F(S_{\mathrm{g}})$ vs.\ $F(S_{\mathrm{hg}})$}
    \label{fig:app_prod_sg_shg_knn}
\end{subfigure}\\
\vspace{0.77em}
\textit{Threshold Graphs}\par\medskip
\begin{subfigure}[t]{0.24\textwidth}
    \centering
    \includegraphics[width=\linewidth]{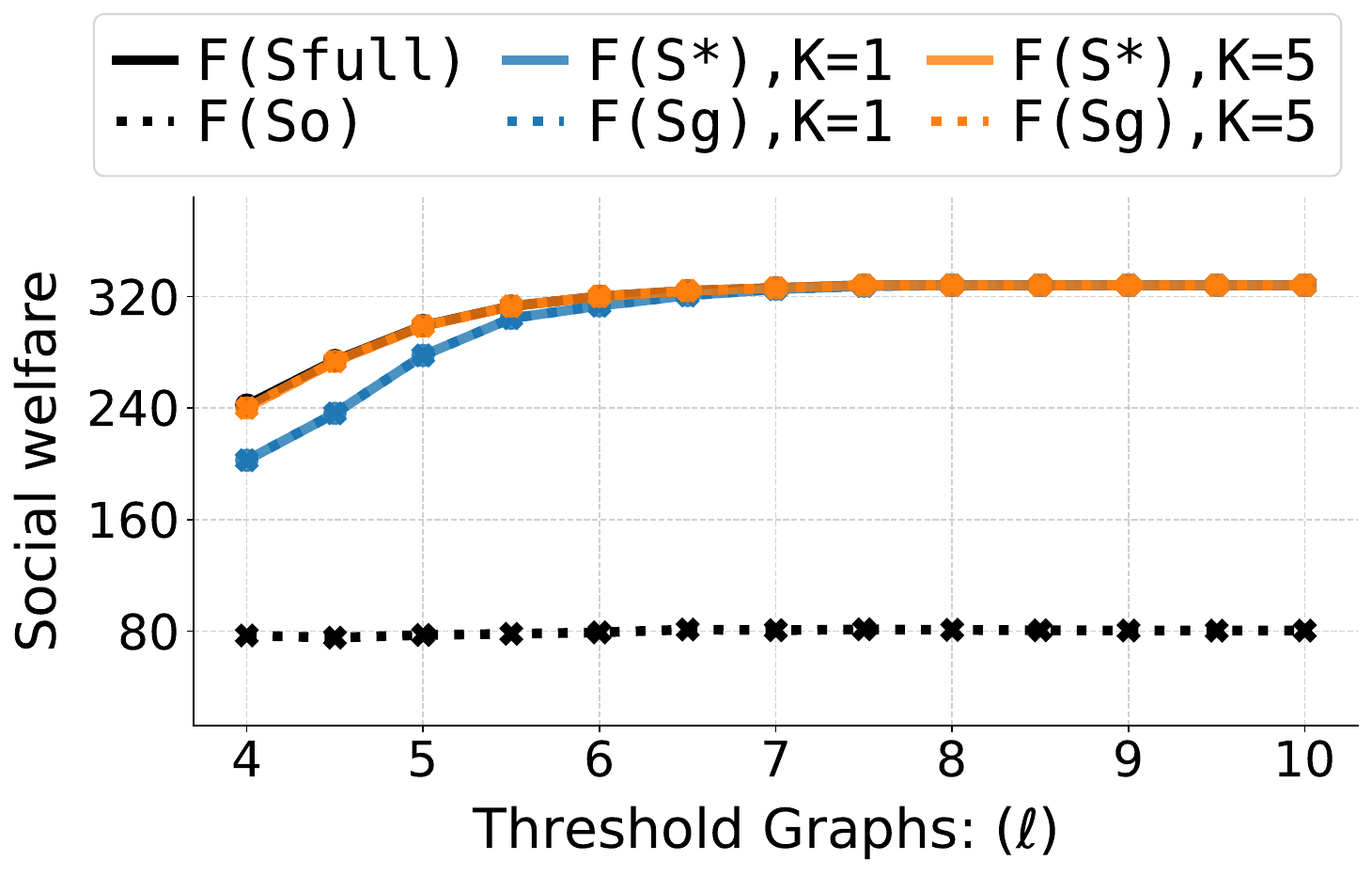}
    \caption{$F(S^{\star})$ vs.\ $F(S_{\mathrm{g}})$}
    \label{fig:app_adult_star_sg_thresh}
\end{subfigure}
\begin{subfigure}[t]{0.24\textwidth}
    \centering
    \includegraphics[width=\linewidth]{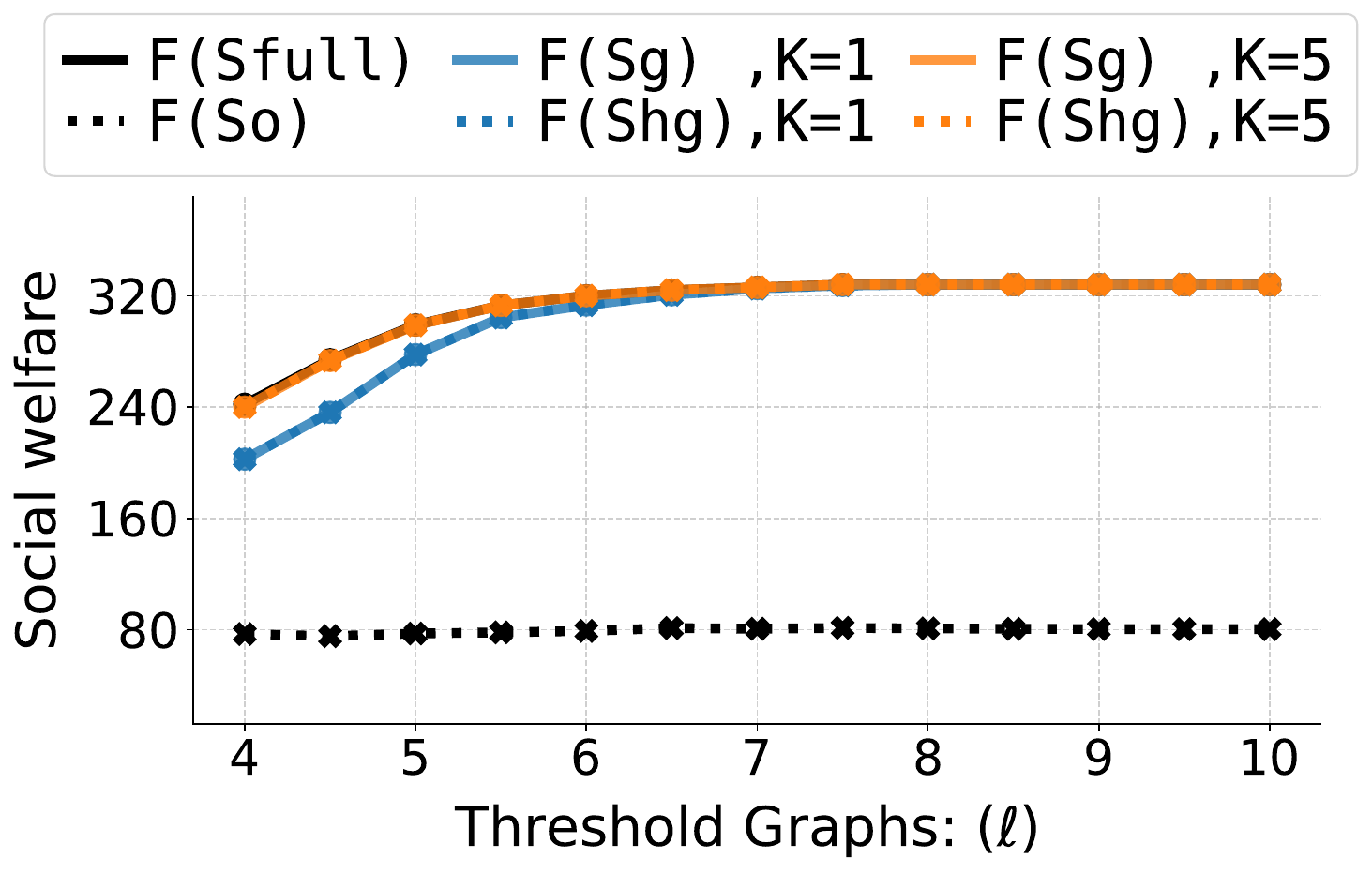}
    \caption{$F(S_{\mathrm{g}})$ vs.\ $F(S_{\mathrm{hg}})$}
    \label{fig:app_adult_sg_shg_thresh}
\end{subfigure}
\hfill
\begin{subfigure}[t]{0.24\textwidth}
    \centering
    \includegraphics[width=\linewidth]{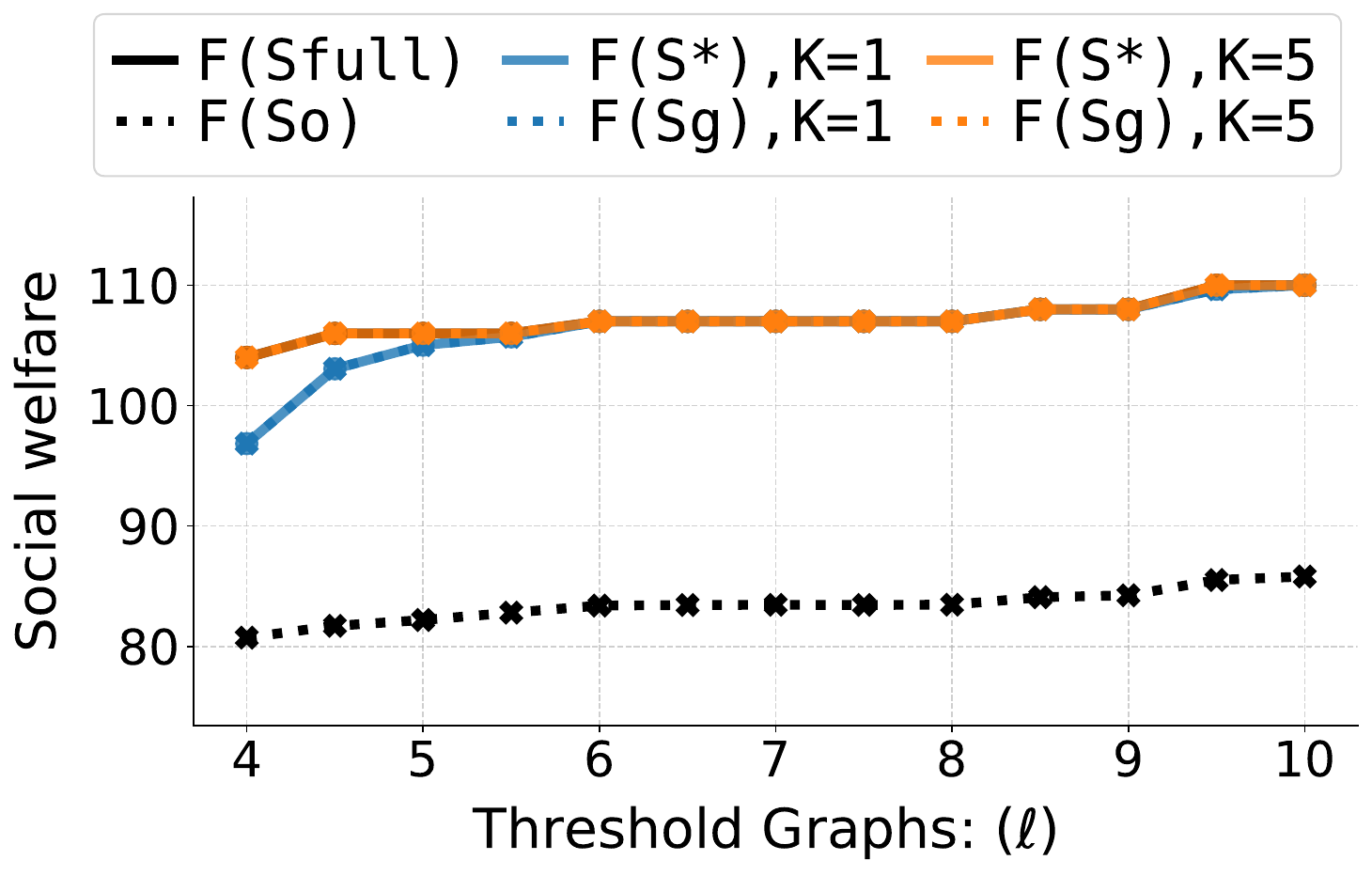}
    \caption{$F(S^{\star})$ vs.\ $F(S_{\mathrm{g}})$}
    \label{fig:app_prod_star_sg_thresh}
\end{subfigure}
\begin{subfigure}[t]{0.24\textwidth}
    \centering
    \includegraphics[width=\linewidth]{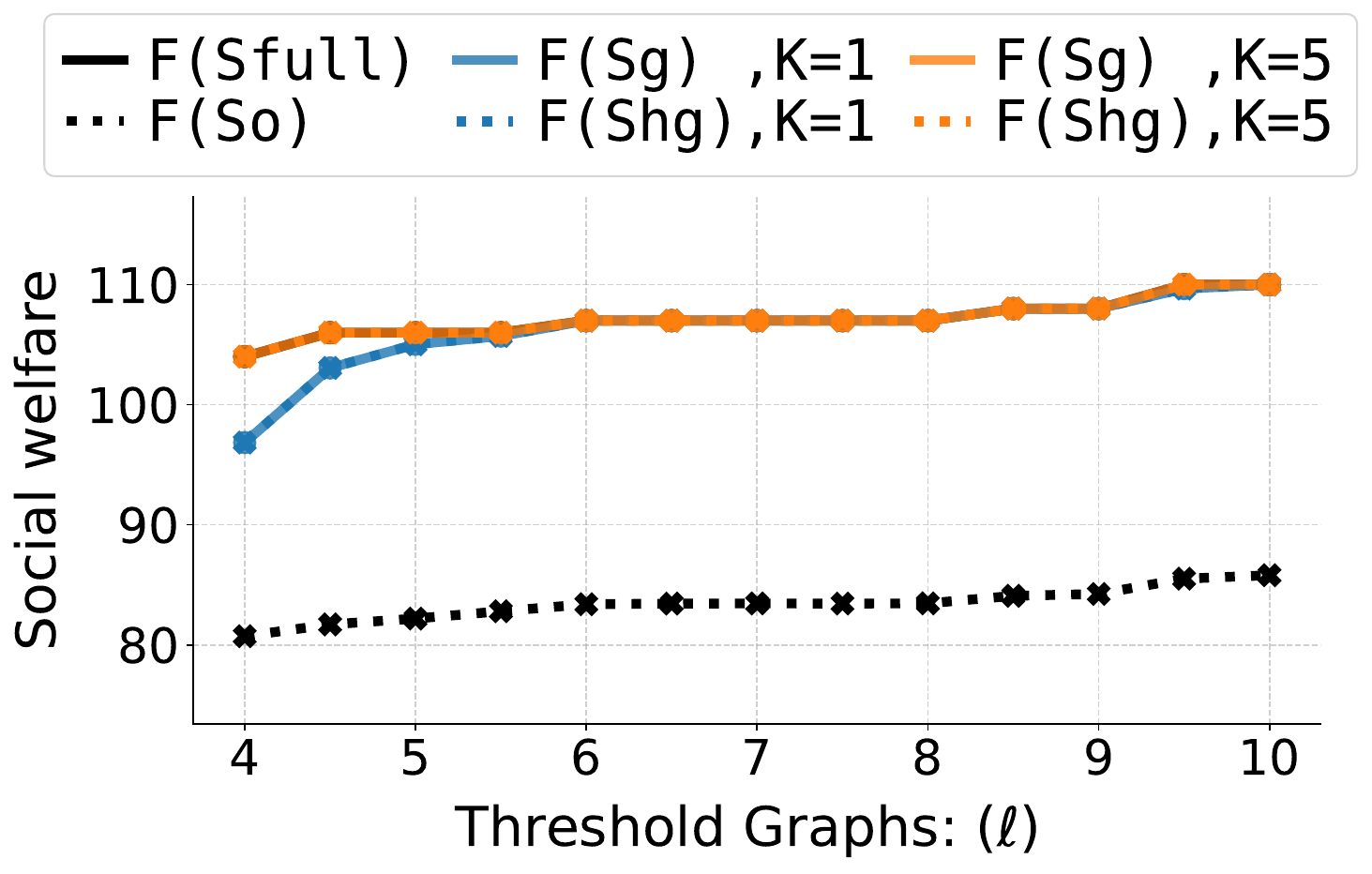}
    \caption{$F(S_{\mathrm{g}})$ vs.\ $F(S_{\mathrm{hg}})$}
    \label{fig:app_prod_sg_shg_thresh}
\end{subfigure}

\caption{Comparison of social welfare obtained via brute-force $F(S^{\star})$, classic greedy $F(S_{\mathrm{g}})$, and heuristic greedy $F(S_{\mathrm{hg}})$ on the Adult and Productivity datasets. Top row are $k$NN graphs and bottom row are threshold graphs. 
Across target reveal budgets $K=\{1,5\}$, in both datasets and graphs, both greedy variants get exact solutions ($F(S^{\star})$). As connectivity increases 
(cf. Tables~\ref{tab:adult_kmax_r_stats} and \ref{tab:productivity_kmax_r_stats}), both greedy variants closely  approximate the optimum $F(S_{\mathrm{full}})$ (Figures~\ref{fig:app_adult_star_sg_thresh}--\subref{fig:app_prod_sg_shg_thresh} where $\ell > 6$). Although we show only the Adult and Productivity datasets, we observe similar patterns on the Math and Portuguese datasets.}
\label{fig:adult-port_sstar_sg_shg}
\end{figure}

\subsection{Empirical Results for Fairness}\label{sec:app=fairnessexp}
In the main paper and in Figures~\ref{fig:app_knn_thresh_undivided} and \ref{fig:app_knn_thresh_divided}, we compared the average social welfare (gain) (i.e., the total group welfare (gain) divided by the number of agents in the group) achieved by the classic greedy algorithm when applied to the full graph and when applied separately to the male and female bipartite graphs derived from the Adult, Math, and Portuguese datasets. Here, we assess whether the group-prioritized greedy variant described below improves average group welfare and reduces inter-group disparities.

The group-prioritized classic greedy approach proceeds as follows. At each Algorithm~\ref{alg:greedy_lbreveal}  iteration, when multiple targets yield the same total marginal gain in social welfare, ties are resolved by selecting the target that maximizes the marginal gain for the prioritized group. For example, consider three targets with identical total marginal gains of $100$, but varied gains for the groups. Target $t_{1}$ yields $50$ for the male group and $50$ for the female group, $t_{2}$ yields $51$ for the male group and $49$ for the female group, and $t_{3}$ yields $100$ for the male group and $0$ for the female group. If the female group is prioritized, $t_{1}$ is selected, and if the male group is prioritized, $t_{3}$ is selected.

\begin{figure}[b!]
\vspace{0.22cm}
\captionsetup[subfigure]{justification=Centering}
\begin{subfigure}[t]{0.32\textwidth}
\centering
    \includegraphics[width=\textwidth]{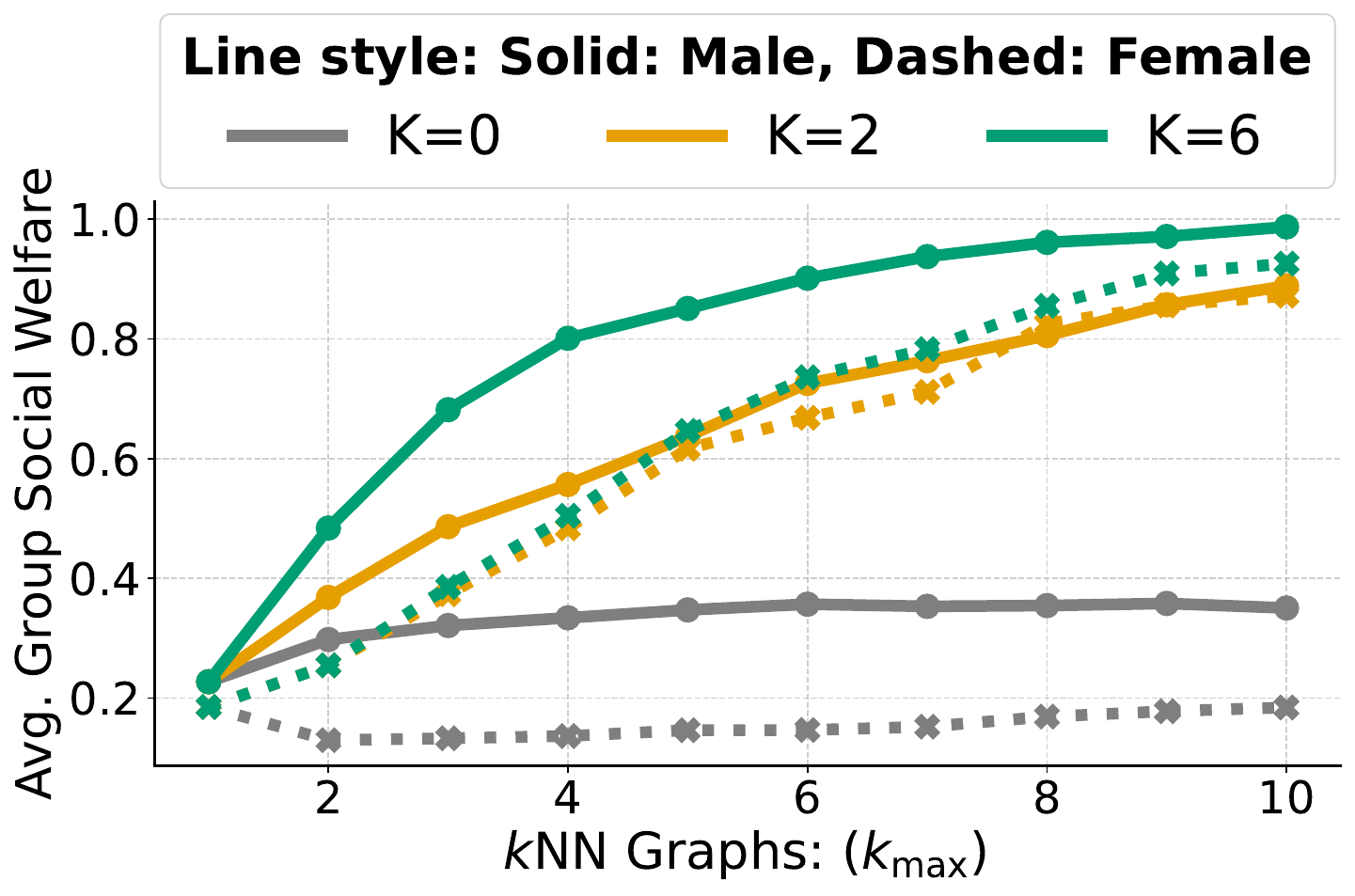}
    \caption{\(k\)NN graphs: (male vs. female)}
    \label{fig:app_adult_knn_undivided}
\end{subfigure}
\begin{subfigure}[t]{0.32\textwidth}
\centering
    \includegraphics[width=\linewidth]{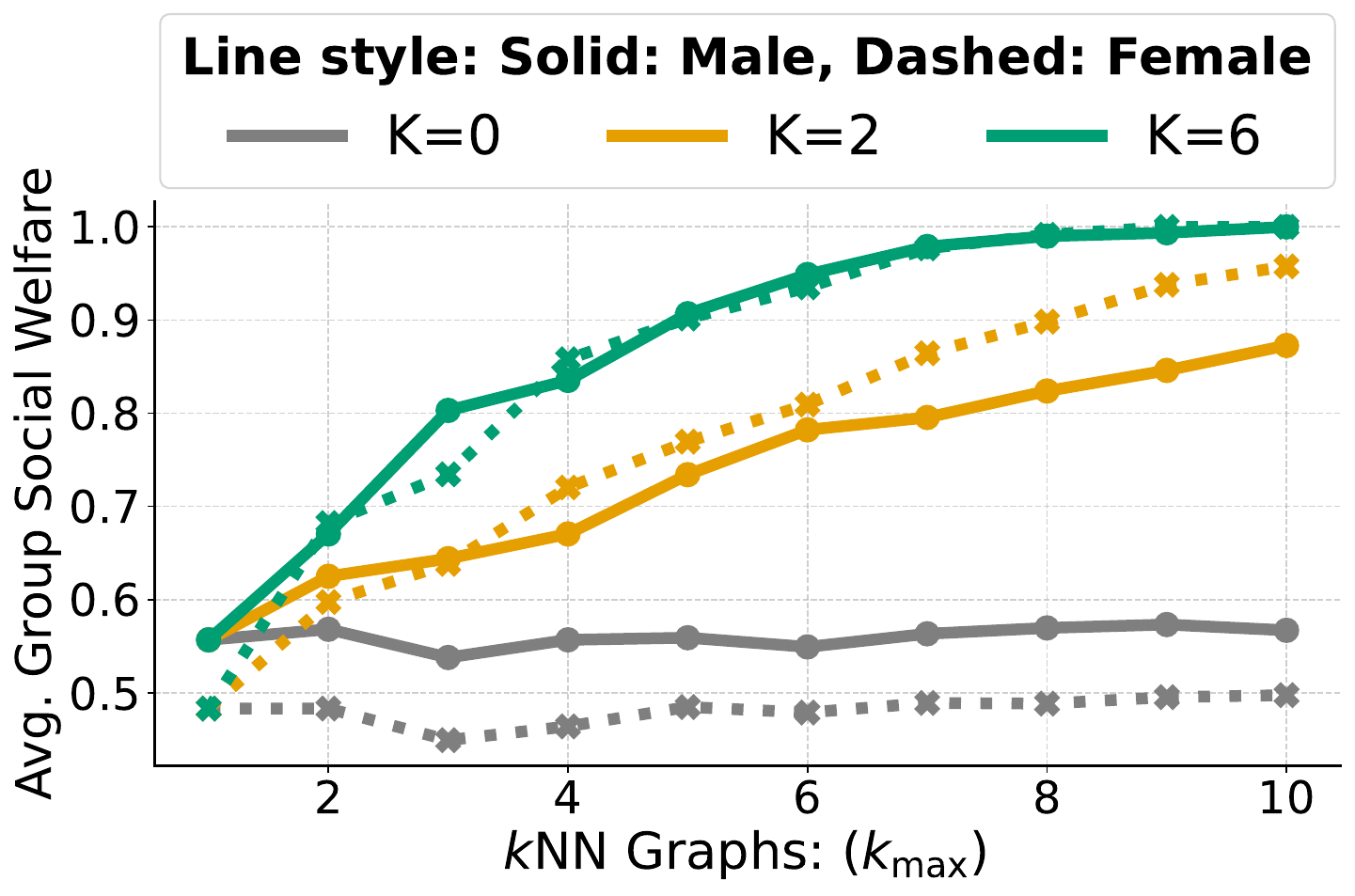}
    \caption{\(k\)NN graphs: (male vs. female)}
    \label{fig:app_math_knn_undivided}
\end{subfigure}
\begin{subfigure}[t]{0.32\textwidth}
\centering
    \includegraphics[width=\linewidth]{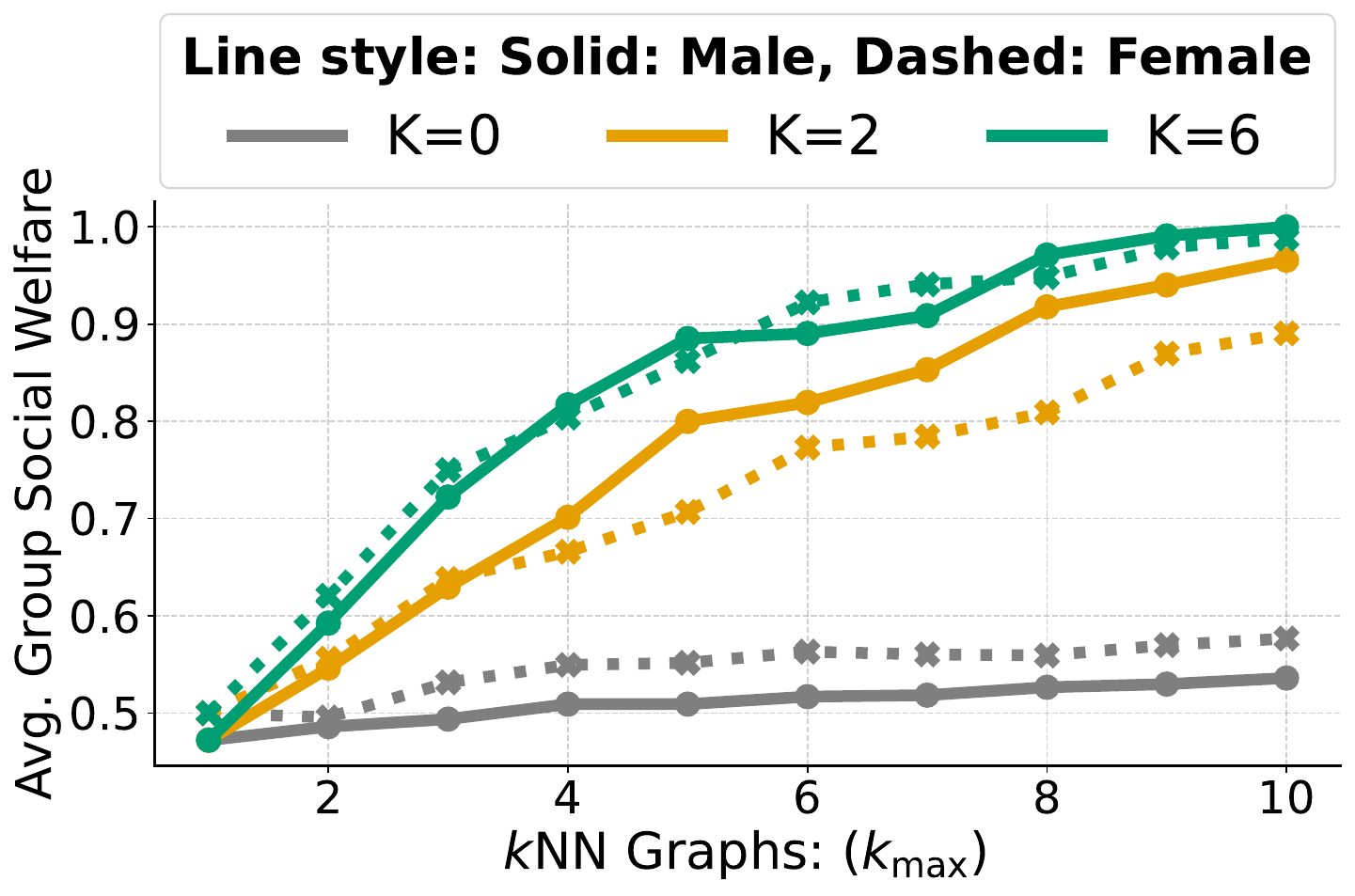}
    \caption{\(k\)NN graphs: (male vs. female)}
    \label{fig:app_port_knn_undivided}
\end{subfigure}\\[2ex]

\begin{subfigure}[t]{0.32\textwidth}
\centering
    \includegraphics[width=\textwidth]{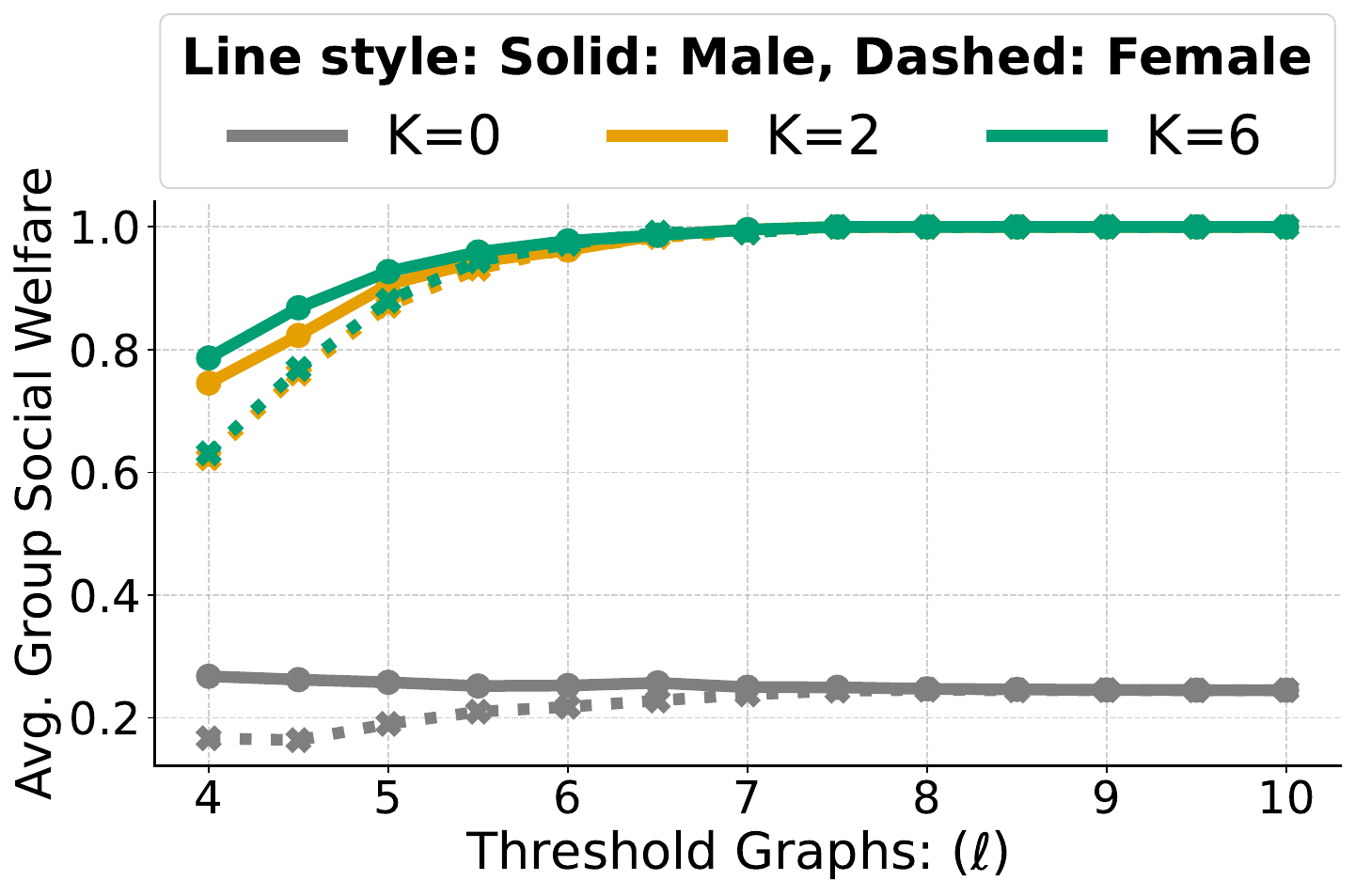}
    \caption{Threshold graphs: (male vs. female)}
    \label{fig:app_adult_thresh_undivided}
\end{subfigure}
\begin{subfigure}[t]{0.32\textwidth}
\centering
    \includegraphics[width=\linewidth]{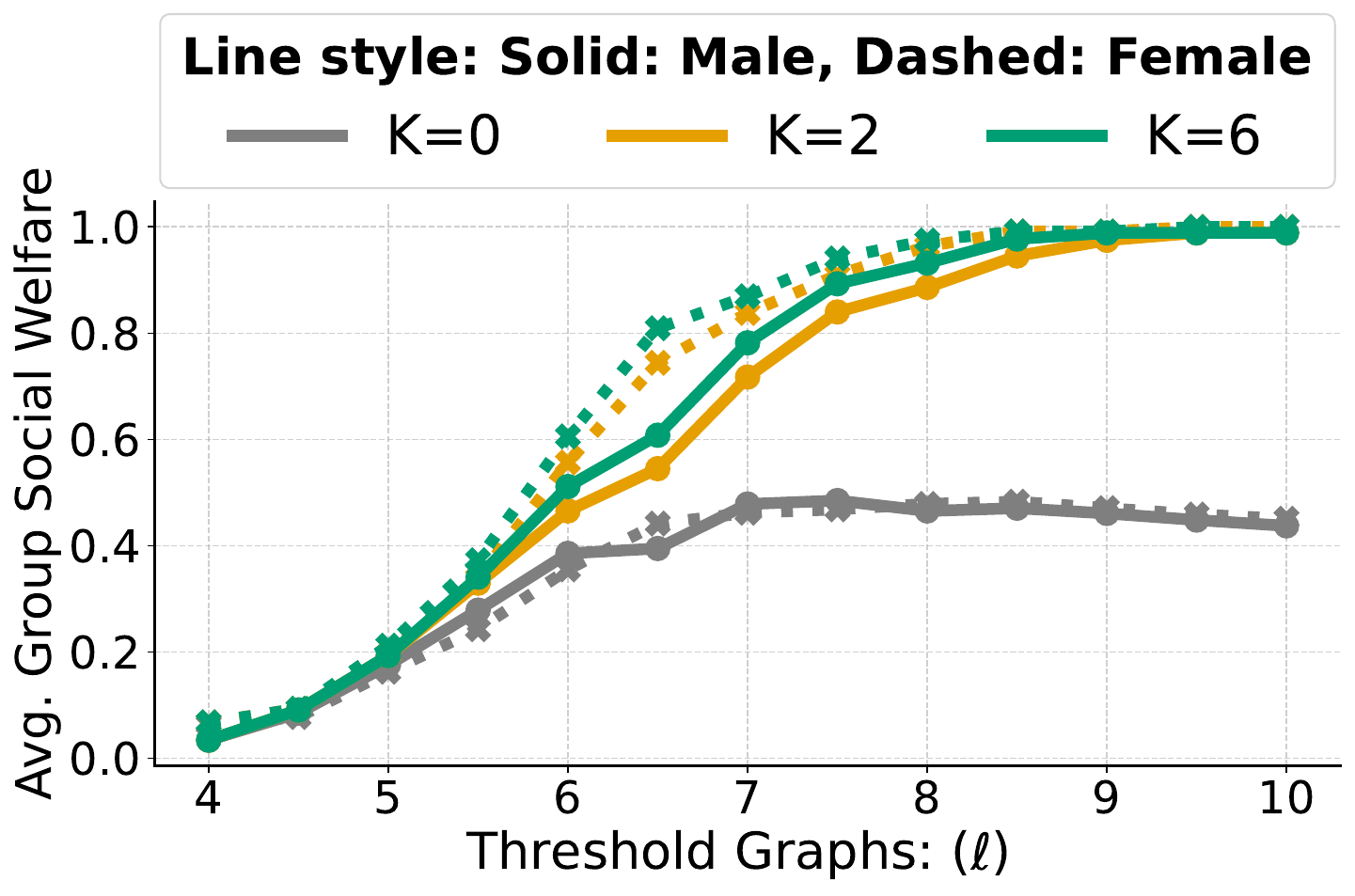}
    \caption{Threshold graphs: (male vs. female)}
    \label{fig:app_math_thresh_undivided}
\end{subfigure}
\begin{subfigure}[t]{0.32\textwidth}
\centering
    \includegraphics[width=\linewidth]{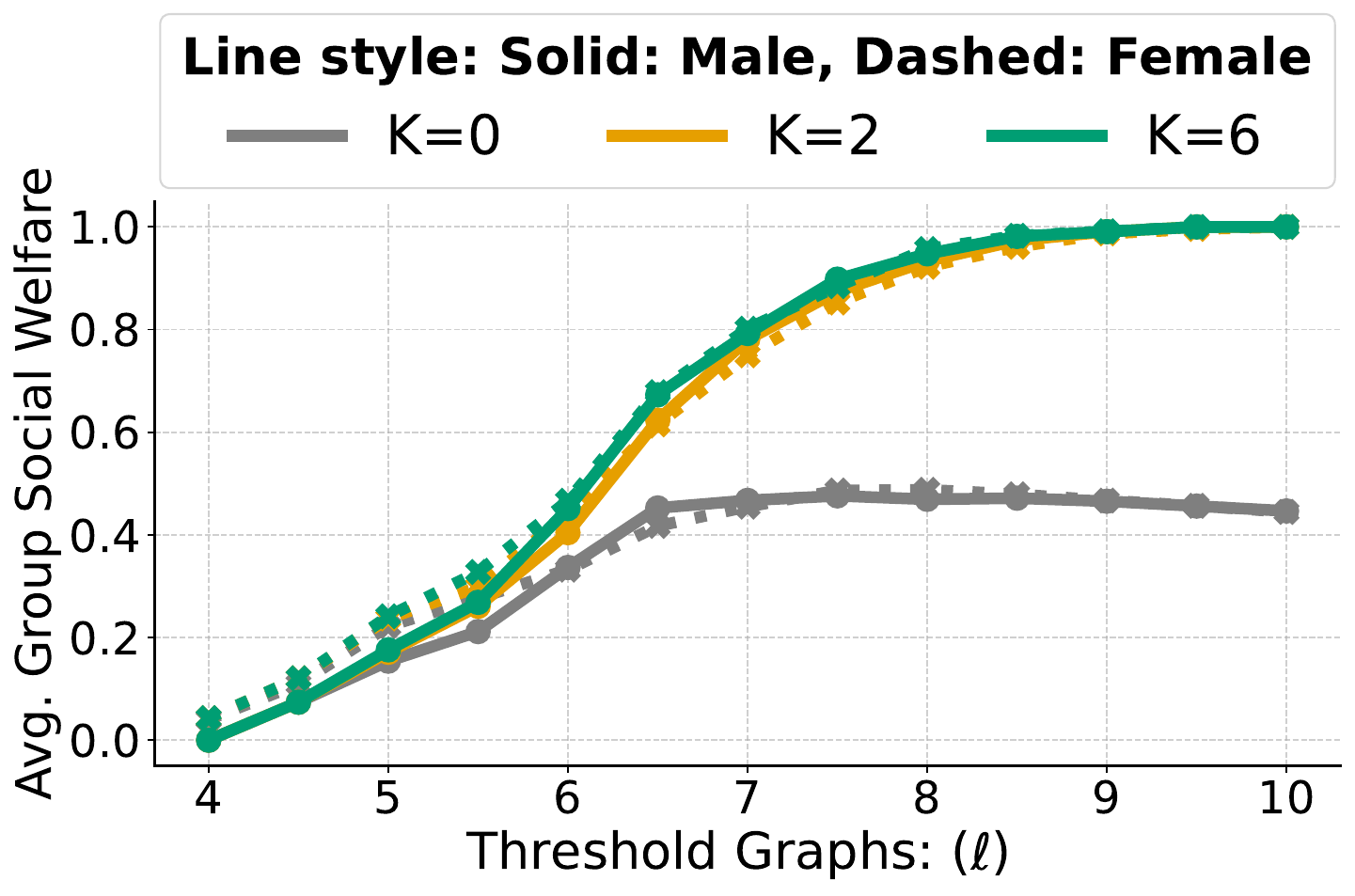}
    \caption{Threshold graphs: (male vs. female)}
    \label{fig:app_port_thresh_undivided}
\end{subfigure}

\caption{Comparative analysis of the average group social welfare generated from running the classic greedy approach across \(k\)NN graphs (\subref{fig:app_adult_knn_undivided}--\subref{fig:app_port_knn_undivided}) and the threshold graphs (\subref{fig:app_adult_thresh_undivided}--\subref{fig:app_port_thresh_undivided}) from the \(3\) datasets (Tables~\ref{tab:adult_kmax_r_stats}--\ref{tab:portuguese_kmax_r_stats}) on the \textbf{whole graph}. When the graph connectivity is high, there is no disparity in social welfare generated for the 2 groups.}
\label{fig:app_knn_thresh_undivided}

\begin{picture}(0,0)
    \put(28,333){{\parbox{4cm}{\centering \textbf{Adult}}}}
    \put(188,333){{\parbox{4cm}{\centering \textbf{Math}}}}
    \put(344,333){{\parbox{4cm}{\centering \textbf{Portuguese}}}}
\end{picture}

\end{figure}

\subsubsection*{Group-prioritized classic greedy approach experimental results}
Overall, the group-prioritized greedy variant rarely improves the average group welfare relative to the classic greedy algorithm, except in a few cases highlighted below.

Prioritizing a group can increase its average group welfare at the expense of the other group. 
On the Math $k$NN graph generated with $k_{\max}=2$, when classic greedy is run on the graph with a budget of $K=6$, it achieves an average welfare of $0.6822$  and $0.6705$ for the female and male groups, respectively. 
Prioritizing the male group raises their average welfare to $0.6818$ and lowers that of the female group $(0.6737)$.
Similarly, on the Portuguese $k$NN graph generated with $k_{\max}=2$, when classic greedy is run on the graph with a budget of $K=6$, it attains an average welfare of $0.6207$ and $0.5926$ for the female and male groups, respectively. 
Prioritizing the female group increases their average welfare to $0.6379$ and reduces that of the male group $(0.5787)$ while prioritizing the male group raises their average welfare to $0.5972$ and lowers that of the female group $(0.6164)$.

In other cases, prioritization harms the targeted group while benefiting the other.
On the Portuguese $k$NN graph generated with $k_{\max}=4$, when classic greedy is run on the graph with a budget of $K=6$, it attains an average welfare of $0.8039$ and $0.8171$ for the female and male groups, respectively. 
Prioritizing the male groups reduces their average welfare to $0.8009$ and increases that of the female group $(0.8211)$.
Similarly, on the Math $k$NN graph generated with $k_{\max}=2$, when classic greedy is run on the graph with a budget of $K=6$,  prioritizing the female group lowers their average welfare to $0.6737$ and raises the male group's average welfare to $0.6761$.

These findings show that group prioritization does not consistently improve outcomes over classic greedy and may introduce welfare trade-offs without a clear overall benefit.

\begin{figure}[t!]
\vspace{0.22cm}
\captionsetup[subfigure]{justification=Centering}
\begin{subfigure}[t]{0.32\textwidth}
\centering
    \includegraphics[width=\textwidth]{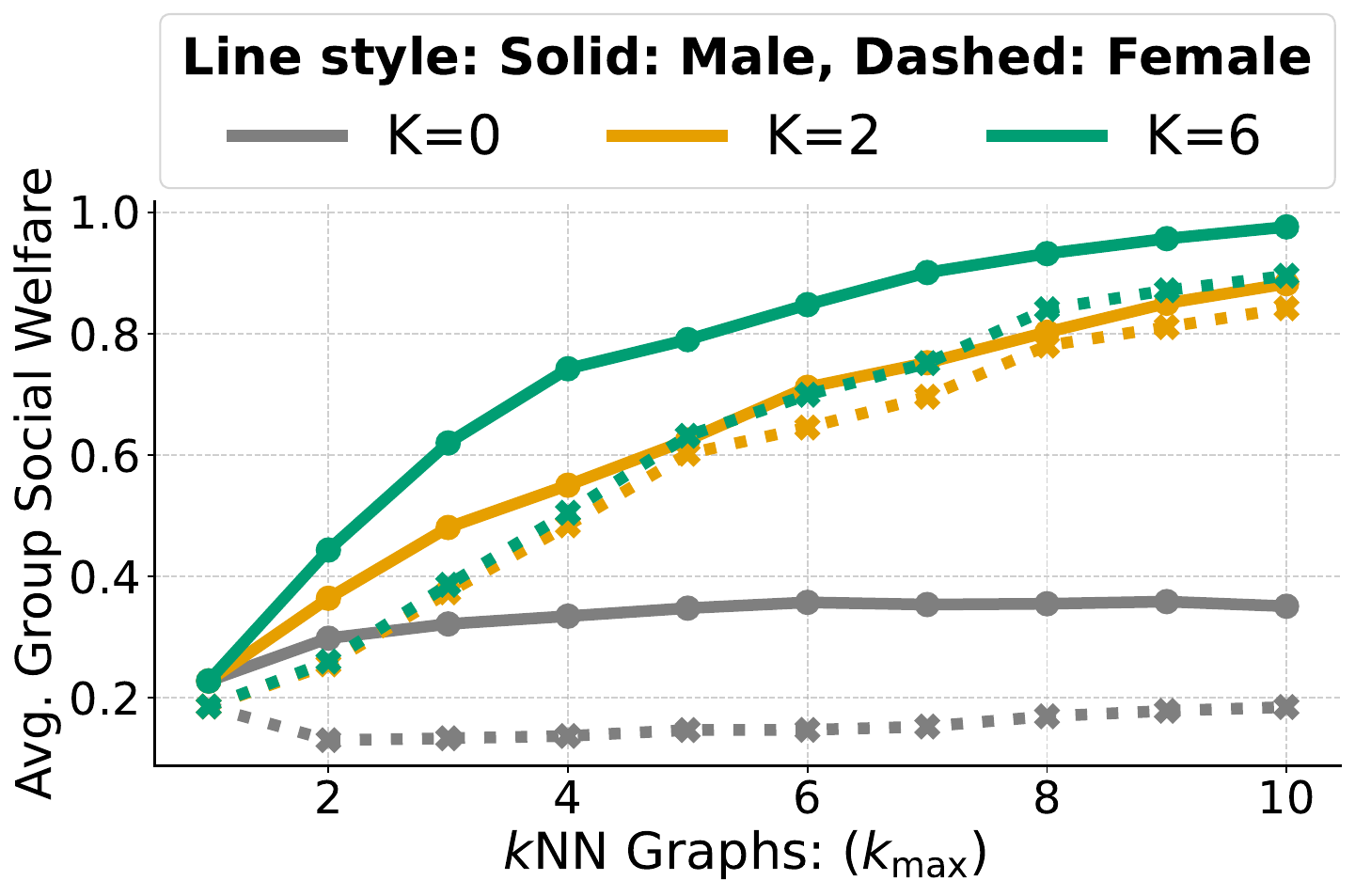}
    \caption{\(k\)NN graphs: (male vs. female)}
    \label{fig:app_adult_knn_divided}
\end{subfigure}
\begin{subfigure}[t]{0.32\textwidth}
\centering
    \includegraphics[width=\linewidth]{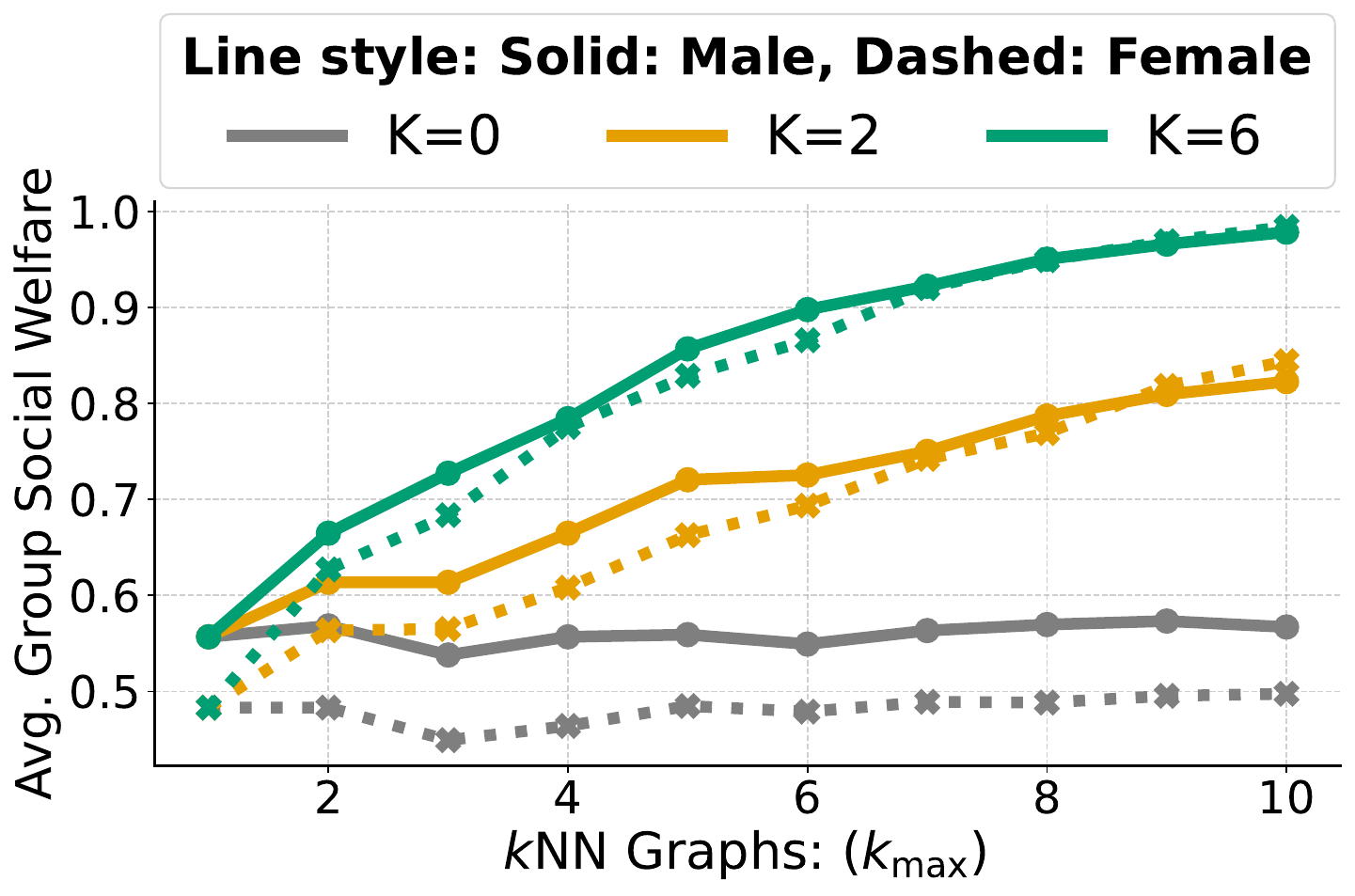}
    \caption{\(k\)NN graphs: (male vs. female)}
    \label{fig:app_math_knn_divided}
\end{subfigure}
\begin{subfigure}[t]{0.32\textwidth}
\centering
    \includegraphics[width=\linewidth]{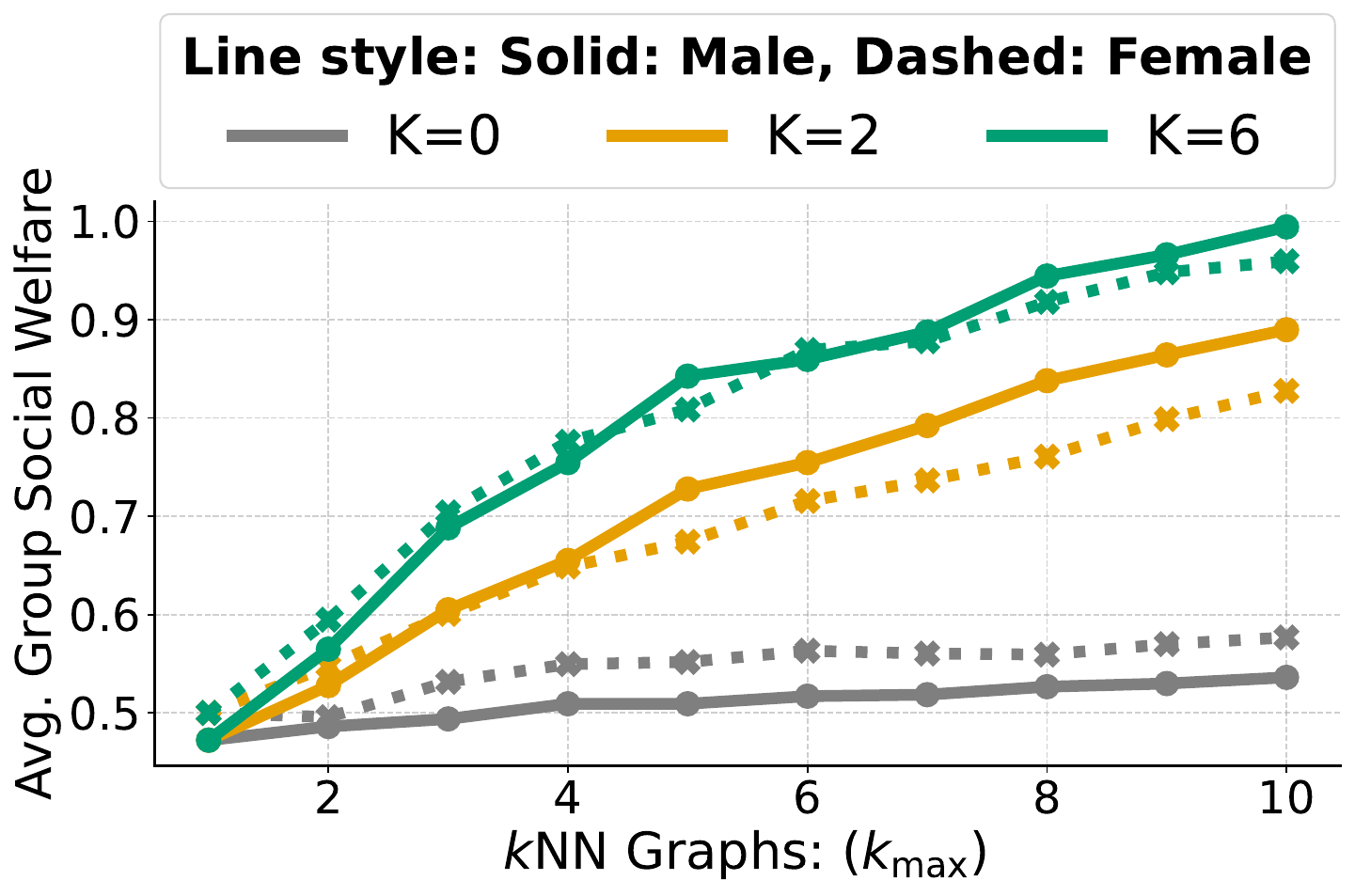}
    \caption{\(k\)NN graphs: (male vs. female)}
    \label{fig:app_port_knn_divided}
\end{subfigure}\\[2ex]

\begin{subfigure}[t]{0.32\textwidth}
\centering
    \includegraphics[width=\textwidth]{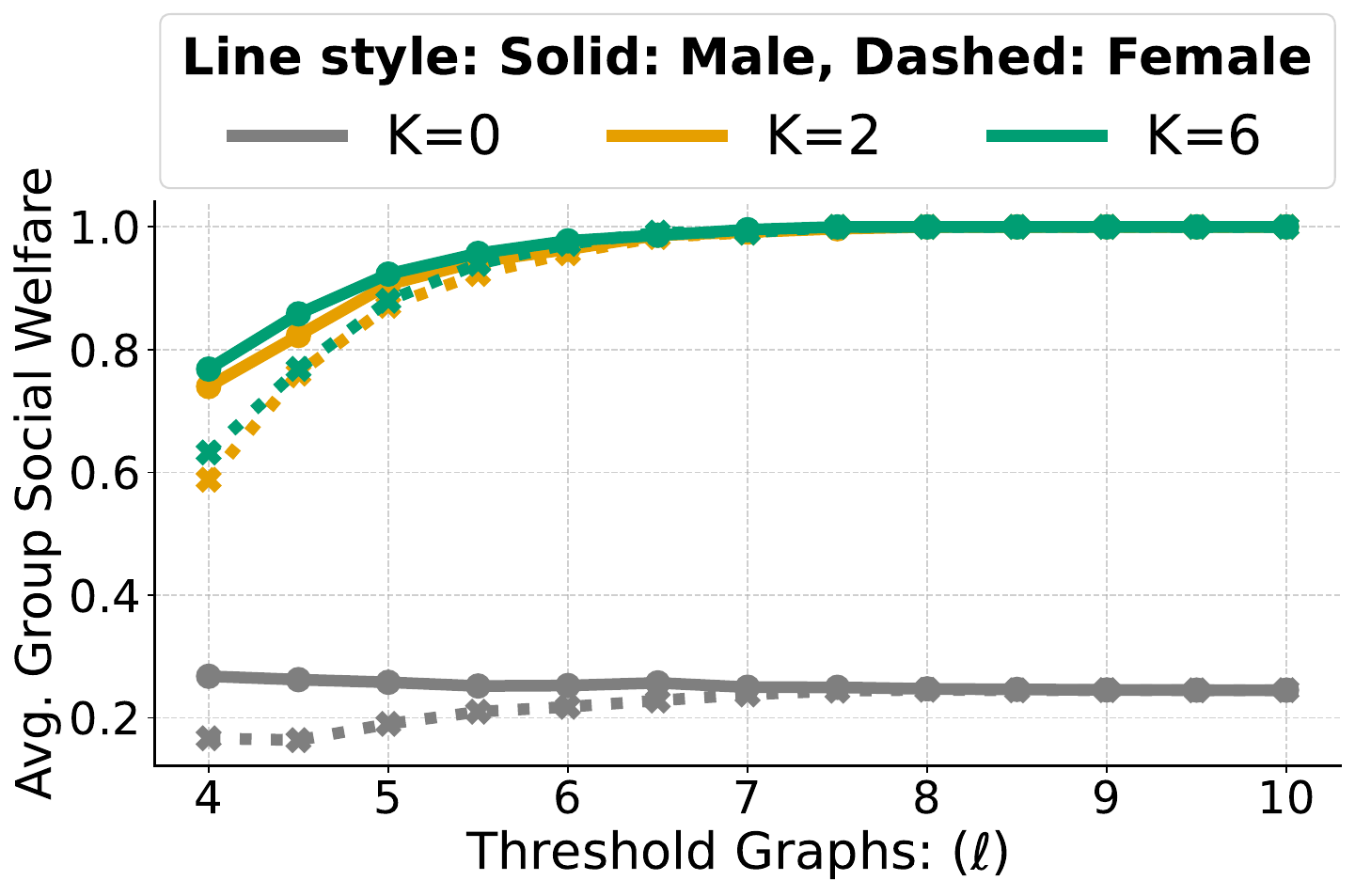}
    \caption{Threshold graphs: (male vs. female)}
    \label{fig:app_adult_thresh_divided}
\end{subfigure}
\begin{subfigure}[t]{0.32\textwidth}
\centering
    \includegraphics[width=\linewidth]{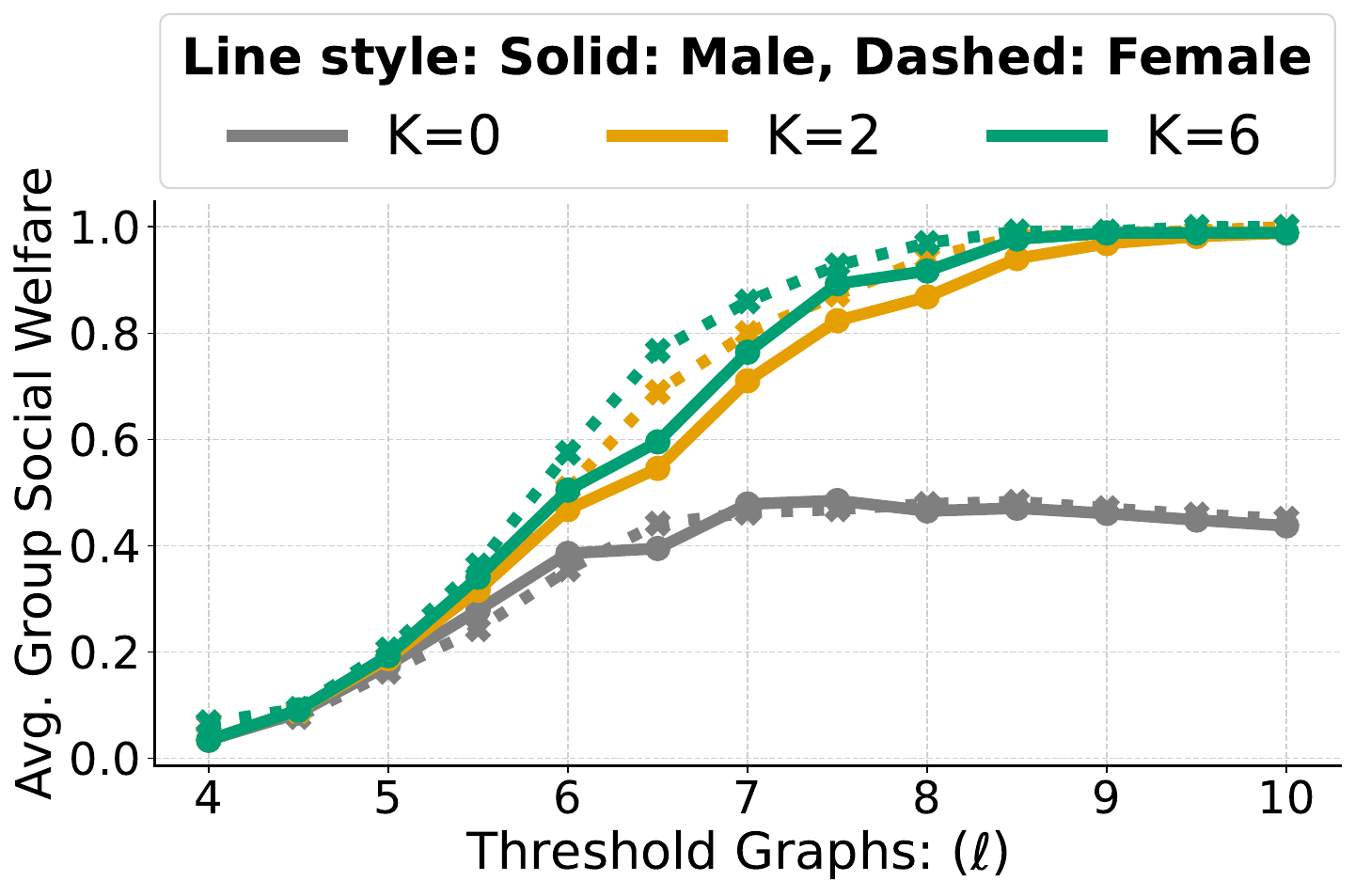}
    \caption{Threshold graphs: (male vs. female)}
    \label{fig:app_math_thresh_divided}
\end{subfigure}
\begin{subfigure}[t]{0.32\textwidth}
\centering
    \includegraphics[width=\linewidth]{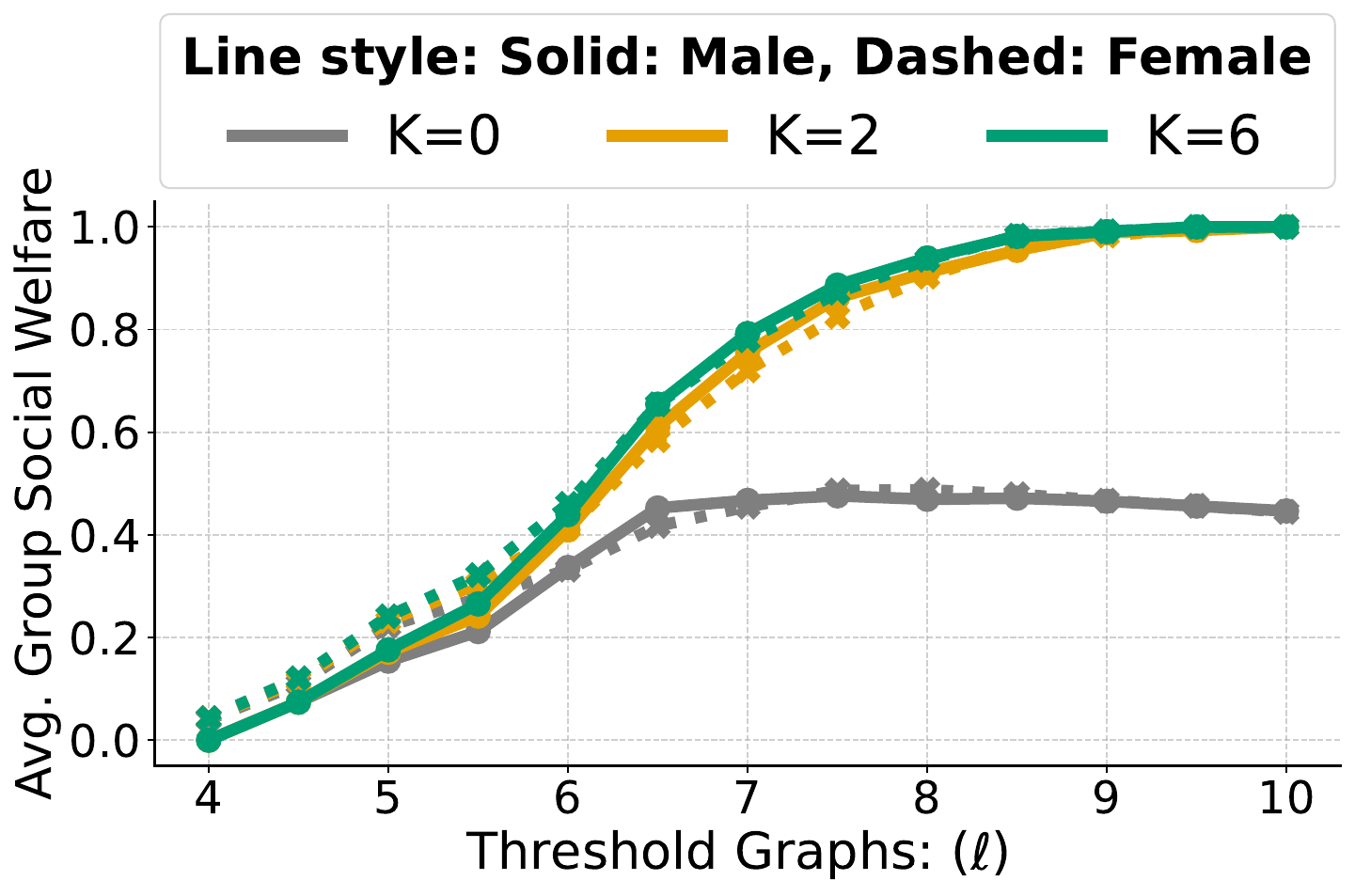}
    \caption{Threshold graphs: (male vs. female)}
    \label{fig:app_port_thresh_divided}
\end{subfigure}

\caption{Comparative analysis of the average group social welfare generated from running the classic greedy approach across \(k\)NN graphs (\subref{fig:app_adult_knn_divided}--\subref{fig:app_port_knn_divided}) and the threshold graphs (\subref{fig:app_adult_thresh_divided}--\subref{fig:app_port_thresh_divided}) from the \(3\) datasets (Tables~\ref{tab:adult_kmax_r_stats}--\ref{tab:portuguese_kmax_r_stats}) \textbf{separately on each group}. That is, when we run greedy at a budget $K/2$ solely on each group's graph. When the groups' graph connectivity is high, there is no disparity in social welfare generated for the 2 groups.}
\label{fig:app_knn_thresh_divided}

\begin{picture}(0,0)
    \put(28,333){{\parbox{4cm}{\centering \textbf{Adult}}}}
    \put(188,333){{\parbox{4cm}{\centering \textbf{Math}}}}
    \put(344,333){{\parbox{4cm}{\centering \textbf{Portuguese}}}}
\end{picture}
\end{figure}

\subsection{Empirical Results for Targeted Interventions}
\label{sec:app-itmexps}
In this section, we compare pre- and post-reveal intervention gains and examine the effects of varying the intervention and target reveal budgets \((K, B)\).

\begin{figure}[t!]
\captionsetup[subfigure]{justification=Centering}
\centering
\textbf{Adult Dataset}\\[1ex]
\begin{subfigure}[t]{0.45\textwidth}
\centering
    \includegraphics[width=0.788\linewidth]{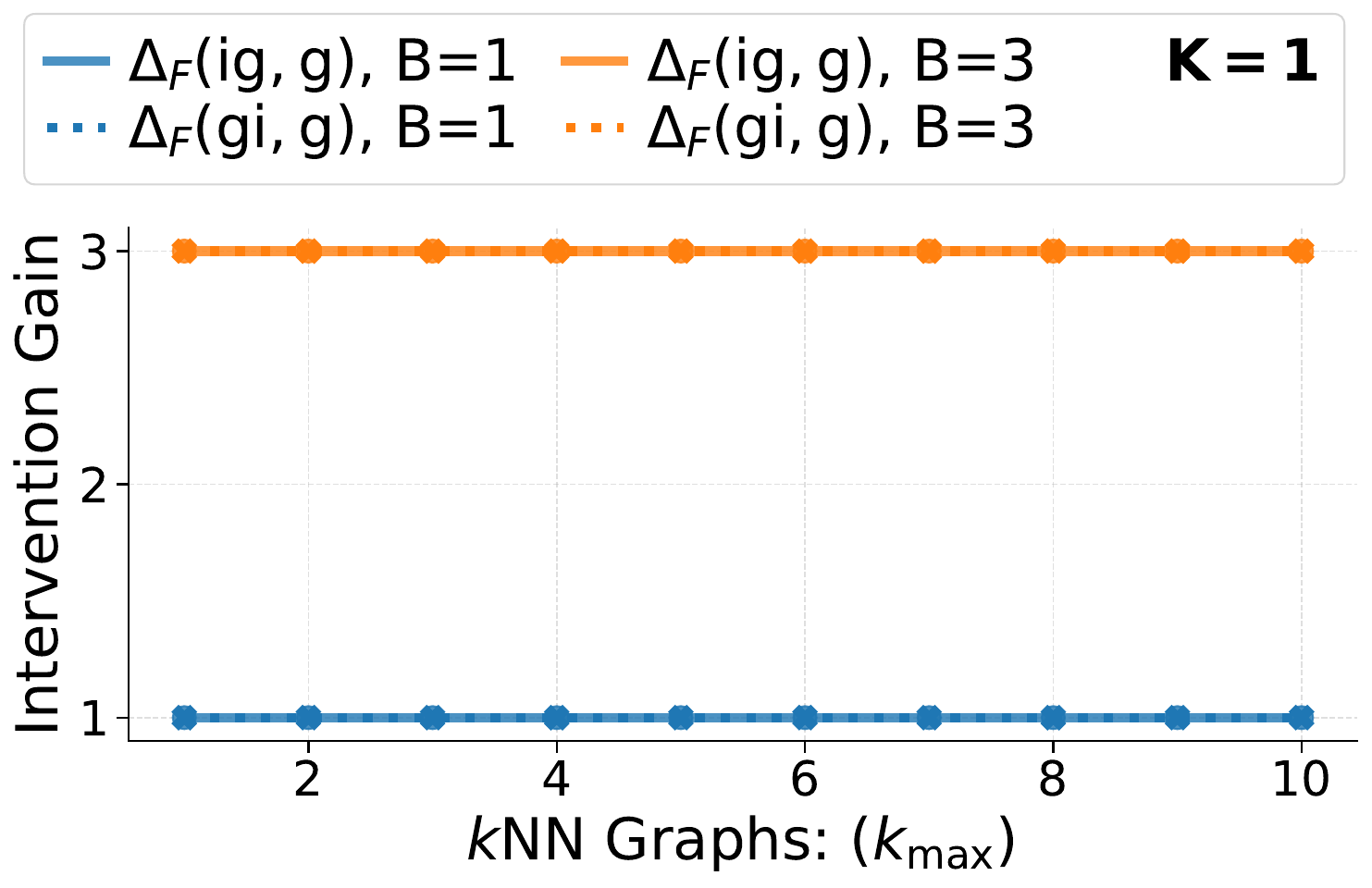}
    \caption{}
    \label{fig:adult_itm_knn1}
\end{subfigure}
\begin{subfigure}[t]{0.45\textwidth}
\centering
    \includegraphics[width=0.788\linewidth]{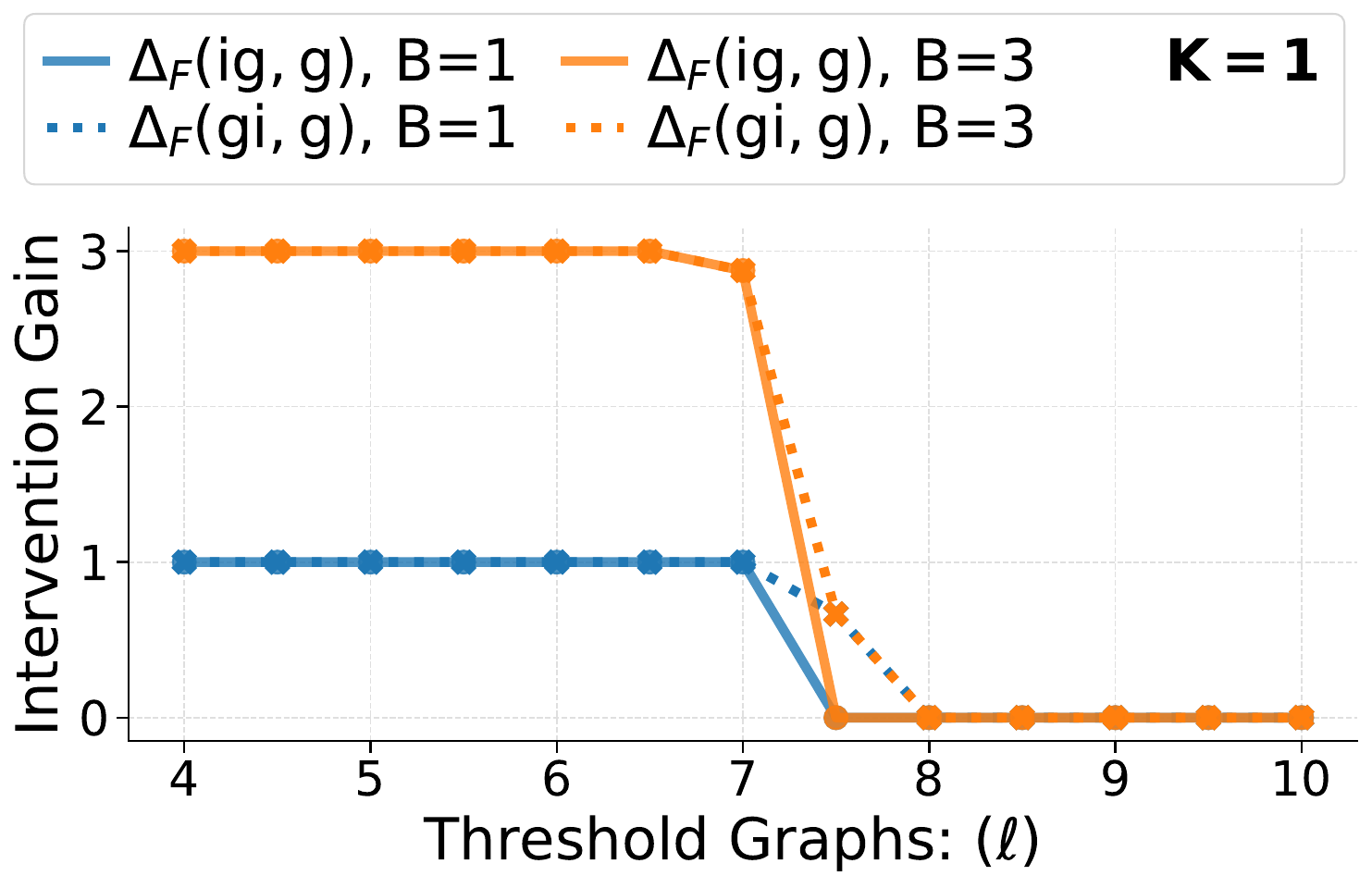}
    \caption{}
    \label{fig:adult_itm_thresh1}
\end{subfigure}\\[1ex]

\begin{subfigure}[t]{0.45\textwidth}
\centering
    \includegraphics[width=0.788\linewidth]{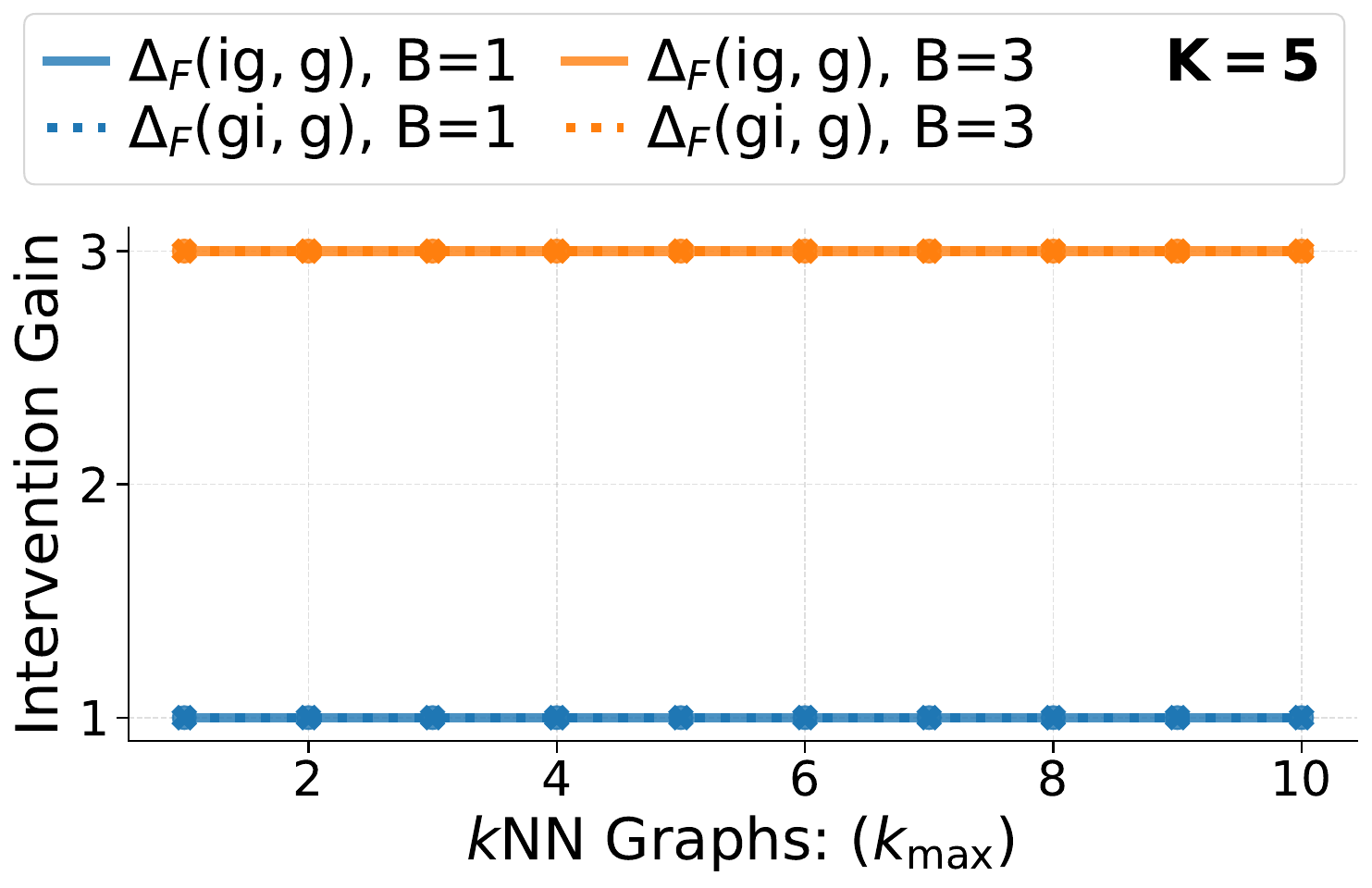}
    \caption{}
    \label{fig:adult_itm_knn5}
\end{subfigure}
\begin{subfigure}[t]{0.45\textwidth}
\centering
    \includegraphics[width=0.788\linewidth]{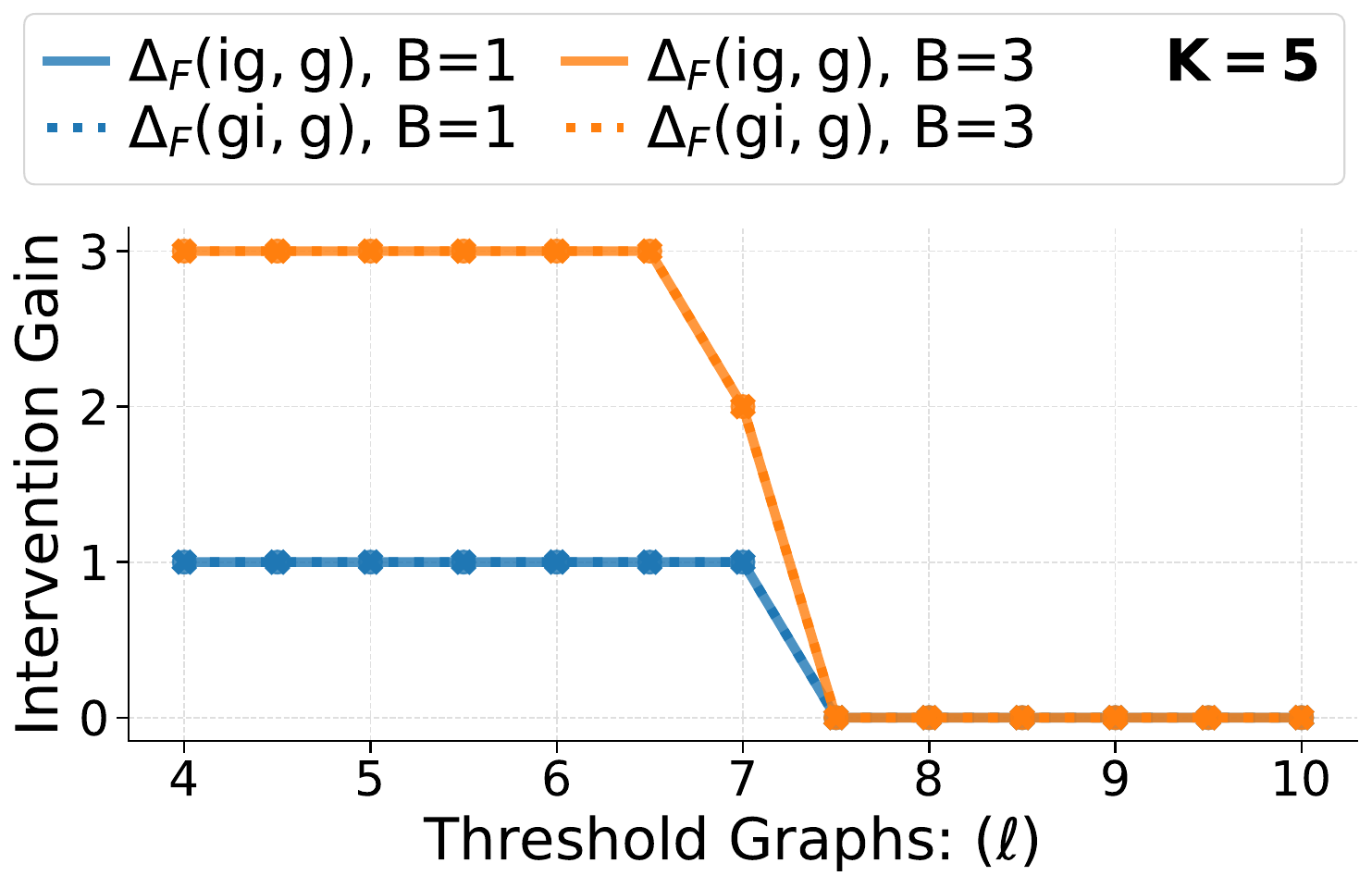}
    \caption{}
    \label{fig:adult_itm_thresh5}
\end{subfigure}\\
\vspace{2ex}
\textbf{Math Dataset}\\[1ex]
\begin{subfigure}[t]{0.45\textwidth}
\centering
    \includegraphics[width=0.788\linewidth]{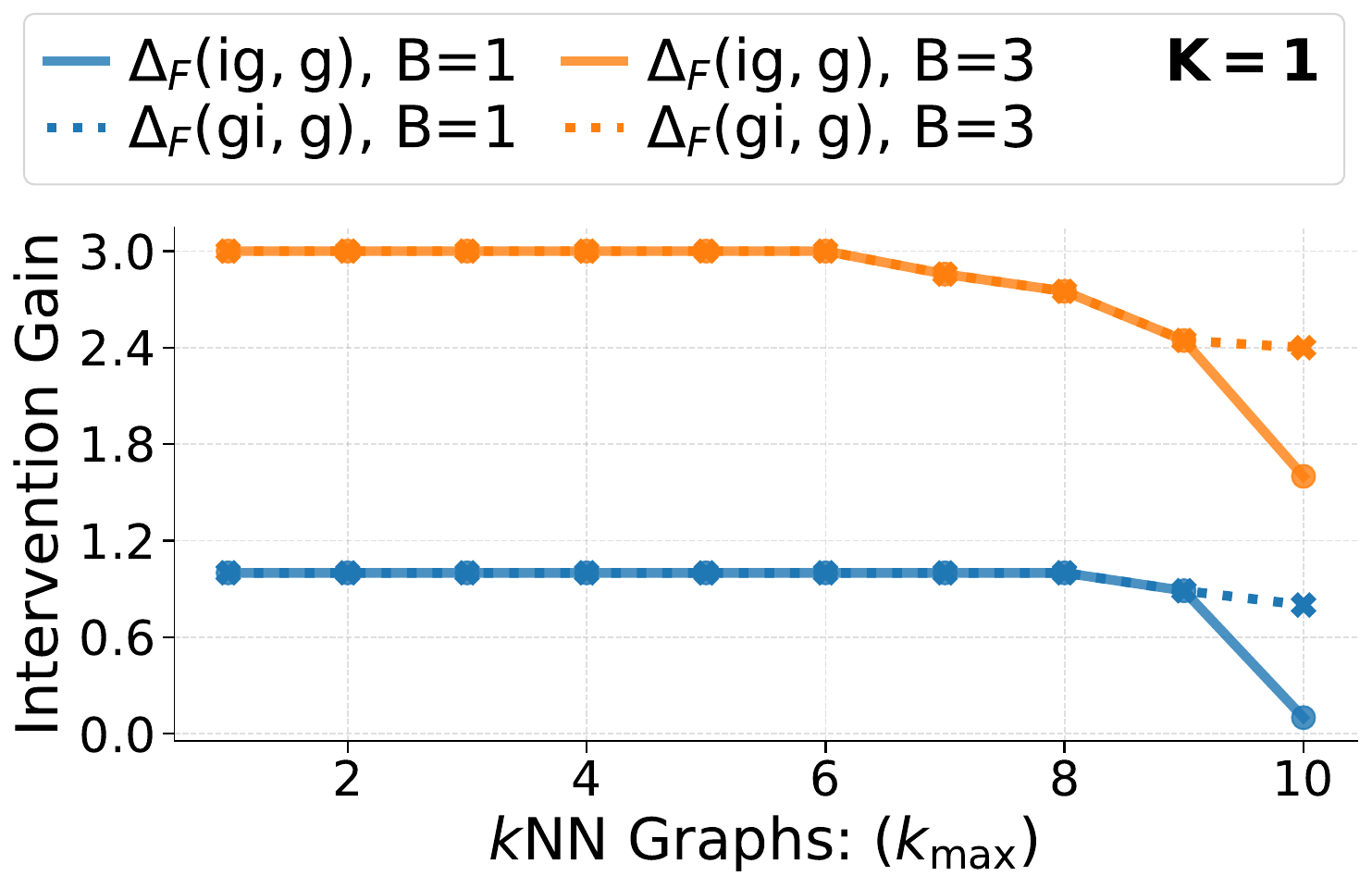}
    \caption{}
    \label{fig:math_itm_knn1}
\end{subfigure}
\begin{subfigure}[t]{0.45\textwidth}
\centering
    \includegraphics[width=0.788\linewidth]{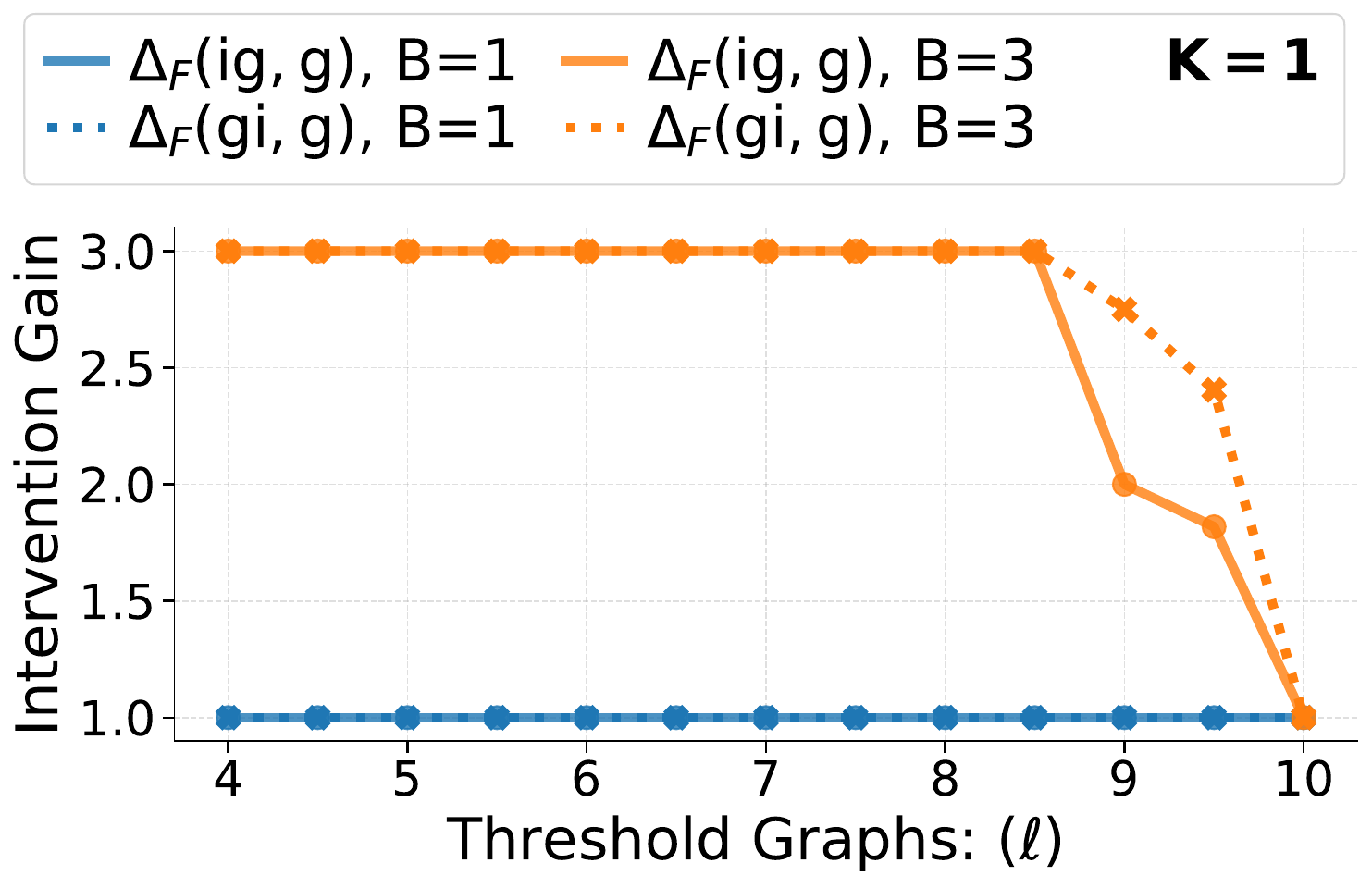}
    \caption{}
    \label{fig:math_itm_thresh1}
\end{subfigure}\\[1ex]

\begin{subfigure}[t]{0.45\textwidth}
\centering
    \includegraphics[width=0.788\linewidth]{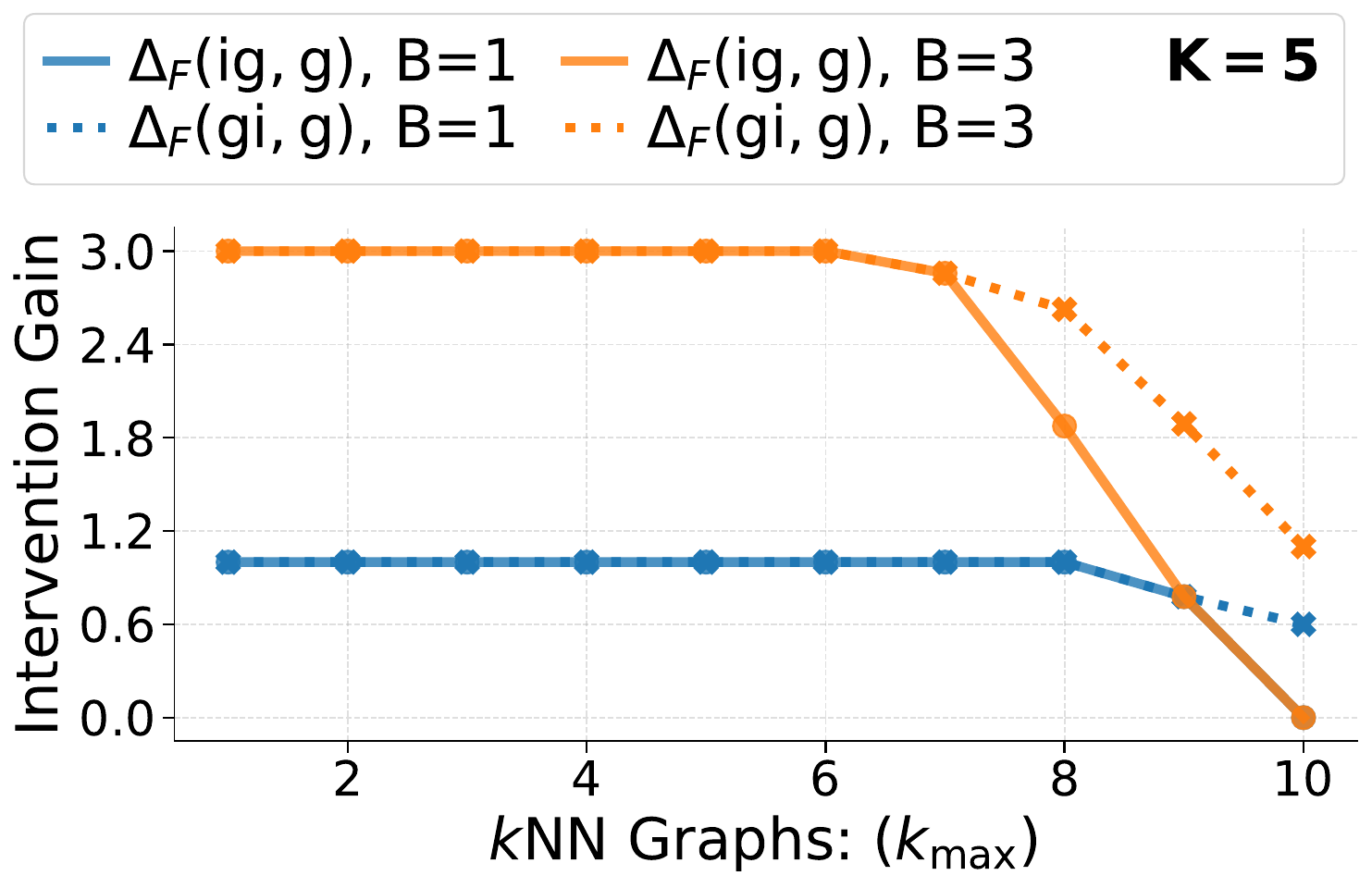}
    \caption{}
    \label{fig:math_itm_knn5}
\end{subfigure}
\begin{subfigure}[t]{0.45\textwidth}
\centering
    \includegraphics[width=0.788\linewidth]{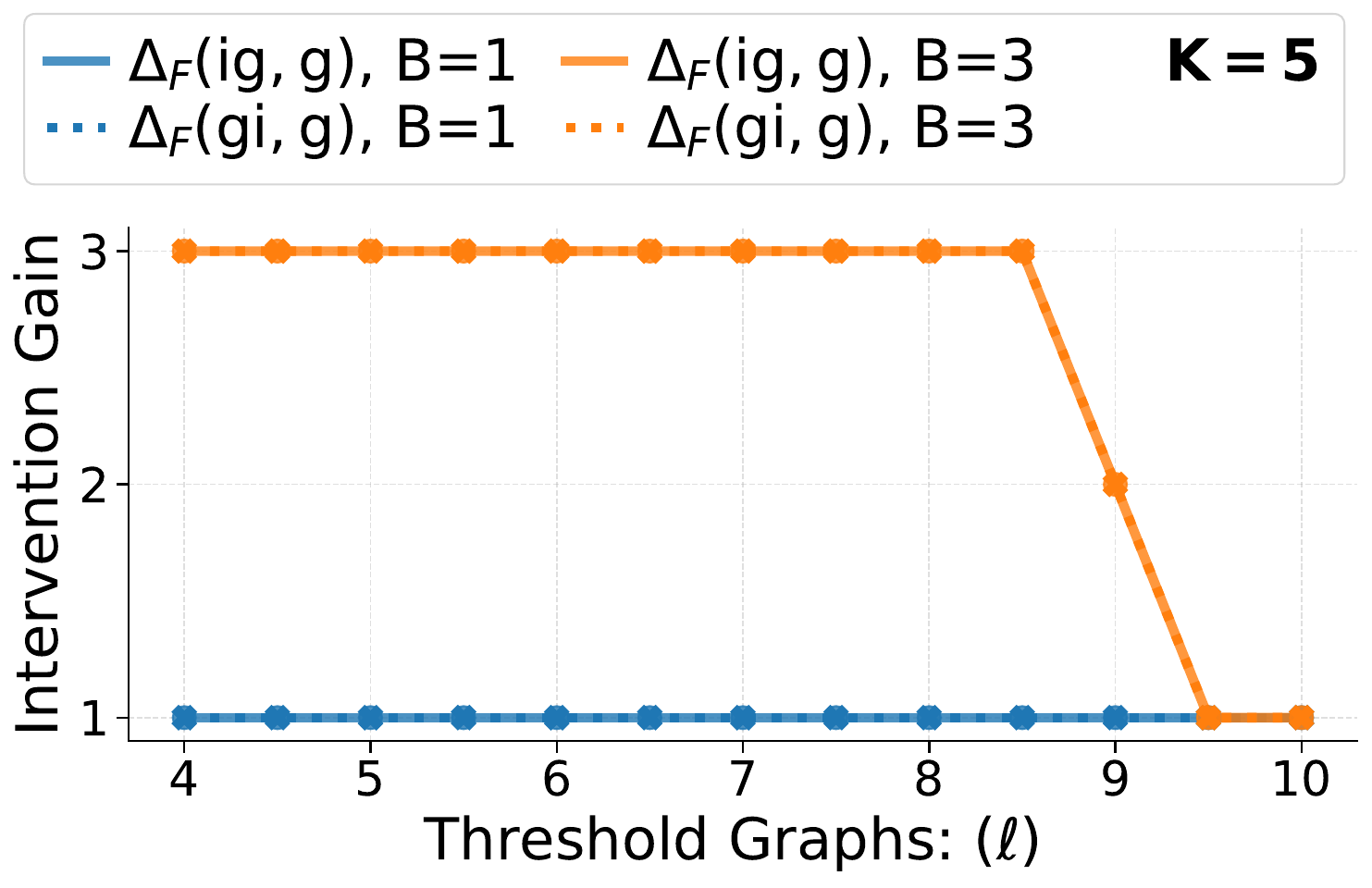}
    \caption{}
    \label{fig:math_itm_thresh5}
\end{subfigure}

\caption{Pre- and post-reveal intervention gains (\(\Delta_F(ig,g)\) and \(\Delta_F(gi,g)\)) across datasets, target reveal budgets $K \in \{1,5\}$, intervention budgets $B \in \{1,3\}$, and graph generation methods ($k$NN and threshold) (Tables~\ref{tab:adult_kmax_r_stats} and \ref{tab:math_kmax_r_stats}).  
For both datasets, lower $K$ often yields larger intervention gains. A high $B$ can sometimes be redundant when agents  have all-positive neighborhoods, and/or very few agents have empty or all-negative neighborhoods and greedy is optimal (\subref{fig:adult_itm_thresh1}, \subref{fig:adult_itm_thresh5}, 
\subref{fig:math_itm_knn1}, \subref{fig:math_itm_knn5}). When the number of agents with all-negative neighborhoods is at least \(B\), both pre- and post-reveal interventions gains \(\Delta_{F}(ig,g)\) and \(\Delta_{F}(gi,g)\) are atmost \(B\) (\subref{fig:adult_itm_knn1}, \subref{fig:adult_itm_knn5}).}
\label{fig:adult_math_itm_knn_thresh}
\end{figure}

\begin{figure}[t!]
\captionsetup[subfigure]{justification=Centering}
\centering
\textbf{Portuguese Dataset}\\[1ex]
\begin{subfigure}[t]{0.45\textwidth}
\centering
    \includegraphics[width=0.788\linewidth]{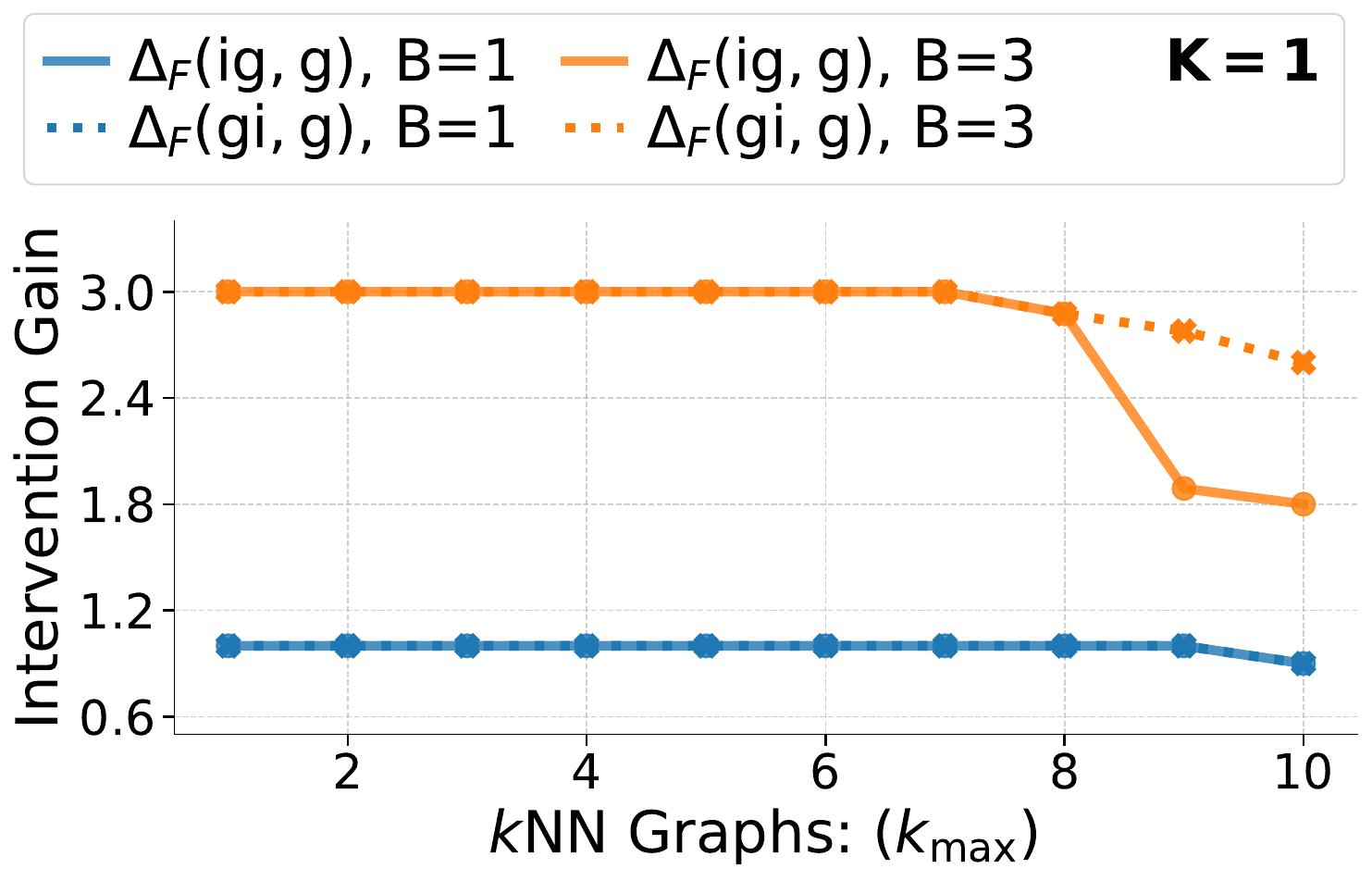}
    \caption{}
    \label{fig:portuguese_itm_knn1}
\end{subfigure}
\begin{subfigure}[t]{0.45\textwidth}
\centering
    \includegraphics[width=0.788\linewidth]{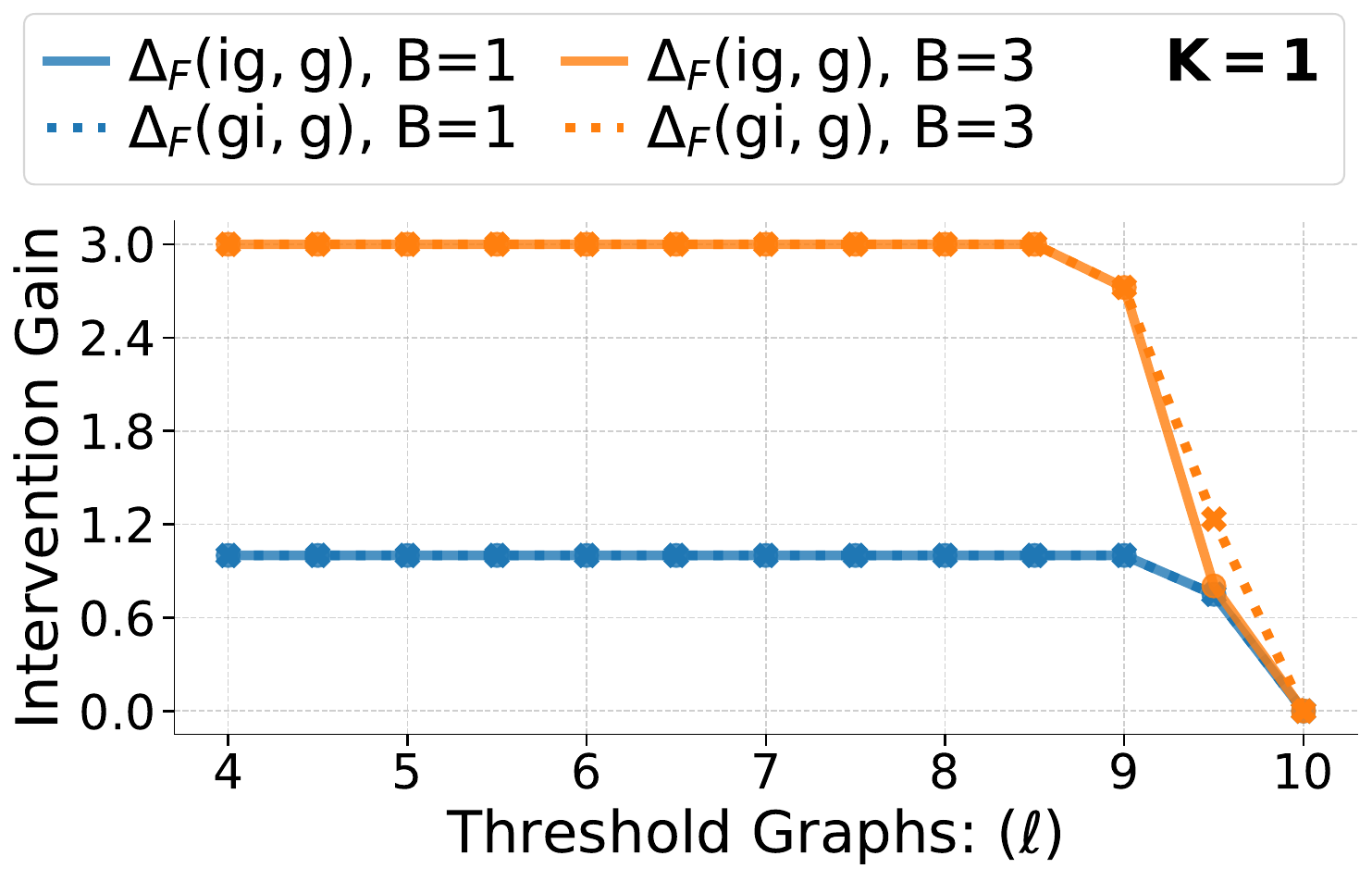}
    \caption{}
    \label{fig:portuguese_itm_thresh1}
\end{subfigure}\\[1ex]

\begin{subfigure}[t]{0.45\textwidth}
\centering
    \includegraphics[width=0.788\linewidth]{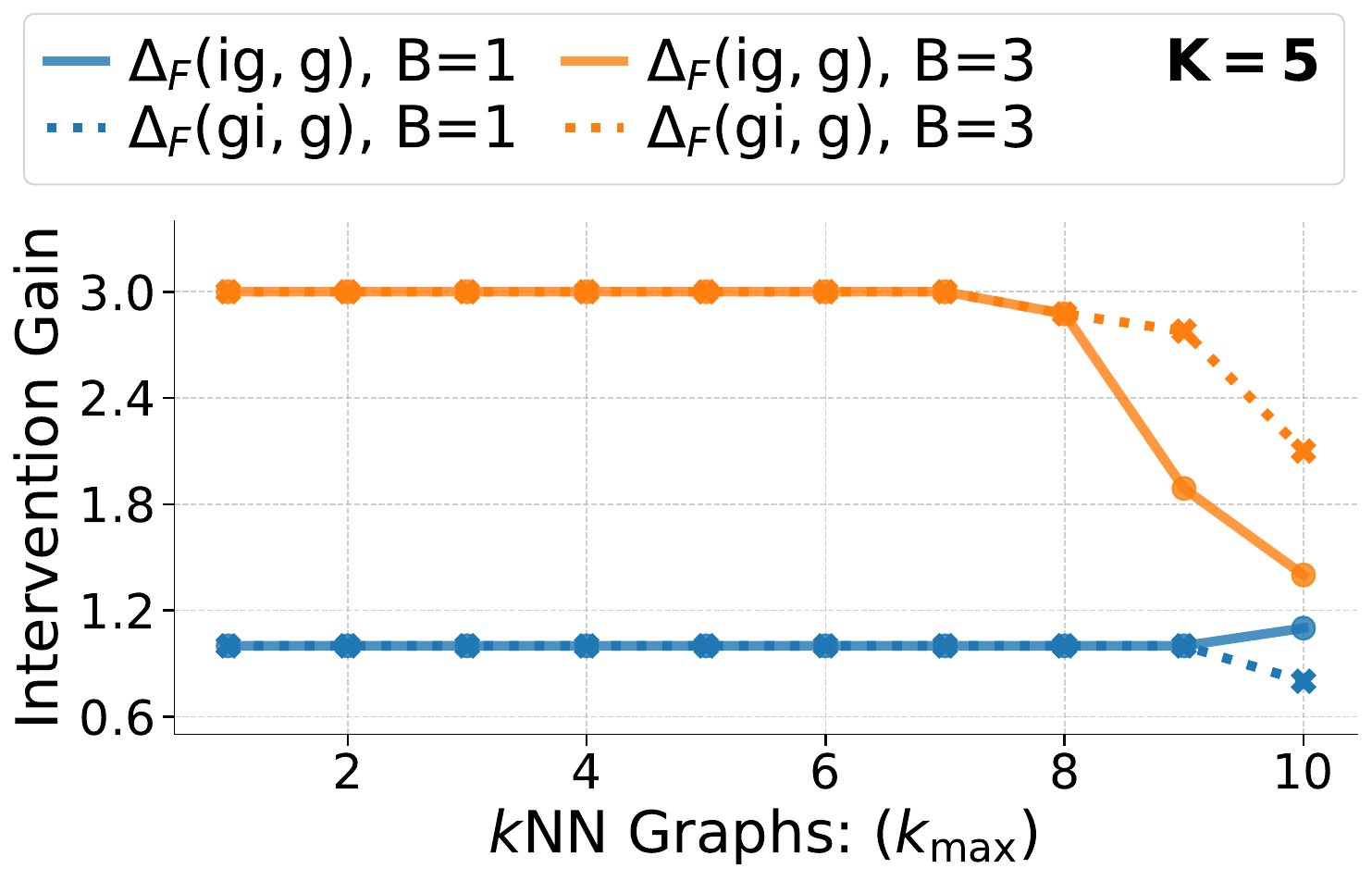}
    \caption{}
    \label{fig:portuguese_itm_knn5}
\end{subfigure}
\begin{subfigure}[t]{0.45\textwidth}
\centering
    \includegraphics[width=0.788\linewidth]{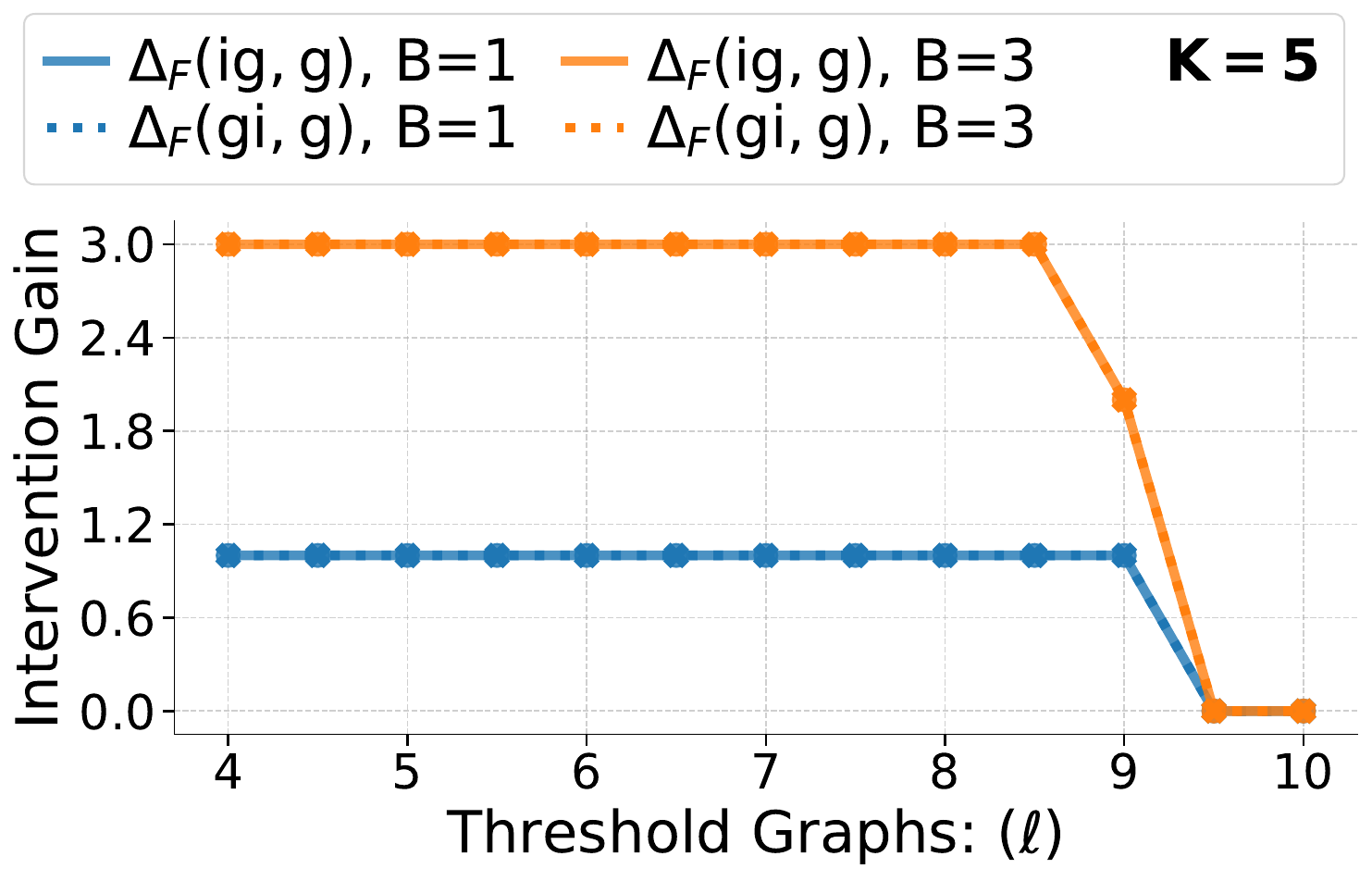}
    \caption{}
    \label{fig:portuguese_itm_thresh5}
\end{subfigure}
\vspace{2ex}
\textbf{Productivity Dataset}\\[1ex]
\begin{subfigure}[t]{0.45\textwidth}
\centering
    \includegraphics[width=0.788\linewidth]{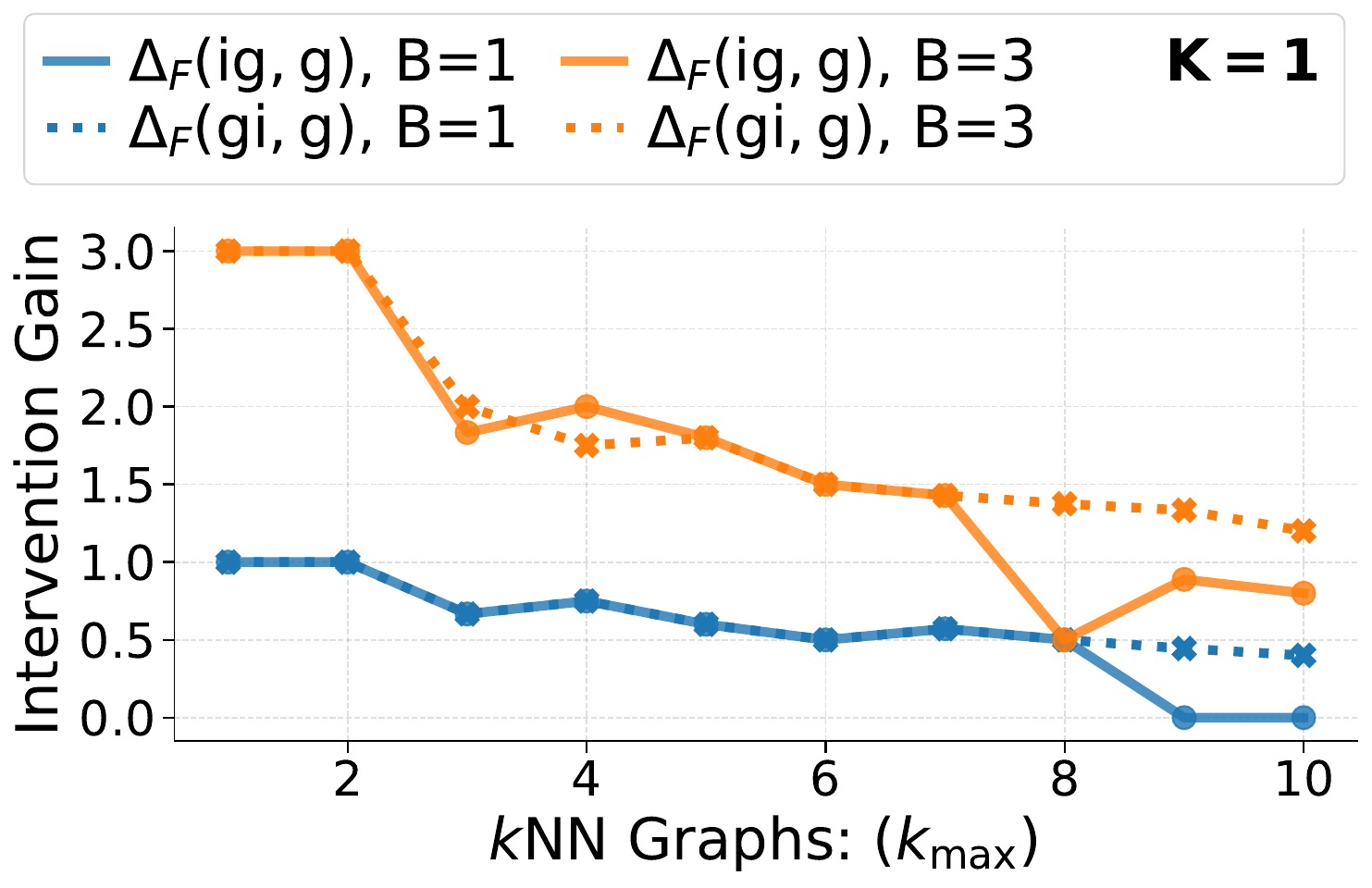}
    \caption{}
    \label{fig:app-prod_itm_knn1}
\end{subfigure}
\begin{subfigure}[t]{0.45\textwidth}
\centering
    \includegraphics[width=0.788\linewidth]{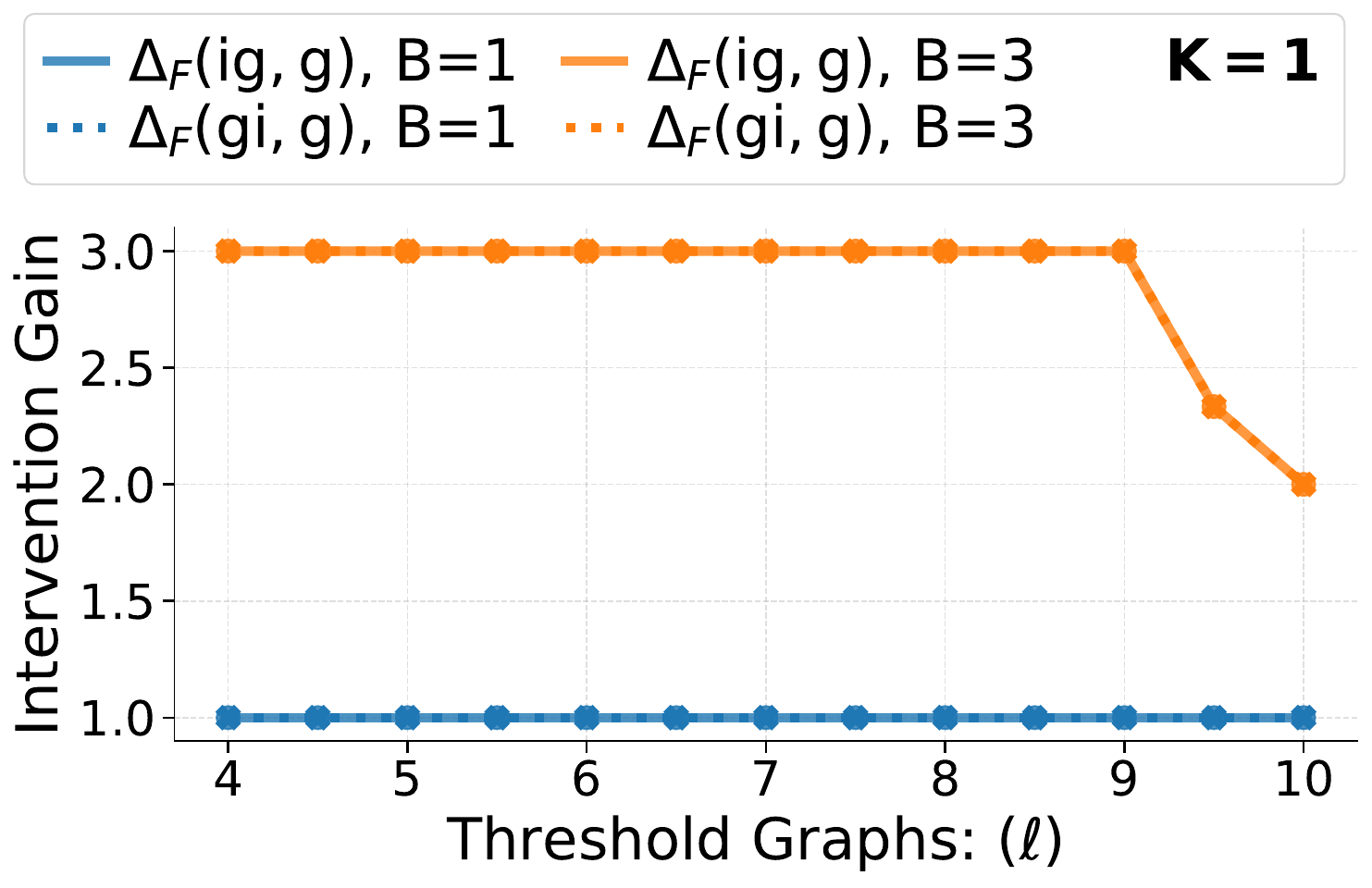}
    \caption{}
    \label{fig:app-prod_itm_thresh1}
\end{subfigure}\\[1ex]

\begin{subfigure}[t]{0.45\textwidth}
\centering
    \includegraphics[width=0.788\linewidth]{arxiv_figures/productivity_imb4aftersm_results_knn5.pdf}
    \caption{}
    \label{fig:app-prod_itm_knn5}
\end{subfigure}
\begin{subfigure}[t]{0.45\textwidth}
\centering
    \includegraphics[width=0.788\linewidth]{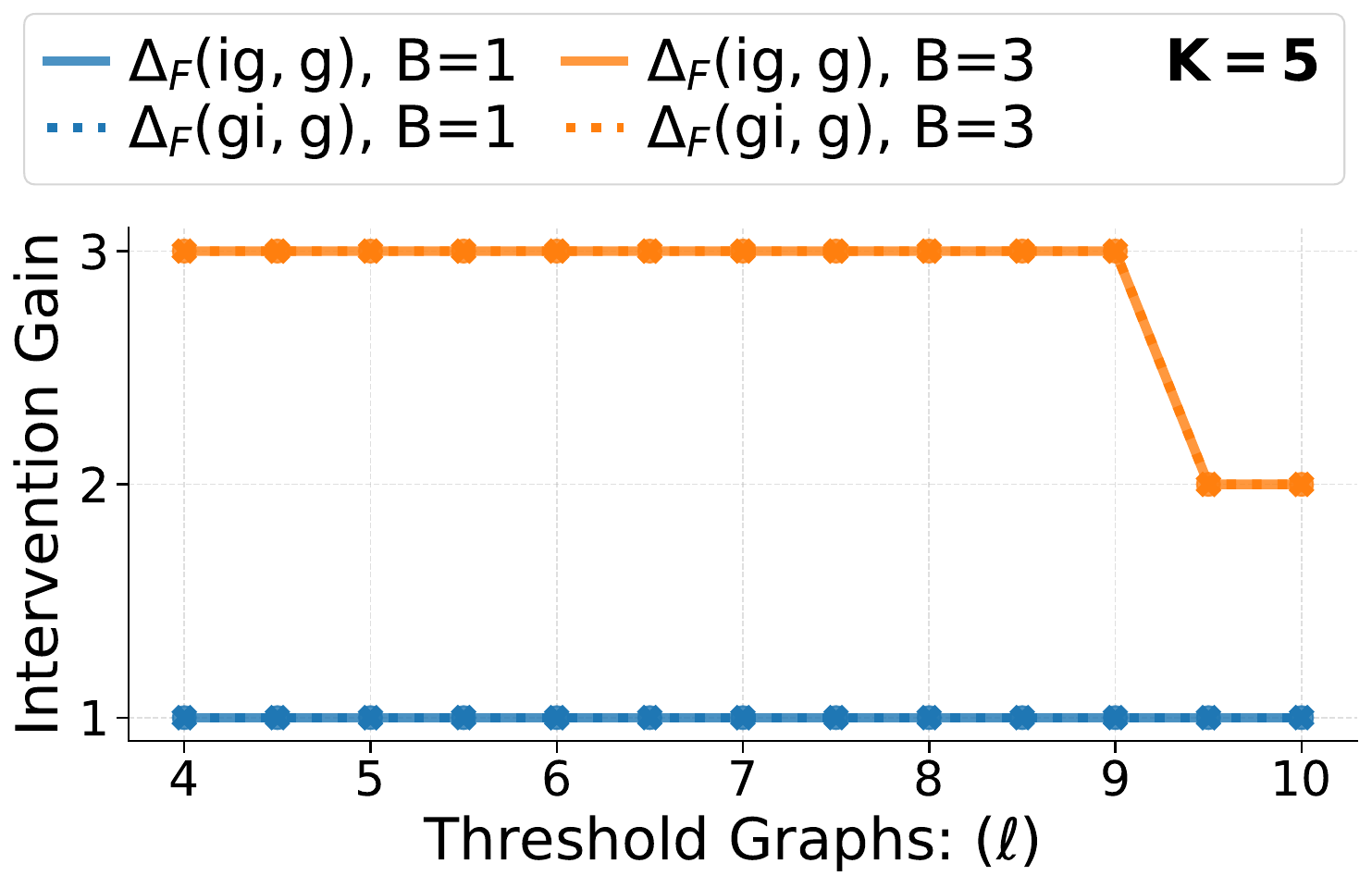}
    \caption{}
    \label{fig:app-prod_itm_thresh5}
\end{subfigure}
\caption{Pre- and post-reveal intervention gains (\(\Delta_F(ig,g)\) and \(\Delta_F(gi,g)\)) across datasets, target reveal budgets $K \in \{1,5\}$, intervention budgets $B \in \{1,3\}$, and graph generation methods ($k$NN and threshold) (Tables~\ref{tab:portuguese_kmax_r_stats} and \ref{tab:productivity_kmax_r_stats}).  
For both datasets, lower $K$ often yields larger gains. 
A high $B$ can be redundant when many agents have all-positive neighborhoods (\subref{fig:app-prod_itm_knn1}, \subref{fig:portuguese_itm_thresh1}, \subref{fig:portuguese_itm_thresh5}). Pre-reveal intervention can yield negative gains (Figure~\subref{fig:app-prod_itm_knn5}, $k_{\max}\!=\!\{9,10\}$) but may also sometimes outperform post-reveal intervention 
(Figure~\ref{fig:portuguese_itm_knn5}, $k_{\max}\!=\!10$).}
\label{fig:port_prod_itm_knn_thresh}
\end{figure}

\subsubsection*{General observations}
Intervention gains are generally upper bounded by the intervention budget \(B\) (Figures~\ref{fig:adult_math_itm_knn_thresh} and \ref{fig:port_prod_itm_knn_thresh}).

Smaller target reveal budgets \(K\) tend to produce higher gains, as most agents may still have low probabilities of emulating a positive target, even after welfare-maximizing subset of targets is revealed. In Figures~\ref{fig:adult_math_itm_knn_thresh} and \ref{fig:port_prod_itm_knn_thresh}, intervening when greedy was run with a budget of \(K = 1\) (subfigures (\textbf{a,b,e,f})) yields gains at least as large as the those when run with \(K = 5\) (subfigures (\textbf{c,d,g,h})).

Intervention gains are highest when multiple agents have empty or all-negative neighborhoods. For instance, in Adult \(k\)NN-generated graphs, the number of agents with all-negative neighborhoods exceeds the intervention budget, that is, for each graph \(i \in [10], \text{only-Ns}^{(i)} > B = 4\) (Table~\ref{tab:adult_kmax_r_stats}). Consequently, the pre- and post-reveal intervention gains are capped by \(B\) (Figures~\ref{fig:adult_itm_knn1},\subref{fig:adult_itm_knn5}).

Intervention may be redundant or underutilized when many agents have all-positive neighborhoods, very few have all-negative or empty neighborhoods and the label reveal algorithm achieves an optimal solution. For instance, in Productivity threshold generated graphs, greedy is optimal (Table~\ref{tab:productivity_kmax_r_stats}; Figure~\ref{fig:app_prod_star_sg_thresh}), and intervention applies only to a shrinking set of agents with no neighbors (Figures~\ref{fig:app-prod_itm_thresh1},\subref{fig:app-prod_itm_thresh5}). A similar pattern is observed in Adult, Math, and Portuguese threshold-generated graphs 
(Tables~\ref{tab:adult_kmax_r_stats}--\ref{tab:portuguese_kmax_r_stats};
Figures~\ref{fig:app_adult_sr_sg_thresh}--\subref{fig:app_port_sr_sg_thresh}), where the intervention budget is underutilized/redundant (Figures~\ref{fig:adult_math_itm_knn_thresh} and \ref{fig:port_prod_itm_knn_thresh}, subfigures (\textbf{b,d,f,h})).
In addition, intervention gains are generally small when high-risk agents already have a high probability of emulating a positive target (e.g., in Figures~\ref{fig:app-prod_itm_knn1},\subref{fig:app-prod_itm_knn5}).

\subsubsection*{Comparison of pre- and post-reveal interventions}
Post-reveal interventions consistently produce positive intervention gains, which are most times at least as large as those from pre-reveal intervention (Figures~\ref{fig:adult_math_itm_knn_thresh} and \ref{fig:port_prod_itm_knn_thresh}). 

Pre-reveal intervention can occasionally lead to negative intervention gains (Figure~\ref{fig:app-prod_itm_knn5}). If the label reveal algorithm (Algorithm~\ref{alg:greedy_lbreveal}) is effective and high-risk agents already have high probabilities of emulating positive targets, removing them before executing  Algorithm~\ref{alg:greedy_lbreveal} might distort the graph and lead to the algorithm selecting a target set with lower social welfare than if those agents had remained, resulting in a negative intervention gain \(\Delta_{F}(ig,g)\).

\subsection{Empirical Results under the Coverage Radius Model}
\label{sec:app-cmexps}
\begin{figure}[b!]
    \centering
    \includegraphics[width=0.59\textwidth]{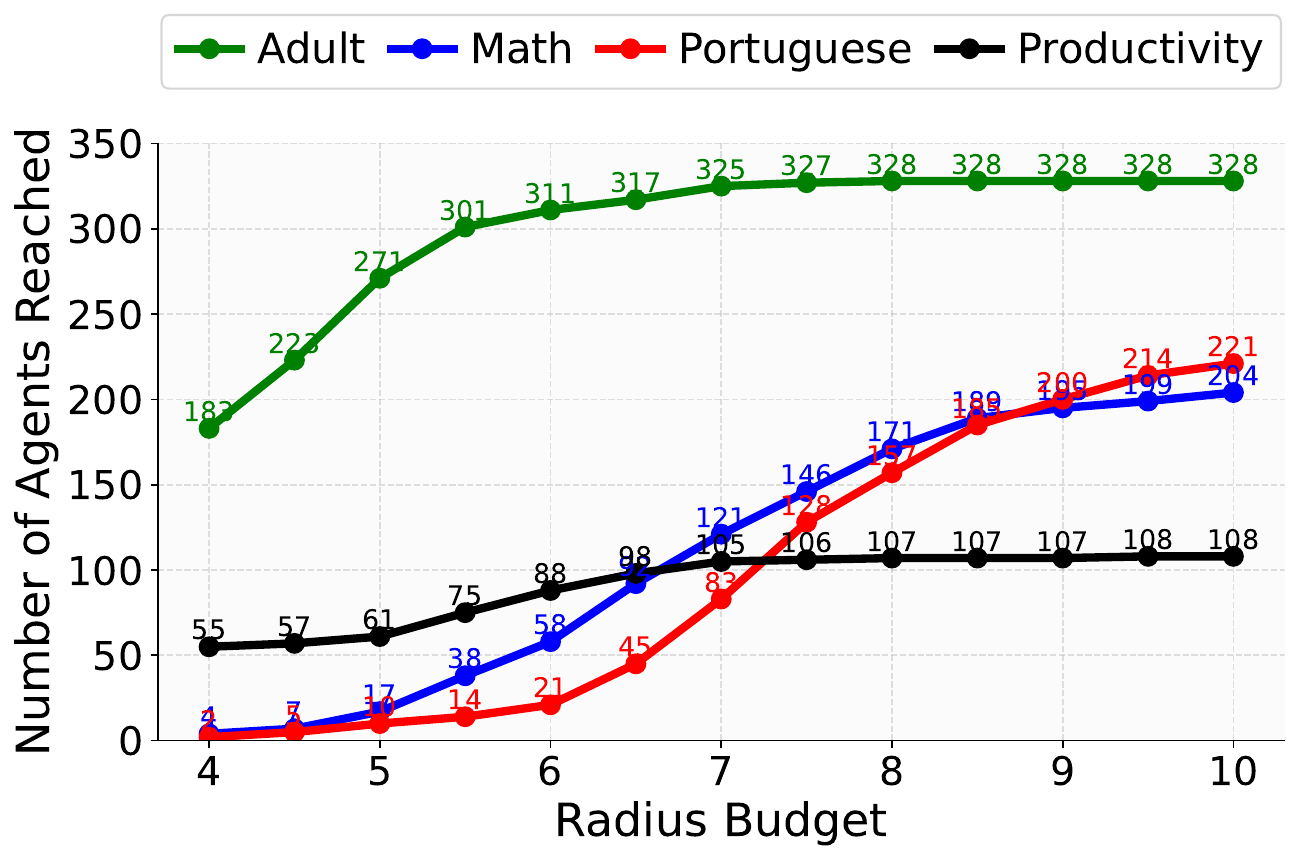}
    \caption{Performance of the greedy coverage radius algorithm (Algorithm~\ref{alg:radius_greedy_coverage}) on the Adult, Math, Portuguese, and Productivity datasets. Number of agents reached increases with the radius budget.}
    \label{fig:alldatasets_radius}
\end{figure}

For all geometric graphs generated with zero initial target radius \(r_i = 0\) for all targets \(i\), increasing the radius budget increases the number of agents reached by positive targets (Figure~\ref{fig:alldatasets_radius}). When many agents have several positive targets within a comparable radius, an additional radius provides little to no gain (e.g., on Adult dataset, at \(R\geq 8\)). When distances to the nearest positive target vary substantially, larger radius budgets lead to broader coverage (e.g., on Math, Portuguese, and Productivity datasets). 
When agents are densely grouped at roughly the same large distance from positive targets, minor increases in radius produce little change while more substantial expansions broaden coverage, as seen in the Productivity dataset.

\subsection{Empirical Results under the Learning Setting}
\label{sec:app-lsexps}
We analyze the empirical results for the learning setting. Training and testing scores are averaged over \(100\) independent train-test splits, each constructed with a different random seed. Performance is evaluated using three metrics, 
\(\mathrm{Perf}_{1}, \mathrm{Perf}_{2}, \mathrm{Perf}_{3} \in [0,100]\), defined in Appendix~\ref{subsec:app-perfeval}.

Across \(k\)NN and threshold graphs for all datasets, training performance is consistently at least as high as testing performance at comparable budget levels 
(Figures~\ref{fig:app_adult-math_learn_knn_thresh} and \ref{fig:app_port-prod_learn_knn_thresh}).

For \(k\)NN graphs in which each agent has at most one neighbor that is positive or negative 
(Tables~\ref{tab:adult_kmax_r_stats}--\ref{tab:productivity_kmax_r_stats}, $k_{\max}=1$), the performance scores are structure-dependent since there are no revealed targets. Here, \(\mathrm{Perf}_{1}=100\) since all agents with atleast one positive target neighbor are catered to, \(\mathrm{Perf}_{2}\) equals the fraction of agents connected to positive targets, and \(\mathrm{Perf}_{3}=0\) because no agents are helpable 
(Figures~\ref{fig:app_adult-math_learn_knn_thresh} and \ref{fig:app_port-prod_learn_knn_thresh}, 
subfigures (\textbf{a--c, g--i})).

For threshold graphs, particularly those generated with higher thresholds, increased connectivity 
(Tables~\ref{tab:adult_kmax_r_stats}--\ref{tab:productivity_kmax_r_stats} where $\ell \ge 6.5$) leads training and testing performance to converge to the same score, and the budget levels become less impactful 
(Figures~\ref{fig:app_adult-math_learn_knn_thresh} and \ref{fig:app_port-prod_learn_knn_thresh}, 
subfigures (\textbf{d--f, j--l}). These findings are attributed to an increase in the number of positive targets connected to all helpable agents.

When many agents have empty or all-negative target neighborhoods, 
\(\mathrm{Perf}_{2}\) is more affected than the other metrics, since those agents are excluded from the evaluation in \(\mathrm{Perf}_{1}\) and \(\mathrm{Perf}_{3}\). For example, in the Productivity threshold graphs (Table~\ref{tab:productivity_kmax_r_stats}), even when there is a high number of positive targets connected to all helpable agents, \(\mathrm{Perf}_{2}\) remains well below \(100\) because some agents are connected exclusively to negative targets (Figure~\ref{fig:app_prod_learn_thresh_perf2}).

Finally, the three metrics can differ markedly within the same graph, particularly when only one agent has both positive and negative targets in its neighborhood. For example, in the Portuguese \(\ell=4\) threshold graph, of \(224\) agents, \(4\) have only positive neighbors, \(219\) only negative neighbors, and \(1\) has both (Table~\ref{tab:portuguese_kmax_r_stats}, $\ell=4.0$). Likewise, in the Math \(\ell=4\) threshold graph, among \(206\) agents, \(10\) have only positive neighbors, \(5\) only negative neighbors, \(190\) have no neighbors, and \(1\) has both (Table~\ref{tab:math_kmax_r_stats}, $\ell=4.0$).
If this single mixed-neighborhood agent appears in the training set, Algorithm~\ref{alg:greedy_lbreveal} can reveal a target that ensures that the agent emulates a positive target with probability one. If it appears in the test set instead, the algorithm reveals no targets during training, yielding zero social welfare for that agent at test time. As a result, \(\mathrm{Perf}_{1}\) may reach \(100\%\) on the training set yet be very low on the test set. Overall, \(\mathrm{Perf}_{1}\) is reduced by the large number of agents who cannot be helped by Algorithm~\ref{alg:greedy_lbreveal}, whereas \(\mathrm{Perf}_{3}\) captures the average probability of helping the mixed agent across splits (Figures~\ref{fig:app_math_learn_thresh_perf1}--\subref{fig:app_math_learn_thresh_perf3} and \ref{fig:app_port_learn_thresh_perf1}--\subref{fig:app_port_learn_thresh_perf3}).

\begin{figure}[t!]
\captionsetup[subfigure]{justification=Centering}
\centering
\textbf{Adult Dataset}\\[1.5ex]
\begin{subfigure}[t]{0.32\textwidth}
\centering
    \includegraphics[width=\textwidth]{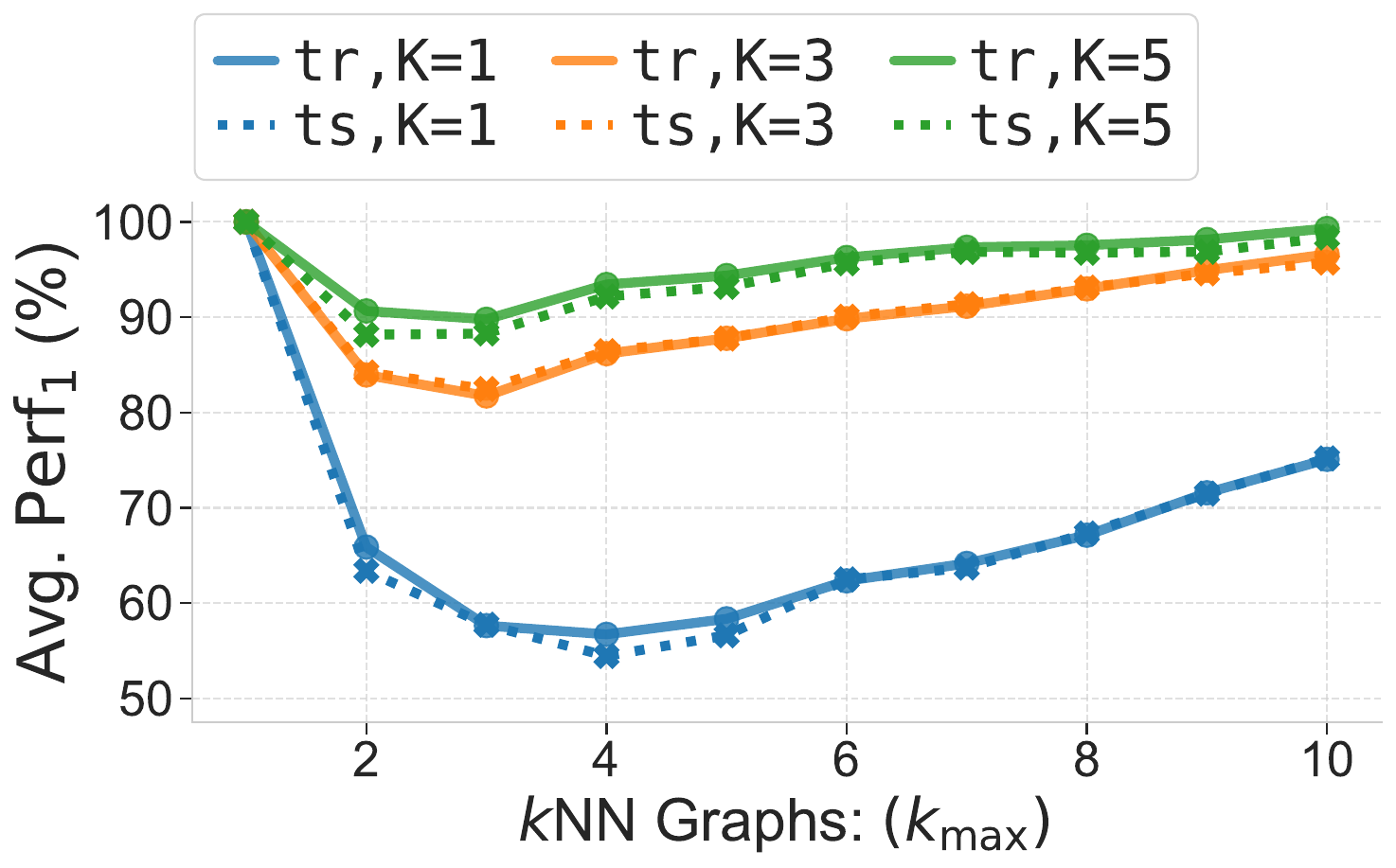}
    \caption{\(k\)NN: \(\mathrm{Perf}_{1}\)}
    \label{fig:app_adult_learn_knn_perf1}
\end{subfigure}
\begin{subfigure}[t]{0.32\textwidth}
\centering
    \includegraphics[width=\linewidth]{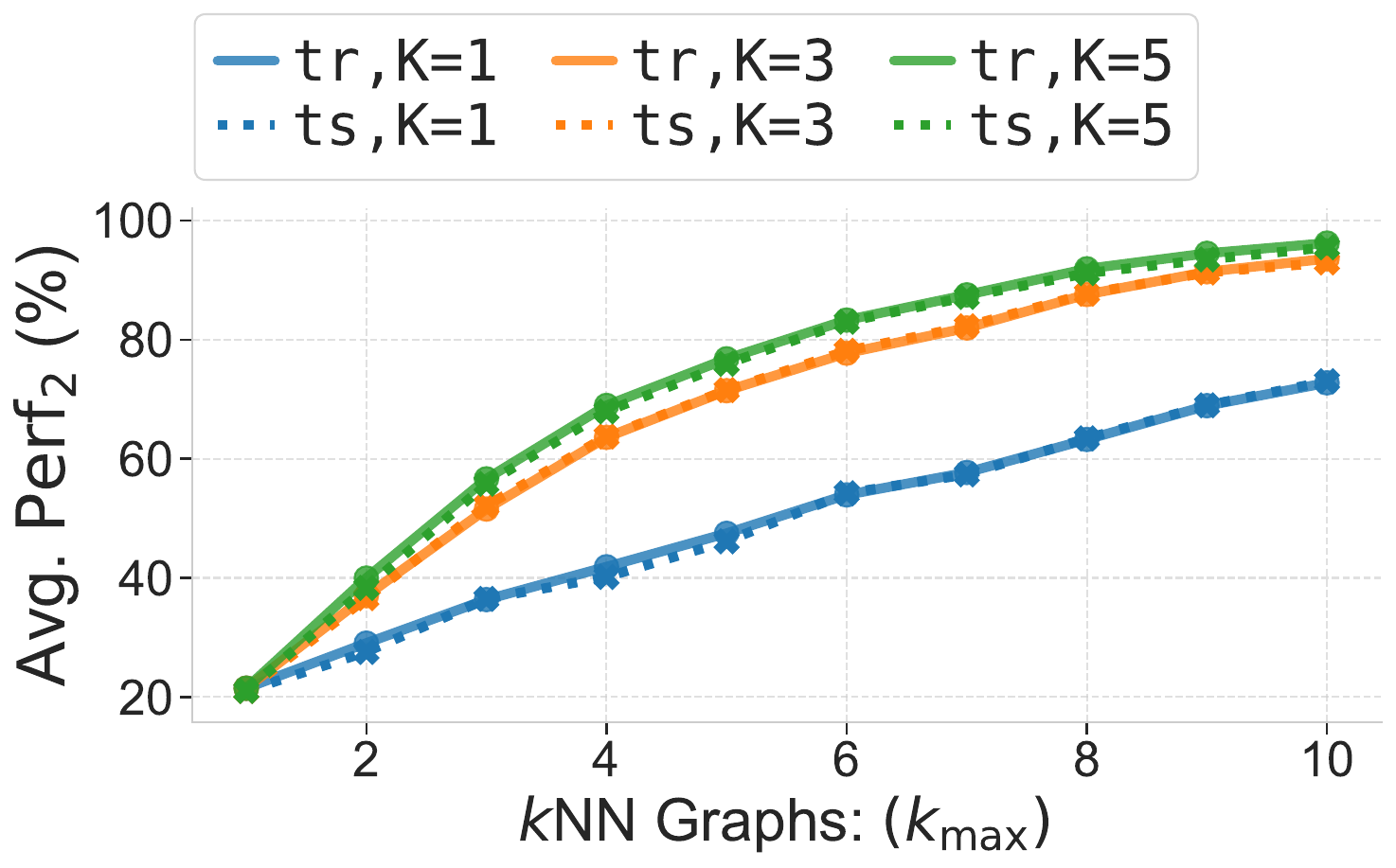}
    \caption{\(k\)NN: \(\mathrm{Perf}_{2}\)}
    \label{fig:app_adult_learn_knn_perf2}
\end{subfigure}
\begin{subfigure}[t]{0.32\textwidth}
\centering
    \includegraphics[width=\linewidth]{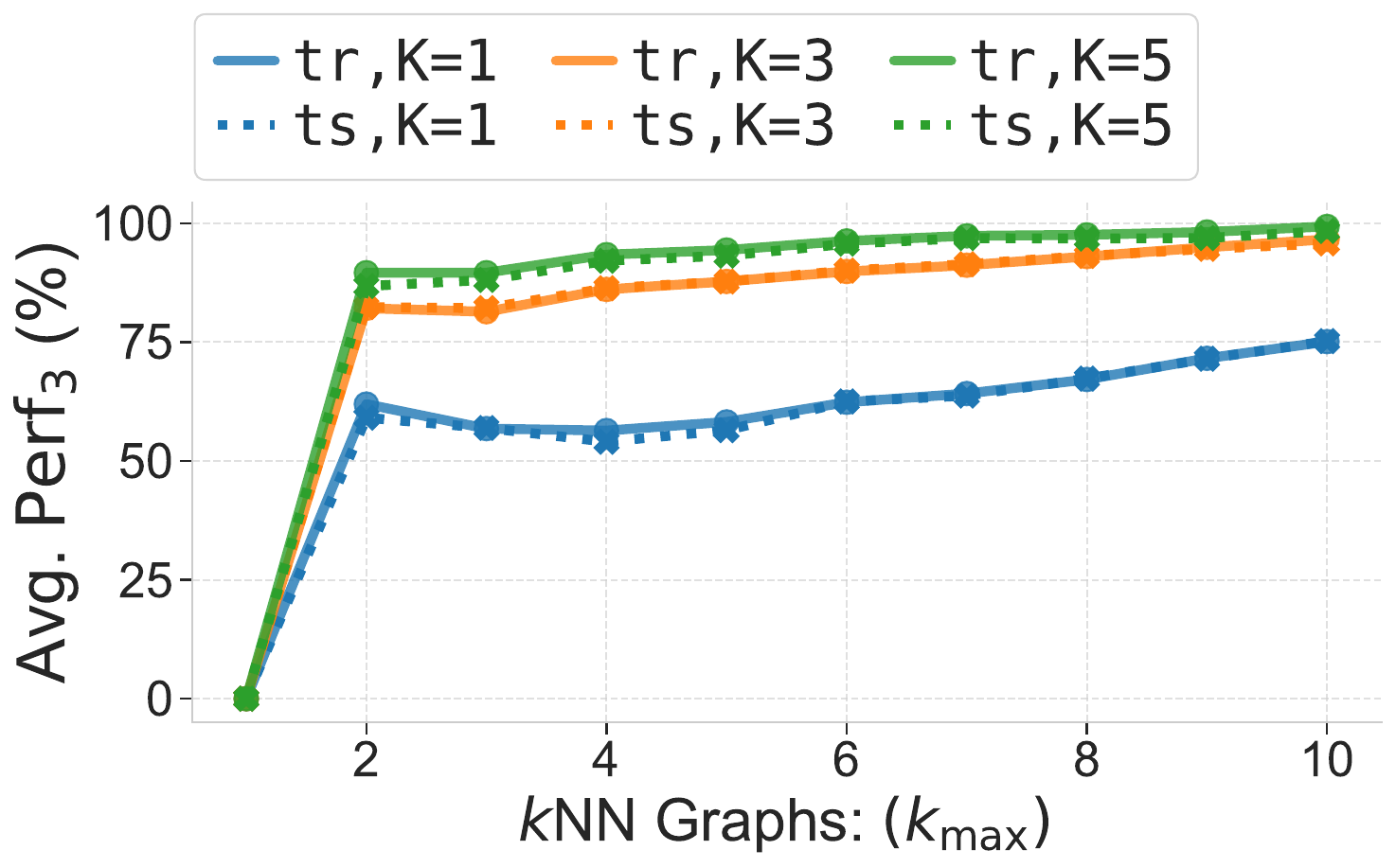}
    \caption{\(k\)NN: \(\mathrm{Perf}_{3}\)}
    \label{fig:app_adult_learn_knn_perf3}
\end{subfigure}\\[1ex]

\begin{subfigure}[t]{0.32\textwidth}
\centering
    \includegraphics[width=\textwidth]{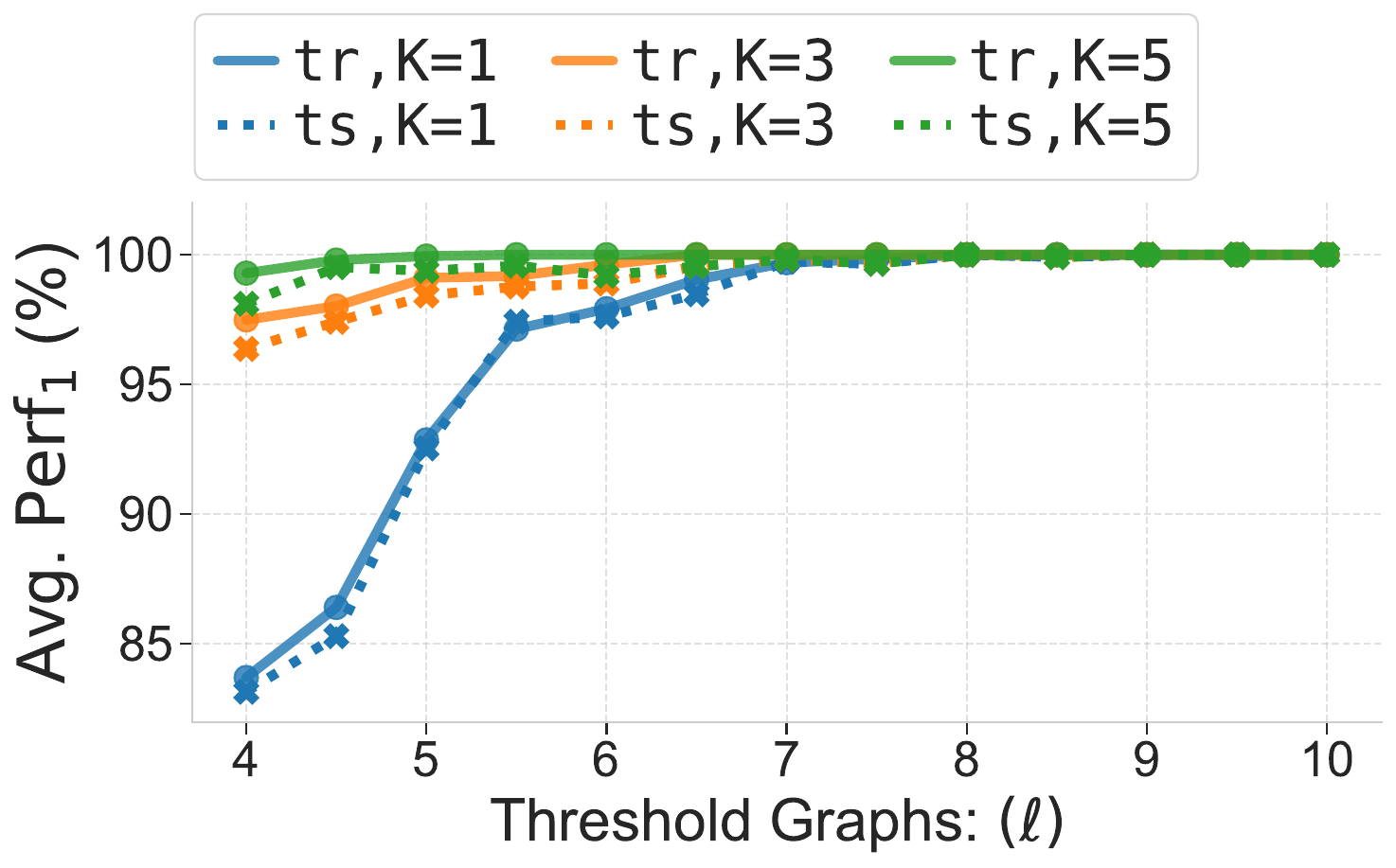}
    \caption{Threshold: \(\mathrm{Perf}_{1}\)}
    \label{fig:app_adult_learn_thresh_perf1}
\end{subfigure}
\begin{subfigure}[t]{0.32\textwidth}
\centering
    \includegraphics[width=\linewidth]{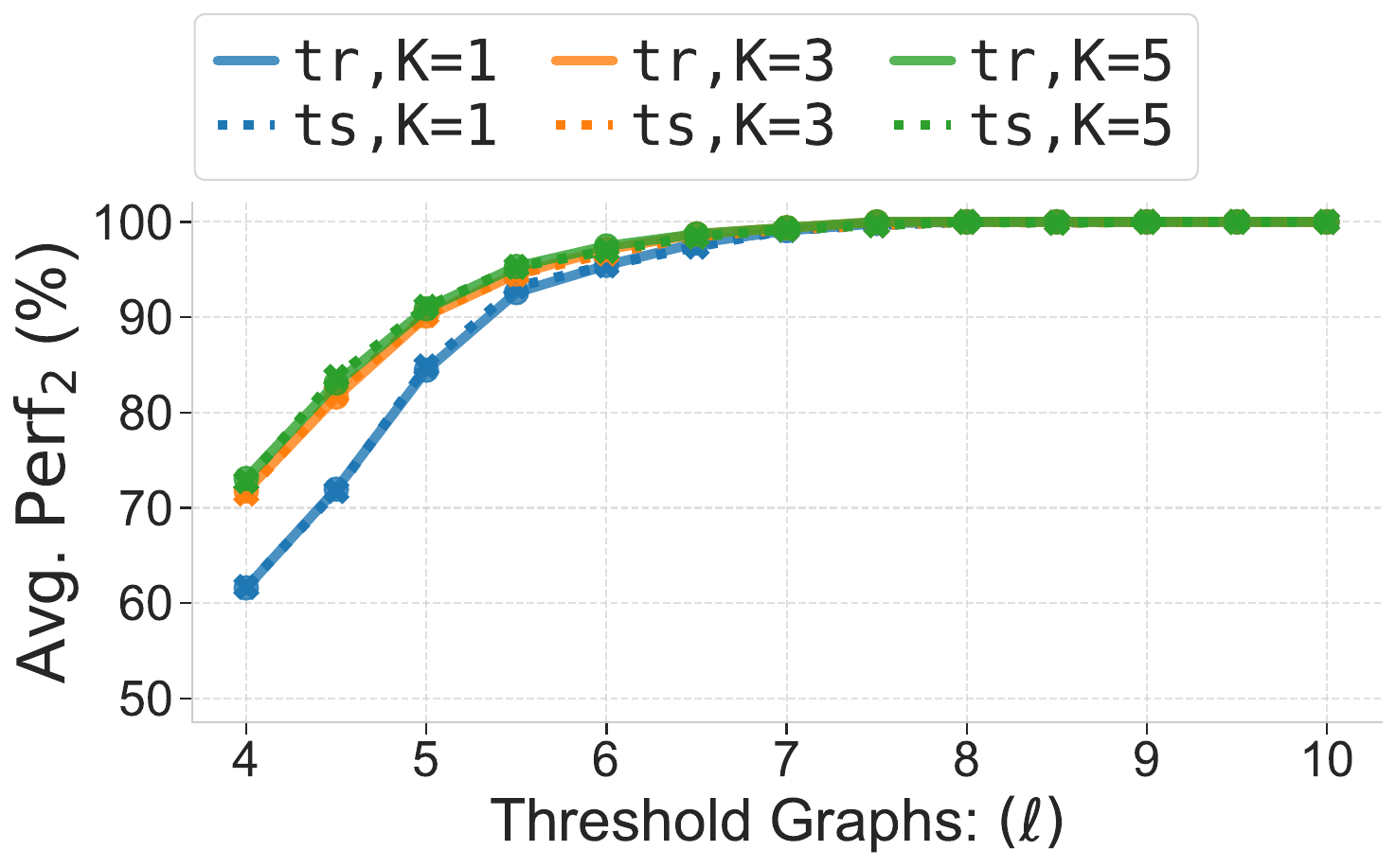}
    \caption{Threshold: \(\mathrm{Perf}_{2}\)}
    \label{fig:app_adult_learn_thresh_perf2}
\end{subfigure}
\begin{subfigure}[t]{0.32\textwidth}
\centering
    \includegraphics[width=\linewidth]{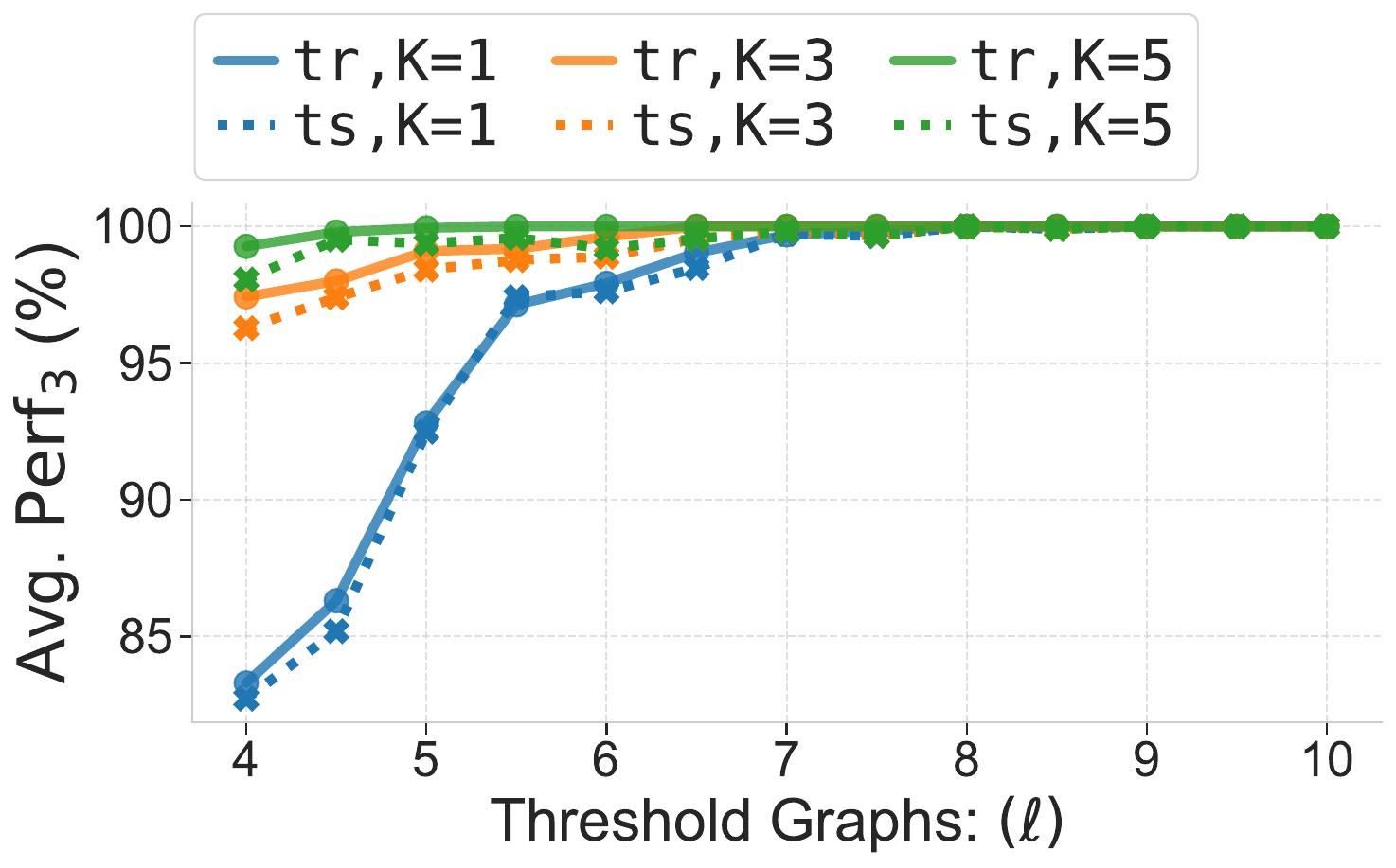}
    \caption{Threshold: \(\mathrm{Perf}_{3}\)}
    \label{fig:app_adult_learn_thresh_perf3}
\end{subfigure}
\\[2ex]
\textbf{Math Dataset}
\\[2ex]
\begin{subfigure}[t]{0.32\textwidth}
\centering
    \includegraphics[width=\textwidth]{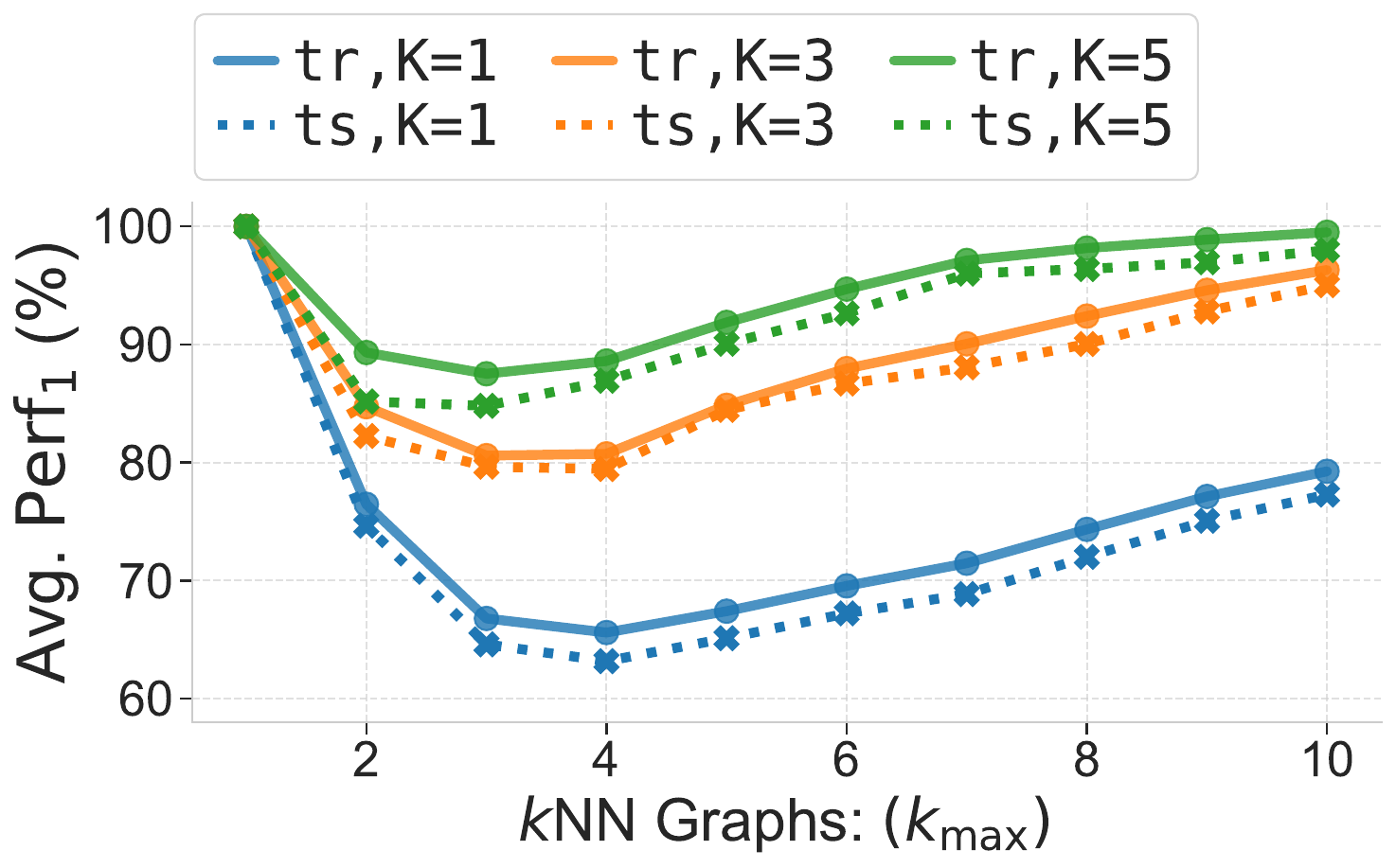}
    \caption{\(k\)NN: \(\mathrm{Perf}_{1}\)}
    \label{fig:app_math_learn_knn_perf1}
\end{subfigure}
\begin{subfigure}[t]{0.32\textwidth}
\centering
    \includegraphics[width=\linewidth]{arxiv_figures/studentmath_learn_both_results_knn_perf2.pdf}
    \caption{\(k\)NN: \(\mathrm{Perf}_{2}\)}
    \label{fig:app_math_learn_knn_perf2}
\end{subfigure}
\begin{subfigure}[t]{0.32\textwidth}
\centering
    \includegraphics[width=\linewidth]{arxiv_figures/studentmath_learn_both_results_knn_perf3.pdf}
    \caption{\(k\)NN: \(\mathrm{Perf}_{3}\)}
    \label{fig:app_math_learn_knn_perf3}
\end{subfigure}\\[1ex]

\begin{subfigure}[t]{0.32\textwidth}
\centering
    \includegraphics[width=\textwidth]{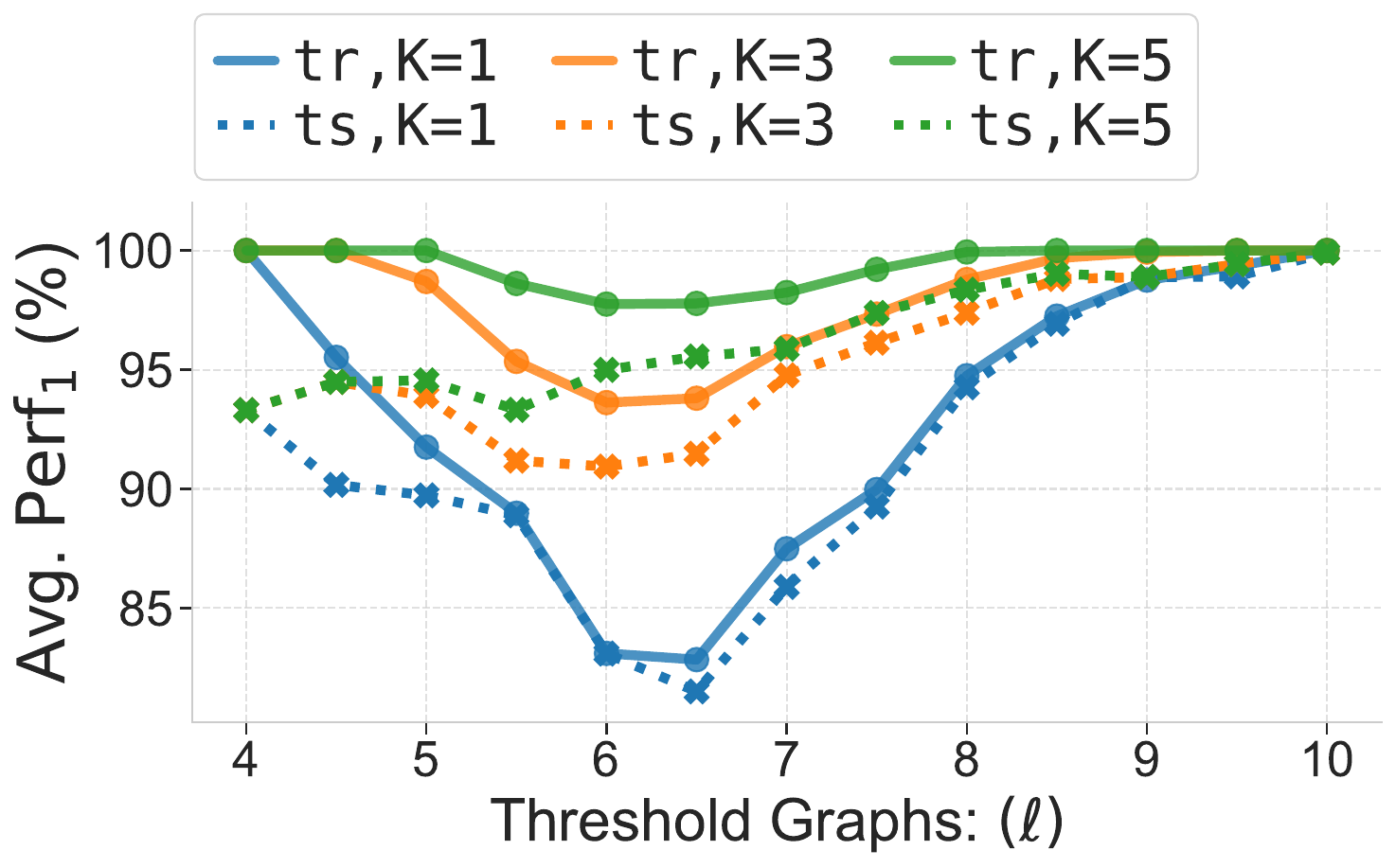}
    \caption{Threshold: \(\mathrm{Perf}_{1}\)}
    \label{fig:app_math_learn_thresh_perf1}
\end{subfigure}
\begin{subfigure}[t]{0.32\textwidth}
\centering
    \includegraphics[width=\linewidth]{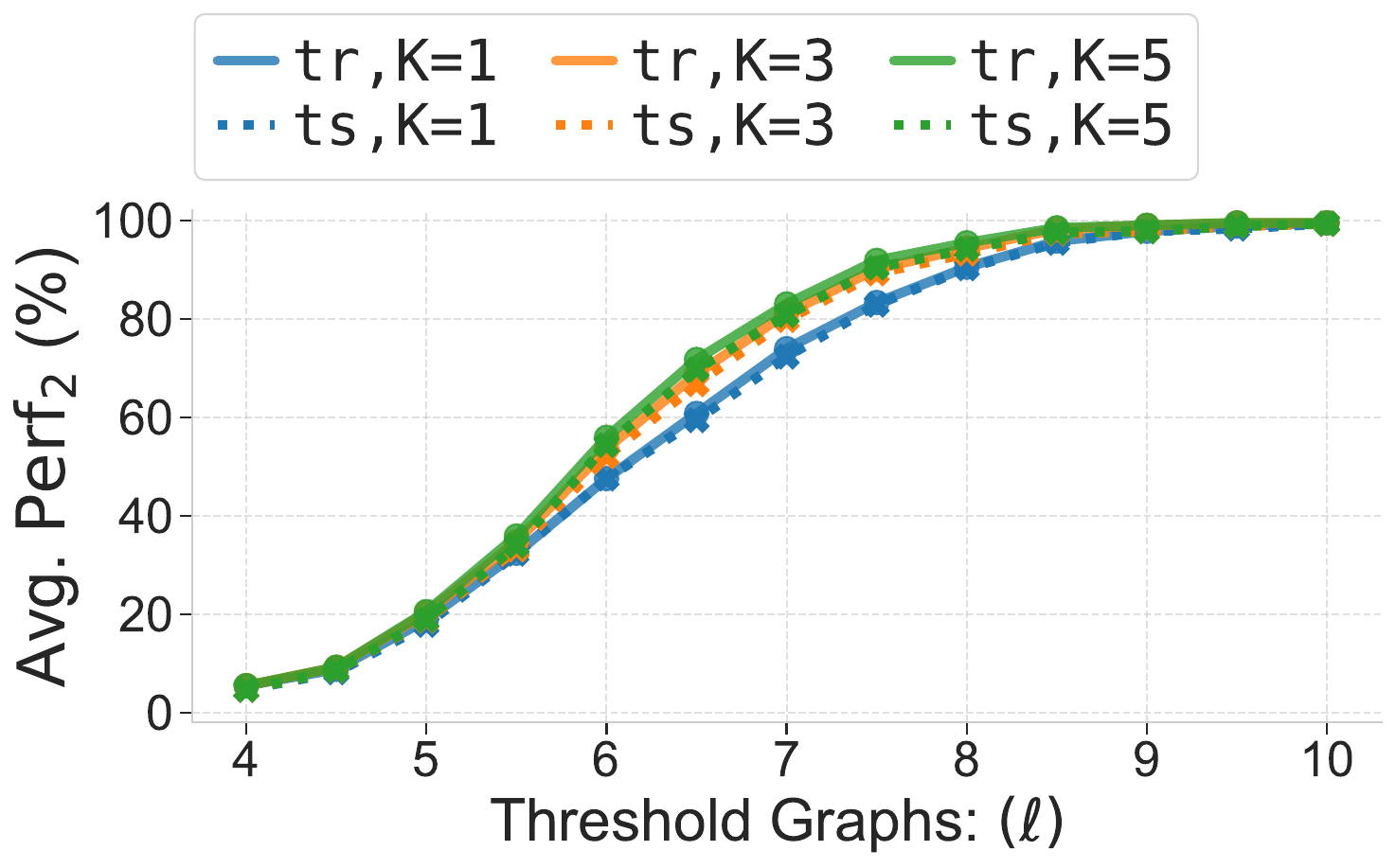}
    \caption{Threshold: \(\mathrm{Perf}_{2}\)}
    \label{fig:app_math_learn_thresh_perf2}
\end{subfigure}
\begin{subfigure}[t]{0.32\textwidth}
\centering
    \includegraphics[width=\linewidth]{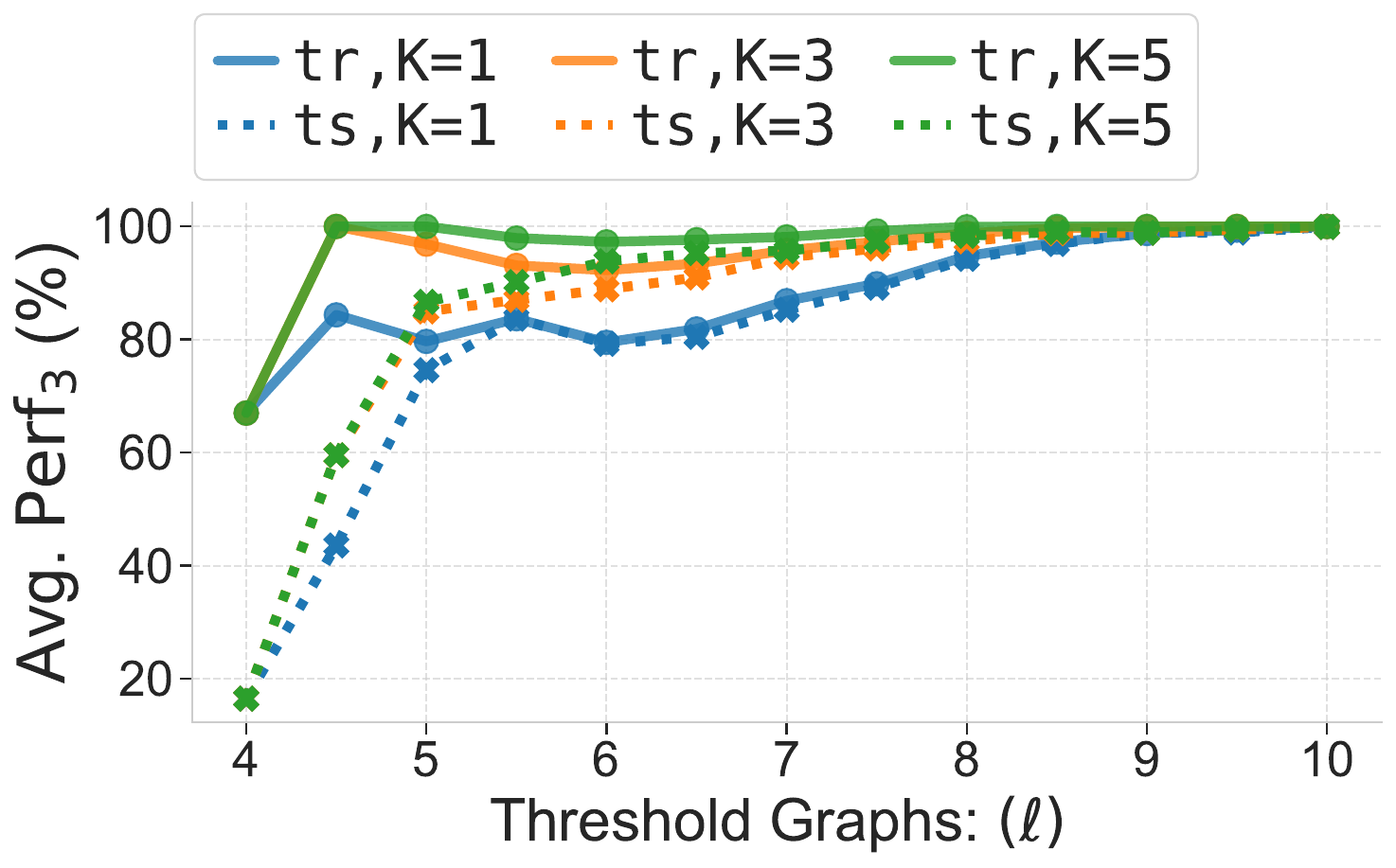}
    \caption{Threshold: \(\mathrm{Perf}_{3}\)}
    \label{fig:app_math_learn_thresh_perf3}
\end{subfigure}

\caption{Performance of Algorithm~\ref{alg:greedy_lbreveal} in the learning setting on the \textbf{Adult} and \textbf{Math} datasets under three metrics (\(\mathrm{Perf}_{1}, \mathrm{Perf}_{2}, \mathrm{Perf}_{3}\)) for \(k\)NN and threshold generated graphs  (Tables~\ref{tab:adult_kmax_r_stats} and \ref{tab:math_kmax_r_stats}).
Across both datasets, larger budget \(K\) lead to weakly higher performance. 
Increasing graph connectivity raises overall scores while reducing the performance gap between budgets (e.g., in Figures~\ref{fig:app_adult_learn_thresh_perf1}--\subref{fig:app_adult_learn_thresh_perf3}). 
Compared to Figures~\ref{fig:app_math_learn_thresh_perf1} and \ref{fig:app_math_learn_thresh_perf3}, the performance scores cluster more tightly across budgets in Figure~\ref{fig:app_math_learn_thresh_perf2} because unlike them, denominator in \(\mathrm{Perf}_{2}\) includes all sampled agents.}
\label{fig:app_adult-math_learn_knn_thresh}
\begin{picture}(0,0)
    \put(-250,499){\rotatebox{90}{\textit{$k$NN graphs}}}
    \put(-250,377){\rotatebox{90}{\textit{Threshold graphs}}}
    \put(-250,244){\rotatebox{90}{\textit{$k$NN graphs}}}
    \put(-250,120){\rotatebox{90}{\textit{Threshold graphs}}}
\end{picture}
\end{figure}

\begin{figure}[t!]
\captionsetup[subfigure]{justification=Centering}
\centering
\textbf{Portuguese Dataset}\\[1.5ex]
\begin{subfigure}[t]{0.32\textwidth}
\centering
    \includegraphics[width=\textwidth]{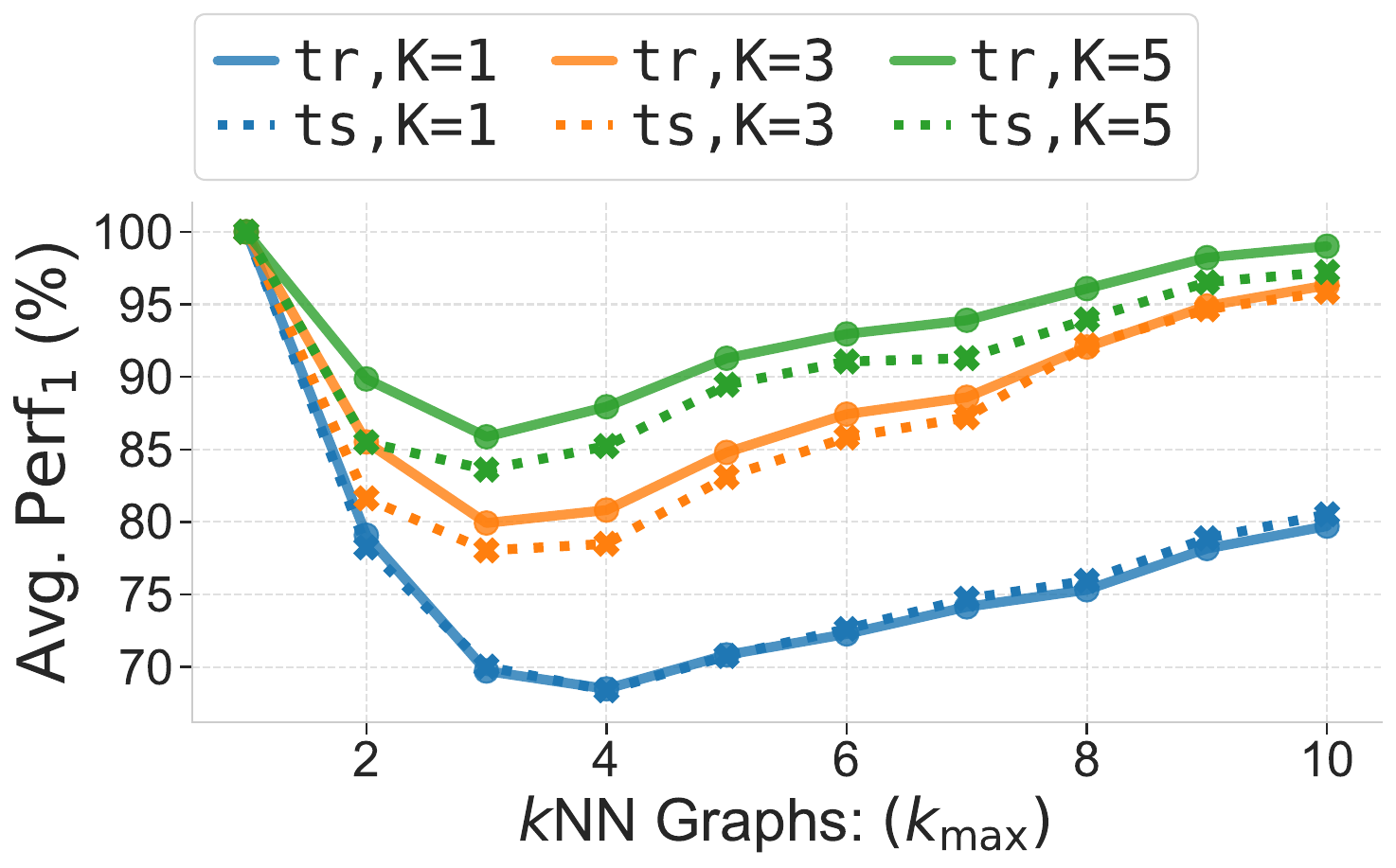}
    \caption{\(k\)NN: \(\mathrm{Perf}_{1}\)}
    \label{fig:app_port_learn_knn_perf1}
\end{subfigure}
\begin{subfigure}[t]{0.32\textwidth}
\centering
    \includegraphics[width=\linewidth]{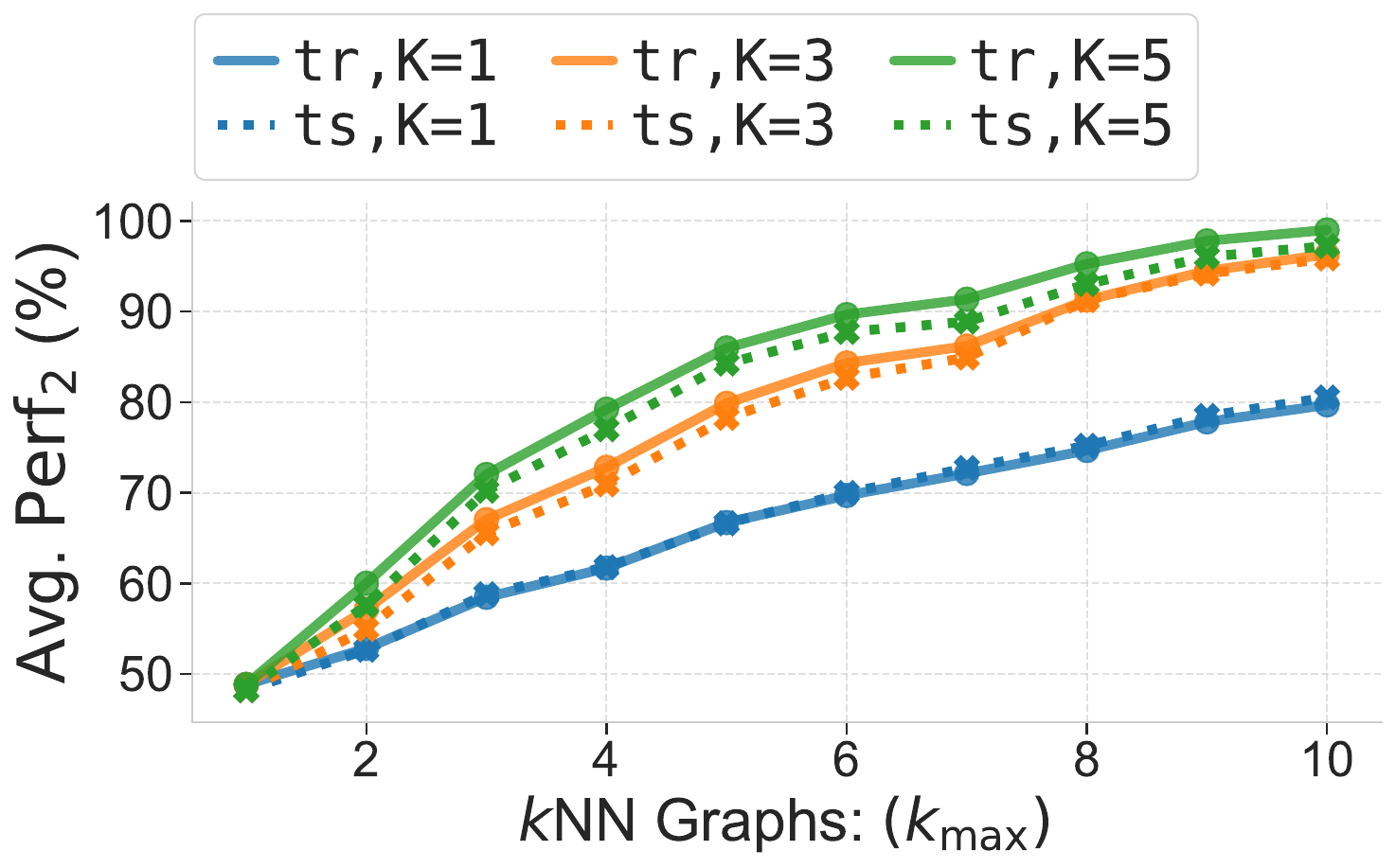}
    \caption{\(k\)NN: \(\mathrm{Perf}_{2}\)}
    \label{fig:app_port_learn_knn_perf2}
\end{subfigure}
\begin{subfigure}[t]{0.32\textwidth}
\centering
    \includegraphics[width=\linewidth]{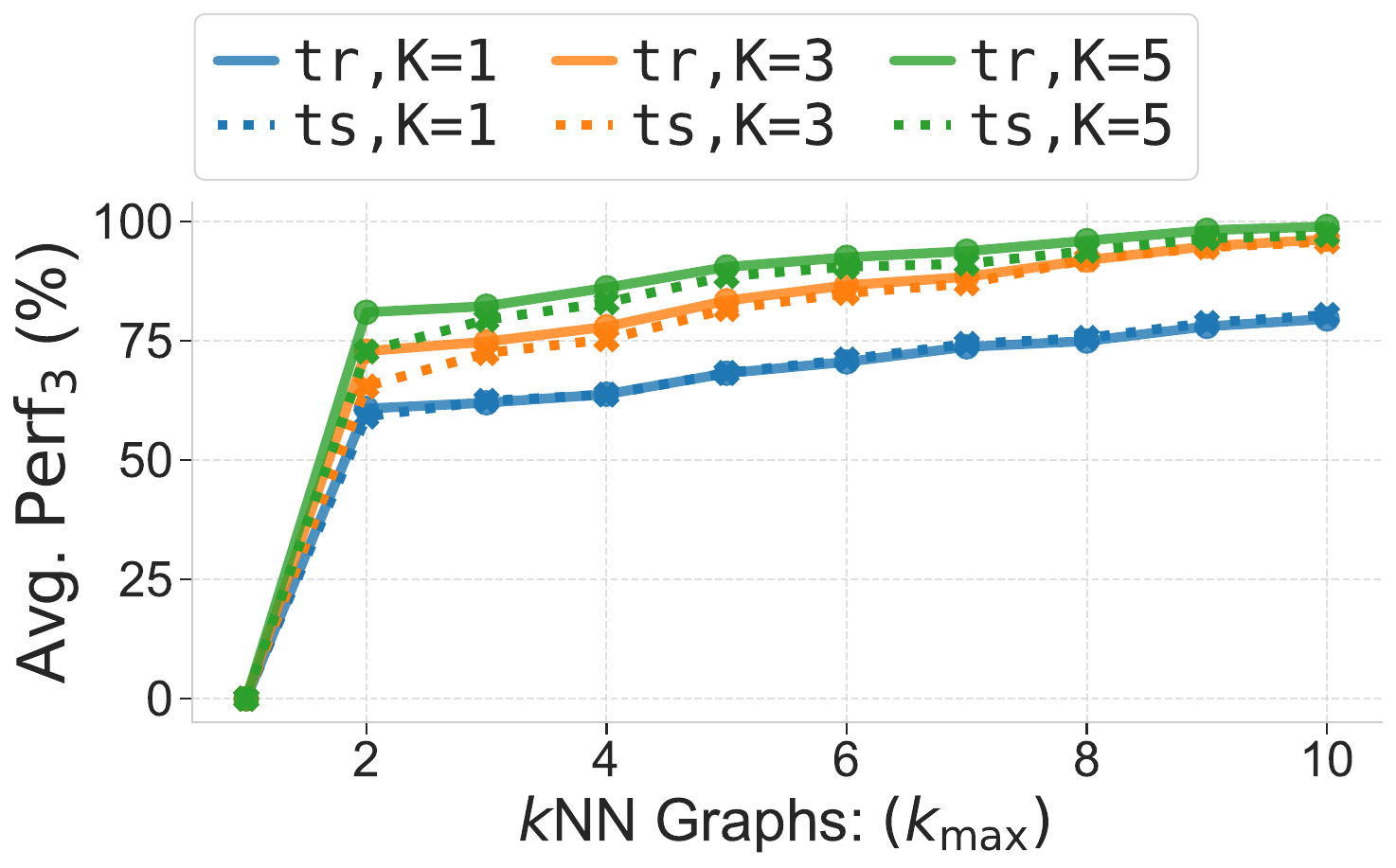}
    \caption{\(k\)NN: \(\mathrm{Perf}_{3}\)}
    \label{fig:app_port_learn_knn_perf3}
\end{subfigure}\\[1ex]

\begin{subfigure}[t]{0.32\textwidth}
\centering
    \includegraphics[width=\textwidth]{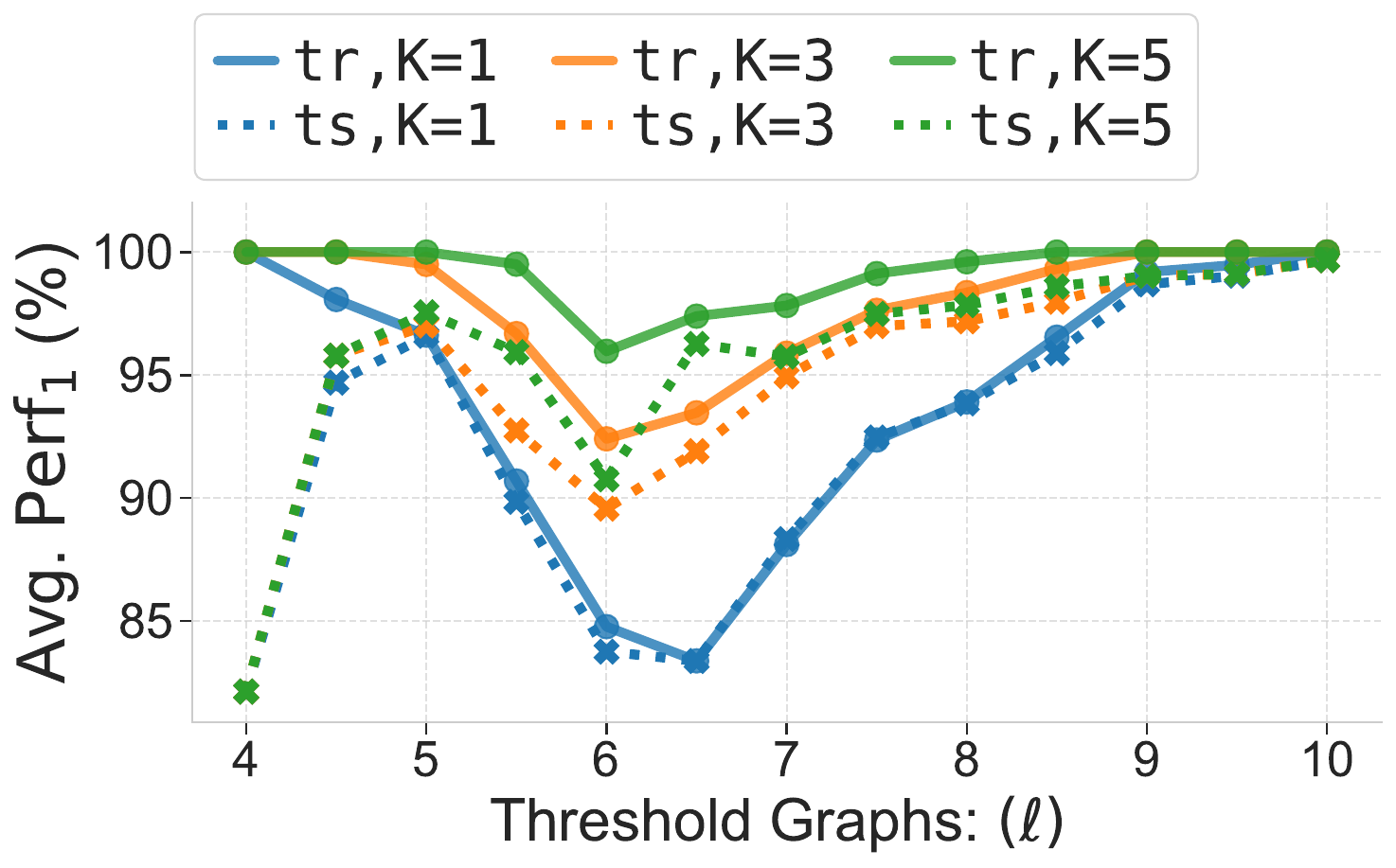}
    \caption{Threshold: \(\mathrm{Perf}_{1}\)}
    \label{fig:app_port_learn_thresh_perf1}
\end{subfigure}
\begin{subfigure}[t]{0.32\textwidth}
\centering
    \includegraphics[width=\linewidth]{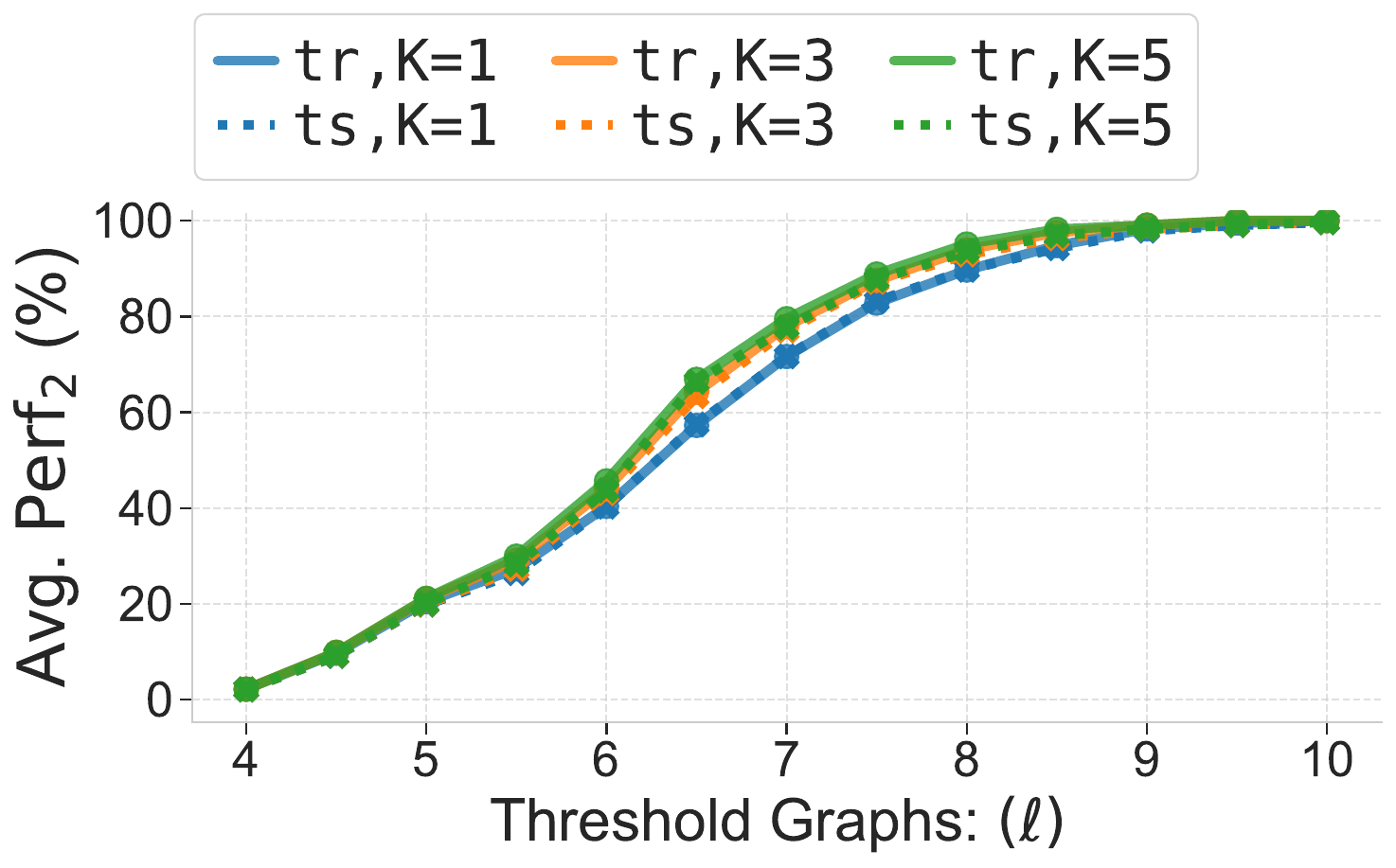}
    \caption{Threshold: \(\mathrm{Perf}_{2}\)}
    \label{fig:app_port_learn_thresh_perf2}
\end{subfigure}
\begin{subfigure}[t]{0.32\textwidth}
\centering
    \includegraphics[width=\linewidth]{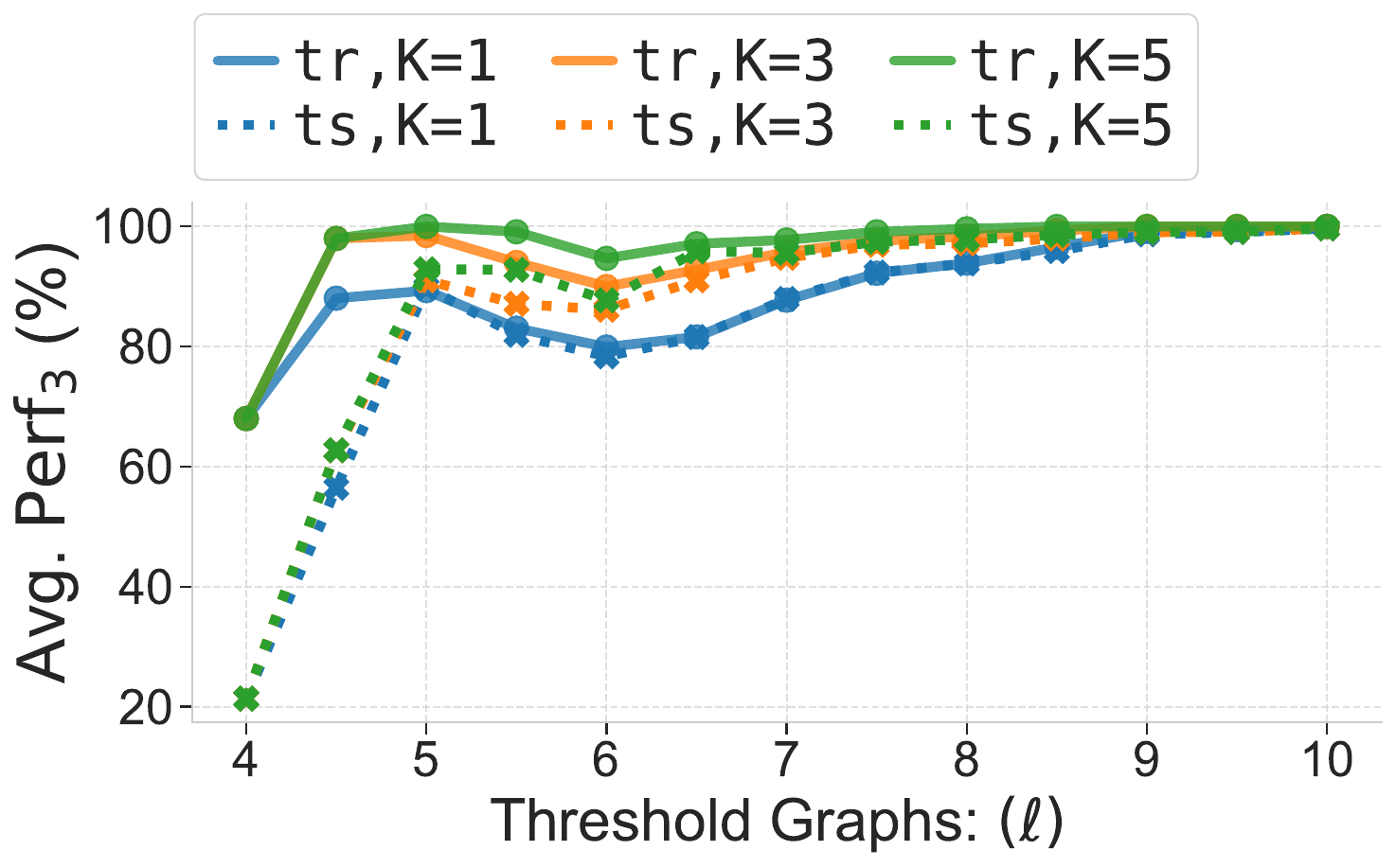}
    \caption{Threshold: \(\mathrm{Perf}_{3}\)}
    \label{fig:app_port_learn_thresh_perf3}
\end{subfigure}
\\[2ex]
\textbf{Productivity Dataset}
\\[2ex]
\begin{subfigure}[t]{0.32\textwidth}
\centering
    \includegraphics[width=\textwidth]{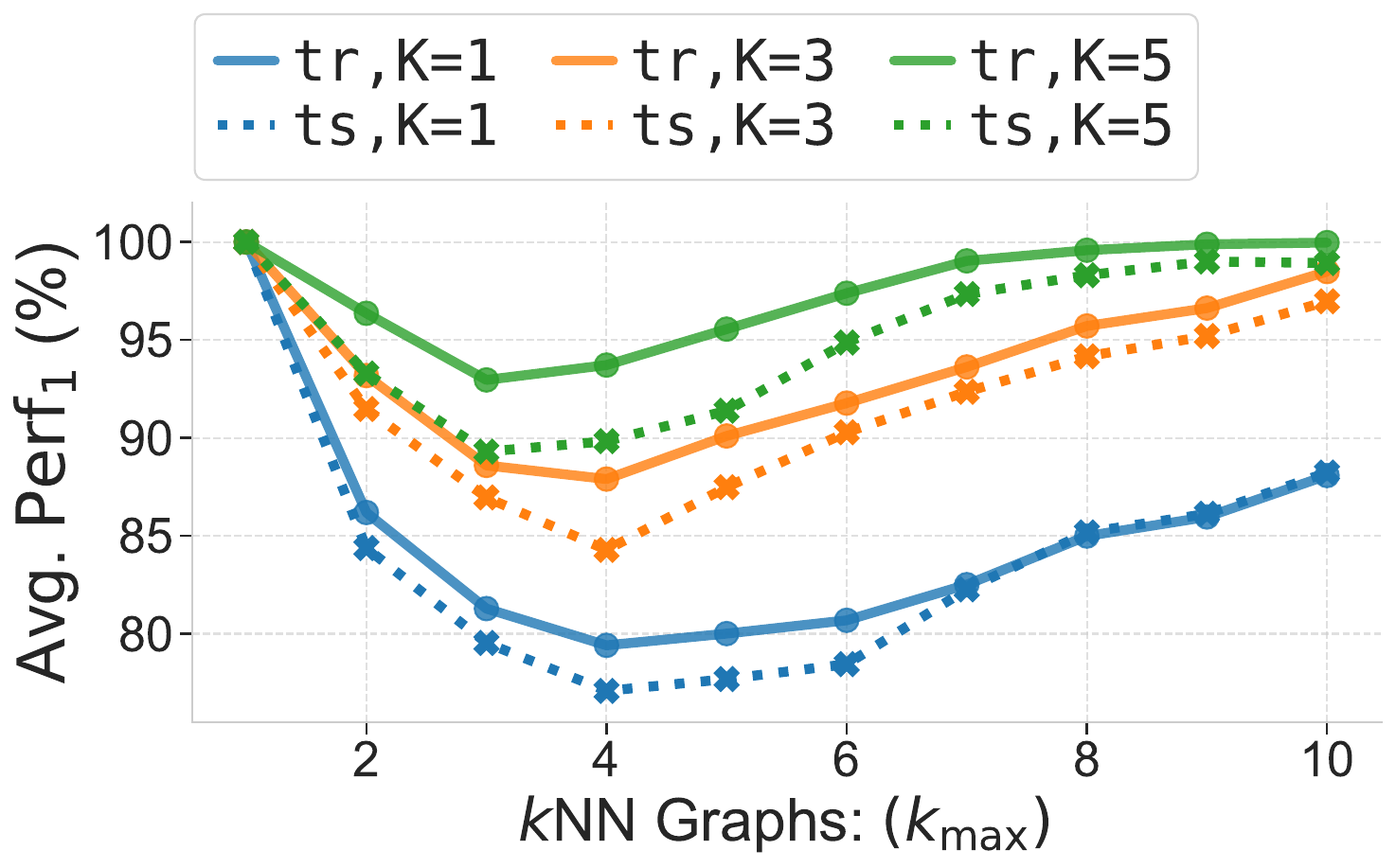}
    \caption{\(k\)NN: \(\mathrm{Perf}_{1}\)}
    \label{fig:app_prod_learn_knn_perf1}
\end{subfigure}
\begin{subfigure}[t]{0.32\textwidth}
\centering
    \includegraphics[width=\linewidth]{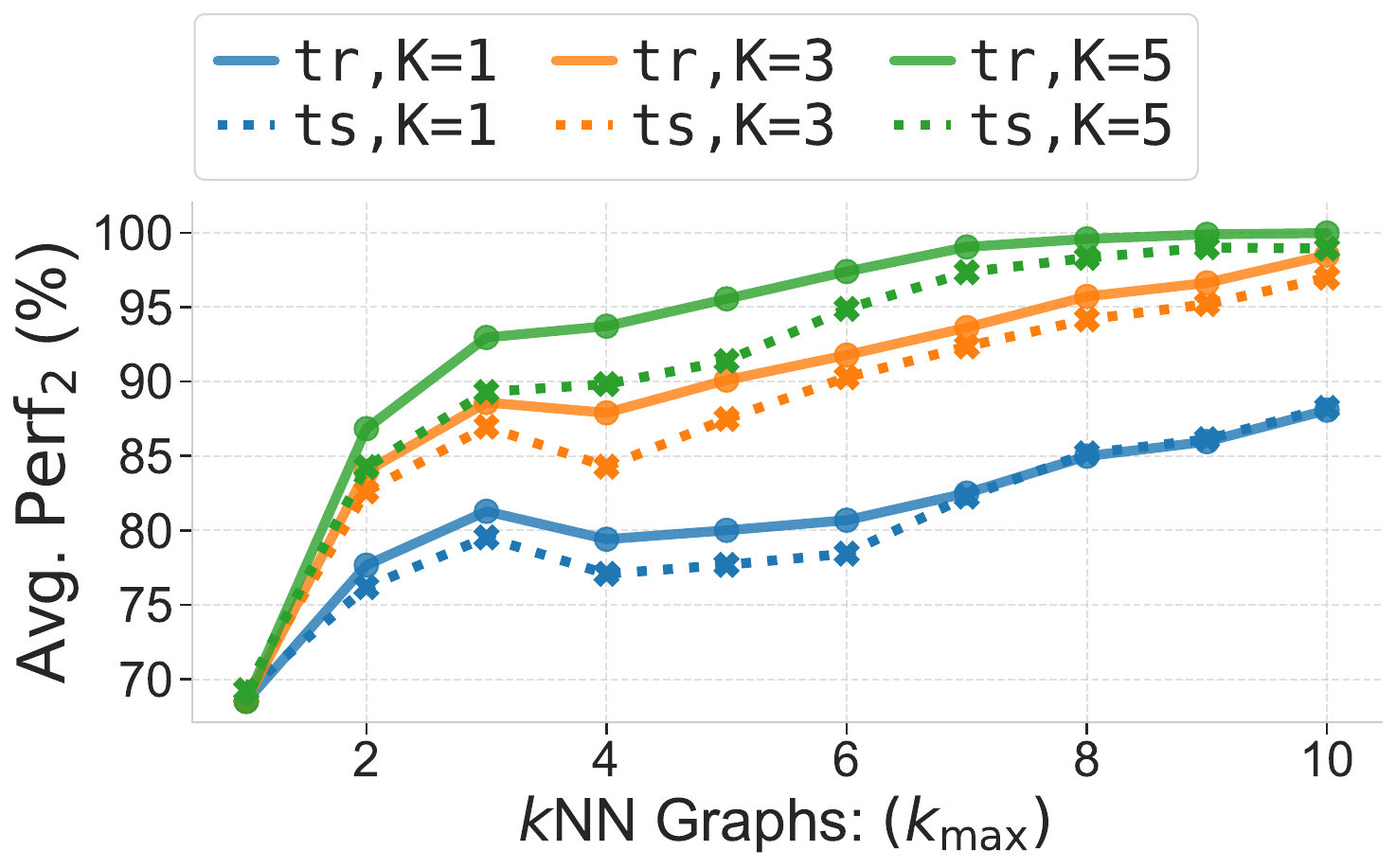}
    \caption{\(k\)NN: \(\mathrm{Perf}_{2}\)}
    \label{fig:app_prod_learn_knn_perf2}
\end{subfigure}
\begin{subfigure}[t]{0.32\textwidth}
\centering
    \includegraphics[width=\linewidth]{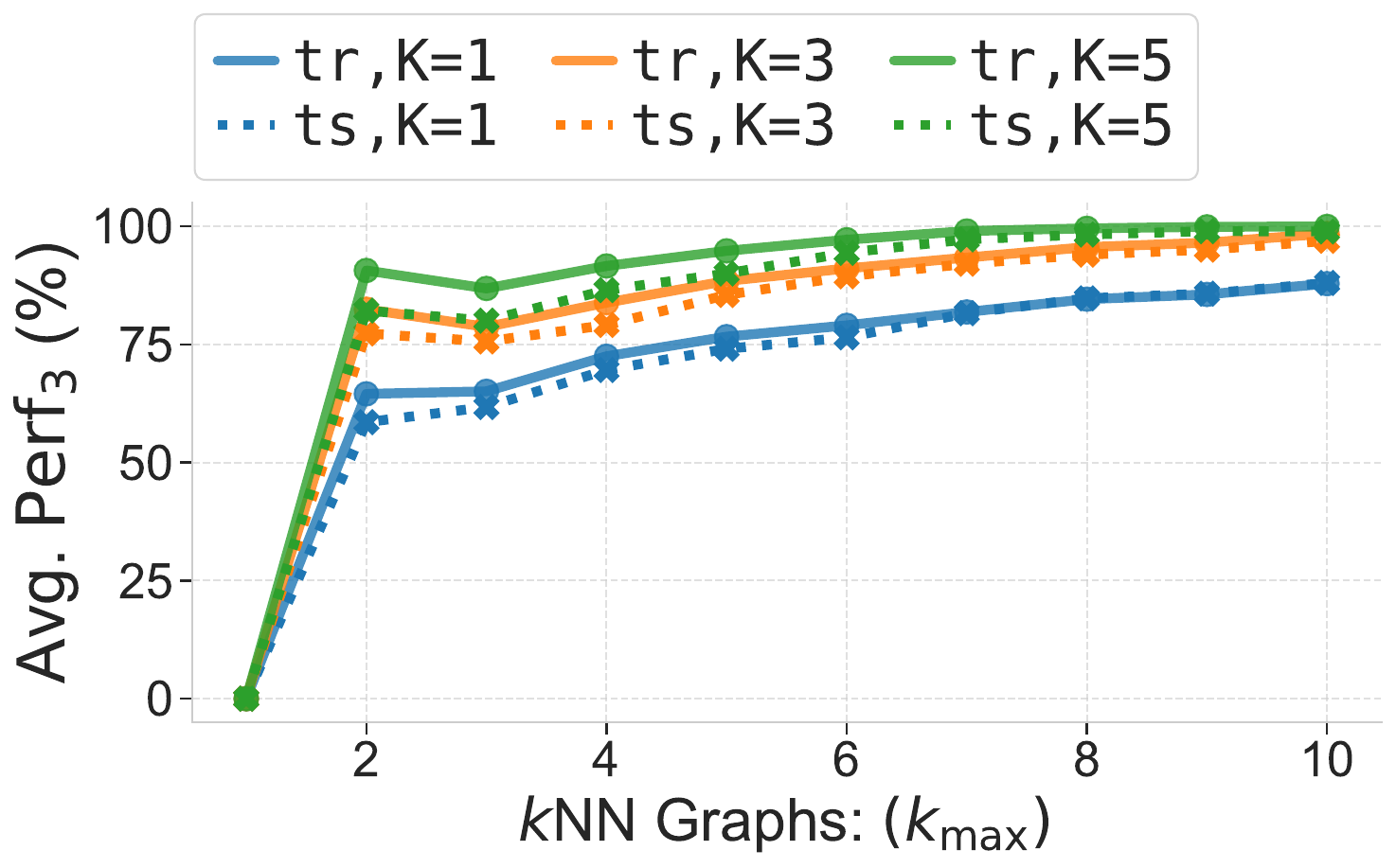}
    \caption{\(k\)NN: \(\mathrm{Perf}_{3}\)}
    \label{fig:app_prod_learn_knn_perf3}
\end{subfigure}\\[1ex]

\begin{subfigure}[t]{0.32\textwidth}
\centering
    \includegraphics[width=\textwidth]{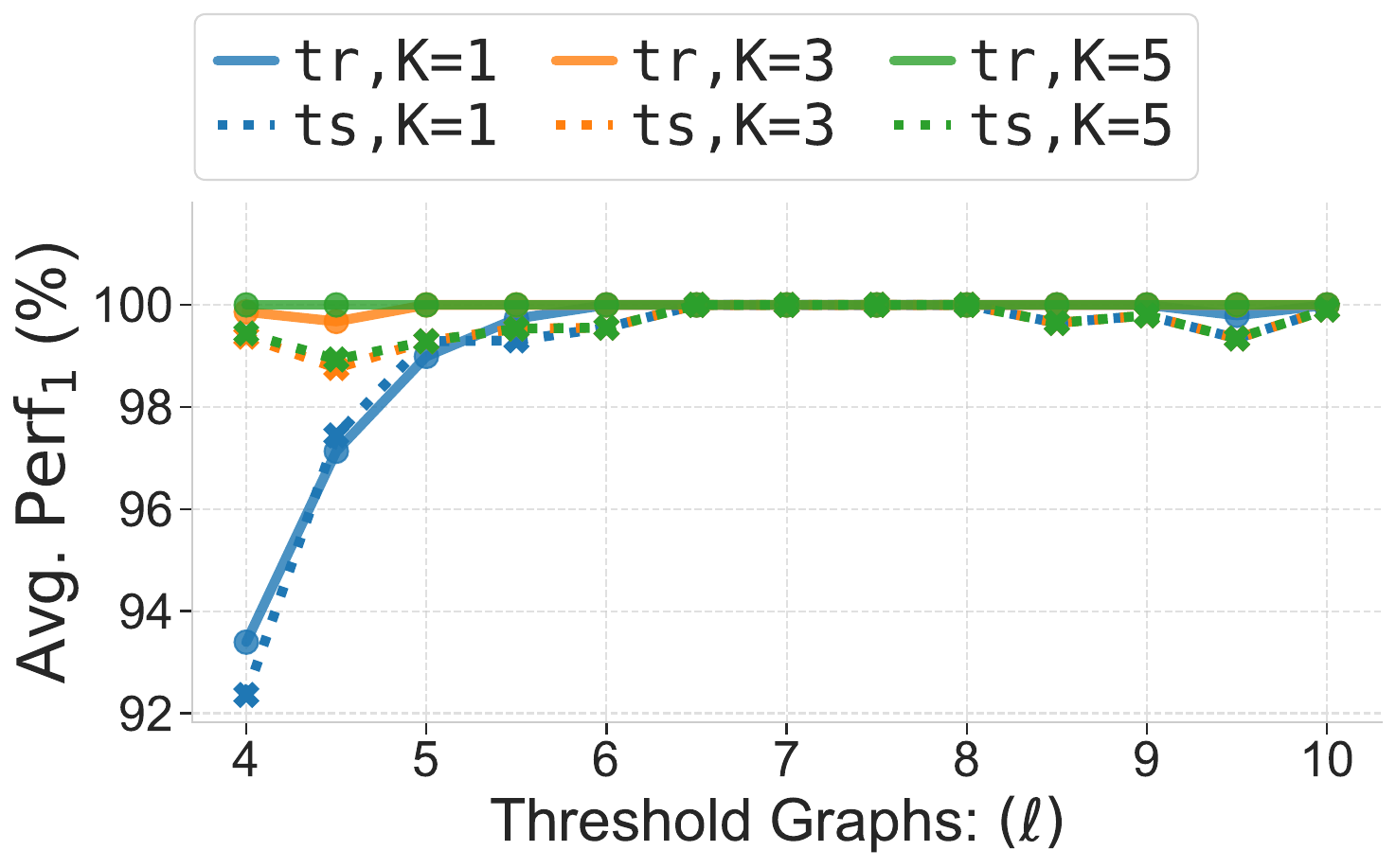}
    \caption{Threshold: \(\mathrm{Perf}_{1}\)}
    \label{fig:app_prod_learn_thresh_perf1}
\end{subfigure}
\begin{subfigure}[t]{0.32\textwidth}
\centering
    \includegraphics[width=\linewidth]{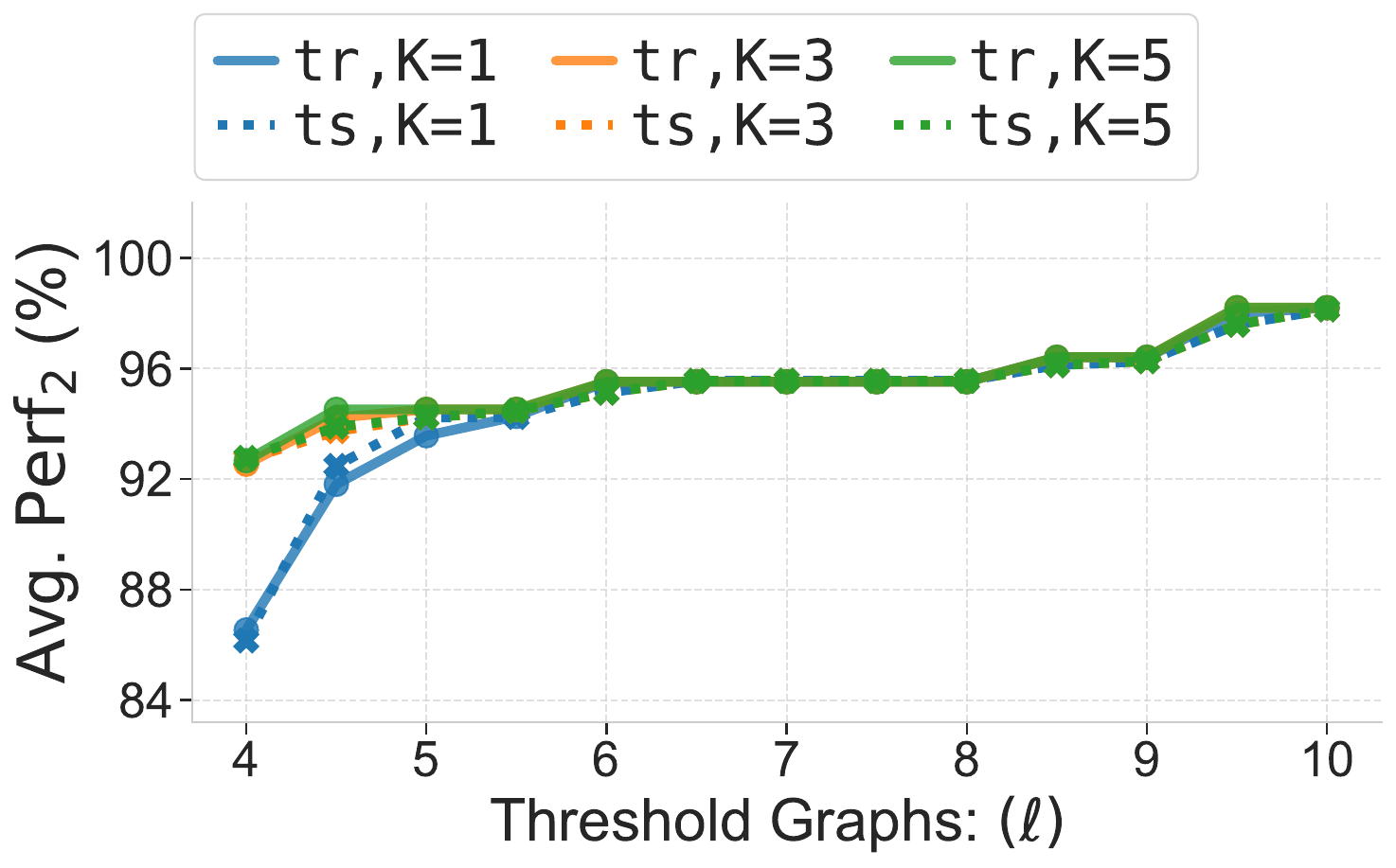}
    \caption{Threshold: \(\mathrm{Perf}_{2}\)}
    \label{fig:app_prod_learn_thresh_perf2}
\end{subfigure}
\begin{subfigure}[t]{0.32\textwidth}
\centering
    \includegraphics[width=\linewidth]{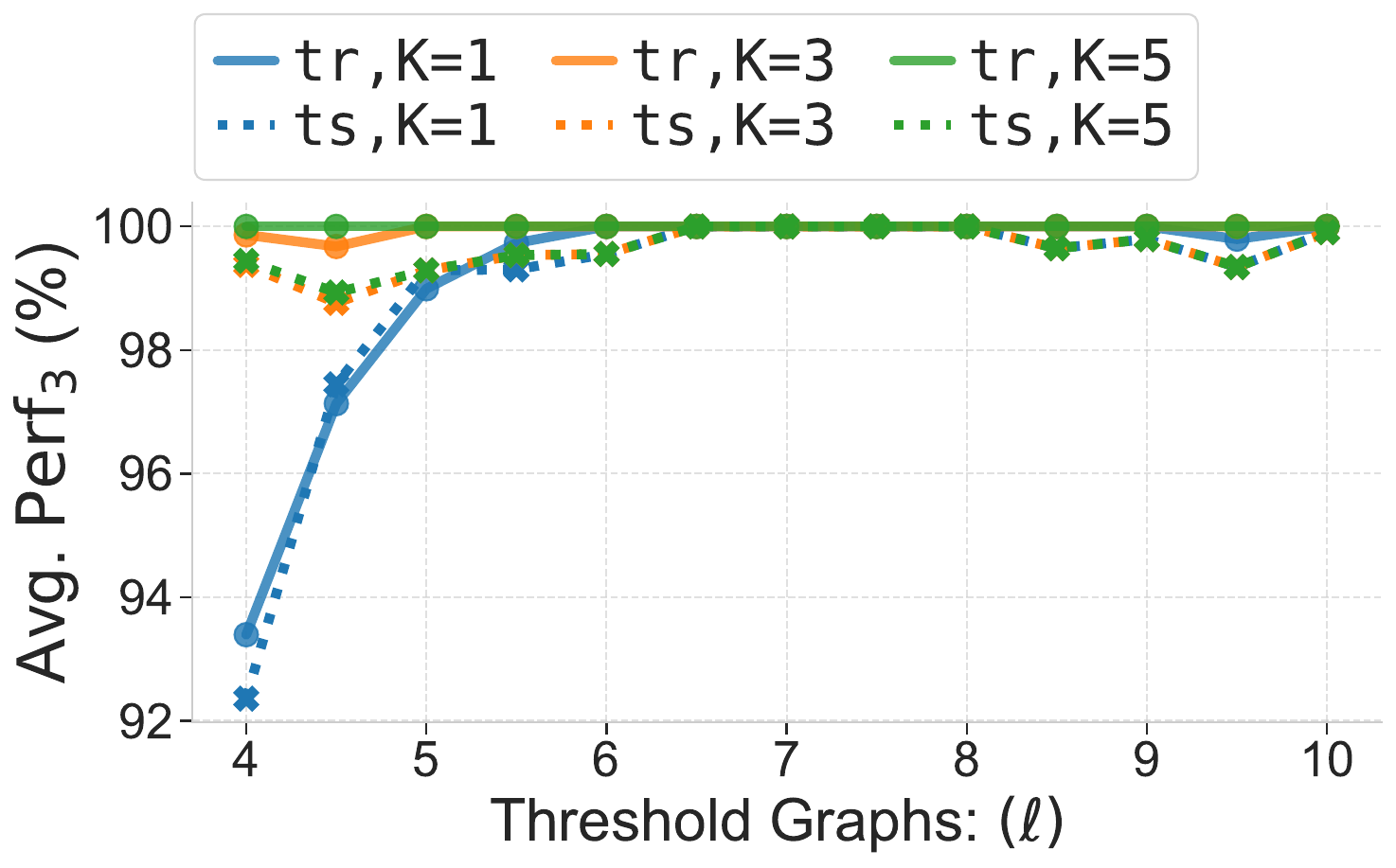}
    \caption{Threshold: \(\mathrm{Perf}_{3}\)}
    \label{fig:app_prod_learn_thresh_perf3}
\end{subfigure}

\caption{Performance of Algorithm~\ref{alg:greedy_lbreveal} in the learning setting on the \textbf{Portuguese} and \textbf{Productivity} datasets under three metrics (\(\mathrm{Perf}_{1}, \mathrm{Perf}_{2}, \mathrm{Perf}_{3}\)) for \(k\)NN and threshold generated graphs  (Tables~\ref{tab:portuguese_kmax_r_stats} and \ref{tab:productivity_kmax_r_stats}).
Across both datasets, larger budget \(K\) lead to weakly higher performance. 
Increasing graph connectivity raises overall scores while reducing the performance gap between budgets (e.g., in Figures~\ref{fig:app_prod_learn_thresh_perf1}--\subref{fig:app_prod_learn_thresh_perf3}). 
Because of presence of unhelpable agents in the graphs (Table~\ref{tab:productivity_kmax_r_stats}), $\mathrm{Perf}_{2}$ can remain below $100$ even when the algorithm is optimal (cf. Figure~\ref{fig:app_prod_learn_thresh_perf2}). Lastly, because the Math and Portuguese datasets were curated in a similar manner \cite{CortezP08}, learning setting results on the generated graphs are closely similar.}
\label{fig:app_port-prod_learn_knn_thresh}
\begin{picture}(0,0)
    \put(-250,511){\rotatebox{90}{\textit{$k$NN graphs}}}
    \put(-250,388){\rotatebox{90}{\textit{Threshold graphs}}}
    \put(-250,254){\rotatebox{90}{\textit{$k$NN graphs}}}
    \put(-250,130){\rotatebox{90}{\textit{Threshold graphs}}}
\end{picture}
\end{figure}

\end{document}